\newtheorem{theorem}{Theorem}[section]
\newtheorem{assumption}{Assumption}
\newtheorem{corollary}[theorem]{Corollary}
\newtheorem{lemma}[theorem]{Lemma}
\newtheorem{proposition}[theorem]{Proposition}
\newtheorem{remark}{Remark}
\renewcommand{\H}{\mathcal{H}}
\newcommand{\A}{\mathcal{A}}
\newcommand{\T}{\mathcal{T}}
\newcommand{\B}{\mathcal{B}}
\newcommand{\E}{\mathcal{E}}
\newcommand{\X}{\mathcal{X}}
\newcommand{\Y}{\mathcal{Y}}
\newcommand{\K}{\mathcal{K}}
\newcommand{\Z}{\mathcal{Z}}
\renewcommand{\l}{\left}
\renewcommand{\r}{\right}
\newcommand{\HS}{\mathrm{HS}}
\newcommand{\be}{\mathbb{E}}
\newcommand{\bn}{\mathbb{N}}
\numberwithin{equation}{section}
\title{Learning Operators by Regularized Stochastic Gradient Descent with Operator-valued Kernels$^\dag$\footnotetext{\dag~The work described in this paper is supported by the National Natural Science Foundation of China under Grant Nos.~12171093 and 12571099. Email addresses: jqyang24@m.fudan.edu.cn (J.-Q. Yang), leishi@fudan.edu.cn (L. Shi). The corresponding author
is Lei Shi.}}
\author[1]{Jia-Qi Yang}
\author[1,2]{Lei Shi}
\affil[1]{School of Mathematical Sciences, \linebreak
Fudan University, Shanghai, 200433, China 
}
\affil[2]{
Shanghai Key Laboratory for Contemporary Applied Mathematics, \linebreak
Fudan University, Shanghai, 200433, China \linebreak
}
\date{}
\begin{document}

\maketitle

\begin{abstract}
We consider a class of statistical inverse problems involving the estimation of a regression operator from a Polish space to a separable Hilbert space, where the target lies in a vector-valued reproducing kernel Hilbert space induced by an operator-valued kernel. To address the associated ill-posedness, we analyze regularized stochastic gradient descent (SGD) algorithms in both online and finite-horizon settings. The former uses polynomially decaying step sizes and regularization parameters, while the latter adopts fixed values. Under suitable structural and distributional assumptions, we establish dimension-independent bounds for prediction and estimation errors. The resulting convergence rates are near-optimal in expectation, and we also derive high-probability estimates that imply almost sure convergence. Our analysis introduces a general technique for obtaining high-probability guarantees in infinite-dimensional settings. We illustrate the practical scope of our framework with applications to structured prediction and parametric PDEs, providing examples that reflect how the approach can be applied in practice.
\end{abstract}

{\bf Keywords:} Nonlinear operator learning, Operator-valued kernel, Regularized stochastic gradient descent, Convergence analysis

\section{Introduction} \label{introduction}

    In this work, we consider the nonlinear regression model
\begin{equation} \label{model}
    y = h^\dagger(x) + \epsilon,
\end{equation}
where \( x \) belongs to a Polish space \(\mathcal{X}\), and \( y \) lies in a separable Hilbert space \(\mathcal{Y}\). The pair \((x,y)\) is distributed according to an unknown probability measure \(\rho\) on \(\mathcal{X} \times \mathcal{Y}\). The regression operator \( h^\dagger: \mathcal{X} \to \mathcal{Y} \) is defined as the conditional expectation
\[
h^\dagger(x) = \mathbb{E}[y|x],
\]
and the noise \(\epsilon\) is a centered \(\mathcal{Y}\)-valued random variable with finite variance $\sigma^2:=\be[\|\epsilon\|^2_{\Y}]$, assumed independent of \(x\). The objective is to recover \( h^\dagger \) from independent, identically distributed samples of \((x,y)\).

This formulation encompasses a broad class of statistical inverse problems in infinite-dimensional settings, where the forward map \(h^\dagger\) must be recovered from noisy, indirect observations. Traditional approaches to inverse problems often employ explicit regularization techniques, such as Tikhonov regularization or spectral filtering, to stabilize the solution, typically under certain assumptions on the forward operator's structure. In our work, we also use explicit regularization by penalizing the norm in a vector-valued reproducing kernel Hilbert space (RKHS). However, unlike classical deterministic frameworks, our approach studies stochastic gradient descent algorithms in infinite-dimensional function spaces with streaming data \footnote{Here, “streaming data” refers to a setting in which samples arrive sequentially and are processed one at a time, so that the estimator can be updated online without needing repeated access to the full dataset.}. This requires new analytical tools to establish convergence and error bounds under operator regularity conditions.

A key motivation for this model arises from structured output prediction, where the outputs exhibit functional or combinatorial structure. In many practical scenarios, the output space is not finite-dimensional but consists of complex objects such as functions, sequences, or graphs. To address these tasks, surrogate learning approaches embed the structured output into a Hilbert space and learn a continuous operator mapping inputs to embedded outputs. Applications include image completion \citep{weston2002kernel}, label ranking \citep{korba2018structured}, graph prediction \citep{brouard2016fast}, and protein function prediction \citep{ciliberto2020general}.
This framework also includes \emph{functional output regression}, which has become increasingly relevant with the growing availability of functional data in science and engineering. Here, both the input and output may be infinite-dimensional, motivating the development of operator-theoretic regression approaches. These applications have spurred the growth of the operator learning paradigm, which aims to learn mappings between function spaces directly from data. A prominent example is learning solution operators for parameterized partial differential equations (PDEs), where the operator maps coefficient or boundary condition functions to PDE solutions \citep{li2020fourier, bhattacharya2021model, lu2021learning,lei2024efficient}.
In this paper, we focus on learning algorithms that operate directly on function spaces rather than on finite-dimensional discretizations. This functional viewpoint preserves the intrinsic geometry and infinite-dimensional structure of the problem, enabling discretization-invariant analysis and broad applicability. Our setting naturally leads to the study of regularized learning in vector-valued RKHSs, where the regression operator is modeled within a rich nonparametric hypothesis space induced by operator-valued kernels.

    We introduce a supervised learning framework based on model \eqref{model}. Consider a data set $\{(x_t,y_t)\}_{t=1}^T$ generated by model \eqref{model}, or equivalently, drawn independently from the distribution $\rho$. To estimate $h^\dagger$, we minimize the regularized functional $\E(h)+\lambda \|h\|_{\H}^2$ over all $h\in\H$, where $\H$ is some Hilbert space, $\E(h):=\be\left[\|h(x)-y\|^2_\Y\right]$ denotes the mean squared error, and $\lambda>0$ is a regularization parameter. In this paper, we adopt a non-parametric approach to solve the nonlinear model \eqref{model}, assuming that $\H$ is a vector-valued reproducing kernel Hilbert space (RKHS) induced by an operator-valued kernel $K$ \citep{kadri2012multiple,kadri2016operator,brouard2016input,brogat2022vector}.

    To illustrate our algorithm, we introduce some notations along with basic concepts from operator theory \citep{conway2000course}. Consider a linear operator $A:\mathcal{H}_1\to\mathcal{H}_2$, where both $(\mathcal{H}_1,\langle\cdot,\cdot\rangle_{\mathcal{H}_1},\|\cdot\|_{\mathcal{H}_1})$ and $(\mathcal{H}_2,\langle\cdot,\cdot\rangle_{\mathcal{H}_2},\|\cdot\|_{\mathcal{H}_2})$ are Hilbert spaces. The set of bounded linear operators from $\mathcal{H}_1$ to $\mathcal{H}_2$ forms a Banach space under the operator norm $\|A\|= \sup _{\| f\| _{\mathcal{H}_1}= 1}\| Af\| _{\mathcal{H}_2}$, denoted by $\B( \mathcal{H}_1 , \mathcal{H}_2)$, or simply $\mathcal{B}(\mathcal{H}_1)$ when $\mathcal{H}_1=\mathcal{H}_2$. We call an operator $A\in\B(\H_1,\H_2)$ Hilbert-Schmidt if it holds $\sum_{k\geq1}\l\|Ae_i\r\|_{\H_2}^2<\infty$ for some (equivalently, any) orthonormal basis $\{e_k\}_{k\geq1}$ of $\H_1$. The set of Hilbert-Schmidt operators from $\H_1$ to $\H_2$ forms a Hilbert space under the Hilbert-Schmidt inner product $\langle A,B\rangle_{\mathrm{HS}}=\sum_{k\geq1}\langle Ae_k,Be_k\rangle_{\H_2}$ and the induced norm $\|\cdot\|_{\mathrm{HS}}$, denoted by 
    $\B_{\mathrm{HS}}(\H_1,\H_2)$. The adjoint of $A$, denoted by $A^*$, is the unique operator satisfying $\langle Af,f^{\prime}\rangle_{\mathcal{H}_2}=\langle f,A^*f^{\prime}\rangle_{\mathcal{H}_1}$ for all $f\in\mathcal{H}_1$ and $f^\prime\in\mathcal{H}_2$. If $A\in\B(\mathcal{H}_1,\mathcal{H}_2)$, then $A^*\in\B(\mathcal{H}_2,\mathcal{H}_1)$ and $\|A\|=\|A^*\|$. An operator $A\in\B(\mathcal{H}_1)$ is called self-adjoint if $A^*=A$, and positive if it is self-adjoint and satisfies $\langle Af,f\rangle_{\mathcal{H}_1}\geq0$ for every $f\in\mathcal{H}_1$. Let $(\H_{\mathcal{K}},\langle\cdot,\cdot\rangle_{\H_{\mathcal{K}}},\|\cdot\|_{\H_{\mathcal{K}}})$ denote the RKHS generated by the scalar-valued kernel $\mathcal{K}:\X\times\X\rightarrow \mathbb{R}$. Here, we say $K$ is a scalar-valued kernel if it is a real, symmetric, and positive-definite bivariate function. A mapping $K:\X\times\X\rightarrow \B(\Y)$ is called an operator-valued kernel \citep{schwartz1964sous, micchelli2005learning} on $\X$ if:
            \begin{enumerate}
                \item[(1)] For any $x,x'\in\X$, $K(x,x')$ is the adjoint operator of $K(x',x)$, i.e., $K(x,x')^*=K(x',x)$;
                \item[(2)] For any $n\in\bn$, $\{x_i\}_{i=1}^n\subset\X$ and $\{y_i\}_{i=1}^n\subset\Y$, it holds that $\sum_{i=1}^n\langle K(x_i,x_j)y_i,y_j\rangle_{\Y}\geq0$.
            \end{enumerate}
        Note that the function $K(x,\cdot)y:\X\rightarrow\Y$ is well-defined for $x\in\X$ and $y\in\Y$. The vector-valued RKHS $\H$ is the completion of the linear span of $\l\{K(x,\cdot)y:x\in\X,y\in\Y\r\}$ with inner product $\langle K(x,\cdot)y,K(x',\cdot)y'\rangle_{\H}=\langle K(x,x')y,y'\rangle_{\Y}$. Moreover, the reproducing property holds: 
        \[
        \langle K(x,\cdot)y, f\rangle_{\H}=\langle y,f(x)\rangle_{\Y}, \quad \forall (x,y)\in\X\times\Y \mbox{ and } f\in\H. 
        \]
        For more details on vector-valued RKHSs, see \citep{micchelli2005learning, carmeli2006vector, carmeli2010vector}. Furthermore, when $\Y=\mathbb{R}$, $K$ reduces to a scalar-valued kernel. 

     The construction of operator-valued kernels plays an important role in our setting.  A common choice is 
    \begin{equation} \label{kernel}
        K(x,x')=\K(x,x')W,
    \end{equation}
    where $\K$ is a scalar-valued kernel and $W\in\B(\Y)$ is a positive linear operator.  In multi-task learning, $W$ is typically a finite-dimensional matrix that facilitates information sharing among tasks \citep{evgeniou2005learning,caponnetto2008universal}. For some functional output learning problems, $W$ is selected to be a multiplication or an integral operator \citep{kadri2010nonlinear,kadri2011functional}. Additionally, some works on functional regression \citep{lian2007nonlinear} and structured output learning \citep{brouard2011semi,ciliberto2016consistent,ciliberto2020general,brogat2022vector} directly construct operator-valued kernels by setting $W$ to be the identity operator. In \citep{kadri2012multiple},  the kernels are taken to be the finite combinations of operator-valued kernels.  For other constructions, see \citep{kadri2016operator}.

        We briefly outline some algorithms for solving model \eqref{model}. 
        \citet{kadri2016operator} address model \eqref{model} using spectral decomposition of block operator matrices, while \citet{kadri2012multiple} propose a block-coordinate descent method. To handle limited training data, \citet{brogat2022vector} leverage the structure of the target output and propose a reduced-rank method to solve model \eqref{model}. Note that all these existing algorithms follow the batch learning \footnote{Here, “batch learning” refers to a setting in which the estimator is trained from a fixed dataset.} paradigms. In this paper, we adopt a stochastic gradient descent (SGD) algorithm derived from the Tikhonov regularization scheme to solve the model \eqref{model}, aiming to learn the nonlinear operators from streaming data. This algorithm is well-suited for real-time operator learning, enabling continuous adaptation without retaining historical data. This challenge has also been addressed in previous work on operator learning \citep{8815339, HOI2021249}. The performance of the resulting estimator $h$ can be evaluated using the prediction error $\E(h)-\E(h^\dagger)=\be\l[\l\|h(x)-h^\dagger(x)\r\|_\Y^2\r]$ and the estimation error $\|h-h^\dagger\|_{\H}^2$, where $\E(h):=\be[\|h(x)-y\|_{\Y}^2]$. To illustrate our algorithm, we define the minimizer of the regularized least squares problem as 
    \begin{equation} \label{lambda}
        h_\lambda:=\mathop{\arg\min}_{h\in\H}\l\{\E(h)+\lambda\|h\|_\H^2\r\},
    \end{equation}
    where $\lambda>0$ is the regularization parameter. This paper focuses on two important settings of the SGD algorithm: one with constant step sizes and regularization parameters, and the other with decaying step sizes and regularization parameters. Hereinafter, we use $\mathbf{0}$ to denote the zero element in a Hilbert space.
    
    \textbf{The ﬁnite-horizon setting.} In this setting, we assume access to finite i.i.d. samples $\{z_t=(x_t,y_t)\}_{t=1}^T$,  where the sample size $T<\infty$ is known in advance. We aim to solve the regularized problem \eqref{lambda}, where the parameter $\lambda$ depends on $T$. The SGD algorithm proceeds by updating the current estimator $h_{t}$ to $h_{t+1}$ using a single sample at the $t$-th iterate with a constant step size and regularization parameter. Specifically, the iteration begins with $h_1=\mathbf{0}$ and is recursively defined as
    \begin{equation} \label{iter3}
        h_{t+1}=h_t-\eta_T\left(K(x_t,\cdot)(h_t(x_t)-y_t)+\lambda_T h_t\right),  \quad t=1,\cdots T,
    \end{equation}
    where the step size (learning rate) $\eta_T$ and the regularization parameter $\lambda_T$ are appropriately chosen based on the sample size $T$. The update in iteration \eqref{iter3} arises from a one-sample stochastic approximation of $2\be\l[K(x,\cdot)(h(x)-y)\r]+2\lambda h$, which corresponds to the Fr$\acute{e}$chet derivative \citep{dunford1988linear} of $\E(h)+\lambda\|h\|_\H^2$. Implementing an efficient warm start can be non-trivial when new data points become available in the future.
    
    \textbf{The online setting.} In this setting, the sample size $T$ may be unknown in advance or even infinite, which is well-suited for scenarios that require real-time iterative updates.  To accommodate this setting, we update the regularization parameter $\lambda_t$ such that $h_{t+1}$ follows the regularization path \citep{zhu2021algorithmic} $h_{\lambda_t}$,\footnote{Regularization path refers to the trajectory of solutions $h_\lambda$ as the regularization parameter $\lambda$ varies, characterizing how the learned model evolves under different levels of regularization.} ensuring that $h_t-h_{\lambda_t}\rightarrow\mathbf{0}$ and $h_{\lambda_t}\rightarrow h^\dagger$ in the norm $\|\cdot\|_\H$ (or in the semi-norm associated with prediction error) as $t$ increases. This leads to the following iterative scheme, initialized with $h_1=\mathbf{0}$:
    \begin{equation} \label{iter2}
        h_{t+1}=h_t-\eta_t\left(K(x_t,\cdot)(h_t(x_t)-y_t)+\lambda_t h_t\right), \quad t\geq 1.
    \end{equation}
    Moreover, we let both $\eta_t$ and $\lambda_t$ decay polynomially with respect to $t$, enabling stability and convergence of the solution while adapting to streaming data and mitigating overfitting.

In this paper, we study both settings of the SGD algorithm. We express the iterative forms \eqref{iter3} and \eqref{iter2} in a unified manner as the form given in \eqref{iter2}. For decaying step sizes adopted in the online setting, we set the step size as $\eta_t = \bar{\eta}(t + t_0)^{-\theta_1}$ for all $t \geq 1$, where $\theta_1 \in (0,1)$, $\bar{\eta} > 0$, and $t_0 > 0$. The regularization parameter is defined as $\lambda_t = \bar{\lambda}(t + t_0)^{-\theta_2}$, where $\theta_2 \in (0,1)$, $\bar{\lambda} > 0$. We emphasize that in the online setting, both $\bar{\eta}$ and $\bar{\lambda}$ are constants independent of $t$ and the total number of iterations (e.g., the sample size) $T$. For the constant step sizes and regularization parameters adopted in the finite-horizon setting, we set $\eta_t = \eta_1 T^{-\theta_3}$ and $\lambda_t = \lambda_1 T^{-\theta_4}$ for $t = 1, 2, \dots, T$, where $\theta_3 \in (0,1)$, $\theta_4 > 0$, and $\eta_1, \lambda_1> 0$. In this finite-horizon setting, the step sizes and regularization parameters explicitly depend on the total number of iterations $T$. 

Throughout the paper, we impose the following assumption on the operator-valued kernels.
    \begin{assumption} \label{a1}
            The vector-valued RKHS $\H$ is generated by the operator-valued kernel $K(x,x')=\K(x,x')I$, where
            $\K$ is the scalar-valued kernel with $\|\K\|_\infty:=\sup_{x\in\X}\K(x,x)\leq\kappa^2$ for some constant $\kappa>0$, and $I$ is the identity operator on $\Y$.
    \end{assumption}
This simple construction of operator-valued kernels has been adopted in previous works, e.g., \citep{brogat2022vector,batlle2024kernel}. Note that all elements $h$ in the vector-valued RKHS $\H$ are measurable. Furthermore, when choosing kernels as in \eqref{kernel}, it follows from \citep[Example 5]{carmeli2010vector} that $K$ is a Mercer [resp. $\mathcal{C}_0$]\footnote{That is, the Banach space of continuous functions vanishing at infinity with the uniform norm.} kernel if $\K$ is Mercer [resp. $\mathcal{C}_0$], implying that all operators in $\H$ are continuous. This choice of kernel, $K(x,x') = \mathcal K(x,x') I$, provides a mathematically tractable and commonly adopted setting in which the vector-valued RKHS structure allows for a complete convergence analysis of the regularized SGD algorithm in infinite-dimensional operator learning. In particular, each output component is associated with the same scalar kernel $\mathcal K$, which facilitates the analysis while preserving the infinite-dimensional nature of the operator-valued RKHS.

At the same time, this construction does not explicitly encode correlations between different output directions. More general operator-valued kernels, such as those incorporating a compact operator $W$, can, in principle, capture such dependencies. We discuss these generalizations in Appendix~\ref{Section 3.1}, though a complete convergence analysis in these settings is beyond the scope of the present work. This discussion emphasizes both the practical relevance and the inherent limitations of the identity-operator construction used throughout the main text.

The two types of step sizes considered in this paper have been extensively studied in the previous literature on SGD in various settings. The seminal work \citep{smale2006online} shows that the step size serves as an implicit form of regularization, thus improving the algorithm's generalization and robustness. Our recent work \citep{shi2024learning} investigates operator learning via the SGD algorithm between Hilbert spaces, deriving bounds for both prediction and estimation errors in expectation. However, the SGD algorithm in \citep{shi2024learning} does not incorporate the regularization term. On the other hand, in the context of operator learning, research on the almost-sure convergence of SGD algorithms is still scarce. Only a few works, including \citep{tarres2014online, berthier2020tight, varre2021last}, have considered the almost-sure convergence in finite-dimensional output settings but either assume a noise-free scenario or provide convergence results that do not directly extend to operator learning problems.  This clearly identifies a significant gap in the existing literature. Consequently, while advancements have occurred in finite-dimensional settings, substantial challenges related to operator learning and the fundamental role of regularization remain largely unexplored.

This paper aims to fill this gap by rigorously analyzing the regularized SGD algorithm applied to the nonlinear operator regression problem described in \eqref{model}. Our main contributions are summarized as follows: First, we introduce specific regularity assumptions on $h^\dagger$ (or its associated Hilbert-Schmidt operator, as defined in Proposition \ref{transform}), which effectively capture the intrinsic features of infinite-dimensional regression problems. Under these assumptions, we derive bounds in expectation for the prediction and estimation errors of the regularized SGD algorithm, showing improvements compared to the unregularized SGD algorithm studied in \citep{shi2024learning}. Second, we propose a novel technique for establishing high-probability bounds, ensuring almost-sure convergence via the Borel–Cantelli lemma. High-probability convergence provides a stronger guarantee than expectation-based bounds alone, moving beyond average-case performance. Crucially, we demonstrate that the introduction and careful tuning of regularization parameters are essential not only for achieving these high-probability bounds but also for significantly enhancing the convergence behaviors of the SGD algorithm. This underscores the superiority of our regularized approach over the unregularized framework considered in \citep{shi2024learning}, highlighting the necessity of regularization for robust probabilistic guarantees. Lastly, the resulting convergence rates are demonstrated to be near-optimal, aligning closely with the minimax lower bounds established in \citep{shi2024learning}, thus reinforcing the theoretical soundness and effectiveness of our proposed algorithm.

The rest of the paper is organized as follows. Section \ref{results} introduces the main theoretical results and assumptions, and briefly discusses the key error decomposition and the relation to the unregularized analysis. Section \ref{new dis} provides further discussion of the framework and its broader context. In Section \ref{discussion}, we illustrate the practical scope of our framework through applications to structured prediction and parametric PDEs. Section \ref{Numerical experiments} presents a brief numerical experiment. Section \ref{decomposition} (located in the supplementary material) performs an error decomposition tailored to the regularized SGD algorithm. Building on this, Section \ref{section:basic} provides essential intermediate estimates used in the subsequent analysis. Sections \ref{expected} and \ref{prob} are devoted to establishing bounds on the prediction and estimation errors, first in expectation and then with high probability. For clarity and conciseness, some technical proofs are presented in the appendix. Appendix \ref{Section 3.1} discusses more general operator-valued kernels, while Appendix \ref{Section 3.3} addresses the connection with the PCA encoder-decoder framework.

\section{Main Results} \label{results}
        
This section introduces regularity conditions on the structure and smoothness of the target operator and the input random variables. We then present our main theorems. We begin with some notations for further statements. Denote $\mathbb{N}_T$ as the set $\{1,2,\cdots,T\}$. The rank-one operator $f \otimes g \in \B(\H_1, \H_2)$ is defined by $f \otimes g(g') := \langle g, g' \rangle_{\H_1} f$, where $g, g' \in \H_1$ and $f \in \H_2$. We denote $\mathrm{Tr}(A)$ as the trace of a self-adjoint and compact operator $A \in \B(\H_1)$. Let $\be$ and $\be_{z_t}$ denote the expectation with respect to the distribution $\rho$ and the sample $z_t := (x_t, y_t)$, respectively. For $k \in \mathbb{N}_T$, let $\mathbb{E}_{z_1, \cdots, z_k}$ denote the expectation with respect to $\{z_i\}_{i=1}^k$, abbreviated as $\mathbb{E}_{z^k}$. Recall that $\K$ is the scalar-valued kernel. Since $\X$ is separable, the RKHS $\H_\K$ induced by $\mathcal{K}$ is also separable. The operator $C=\be[\phi(x)\otimes\phi(x)]$, defined by  $\phi(x):=\mathcal{K}(x,\cdot)\in\H_\K$, is self-adjoint, compact, and satisfies $\|C\|\leq\|C\|_{\mathrm{HS}}\leq\kappa^2$. Thus, for any $r > 0$, the operator $C^r$ is also self-adjoint and compact. Moreover, it is straightforward to verify that
    \[
    \|C^{1/2}\|^2_{\mathrm{HS}}=\mathrm{Tr}(C)=\be\left[\|\phi(x)\|_{\H_\K}^2\right]\leq\kappa^2.
    \] 
With the aid of the following proposition, the iterative process in the vector-valued RKHS $\H$ can be equivalently reformulated as an iterative process in $\B_{\mathrm{HS}}(\H_\K, \Y)$.

\begin{proposition} \label{transform}
		The vector-valued RKHS $\H$, associated with the operator-valued kernel $K(x,x^\prime)=\mathcal{K}(x,x^\prime)W$, where $W$ is a positive operator and $\mathcal{K}$ is a scalar-valued kernel, is isometrically isomorphic to $\B_{\mathrm{HS}}(\H_\mathcal{K},\overline{W^{1/2}\Y})\subset\B_{\mathrm{HS}}(\H_\mathcal{K},\Y)$. Specifically, for each $h\in\H$, there exists a unique $H\in\B_{\mathrm{HS}}(\H_\mathcal{K},\overline{W^{1/2}\Y})$ such that 
		\begin{equation*}
			h(x)=W^{1/2}H\phi(x),\quad \forall x\in\X,
		\end{equation*}
		and $\|h\|_{\H}=\|H\|_{\mathrm{HS}}$.
	\end{proposition}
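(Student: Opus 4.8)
The plan is to construct the isometric isomorphism explicitly on the dense subspace spanned by the kernel sections $K(x,\cdot)y$, verify it is inner-product preserving there, and then extend by continuity to all of $\H$; surjectivity onto $\B_{\mathrm{HS}}(\H_\mathcal{K},\overline{W^{1/2}\Y})$ is then checked by exhibiting preimages of a spanning family of Hilbert-Schmidt operators. Concretely, for a generator $h = K(x,\cdot)y = \mathcal{K}(x,\cdot)Wy$, I would define $H := \phi(x)\otimes (W^{1/2}y) \in \B_{\mathrm{HS}}(\H_\mathcal{K},\Y)$, i.e. $H g' = \langle \phi(x), g'\rangle_{\H_\mathcal{K}} W^{1/2}y$ for $g' \in \H_\mathcal{K}$. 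Since $\phi(x) = \mathcal{K}(x,\cdot)$, the reproducing property in $\H_\mathcal{K}$ gives $\langle \phi(x),\phi(x')\rangle_{\H_\mathcal{K}} = \mathcal{K}(x,x')$, and one computes directly that $W^{1/2}H\phi(x') = \langle\phi(x),\phi(x')\rangle_{\H_\mathcal{K}}\,W^{1/2}W^{1/2}y = \mathcal{K}(x,x')Wy = h(x')$, which is the claimed representation $h(\cdot) = W^{1/2}H\phi(\cdot)$. The range of $H$ clearly lies in $\overline{W^{1/2}\Y}$.

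Next I would check that the assignment $h \mapsto H$ is isometric on generators and, more generally, on finite linear combinations. For $h = K(x,\cdot)y$ and $h' = K(x',\cdot)y'$ with images $H = \phi(x)\otimes W^{1/2}y$ and $H' = \phi(x')\otimes W^{1/2}y'$, the rank-one Hilbert-Schmidt inner product formula $\langle f\otimes g, f'\otimes g'\rangle_{\mathrm{HS}} = \langle f,f'\rangle\langle g,g'\rangle$ yields
\[
\langle H, H'\rangle_{\mathrm{HS}} = \langle \phi(x),\phi(x')\rangle_{\H_\mathcal{K}}\,\langle W^{1/2}y, W^{1/2}y'\rangle_{\Y} = \mathcal{K}(x,x')\,\langle Wy, y'\rangle_{\Y} = \langle K(x,x')y, y'\rangle_{\Y},
\]
which by definition of the vector-valued RKHS inner product equals $\langle h, h'\rangle_{\H}$. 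By bilinearity this extends to the dense subspace of finite linear combinations of generators, so the map is a well-defined linear isometry there (well-definedness on $\H$—that $h=0$ forces $H=0$—follows because the map is isometric on the pre-Hilbert space of finite combinations). It then extends uniquely to an isometry $\H \to \B_{\mathrm{HS}}(\H_\mathcal{K},\overline{W^{1/2}\Y})$ by completeness of the target, and the identity $h(x) = W^{1/2}H\phi(x)$ persists under the limit since evaluation $h\mapsto h(x)$ and $H\mapsto W^{1/2}H\phi(x)$ are both continuous (the former by the reproducing property, the latter since $\|W^{1/2}H\phi(x)\|_\Y \le \|W^{1/2}\|\,\|\phi(x)\|_{\H_\mathcal{K}}\,\|H\|_{\mathrm{HS}}$).

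For surjectivity, I would argue that the images $\phi(x)\otimes W^{1/2}y$ of the generators have dense linear span in $\B_{\mathrm{HS}}(\H_\mathcal{K},\overline{W^{1/2}\Y})$: vectors of the form $W^{1/2}y$ are dense in $\overline{W^{1/2}\Y}$ by definition, and the $\phi(x) = \mathcal{K}(x,\cdot)$ have dense span in $\H_\mathcal{K}$ (this is exactly the defining property of the scalar RKHS), so finite sums $\sum_i \phi(x_i)\otimes W^{1/2}y_i$ are dense among rank-one-generated operators, which in turn are dense in the Hilbert-Schmidt space. Since the isometry has dense image and closed range (isometries between Hilbert spaces have closed range), it is onto, hence an isometric isomorphism. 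The main obstacle I anticipate is the careful bookkeeping around the closure $\overline{W^{1/2}\Y}$—ensuring that the target space is taken with this closed range rather than all of $\Y$, that density of $\{W^{1/2}y : y\in\Y\}$ in it is used correctly, and that no element of $\H$ is "lost" when $W$ has nontrivial kernel (this is why the representation is through $W^{1/2}H\phi(x)$ and the isomorphism lands in the $\overline{W^{1/2}\Y}$-valued Hilbert-Schmidt operators, not the larger space); uniqueness of $H$ is then immediate from the isometry being injective. Under Assumption \ref{a1} with $W=I$ this all collapses to the statement $\H \cong \B_{\mathrm{HS}}(\H_\mathcal{K},\Y)$.
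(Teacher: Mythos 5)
Your proposal is correct and takes essentially the same route as the paper's proof: map each kernel section $K(x,\cdot)y$ to the rank-one operator $g\mapsto\langle\phi(x),g\rangle_{\H_\K}W^{1/2}y$, verify the isometry on the dense span by the same Hilbert--Schmidt inner-product computation, extend by continuity, and obtain the representation $h(x)=W^{1/2}H\phi(x)$ (the paper checks this for general $h\in\H$ via the reproducing property, while you check it on generators and pass to the limit—both are fine, as is your slightly more explicit density argument for surjectivity). The only caveat is notational: your $\phi(x)\otimes W^{1/2}y$ reverses the paper's convention $f\otimes g(g'):=\langle g,g'\rangle_{\H_1}f$, but since you state the operator's action explicitly this causes no mathematical issue.
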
 
The proof of Proposition \ref{transform} is deferred to Appendix \ref{tran}. When Proposition \ref{transform} is applied with $W = I$, the kernel coincides with that in Assumption \ref{a1}, and the iteration \eqref{iter2} can be equivalently expressed as
        \begin{equation} \label{iter}
            \begin{cases}
			H_1=\mathbf{0}, \\
			H_{t+1}=H_t-\eta_t\left(\l(H_t\phi(x_t)-y_t\r)\otimes \phi(x_t)+\lambda_t H_t\right), \\
			h_t(\cdot)=H_t\l(\phi(\cdot)\r).
		\end{cases}
        \end{equation}
Hereinafter, we assume that $h^\dagger(x)=H^\dagger\phi(x)$, where $H^\dagger\in\B_{\mathrm{HS}}(\H_\K,\Y)$ is a Hilbert-Schmidt operator. Under this assumption, the nonlinear operator learning model \eqref{model}  reduces to an infinite-dimensional linear model:
\begin{equation} \label{model'}
    y=H^\dagger\phi(x)+\epsilon,
\end{equation}
where the input and output are $\phi(x)$ and $y$, respectively. For $H\in\B_{\mathrm{HS}}(\H_\K,\Y)$, let $\E(H)=\mathbb{E}\left[\|y-H\phi(x)\|_{\Y}^{2}\right]$, where $h(x)=H\phi(x)$, so that $\E(H)=\E(h)$. The prediction error is defined as $\E(H)-\E(H^\dagger)=\E(h)-\E(h^\dagger)$, and the estimation error is defined as $\be[\|H-H^\dagger\|^2_{\mathrm{HS}}]$.
According to Proposition \ref{transform}, we have $\be[\|h-h^\dagger\|^2_{\H}]=\be[\|H-H^\dagger\|^2_{\mathrm{HS}}]$, which implies that the prediction error and estimation error for the estimator $h(\cdot)=H\phi(\cdot)$ in the original model \eqref{model} coincide with those of $H$ in the linearized model \eqref{model}. Therefore, it suffices to analyze the convergence rates of the errors of $H_t$ associated with the SGD iteration \eqref{iter} in $\B_{\mathrm{HS}}(\H_\mathcal{K}, \Y)$.  Although this is not directly required for our theoretical analysis, we emphasize for clarity that the iterative form of SGD derived from minimizing the regularized objective functional $\E(H) + \lambda \|H\|_{\mathrm{HS}}^2 $ corresponds exactly to the iteration given in \eqref{iter}, which is equivalent to \eqref{iter2}.

 \subsection{Assumptions} \label{sub1}

To conduct the convergence analysis, we need the following assumptions. 
	\begin{assumption}[Regularity condition of $H^\dagger$] \label{a2} There exists a Hilbert-Schmidt operator $S^\dagger\in\B_{\mathrm{HS}}(\H_\K,\Y)$ and a positive parameter $r>0$, such that:
		\begin{equation*} 
			H^\dagger=S^\dagger C^r.
		\end{equation*}
	\end{assumption}
This assumption, introduced in \citep{shi2024learning}, characterizes the regularity of the target operator $H^\dagger$ via its relation to the operator $C$. The parameter $r$ serves as a smoothness index—larger values of $r$ indicate higher regularity of $H^\dagger$.
In the special case where $\Y = \mathbb{R}$, the Riesz representation theorem implies that $H^\dagger$ corresponds to an element $g^\dagger = C^r g$ in $\H_\K$ for some $g \in \H_\K$. 
This is exactly the regularity condition widely adopted in the convergence analysis of non-parametric regression in real-valued RKHSs \citep{ying2008online, dieuleveut2016nonparametric, berthier2020tight, guo2023capacity}.

\begin{assumption}[Spectral decay condition of $C$] \label{a3} There exists $s\in(0,1]$ such that:
	\begin{equation*}
		\mathrm{Tr}(C^s) < +\infty.
	\end{equation*}
\end{assumption}
This condition is automatically satisfied for any $s\geq1$ (as $\mathrm{Tr}(C)<\kappa^2$), and it imposes constraints on the decay rate of the eigenvalues of the operator $C$. Let $\{a_k\}_{k\geq1}$ denote
the non-increasing sequence of eigenvalues of $C$. Under this condition, the eigenvalues exhibit polynomial decay, specifically satisfying
\[
a_k\leq\mathrm{Tr}(C^s)^{\frac{1}{s}}k^{-\frac{1}{s}}.
\]
A sufficient (though not necessary) condition for this assumption is that
$a_k=O(k^{-\frac{1}{s}-\epsilon})$ for some $\epsilon>0$. For a detailed discussion on this condition, we refer the reader to \citep{guo2023capacity}. In this paper, Assumption \ref{a3} with some $0<s<1$ is introduced to derive sharper error bounds. Together with Assumption \ref{a2},
Assumption \ref{a3} leads to improved convergence rates. This condition, commonly known as the capacity condition, was first introduced in \citep{dieuleveut2017harder} and has since been widely adopted in the literature, including \citep{pillaud2018statistical, brogat2022vector, guo2023capacity, shi2024learning}, as a way to capture the intrinsic complexity of infinite-dimensional learning problems. Assumptions \ref{a2} and \ref{a3} are essential to establish dimension-free convergence analysis. As we will show, the resulting convergence rates depend explicitly on the parameters $r$ and $s$, reflecting the regularity of the target operator and the capacity of the input random variables, respectively.

The following assumption is only required for establishing error bounds in expectation. Recall that  $\phi(x):=\mathcal{K}(x,\cdot)\in\H_\K$ for some scalar-valued kernel $\K$.
\begin{assumption}[Moment condition of $\phi(x)$] \label{a4}
There exists a constant $c>0$ such that for any compact linear operator $A\in\B(\H_\K)$,
\begin{equation*}
	\mathbb{E}\left[\left\|A \phi(x)\right\|^4_{\H_\K}\right]\leq c\left(\mathbb{E}\left[\|A \phi(x)\|^2_{\H_\K}\right]\right)^2.
\end{equation*}				
\end{assumption}
According to \citep[Proposition 2.1]{shi2024learning}, this assumption is equivalent to 
\begin{equation} \label{a4'}
   \be\l[\l\langle\phi(x),f\r\rangle_{\H_\K}^4\r]\leq c\l(\be\l[\l\langle\phi(x),f\r\rangle_{\H_\K}^2\r]\r)^2, \quad \forall f\in\H_\K.
\end{equation}
Condition \eqref{a4'} holds, for example, when $\phi(x)$ is strictly sub-Gaussian, implying that all linear functionals of $\phi(x)$ have
bounded kurtosis. Similar assumptions have been adopted in several papers \citep{yuan2010reproducing,cai2012minimax,guo2023capacity,shi2024learning}. To further deepen our understanding, we now present a novel characterization of Assumption \ref{a4}, which is analogous to the idea discussed in \citep{liu2024statistical}.
\begin{proposition} \label{prop14}
Consider the principal component decomposition of $\phi(x)$:
    \begin{equation} \label{temp94}
        \phi(x)=\overline\phi+\sum_{k\geq1}\sqrt{\lambda_k}\xi_k\phi_k,
    \end{equation}
    where $\overline\phi:=\be[\phi(x)]$, and $\l\{\l(\lambda_k,\phi_k\r)\r\}_{k\geq1}$ are the eigenvalue-eigenvector pairs of the covariance operator $\Sigma:=\be\l[\l(\phi(x)-\overline\phi\r)\otimes\l(\phi(x)-\overline\phi\r)\r]$. The sequence $\{\xi_k\}_{k\geq1}$  consists of zero-mean, uncorrelated real-valued random variables with $\be[\xi_k^2] = 1$.
   If, in addition, $\{\xi_k\}_{k \geq 1}$ are independent, then Assumption \ref{a4} (or equivalently, \eqref{a4'}) holds provided that $\{\be[\xi_k^4]\}_{k \geq 1}$ are uniformly bounded. That is, there exists a constant $C > 0$ such that
    \[
    \be\l[\xi_k^4\r]\leq C,\quad \forall k\geq 1.
    \]
\end{proposition}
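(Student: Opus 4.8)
The plan is to verify condition \eqref{a4'} directly using the principal component expansion \eqref{temp94}. Fix $f \in \H_\K$ and write the two sides of \eqref{a4'} in terms of the coordinates of $f$ relative to the orthonormal system $\{\phi_k\}_{k\geq1}$ (after first handling the mean term $\overline\phi$). Setting $a_k := \sqrt{\lambda_k}\langle f,\phi_k\rangle_{\H_\K}$ and $b := \langle f, \overline\phi\rangle_{\H_\K}$, we have $\langle \phi(x),f\rangle_{\H_\K} = b + \sum_{k\geq1} a_k \xi_k$. The plan is then to compute $\be[(\,b+\sum_k a_k\xi_k\,)^2]$ and $\be[(\,b+\sum_k a_k\xi_k\,)^4]$ using only the stated moment information: zero mean, unit variance, uncorrelatedness (for the second moment), full independence and uniformly bounded fourth moments (for the fourth moment). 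The second moment is $b^2 + \sum_k a_k^2$ by uncorrelatedness. For the fourth moment, expanding and using independence plus $\be[\xi_k]=0$, all mixed terms with an isolated index vanish, leaving only terms of the shape $\be[\xi_k^4]$, $\be[\xi_k^2]\be[\xi_\ell^2]$ (with $k\neq\ell$), $b^2\be[\xi_k^2]$, and $b^4$. Using $\be[\xi_k^4]\leq C$ and $\be[\xi_k^2]=1$, each surviving term is bounded by a constant multiple of the corresponding term in $(b^2 + \sum_k a_k^2)^2 = b^4 + 2b^2\sum_k a_k^2 + (\sum_k a_k^2)^2$; in particular $\sum_k a_k^4 \leq (\sum_k a_k^2)^2$ absorbs the diagonal fourth-moment terms, and the cross terms $\sum_{k\neq\ell} a_k^2 a_\ell^2$ are exactly bounded by $(\sum_k a_k^2)^2$. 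Collecting constants gives $\be[\langle\phi(x),f\rangle^4] \leq \max\{3,C\}\,(\be[\langle\phi(x),f\rangle^2])^2$, which is \eqref{a4'} with $c = \max\{3,C\}$; by \cite[Proposition 2.1]{shi2024learning} this is equivalent to Assumption \ref{a4}.

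A technical point that needs care is the justification of the term-by-term manipulations for the infinite series: one should first establish the identity for finite truncations $\phi^{(n)}(x) = \overline\phi + \sum_{k=1}^n \sqrt{\lambda_k}\xi_k\phi_k$, obtaining the bound uniformly in $n$ with the constant $c$ independent of $n$, and then pass to the limit. Convergence of $\sum_{k\geq1}\sqrt{\lambda_k}\xi_k\phi_k$ in $L^2(\H_\K)$ holds because $\sum_k \lambda_k = \mathrm{Tr}(\Sigma) < \infty$ and the $\xi_k$ are uncorrelated; hence $\langle\phi^{(n)}(x),f\rangle \to \langle\phi(x),f\rangle$ in $L^2$, and along a subsequence almost surely, so Fatou's lemma transfers the uniform fourth-moment bound to the limit while the second moments converge. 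I would spell this out briefly rather than in full detail.

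The main obstacle, such as it is, is organizational rather than deep: making sure that \emph{independence} (not merely uncorrelatedness) is genuinely used where needed—specifically to kill mixed terms like $\be[\xi_k\xi_\ell\xi_m^2]$ and $\be[\xi_k^3\xi_\ell]$ in the fourth-moment expansion, which would not vanish under uncorrelatedness alone—and to confirm that no cross term of the form $\be[\xi_k^2\xi_\ell^2]$ with $k\ne\ell$ requires control beyond the product $\be[\xi_k^2]\be[\xi_\ell^2]=1$. Once the bookkeeping of which expectations survive is done correctly, the inequality $\sum_k a_k^4 + \sum_{k\neq\ell}a_k^2a_\ell^2 = (\sum_k a_k^2)^2$ does all the remaining work, and the constant emerges cleanly as $\max\{3, \sup_k \be[\xi_k^4]\}$.
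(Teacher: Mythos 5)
Your expansion drops a term that does not vanish, and this is a genuine gap. Writing $\langle\phi(x),f\rangle_{\H_\K}=b+S$ with $S=\sum_k a_k\xi_k$, the fourth moment contains $4b\,\be[S^3]$. Under independence and $\be[\xi_k]=0$ the genuinely mixed contributions such as $\be[\xi_k\xi_\ell^2]$ and $\be[\xi_k^3\xi_\ell]$ ($k\neq\ell$) do vanish, but the diagonal part survives: $4b\,\be[S^3]=4b\sum_k a_k^3\,\be[\xi_k^3]$, and nothing in the hypotheses makes the third moments $\be[\xi_k^3]$ zero (the $\xi_k$ are not assumed symmetric). So your list of surviving terms is incomplete, and the claimed constant $c=\max\{3,C\}$ is in fact false: already with a single skewed component ($a_1=1$, $b$ small, $\xi_1$ a two-point variable with large skewness so that $\be[\xi_1^3]\asymp C^{3/4}$) the inequality $\be[(b+a_1\xi_1)^4]\le \max\{3,C\}\,(b^2+a_1^2)^2$ fails.

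The route is repairable, but only by paying for the odd term: bound $|\be[\xi_k^3]|\le(\be[\xi_k^4])^{3/4}\le C^{3/4}$ by Lyapunov/H\"older, use $\sum_k|a_k|^3\le\bigl(\sum_k a_k^2\bigr)^{3/2}$, and then absorb $|b|\bigl(\sum_k a_k^2\bigr)^{3/2}$ into $b^2\sum_k a_k^2+\bigl(\sum_k a_k^2\bigr)^2$ by AM--GM, which yields \eqref{a4'} with a constant of the form $\max\{C+2C^{3/4},\,3+C^{3/4}\}$ rather than $\max\{3,C\}$. This is exactly how the paper proceeds, except that it organizes the bookkeeping differently: an auxiliary lemma first reduces \eqref{a4'} to the corresponding inequality for the centered variable $\langle\phi(x)-\overline\phi,f\rangle_{\H_\K}$, handling the cross term $4\langle\overline\phi,f\rangle_{\H_\K}\be[\langle\phi(x)-\overline\phi,f\rangle_{\H_\K}^3]$ by precisely this H\"older-plus-AM--GM step, and only then is independence used, in the clean centered computation where the fourth moment is $\sum_k\lambda_k^2\langle f,\phi_k\rangle_{\H_\K}^4\be[\xi_k^4]+6\sum_{i\neq j}\lambda_i\lambda_j\langle f,\phi_i\rangle_{\H_\K}^2\langle f,\phi_j\rangle_{\H_\K}^2$ and no odd term appears. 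Your truncation-and-limit remark for the infinite series is fine; the missing ingredient is the treatment of the third-moment term created by the mean $b$.
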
 The proof of the above proposition is presented in Appendix \ref{Proof of prop14}. The next assumption is used to derive high-probability error bounds.
\begin{assumption}[Boundedness condition of $y$] \label{a5}
    There exists some constant $M_\rho>0$ such that 
    \[\|y\|_\Y \leq M_\rho\]
    almost surely.
\end{assumption}

Assumption \ref{a5} applies, in particular, to the surrogate approach to structured prediction
\citep{brouard2016input,ciliberto2016consistent,korba2018structured,ciliberto2020general,brogat2022vector},
see Subsection \ref{Section 3.2}. It is also relevant to PDE operator-learning settings, including
benchmark examples such as Darcy-type flow and Navier--Stokes equations
\citep{kovachki2021universal,lanthaler2022error,lanthaler2023operator}, provided that the
corresponding solutions satisfy suitable uniform a priori bounds.

\subsection{Error Bounds in Expectation} \label{sub2}

In this subsection, we assume that Assumption \ref{a1} holds, Assumption \ref{a2} holds with $S^\dagger\in \B_{\mathrm{HS}}(\H_\K,\Y)$ and $r>0$, Assumption \ref{a3} holds with $0<s\leq1$, and Assumption \ref{a4} holds with $c>0$. Theorem \ref{thm1} and Theorem \ref{thm2} provide the convergence rates of prediction error and estimation error in expectation for the online setting. In contrast, Theorem \ref{thm3} and \ref{thm4} focus on the finite-horizon setting.

For notational simplicity, we specify here the parameter dependence of the constants appearing in the following theorems. The constants $c_{1,1}$ and $c_{1,2}$ depend only on $\bar\eta$, $\bar\lambda$, $\theta_1$, $t_0$, $r$, $s$, $\|S^\dagger\|_{\mathrm{HS}}$, $\sigma^2$, and $c$; the constants $c_{1,3}$ and $c_{1,4}$ depend only on $\eta_1$, $\lambda_1$, $\theta_3$, $\theta_4$, $r$, $s$, $\|S^\dagger\|_{\mathrm{HS}}$, $\sigma^2$,  and $c$.

\begin{theorem} \label{thm1}
    Suppose that Assumption \ref{a1}, Assumption \ref{a2}, Assumption \ref{a3} and Assumption \ref{a4} are satisfied. Define $\{h_t\}_{t\geq1}$ through \eqref{iter} with step sizes $\{\eta_t=\bar\eta (t+t_0)^{-\theta_1}\}_{t\geq1}$ and regularization parameters $\{\lambda_t=\bar\lambda (t+t_0)^{-\theta_2}\}_{t\geq1}$, where $0<\theta_1<1$, $0<\theta_2<1$ and $\bar\eta\bar\lambda>\theta_2\min\{r,1\}$. Additionally, let $t_0$ satisfy $(t_0+1)^{\theta_1}\geq\bar\eta(\kappa^2+\bar\lambda)$, $t_0\geq\exp\{\frac{1}{\theta_1}\}$, and
    \begin{equation*}
        c_4\sqrt{c}t_0^{-\theta_1}\log t_0<1,
    \end{equation*}
    where $c_4=c_4(\bar\eta,\bar\lambda,\theta_1,s,c)$ is a constant independent of $t_0$, as specified in Proposition \ref{prop5}. Choose $\theta_1=\min\l\{\frac{2r+1}{2r+2},\frac{2}{3}\r\}$ and $\theta_2=1-\theta_1$. Then for any $T\geq1$,
    \begin{equation*}
        \be_{z^T}\left[\E(h_{T+1})-\E(h^\dagger)\right]\leq c_{1,1}
        \begin{cases}
            (T+t_0)^{-\theta_1}, & \text{when } s<1, \\
            (T+t_0)^{-\theta_1}\log(T+t_0), & \text{when } s=1. \\
        \end{cases}
    \end{equation*}
    Here the constant $c_{1,1}$ is independent of $T$.
\end{theorem}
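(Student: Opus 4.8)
The plan is to control the prediction error $\E(h_{T+1})-\E(h^\dagger) = \be[\|(H_{T+1}-H^\dagger)\phi(x)\|_\Y^2] = \|C^{1/2}(H_{T+1}-H^\dagger)\|_{\HS}^2$ (in expectation) via a two-term decomposition along the regularization path. Introduce the population-level regularized iteration, or equivalently compare $H_{T+1}$ with $H_{\lambda_T}$, the minimizer of $\E(H)+\lambda_T\|H\|_{\HS}^2$. Then write
\[
\E(h_{T+1})-\E(h^\dagger) \lesssim \underbrace{\|C^{1/2}(H_{T+1}-H_{\lambda_T})\|_{\HS}^2}_{\text{sample/computational error}} + \underbrace{\|C^{1/2}(H_{\lambda_T}-H^\dagger)\|_{\HS}^2}_{\text{approximation/bias}}.
\]
The bias term is handled by a standard resolvent estimate: under Assumption \ref{a2}, $H_{\lambda_T}-H^\dagger = -\lambda_T(C+\lambda_T I)^{-1}S^\dagger C^r$, so $\|C^{1/2}(H_{\lambda_T}-H^\dagger)\|_{\HS} \lesssim \lambda_T^{\min\{r,1\}+1/2}\|S^\dagger\|_{\HS}$ by functional calculus, giving a bias of order $\lambda_T^{2\min\{r,1\}+1} \asymp (T+t_0)^{-\theta_2(2\min\{r,1\}+1)}$; with $\theta_2 = 1-\theta_1$ and the chosen $\theta_1$ this matches the claimed rate. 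The first term is the genuinely dynamic part.

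For the computational error I would set up the error recursion for $\Delta_t := H_t - H_{\lambda_t}$ (or $H_t - H_{\lambda_T}$), splitting $\Delta_{t+1}$ into (i) a contraction factor coming from the operator $I - \eta_t(C + \lambda_t I)$ applied to $\Delta_t$, whose norm is bounded by $1 - \eta_t\lambda_t$ up to higher-order terms once $t \geq t_0$ is large enough that $\eta_t(\kappa^2 + \bar\lambda) \leq 1$ — this is exactly what the condition $(t_0+1)^{\theta_1}\geq\bar\eta(\kappa^2+\bar\lambda)$ buys; (ii) a martingale-difference noise term $\eta_t\big((y_t - H_t\phi(x_t))\otimes\phi(x_t) - \be[\cdots\mid H_t]\big)$ whose conditional second moment is controlled using Assumption \ref{a4} (the fourth-moment/kurtosis bound, needed precisely to bound $\be\|\cdot\otimes\phi(x_t)\|_{\HS}^2$ in terms of squared first moments) together with the already-established bound on $\E(h_t)-\E(h^\dagger)$; and (iii) a drift term $H_{\lambda_{t+1}} - H_{\lambda_t}$ accounting for the moving target, of size $\lesssim |\lambda_{t+1}-\lambda_t| \cdot \lambda_t^{-1}\|H_{\lambda_t} - H^\dagger\|$-type, i.e. $O(t^{-1})$ relative to the path. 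Unrolling this recursion and weighting by $C$ (to pass to the prediction semi-norm, which gains an extra $\lambda_t$ or $t^{-\theta_2}$ factor via $\|C^{1/2}(C+\lambda_t I)^{-1/2}\| \leq 1$ and interpolation) yields $\sum_{k\leq t}\eta_k^2(\cdots)\prod_{k<j\leq t}(1-\eta_j\lambda_j)$; estimating this sum with the product-of-contractions bound $\prod_{k<j\leq t}(1-\eta_j\lambda_j) \lesssim \exp(-c\sum \eta_j\lambda_j)$ and the integral comparison for $\sum (j+t_0)^{-(2\theta_1+\theta_2)}\exp(\cdots)$ gives the stated $(T+t_0)^{-\theta_1}$, with the extra $\log$ factor when $s=1$ entering through the capacity term $\mathrm{Tr}(C^s(C+\lambda I)^{-1})$ which is $O(\lambda^{-s})$ for $s<1$ but $O(\log(1/\lambda))$ for $s=1$.

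I expect the main obstacle to be the coupling between the error recursion and the quantity being estimated: the noise variance bound in step (ii) involves $\be[\|(y_t - H_t\phi(x_t))\otimes\phi(x_t)\|_{\HS}^2]$, which depends on $\E(h_t)-\E(h^\dagger)$ and on $\be\|H_t - H^\dagger\|_{\HS}^2$ — i.e. on the very errors we are trying to bound at earlier times. This forces an induction (or a Grönwall-type argument) over $t$ in which one propagates a bound of the claimed form forward, and the algebra of verifying that the induction closes — that the constants do not blow up and that the condition $\bar\eta\bar\lambda > \theta_2\min\{r,1\}$ is exactly what makes the contraction beat the bias-drift accumulation — is the delicate part. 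The conditions $t_0 \geq \exp\{1/\theta_1\}$ and $c_4\sqrt{c}\,t_0^{-\theta_1}\log t_0 < 1$ are there to absorb the logarithmic factors arising from $\sum \eta_k$-type sums and from the high-moment estimate in Proposition \ref{prop5}, and I would invoke the basic estimates of Section \ref{section:basic} (in particular Proposition \ref{prop5}) as black boxes for the a priori control of $\|H_t\|_{\HS}$ and the noise moments rather than re-deriving them.
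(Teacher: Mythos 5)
Your plan coincides in all essentials with the paper's own proof: the same decomposition along the regularization path into approximation, initial, drift, and martingale sample terms (Proposition \ref{Proposition error 1} combined with Propositions \ref{prop2}--\ref{prop5}), the same use of the capacity condition $\mathrm{Tr}(C^s)<\infty$ to produce the extra $\log$ factor only at $s=1$, and the same induction to close the coupling you identify between the noise variance and the running prediction error, which is precisely Proposition \ref{prop6} establishing the uniform bound $M$. The only slip is your bias exponent: functional calculus gives $\l\|(H_{\lambda_T}-H^\dagger)C^{1/2}\r\|_{\HS}\lesssim\lambda_T^{\min\{r+1/2,\,1\}}\|S^\dagger\|_{\HS}$ (the exponent saturates at $1$, cf.\ Proposition \ref{prop2}), not $\lambda_T^{\min\{r,1\}+1/2}$, but this is harmless since either bound yields $\T_1\lesssim(T+t_0)^{-\theta_1}$ under the choice $\theta_2=1-\theta_1$.
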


\begin{remark}
    In the theorem above, we set $\theta_1 + \theta_2 = 1$, as this choice leads to the most favorable convergence rates achievable within our framework. The condition on $t_0$ is necessary for the proof, while the constraint on $\bar\eta\bar\lambda$ serves to accelerate convergence. When $\theta_1 + \theta_2 \neq 1$, the resulting rates are slower. In such cases, one can set $t_0 = 0$ and choose a small $\bar\eta\bar\lambda$; the corresponding analysis is similar and more straightforward, so we omit it for brevity.
\end{remark}

In Theorem \ref{thm1}, since the constant $c_4$ is independent of $t_0$, one can choose $t_0$ sufficiently large to satisfy the required conditions. Compared to Assumption \ref{a3} with $s=1$, the stronger assumption with $0<s<1$ only removes a logarithmic factor in the convergence rate. It is also clear that the convergence rate saturates at $r = 1/2$, i.e., increasing $r$ beyond $1/2$ does not yield further improvement. This saturation phenomenon under increasing source regularity is not specific to the present setting and has also been observed in related kernel-learning literature; see, for example, \citep{li2024saturation}. According to \citep[Theorem 2.9]{shi2024learning}, the result is minimax optimal (up to a logarithmic term) when $s=1$ and $r<1/2$. Compared to the unregularized SGD algorithm analyzed in \citep[Theorem 2.4]{shi2024learning}, adding a regularization term here leads to faster convergence. Specifically, while the prediction error rate of unregularized SGD in \citep{shi2024learning} saturates at $r = (1-s)/2$, the regularized SGD in our work improves the saturation level to $r = 1/2$.

The following theorem provides the convergence rate for the estimation error.

\begin{theorem} \label{thm2}
    Under the conditions of Theorem \ref{thm1}, choose $\theta_1=\min\l\{\frac{s+2r}{1+s+2r},\frac{2+s}{3+s}\r\}$ and $\theta_2=1-\theta_1$. Then for any $T\geq1$,
    \begin{equation*}
                \be_{z^T}\left[\left\|h_{T+1}-h^\dagger\right\|_{\H}^2\right]\leq c_{1,2}
                (T+t_0)^{-\min\l\{\frac{2r}{1+s+2r},\frac{2}{3+s}\r\}}.
    \end{equation*}
    Here the constant $c_{1,2}$ is independent of $T$.
\end{theorem}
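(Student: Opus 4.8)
The plan is to derive Theorem~\ref{thm2} from the same error-decomposition machinery that underlies Theorem~\ref{thm1}, but tracking the Hilbert--Schmidt norm of $H_{T+1}-H^\dagger$ rather than the prediction semi-norm $\|C^{1/2}(\cdot)\|_{\mathrm{HS}}$. Concretely, I would write $H_{T+1}-H^\dagger = (H_{T+1}-H_{\lambda_T}) + (H_{\lambda_T}-H^\dagger)$, where $H_\lambda$ is the Hilbert--Schmidt operator associated with the regularized minimizer $h_\lambda$ from \eqref{lambda}. The second term is the \emph{approximation error} (bias of the regularization path): from $H_\lambda = H^\dagger C(C+\lambda I)^{-1}$ together with Assumption~\ref{a2} ($H^\dagger = S^\dagger C^r$), a standard spectral calculus bound gives $\|H_{\lambda_T}-H^\dagger\|_{\mathrm{HS}} \lesssim \lambda_T^{\min\{r,1\}}$, hence a contribution of order $(T+t_0)^{-2\theta_2\min\{r,1\}}$ to the squared estimation error. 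The first term is the \emph{sample/optimization error}, which must be controlled by the intermediate estimates of Section~\ref{section:basic} (Proposition~\ref{prop5} and its companions): one expands $H_{t+1}-H_{\lambda_t}$ through the recursion, splitting into a contraction/operator-product factor involving $\prod_{j}(I-\eta_j(T_{x_j}+\lambda_j I))$-type operators, a noise/martingale part driven by $\epsilon$, and a drift part coming from the fact that $\lambda_t$ (and hence $H_{\lambda_t}$) changes with $t$. Taking expectations, using Assumption~\ref{a4} to handle the fourth-moment terms exactly as in Theorem~\ref{thm1}, yields a bound of the form $\be_{z^T}[\|H_{T+1}-H_{\lambda_T}\|_{\mathrm{HS}}^2] \lesssim$ (a sum of terms in $\eta_t,\lambda_t$ and the effective dimension $\mathcal{N}(\lambda_t)=\mathrm{Tr}(C(C+\lambda_t I)^{-1})$, which under Assumption~\ref{a3} is $O(\lambda_t^{-s})$).

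The second step is the bias--variance balance. After assembling the bound, the squared estimation error is controlled by three competing quantities: the approximation term $\lambda_T^{2\min\{r,1\}} \sim (T+t_0)^{-2\theta_2 r}$ (for $r\le 1$), a ``variance'' term of order $\eta_T \lambda_T^{-s} \sim (T+t_0)^{-\theta_1+s\theta_2}$ coming from the noise injected at the final steps with effective dimension $\lambda_T^{-s}$, and a term tracking the accumulated sensitivity to the moving regularization path, which with $\theta_1+\theta_2=1$ and the constraint $\bar\eta\bar\lambda > \theta_2\min\{r,1\}$ turns out not to be dominant. Setting $\theta_2 = 1-\theta_1$ and equating the exponent of the bias term, $2(1-\theta_1)r$, with that of the variance term, $\theta_1 - s(1-\theta_1)$, gives $2r(1-\theta_1) = \theta_1 - s(1-\theta_1)$, i.e. $(1+s+2r)(1-\theta_1) = 1 - \theta_1 + \theta_1 = $ solving yields $1-\theta_1 = \frac{1}{1+s+2r}$, so $\theta_1 = \frac{s+2r}{1+s+2r}$ and the common exponent is $\frac{2r}{1+s+2r}$. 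The capping at $\theta_1 = \frac{2+s}{3+s}$ (equivalently the $\frac{2}{3+s}$ rate) reflects the $r=1$ saturation of the approximation error and the range restriction $\theta_1<1$ together with the fourth-moment step, which forces $\eta_t$ not to decay too slowly; this is exactly parallel to the $\theta_1 \le 2/3$ cap in Theorem~\ref{thm1}.

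The main obstacle I anticipate is the control of the \emph{drift term} $\sum_t \eta_t \lambda_t \|H_{\lambda_t} - H_{\lambda_{t-1}}\|$ (more precisely, the discrepancy between the iterate's ``target'' and the moving regularization path), which is the genuinely new ingredient compared with a fixed-$\lambda$ analysis and the unregularized case of \cite{shi2024learning}. One needs $\|H_{\lambda_t} - H_{\lambda_{t+1}}\|_{\mathrm{HS}} \lesssim |\lambda_t - \lambda_{t+1}| \cdot \lambda_t^{\min\{r,1\}-1} \lesssim (t+t_0)^{-1}\lambda_t^{\min\{r,1\}-1}$ via resolvent identities, and then show that after propagation through the (nearly) contractive operators $\prod_j (I - \eta_j(C + \lambda_j I))$ the accumulated drift is of the same order as, or smaller than, the approximation term — this is where the condition $\bar\eta\bar\lambda > \theta_2\min\{r,1\}$ is used, ensuring the regularization-induced contraction $e^{-\sum \eta_j\lambda_j} \sim (t/T)^{\bar\eta\bar\lambda/\theta_1\cdot(\text{stuff})}$ decays fast enough to absorb the $(t+t_0)^{-1}$ drift increments. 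A secondary technical nuisance is that the estimation-error norm $\|\cdot\|_{\mathrm{HS}}$ is ``harder'' (one power of $C$ less) than the prediction semi-norm, so the effective-dimension factor enters as $\lambda_t^{-s}$ rather than $\lambda_t^{1-s}$, which is precisely why the exponent degrades from $\theta_1$ (Theorem~\ref{thm1}) to $\frac{2r}{1+s+2r}$ here; keeping the bookkeeping of these powers of $C$ straight through the recursion is the part most prone to error. Throughout, I would reuse verbatim the auxiliary lemmas of Sections~\ref{decomposition}--\ref{section:basic} and the constant $c_4$ / Proposition~\ref{prop5}, quoting them rather than re-deriving.
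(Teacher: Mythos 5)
Your proposal is correct and follows essentially the paper's own route: the same decomposition into approximation, initial, drift, and sample terms (Propositions \ref{Proposition error 1}, \ref{prop2}, \ref{prop3}, \ref{prop7}, \ref{prop5}, \ref{prop6} with $\alpha=0$), the same use of $\theta_1+\theta_2=1$ together with $\bar\eta\bar\lambda>\theta_2\min\{r,1\}$ to make the drift subdominant, and the same bias--variance balancing giving $\theta_1=\frac{s+2\min\{r,1\}}{1+s+2\min\{r,1\}}$ and the rate $(T+t_0)^{-\frac{2\min\{r,1\}}{1+s+2\min\{r,1\}}}$, with your effective-dimension factor $\lambda_t^{-s}$ playing exactly the role the paper assigns to $\mathrm{Tr}(C^s)$ and Proposition \ref{prop1}. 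One slip to fix: after the resolvent identity the drift increment is $|\lambda_t-\lambda_{t+1}|\,\lambda_t^{\min\{r,1\}-1}\lesssim (t+t_0)^{-1}\lambda_t^{\min\{r,1\}}=O\bigl((t+t_0)^{-\theta_2\min\{r,1\}-1}\bigr)$ as in Lemma \ref{lemma3}, not $(t+t_0)^{-1}\lambda_t^{\min\{r,1\}-1}$, which taken literally would be too weak to keep the accumulated drift of lower order when $r<1$.
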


The convergence rate of the estimation error saturates at  $r=1$, which improves the convergence of unregularized SGD in \citep[Theorem 2.6]{shi2024learning}, where the rate saturates at 
$r=\frac{1-s}{2}$. Moreover, in \citep[Theorem 3]{guo2023capacity} and \citep[Theorem 2.6]{shi2024learning}, we cannot guarantee the convergence of the estimation error with decaying step sizes for $s=1$, whereas adding a regularization term addresses this issue. According to \citep[Theorem 2.9]{shi2024learning}, the convergence rate with decaying step sizes is minimax optimal when $r<1$.

Next, we present the convergence rates for prediction and estimation errors with constant step sizes and regularization parameters, where both depend on the total number of iterations $T$ (i.e., the total sample size).

    \begin{theorem}\label{thm3}
        Suppose that Assumption \ref{a1}, Assumption \ref{a2}, Assumption \ref{a3} and Assumption \ref{a4} are satisfied. Define $\{h_t\}_{t\in\bn_T}$ through \eqref{iter} with step sizes $\{\eta_t=\eta_1 T^{-\theta_3}\}_{t\in\bn_{T}}$ and regularization parameters $\{\lambda_t=\lambda_1 T^{-\theta_4}\}_{t\in{\bn_T}}$, where $T\geq2$,
        $\eta_1(\kappa^2+\lambda_1)\leq1$, and
            \[\eta_1<\frac{1}{6c\kappa^2\left(1+\frac{1}{2e\theta_3}\right)}.
            \]
        Choose $\theta_3=\frac{2r+1}{2r+2}$ and $\theta_4\geq\frac{2r+1}{(2r+2)\min\{2r+1,2\}}$. Then
            \begin{equation*}
                \be_{z^{T}}[\mathcal{E}(h_{T+1})-\mathcal{E}(h^\dagger)]\leq
                c_{1,3}\begin{cases}
                    T^{-\frac{2r+1}{2r+2}}, & \text{ when }s<1, \\
                    T^{-\frac{2r+1}{2r+2}}\log T , & \text{ when } s=1.
                \end{cases}
            \end{equation*}
            Here the constant $c_{1,3}$ is independent of $T$.
    \end{theorem}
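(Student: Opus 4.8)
\textbf{Proof proposal for Theorem \ref{thm3}.}

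The plan is to reduce the finite-horizon setting to the same machinery used for the online setting by observing that \eqref{iter} with constant step size $\eta_1 T^{-\theta_3}$ and constant regularization $\lambda_1 T^{-\theta_4}$ is a special case of the unified decaying-parameter iteration, and then to carry out the error decomposition of Section \ref{decomposition} at the fixed horizon $t = T$. Specifically, I would write $h_{T+1} - h^\dagger$ (equivalently $H_{T+1} - H^\dagger$ in $\B_{\mathrm{HS}}(\H_\K,\Y)$) as a sum of three terms: the \emph{drift} (or bias) term coming from the regularization, measuring $\|h_{\lambda_T} - h^\dagger\|$; a \emph{deterministic contraction} term describing how the noise-free mean iteration approaches $h_{\lambda_T}$; and a \emph{sample/variance} term capturing the accumulated stochastic fluctuations $(H_t\phi(x_t) - y_t)\otimes\phi(x_t) - \be[\cdots]$. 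Each of these is controlled by the intermediate estimates of Section \ref{section:basic} — in particular the operator-norm bounds on the products $\prod_{k}(I - \eta_k(T_k + \lambda_k))$ type quantities, where $T_k$ is the empirical covariance — with Assumption \ref{a2} supplying the $C^r$-smoothness needed to bound the bias as $\|h_{\lambda_T} - h^\dagger\|$-type quantities $\lesssim \lambda_T^{\min\{r,1\}}$ in the prediction seminorm, Assumption \ref{a3} supplying the effective-dimension bound $\mathrm{Tr}(C(C+\lambda_T)^{-1}) \lesssim \lambda_T^{-s}$ that appears in the variance term, and Assumption \ref{a4} controlling the fourth-moment cross terms that arise when one passes from a single sample path to the expectation over $z^T$.

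The core of the argument is the bias–variance tradeoff in the exponents. With $T$ iterations of constant step $\eta_1 T^{-\theta_3}$, the ``effective regularization from early stopping'' scales like $(T\eta_T)^{-1} = \eta_1^{-1} T^{\theta_3 - 1}$, while the explicit regularization is $\lambda_T = \lambda_1 T^{-\theta_4}$; one needs $\lambda_T$ not to dominate, i.e. $\theta_4 \geq 1 - \theta_3$, which with $\theta_3 = \frac{2r+1}{2r+2}$ gives exactly the stated floor $\theta_4 \geq \frac{1}{2r+2} = \frac{2r+1}{(2r+2)(2r+1)}$, matching the claimed $\theta_4 \geq \frac{2r+1}{(2r+2)\min\{2r+1,2\}}$ in the regime $2r+1\le 2$ (and for $2r+1 > 2$ the saturation at $r = 1/2$ makes the larger denominator the binding one). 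The bias contributes $\asymp \lambda_T^{\min\{2r,1\}} \wedge (T\eta_T)^{-\min\{2r,1\}}$, i.e. $T^{-(1-\theta_3)\min\{2r,1\}} = T^{-\frac{2r+1}{2r+2}}$ at the saturated value; the variance contributes $\asymp \sigma^2 \eta_T \mathrm{Tr}(C(C+\lambda_T)^{-1}) \asymp T^{-\theta_3}\lambda_T^{-s}$ when $s<1$, and balancing $-\theta_3 + s\theta_4 \le -\theta_3$... — here one checks that the choice of $\theta_4$ keeps the variance term at or below $T^{-\theta_3}$, with the extra $\log T$ entering only when $s = 1$ because $\mathrm{Tr}(C(C+\lambda_T)^{-1}) \asymp \log(1/\lambda_T) \asymp \log T$ in that boundary case. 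The conditions $\eta_1(\kappa^2 + \lambda_1)\le 1$ and $\eta_1 < \bigl(6c\kappa^2(1 + \tfrac{1}{2e\theta_3})\bigr)^{-1}$ are exactly what is needed to keep the empirical operators $I - \eta_t(T_t + \lambda_t I)$ contractive (the first) and to make the recursive second-moment bound for the stochastic term close (the second), via the elementary inequality $\sup_{t} t\,e^{-at} \le \frac{1}{ea}$ applied with $a \asymp \eta_1 T^{-\theta_3}$ — note $T^{\theta_3}$ cancels against the number of steps, which is why the constant is $T$-independent.

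I expect the main obstacle to be the variance/stochastic-term estimate: one must bound $\be_{z^T}\|\sum_{t=1}^{T} \Pi_{t+1}^{T}\,\eta_t\,\xi_t\|^2$ where $\xi_t$ is the martingale-difference noise and $\Pi_{t+1}^{T} = \prod_{k=t+1}^{T}(I - \eta_k(T_k + \lambda_k I))$ involves the \emph{data-dependent} empirical operators $T_k = \phi(x_k)\otimes\phi(x_k)$, not the population operator $C$. Because the $T_k$ do not commute with each other or with $C$, one cannot simply diagonalize; the standard route is to replace $T_k$ by $C$ at the cost of a perturbation term that is controlled by Assumption \ref{a4} (fourth moments) and by the operator-concentration estimates established in Section \ref{section:basic}, and then to exploit the constancy of $\eta_t = \eta_1 T^{-\theta_3}$ and $\lambda_t = \lambda_1 T^{-\theta_4}$ to get a clean geometric-sum bound $\sum_{t=1}^{T}\eta_t^2 \|C(I - \eta C - \eta\lambda)^{2(T-t)}\| \lesssim \eta_T \cdot (\text{effective dim})$. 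Keeping every constant independent of $T$ throughout this step — in particular ensuring the perturbation terms are summable and do not accumulate a power of $T$ — is the delicate bookkeeping that the smallness condition on $\eta_1$ is designed to handle. Once the three terms are assembled and the stated choices $\theta_3 = \frac{2r+1}{2r+2}$, $\theta_4 \geq \frac{2r+1}{(2r+2)\min\{2r+1,2\}}$ are substituted, the bound $c_{1,3}\,T^{-\frac{2r+1}{2r+2}}$ (with the $\log T$ when $s=1$) follows, and tracking the constants gives the explicit $c_{1,3}$.
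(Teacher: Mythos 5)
Your high-level architecture matches the paper's — approximation error plus an ``initial/contraction'' error plus a stochastic term, with the smallness condition on $\eta_1$ used to close a recursive second-moment bound (that is indeed the role of Proposition \ref{prop8}, and your remark that $T^{\theta_3}$ from the harmonic/log factor cancels against $\bar\eta=\eta_1T^{-\theta_3}$ is the right reason the resulting constant is $T$-independent). But the step you yourself flag as the main obstacle is resolved in the paper by an idea your sketch is missing, and the substitute you propose would not go through with the paper's tools. The paper never unrolls the iteration with the data-dependent products $\prod_k\bigl(I-\eta_k(C_k+\lambda_k I)\bigr)$: the recentering around $H_{\lambda_t}$ in \eqref{temp3}--\eqref{induction} is constructed precisely so that every operator product involves only the \emph{population} covariance $C$, while all randomness is carried by $\B_t$ with $\be_{z_t}[\B_t]=0$. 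The cross terms then vanish by the martingale property, and $\be\|J_3\|_{\mathrm{HS}}^2$ collapses to $\sum_t\eta_t^2\,\be\|\B_tC^\alpha\prod_j(I-\eta_j(C+\lambda_jI))\|_{\mathrm{HS}}^2$, which is handled by Assumption \ref{a4} (Cauchy--Schwarz on the fourth moments) and the trace bounds of Proposition \ref{prop9} — no concentration for products of empirical operators is needed anywhere. Your plan to work with the empirical products and then ``replace $T_k$ by $C$ at the cost of a perturbation term \dots controlled by \dots the operator-concentration estimates established in Section \ref{section:basic}'' attributes to the paper estimates it does not contain (empirical products appear only in the high-probability analysis, through $R_t$, and in a different decomposition), and in expectation, under only a fourth-moment condition, controlling $T$ non-commuting perturbations without accumulating a power of $T$ is exactly the hard part; you do not supply that argument. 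This is a genuine gap, not a cosmetic difference.

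A secondary but telling issue is the rate bookkeeping. For the prediction error ($\alpha=\tfrac12$) the approximation term is $\T_1\lesssim\lambda_T^{\min\{2r+1,2\}}$ and the initial term is $\T_2\lesssim\bar\eta^{-(2r+1)}T^{-(2r+1)(1-\theta_3)}$ times an exponential factor, not $\lambda_T^{\min\{2r,1\}}\wedge(T\eta_T)^{-\min\{2r,1\}}$; in particular your identity $T^{-(1-\theta_3)\min\{2r,1\}}=T^{-\frac{2r+1}{2r+2}}$ is false, and the constraint $\theta_4\ge\frac{2r+1}{(2r+2)\min\{2r+1,2\}}$ comes from forcing $\lambda_T^{\min\{2r+1,2\}}\lesssim T^{-\theta_3}$, not from comparing $\lambda_T$ with the early-stopping scale $(T\eta_T)^{-1}$. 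Likewise the variance term is not $\eta_T\,\mathrm{Tr}\bigl(C(C+\lambda_T I)^{-1}\bigr)\asymp T^{-\theta_3}\lambda_T^{-s}$ — your balancing inequality $-\theta_3+s\theta_4\le-\theta_3$ cannot hold for $\theta_4>0$, which is presumably why the sentence trails off. The correct mechanism (Propositions \ref{prop12} and \ref{prop9}) is that for $\alpha=\tfrac12$ one has $1+2\alpha-s=2-s\ge1$, so $\sum_{t<T}\bigl(1+(t\bar\eta)^{2-s}\bigr)^{-1}=O(\bar\eta^{-1})$, with an extra $\log T$ only at $s=1$ coming from the harmonic sum, giving $\T_4\lesssim\bar\eta^2\cdot\bar\eta^{-1}\asymp\eta_1T^{-\theta_3}$. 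So even granting a correct decomposition, the calculus in your sketch has to be redone to reach the stated bound.
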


    \begin{theorem} \label{thm4}
    Under the conditions of Theorem \ref{thm3}, choose 
    \[
    \theta_3=\frac{2r+s}{1+2r+s}\ \text{ and }\ \theta_4\geq\frac{2r}{(1+2r+s)\min\{2r,2\}}.
    \]
    Then
            \[
            \be_{z^{T}}\left[\left\|h_{T+1}-h^\dagger\right\|_{\H}^2\right]
		\leq c_{1,4}T^{-\frac{2r}{1+2r+s}}.
            \]
        Here the constant $c_{1,4}$ is independent of $T$.
    \end{theorem}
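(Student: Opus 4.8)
\textbf{Proof proposal for Theorem \ref{thm4}.}

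The plan is to bound the estimation error $\be_{z^T}[\|H_{T+1}-H^\dagger\|_{\mathrm{HS}}^2]$ by splitting it along the regularization path, just as is done for the prediction error in Theorem \ref{thm3}. First I would write
\[
\be_{z^T}\l[\|H_{T+1}-H^\dagger\|_{\mathrm{HS}}^2\r]\leq 2\,\be_{z^T}\l[\|H_{T+1}-H_{\lambda_T}\|_{\mathrm{HS}}^2\r]+2\,\|H_{\lambda_T}-H^\dagger\|_{\mathrm{HS}}^2,
\]
where $H_{\lambda_T}$ is the Hilbert-Schmidt representative of $h_{\lambda_T}$ from \eqref{lambda} with the constant parameter $\lambda_T=\lambda_1 T^{-\theta_4}$. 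The second term is a deterministic approximation-error (bias) term: under Assumption \ref{a2}, standard operator-calculus estimates give $\|H_{\lambda_T}-H^\dagger\|_{\mathrm{HS}}\lesssim \lambda_T^{\min\{r,1\}}$, hence $\|H_{\lambda_T}-H^\dagger\|_{\mathrm{HS}}^2\lesssim \lambda_T^{2\min\{r,1\}}= \lambda_1^{2\min\{r,1\}}T^{-2\theta_4\min\{r,1\}}$. The first term is the "sample/variance" term measuring how far the SGD iterate has traveled toward the regularized solution after $T$ steps with constant step size $\eta_T=\eta_1 T^{-\theta_3}$; I expect the intermediate estimates in Section \ref{section:basic} (in particular the analogue of Proposition \ref{prop5} adapted to the constant-parameter regime used for Theorem \ref{thm3}) to yield a bound of the form $\be_{z^T}[\|H_{T+1}-H_{\lambda_T}\|_{\mathrm{HS}}^2]\lesssim$ (contraction factor)$\,\times\,\|H_{\lambda_T}\|_{\mathrm{HS}}^2 + $ (accumulated noise). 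The contraction over $T$ steps behaves like $(1-\eta_T\lambda_T)^T\approx \exp(-\eta_1\lambda_1 T^{1-\theta_3-\theta_4})$, which—because $\theta_3<1$ and $\theta_4$ is chosen large enough—decays faster than any polynomial in $T$ and is therefore negligible; the dominant contribution is the accumulated variance, which scales like $\eta_T/\lambda_T$ up to the relevant capacity factor $\mathrm{Tr}(C^s)$, i.e. like $T^{-(\theta_3-\theta_4)}$ times a factor accounting for the $C^{1/2}$-weighting and Assumption \ref{a3}.

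The next step is to balance the exponents. Writing the variance term's exponent in terms of $\theta_3,\theta_4,s,r$ and matching it against the bias exponent $2\theta_4\min\{r,1\}$, the optimal trade-off forces $\theta_3=\frac{2r+s}{1+2r+s}$ and $\theta_4$ at least $\frac{2r}{(1+2r+s)\min\{2r,2\}}$, exactly as stated; with these choices both terms are $O(T^{-2r/(1+2r+s)})$ (the $\min\{2r,2\}$ in $\theta_4$ reflects the saturation of the source condition at $r=1$, so that $2\theta_4\min\{r,1\}\geq \frac{2r}{1+2r+s}$ regardless of whether $r\leq1$ or $r>1$). I would assemble the constant $c_{1,4}$ from $\lambda_1,\eta_1,\kappa,c$, $\|S^\dagger\|_{\mathrm{HS}}$, $\mathrm{Tr}(C^s)$ and the absolute constants appearing in the basic estimates, checking that the conditions $\eta_1(\kappa^2+\lambda_1)\leq1$ and $\eta_1<\big(6c\kappa^2(1+\frac{1}{2e\theta_3})\big)^{-1}$ are precisely what is needed to keep the one-step operators contractive and to control the fourth-moment terms arising from Assumption \ref{a4} in the variance estimate.

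The main obstacle I anticipate is controlling the first (SGD-to-regularized-solution) term in the Hilbert-Schmidt norm rather than in the weaker prediction semi-norm. In the prediction-error analysis (Theorem \ref{thm3}) one works with the $C^{1/2}$-weighted norm $\|C^{1/2}(\cdot)\|_{\mathrm{HS}}$, which is gentler; for the estimation error one loses the extra $C^{1/2}$ smoothing, so the noise accumulation is larger and requires the source condition with the full exponent $r$ (via $H^\dagger=S^\dagger C^r$) to absorb an extra factor of $C^{-1/2}$ when passing from the prediction geometry to the HS geometry. Concretely, I expect to need a bound like $\be_{z^T}[\|H_{T+1}-H_{\lambda_T}\|_{\mathrm{HS}}^2]\lesssim \lambda_T^{-1}\cdot(\text{prediction-type error of }H_{T+1}-H_{\lambda_T})$ combined with a careful estimate of $\be_{z^T}[\|H_{T+1}-H_{\lambda_T}\|_{\mathrm{HS}}^2]$ directly via a recursion for $\be_{z^t}[\|H_{t+1}-H_{\lambda_T}\|_{\mathrm{HS}}^2]$ using the contraction $I-\eta_T(T_x+\lambda_T)$, where the fourth-moment Assumption \ref{a4} is exactly what lets us close the recursion despite the stochastic operator $\phi(x_t)\otimes\phi(x_t)$ not commuting with $C$. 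Getting the capacity factor $\mathrm{Tr}(C^s)$ to enter with the right power—so that the final exponent is $\frac{2r}{1+2r+s}$ and not something worse—will be the delicate bookkeeping step, and it is where the choice $\theta_4\geq\frac{2r}{(1+2r+s)\min\{2r,2\}}$ (rather than equality) gives the slack needed to push the contraction/higher-order remainder terms below the leading rate.
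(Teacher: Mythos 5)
Your overall architecture is the same as the paper's: split $H_{T+1}-H^\dagger$ along the regularization path, bound the bias $\|H_{\lambda_T}-H^\dagger\|_{\mathrm{HS}}^2\lesssim\lambda_T^{2\min\{r,1\}}$ via Assumption \ref{a2} (this is $\T_1$, Proposition \ref{prop2} with $\alpha=0$), bound the stochastic part by an initial-error term plus accumulated noise weighted by $\mathrm{Tr}(C^s)$ (this is $\T_2+\T_4$ with $\T_3=0$, Propositions \ref{prop3} and \ref{prop9}), close the recursion for the uniform prediction bound by induction using Assumption \ref{a4} under the stated smallness condition on $\eta_1$ (Proposition \ref{prop8}), and balance exponents to get $\theta_3=\frac{2r+s}{1+2r+s}$ and the stated constraint on $\theta_4$.

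However, your treatment of the initial/contraction term contains a genuine error. You claim $(1-\eta_T\lambda_T)^T\approx\exp\{-\eta_1\lambda_1T^{1-\theta_3-\theta_4}\}$ "decays faster than any polynomial because $\theta_4$ is chosen large enough," and that the contraction applied to the constant-sized $\|H_{\lambda_T}\|_{\mathrm{HS}}^2$ is therefore negligible. The opposite is true: with the prescribed choices one always has $\theta_3+\theta_4\geq1$ (for $r\leq1$ the minimal $\theta_4$ gives exactly $\theta_3+\theta_4=1$), so $\exp\{-\tau\eta_1\lambda_1T^{1-\theta_3-\theta_4}\}$ is bounded below by a positive constant and tends to $1$ when $\theta_3+\theta_4>1$; it supplies no decay at all. (The paper's remark after Theorem \ref{thm4} makes precisely this point: forcing $\theta_3+\theta_4<1$ to get super-polynomial decay of this factor would degrade the sample-error exponent $-(1+s)\theta_3+s\min\{1,\theta_3+\theta_4\}$ and slow the overall rate.) Consequently, a bare $(1-\eta_T\lambda_T)^{2T}$ contraction of the initialization does not vanish, and your variance bound as sketched would not close. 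The decay of $\T_2$ must instead come from the source condition applied to the initial error: since $H_{\lambda_0}=S^\dagger C^{1+r}(C+\lambda_0I)^{-1}$, one gets $\T_2\leq6\|S^\dagger\|_{\mathrm{HS}}^2\bigl\|C^{2r}\prod_{t=1}^T(I-\eta_t(C+\lambda_tI))^2\bigr\|\lesssim\bigl(\sum_{t=1}^T\eta_t\bigr)^{-2r}=\eta_1^{-2r}T^{-2r(1-\theta_3)}=\eta_1^{-2r}T^{-\frac{2r}{1+2r+s}}$ (Lemma \ref{lemma1} inside Proposition \ref{prop3}), i.e.\ polynomial decay extracted from the $C^{2r}$-weighting against $\sum_t\eta_t=\eta_1T^{1-\theta_3}$, not from the $\lambda$-exponential. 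Relatedly, the fallback you mention—converting a prediction-type error into the $\mathrm{HS}$ norm at the cost of $\lambda_T^{-1}$—is not needed and would lose the rate; the paper's decomposition in Proposition \ref{Proposition error 1} handles $\alpha=0$ directly, and the sample error is then bounded by Proposition \ref{prop9} (with the summation estimate of Proposition \ref{prop12}), yielding $\T_4\lesssim T^{-(1+s)\theta_3+s\min\{1,\theta_3+\theta_4\}}=T^{-\frac{2r}{1+2r+s}}$ under your parameter choices.
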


In the case of constant step sizes and regularization parameters, the prediction error achieves the minimax optimal rate when $s=1$, and the estimation error performs so for any $s>0$, as established by the minimax lower bounds in \citep{shi2024learning}. Unlike the scenario with decaying step sizes and regularization parameters, no saturation occurs when these parameters are held constant. It is also noteworthy that the convergence rates and the choice of $\theta_3$ in the above two theorems align with those in Theorems 2.5 and 2.7 of \citep{shi2024learning}, which analyze the unregularized SGD algorithm with constant step sizes. This contrasts with the case of decaying step size, where adding a regularization term leads to improved rates. When employing constant step sizes, introducing regularization does not improve the convergence rates; in fact, an improperly chosen 
$\theta_4$ (not sufficiently large) may degrade performance. The unregularized SGD can be viewed as the limiting case corresponding to $\theta_4=\infty$.

\subsection{High-probability Error Bounds} \label{sub3}

In this subsection, we assume Assumption \ref{a1} holds, Assumption \ref{a2} holds with $S^\dagger\in \B_{\mathrm{HS}}(\H_\K,\Y)$ and $r>0$, Assumption \ref{a3} holds with $0<s\leq1$, and Assumption \ref{a5} holds with $M_\rho>0$. We derive high-probability error bounds for both the prediction and estimation errors in both the online and finite-horizon settings. These error bounds guarantee almost-sure convergence of the regularized SGD algorithm, providing a stronger guarantee than bounds in expectation, as convergence is ensured with high probability across all realizations. The notation $a \lesssim b$ denotes $a \leq Cb$ for some constant $C$ independent of $t$, $T$, and the confidence level $\delta$.

For notational simplicity, we specify here the parameter dependence of the constants appearing in the following theorems and corollaries. The constants $c_{2,1}$, $\widetilde{c}_{2,1}$, $c_{2,2}$, and $\widetilde{c}_{2,2}$ depend only on $\bar\eta$, $\bar\lambda$, $\theta_1$, $t_0$, $r$, $s$, $\|S^\dagger\|_{\mathrm{HS}}$, and $M_{\rho}$; the constants $c_{2,3}$ and $c_{2,4}$ depend only on $\eta_1$, $\lambda_1$, $\theta_3$, $\theta_4$, $r$, $s$, $\|S^\dagger\|_{\mathrm{HS}}$, and $M_{\rho}$.

    The following theorem establishes the prediction error bounds in the online setting.

    \begin{theorem} \label{thm5}
        Suppose that Assumption \ref{a1}, Assumption \ref{a2}, Assumption \ref{a3} and Assumption \ref{a5} are satisfied. Define $\{h_t\}_{t\geq1}$ through \eqref{iter} with step sizes $\{\eta_t=\bar\eta (t+t_0)^{-\theta_1}\}_{t\geq1}$ and regularization parameters $\{\lambda_t=\bar\lambda (t+t_0)^{-\theta_2}\}_{t\geq1}$, where $\bar\eta\bar\lambda>\max\{\theta_2\min\{r,1\},\theta_1,2\theta_1-\frac12\}$ and $(t_0+1)^{\theta_1}\geq\bar\eta(\kappa^2+\bar\lambda)$.
        Choose \begin{equation*}
            \theta_1=
            \begin{cases}
                \frac{2r+1}{2r+2}, & \text{ when } r<\frac12, \\[5pt]
                \frac23, & \text{ when } r\geq\frac12,
            \end{cases}
        \end{equation*}
        and $\theta_2=1-\theta_1$. Then for any $T\geq1$ and $\delta\in(0,2/e)$, with probability at least $1-2\delta$, the following holds:
        \begin{itemize}
            \item[(1)] If $s<1$,
            \begin{equation*}
                \begin{aligned}
                    \E(h_{T+1})-\E(h^\dagger)
            &\leq c_{2,1}\l((T+t_0)^{-\theta_1}+(T+t_0)^{1-3\theta_1}\log^2(T+t_0)\log^2\frac2\delta\r)\log^2\frac2\delta
            \\ &\lesssim (T+t_0)^{-\theta_1}\log^4\frac2\delta.
                \end{aligned}
            \end{equation*}
            \item[(2)] If $s=1$,
            \begin{equation*}
                \begin{aligned}
                    \E(h_{T+1})-\E(h^\dagger)
            &\leq c_{2,1}\l((T+t_0)^{-\theta_1}+(T+t_0)^{1-3\theta_1}\log^2(T+t_0)\log^2\frac2\delta\r)\log(T+t_0)\log^2\frac2\delta
            \\ &\lesssim (T+t_0)^{-\theta_1}\log(T+t_0)\log^4\frac2\delta.
                \end{aligned}
            \end{equation*}
        \end{itemize}
         Here the constant $c_{2,1}$ is independent of $T$ and $\delta$.
    \end{theorem}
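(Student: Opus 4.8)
The plan is to transfer everything to $\B_{\mathrm{HS}}(\H_\K,\Y)$ via Proposition~\ref{transform}, so that it suffices to bound $\E(h_{T+1})-\E(h^\dagger)=\|(H_{T+1}-H^\dagger)C^{1/2}\|_{\mathrm{HS}}^2$ for the iteration \eqref{iter}, and then to run the error decomposition of Section~\ref{decomposition} along the regularization path $H_{\lambda}:=L(C+\lambda I)^{-1}$, where $L:=\be[y\otimes\phi(x)]=H^\dagger C$. Unrolling \eqref{iter} and inserting $H_{\lambda_t}$ writes $H_{T+1}-H^\dagger$ as the sum of (i) the approximation error $H_{\lambda_T}-H^\dagger=-\lambda_T S^\dagger C^r(C+\lambda_T I)^{-1}$; (ii) an initialization term generated by $H_1=\mathbf 0$; (iii) a path-drift term $\sum_t(H_{\lambda_{t-1}}-H_{\lambda_t})$ propagated through the products $\prod_{j}(I-\eta_j(C+\lambda_j I))$; and (iv) the sample term $-\sum_{t=1}^{T}\eta_t\,\xi_t\prod_{j=t+1}^{T}(I-\eta_j(C+\lambda_j I))$, where $\xi_t:=H_t(\phi(x_t)\otimes\phi(x_t)-C)-(y_t\otimes\phi(x_t)-L)$. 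Multiplying by $C^{1/2}$ on the right, terms (i)--(iii) are handled exactly by the operator-calculus estimates of Section~\ref{section:basic} already used for Theorems~\ref{thm1}--\ref{thm4}: with $\theta_2=1-\theta_1$ one gets $\|(H_{\lambda_T}-H^\dagger)C^{1/2}\|_{\mathrm{HS}}^2\lesssim\lambda_T^{\min\{2r+1,2\}}\asymp(T+t_0)^{-\theta_1}$ for the stated $\theta_1$, term (ii) is of strictly smaller order, and term (iii) is summed using $\|H_{\lambda_{t-1}}-H_{\lambda_t}\|_{\mathrm{HS}}\lesssim(t+t_0)^{-1-\theta_2\min\{r,1\}}$ together with $\prod_{j=t+1}^{T}(1-\eta_j\lambda_j)\asymp((t+t_0)/(T+t_0))^{\bar\eta\bar\lambda}$, the condition $\bar\eta\bar\lambda>\theta_2\min\{r,1\}$ and $(t_0+1)^{\theta_1}\ge\bar\eta(\kappa^2+\bar\lambda)$ being all that these deterministic bounds require, just as in Theorem~\ref{thm1}.

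The core is the high-probability bound for term (iv). Here $\{\xi_t\}$ is a martingale-difference sequence in the Hilbert space $\B_{\mathrm{HS}}(\H_\K,\Y)$ with respect to $\mathcal F_t:=\sigma(z_1,\dots,z_t)$, and I would bound $\|(\text{term (iv)})C^{1/2}\|_{\mathrm{HS}}$ by a Bernstein-type concentration inequality for Hilbert-space-valued martingales (of Pinelis type). This needs an almost-sure bound on each propagated increment and a bound on the conditional-variance sum $\sum_t\eta_t^2\|(\prod_{j>t}(I-\eta_j(C+\lambda_j I)))C^{1/2}\|^2_{\B(\H_\K)}\,\be_{z_t}[\|\xi_t\|^2_{\mathrm{HS}}]$. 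Using Assumption~\ref{a5} ($\|y_t\|_\Y\le M_\rho$) and $\|\phi(x)\|_{\H_\K}\le\kappa$ one gets $\be_{z_t}[\|\xi_t\|^2_{\mathrm{HS}}]\lesssim\sigma^2+(\E(h_t)-\E(h^\dagger))+\lambda_t^2\|H_t\|_{\mathrm{HS}}^2$; after the weighted summation the $\sigma^2$-part reproduces the usual sample-error rate $(T+t_0)^{-\theta_1}$ (with an extra $\log(T+t_0)$ exactly when $s=1$, through the capacity sum $\sum_t\eta_t^2\mathrm{Tr}(C\prod_{j>t}(I-\eta_j(C+\lambda_j I))^2)$), while the $(\E(h_t)-\E(h^\dagger))$-part makes the estimate self-referential — it is the very quantity being bounded — and the last part, together with the increment bound, calls for control of $\sup_{t\le T}\|H_t\|_{\mathrm{HS}}$. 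Breaking this circularity without losing the polynomial rate is the main obstacle, and is where the new technique of the paper enters.

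To resolve it I would first record the crude almost-sure bound $\|H_t\|_{\mathrm{HS}}\lesssim(t+t_0)^{\theta_2}$, which follows directly from $H_{t+1}=H_t(I-\eta_t(\phi(x_t)\otimes\phi(x_t)+\lambda_t I))+\eta_t\,y_t\otimes\phi(x_t)$, the contraction $\|I-\eta_t(\phi(x_t)\otimes\phi(x_t)+\lambda_t I)\|\le1-\eta_t\lambda_t$ (guaranteed by $(t_0+1)^{\theta_1}\ge\bar\eta(\kappa^2+\bar\lambda)$), and $\|y_t\otimes\phi(x_t)\|_{\mathrm{HS}}\le M_\rho\kappa$. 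Then I would run the concentration argument behind a stopping time: for a suitable candidate bound $\mathcal B_t$ matching the statement (including its power of $\log\frac{2}{\delta}$), set $\tau:=\min\{t\le T+1:\ \E(h_t)-\E(h^\dagger)>\mathcal B_t\}$ and apply the martingale inequality to the sequence stopped at $\tau$; on $\{t<\tau\}$ the conditional variances and increments above are under control, so with probability at least $1-\delta$ the stopped noise term deviates by a quantity which, combined with the deterministic terms (i)--(iii), reproduces $\mathcal B_T$ with a strictly smaller multiplicative constant, forcing $\{\tau\le T\}$ to have probability at most $\delta$. The two additional hypotheses $\bar\eta\bar\lambda>\theta_1$ and $\bar\eta\bar\lambda>2\theta_1-\tfrac12$ (absent in Theorem~\ref{thm1}) are exactly what makes the feedback and increment contributions come out of strictly smaller order than $(T+t_0)^{-\theta_1}$ — they produce the $(T+t_0)^{1-3\theta_1}$ factor in the displayed bounds and ensure its sub-dominance. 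Because the conditional-variance bound itself requires controlling $\sup_{t\le T}\|H_t-H_{\lambda_t}\|_{\mathrm{HS}}$, the argument is run at two nested levels — first for this estimation-type supremum, then for the $C^{1/2}$-weighted prediction error; iterating the self-referential estimate at the two levels, together with the sub-Gaussian ($\sqrt{\log\frac{2}{\delta}}$) and sub-exponential ($\log\frac{2}{\delta}$) contributions of the Bernstein bound, accounts for the $\log^2\frac{2}{\delta}$ inside the bracket and the overall $\log^4\frac{2}{\delta}$. Splitting the confidence budget between the two levels yields the factor $1-2\delta$, and separating $s<1$ from $s=1$ in the capacity sum gives the two displayed estimates.

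The step I expect to be genuinely delicate is the bookkeeping that keeps this self-referential estimate contractive: $\mathcal B_t$ must be chosen — with its exact dependence on $\log\frac{2}{\delta}$ and on $(T+t_0)$ — large enough that the variance proxy, fed back through the Bernstein bound, returns something strictly below $\mathcal B_t$, yet small enough that $\mathcal B_T\asymp(T+t_0)^{-\theta_1}\log^4\frac{2}{\delta}$ (up to the extra $\log(T+t_0)$ when $s=1$). This requires the sharp estimates of Section~\ref{section:basic} and all three lower bounds on $\bar\eta\bar\lambda$ simultaneously; once it is in place, the almost-sure statement following the theorem comes from applying the high-probability bound along $T=2^k$ with a summable sequence $\delta_k$ and invoking the Borel--Cantelli lemma.
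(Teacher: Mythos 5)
Your outline is correct in architecture and overlaps substantially with the paper: the same decomposition into approximation, initialization, drift and martingale terms (Propositions \ref{Proposition error 1}--\ref{proposition error2} and the bounds of Section \ref{section:basic}), the same Pinelis-type inequality (Proposition \ref{prop10}), a two-stage structure in which the iterates' Hilbert--Schmidt norm is controlled uniformly in $t$ before the $C^{1/2}$-weighted error is attacked, and the same measurability trick — your stopped martingale $\xi_i\mathbbm{1}_{\{i<\tau\}}$ is exactly the paper's truncation $\widetilde{\chi}_t=\chi_t\mathbbm{1}_{A_t}$ with an $\mathcal{F}_{t-1}$-measurable event, so both preserve the martingale-difference property. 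Where you genuinely diverge is the mechanism of the first stage. You propose a self-referential stopping-time bootstrap for an estimation-type supremum, whereas the paper introduces the auxiliary split $H_t-H^\dagger=L_t+R_t$ in \eqref{temp66}: $L_t$ is deterministic and bounded by $\|H^\dagger\|_{\mathrm{HS}}$ (Lemma \ref{lemma6}), and the unrolled $R_{t+1}=\sum_i\eta_iK_i\prod_{j>i}(I-\eta_j(C_j+\lambda_jI))$ has increments $K_i=(y_i-H^\dagger\phi(x_i))\otimes\phi(x_i)+L_i(C-C_i)$ that depend only on the data and on $L_i$, never on $H_i$ or $R_i$. Consequently the paper needs no bootstrap at all at this stage: a direct application of Proposition \ref{prop10} for each $t$ plus a union bound with $\delta_t=(t+t_0)^{-2}t_0\delta$ gives $\|R_t\|_{\mathrm{HS}}\lesssim(t+t_0)^{\frac12-\theta_1}\log(t+t_0)\log\frac2\delta$ uniformly (Proposition \ref{corollary}), which is what feeds the $(T+t_0)^{1-3\theta_1}\log^2(T+t_0)\log^2\frac2\delta$ term in the final bound. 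Your bootstrap can be closed, but it is strictly harder to book-keep, which you yourself flag.

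Two further points of caution on your sketch. First, your second-level self-reference through $\E(h_t)-\E(h^\dagger)$ is avoidable and, as proposed, risky: since the variance proxy must be controlled at all intermediate times $t\le T$, closing that loop with a maximal inequality and a union bound would inject extra $\log(T+t_0)$ factors not present in the theorem. The paper sidesteps this entirely: on the event $A_t$, Assumption \ref{a5} and the stage-one bound give the pathwise estimate $\|y_t-H_t\phi(x_t)\|_\Y^2\mathbbm{1}_{A_t}\lesssim 1+(t+t_0)^{1-2\theta_1}\log^2(t+t_0)\log^2\frac2\delta$, so the conditional variance in Proposition \ref{prop11} never references the prediction error and only one concentration step (with no self-reference) remains at the second level; this is also why no fourth-moment condition (Assumption \ref{a4}) is needed here, whereas splitting the variance as "$\sigma^2$ plus prediction error times a capacity sum" would require a Cauchy--Schwarz step of that type. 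Second, the term $\lambda_t^2\|H_t\|_{\mathrm{HS}}^2$ in your variance bound should not appear: the regularization contribution is $\mathcal{F}_{t-1}$-measurable drift, not part of the martingale increment $\B_t$, and the crude almost-sure bound $\|H_t\|_{\mathrm{HS}}\lesssim(t+t_0)^{\theta_2}$, while true, plays no role once the stage-one high-probability bound is in place.
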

    The following corollary, as a natural extension of Theorem~\ref{thm5}, establishes a uniform high-probability bound that holds simultaneously for all  $t \geq 1$.

    \begin{corollary} \label{coro1}
        Under the conditions of Theorem \ref{thm5},
        choose \begin{equation*}
            \theta_1=
            \begin{cases}
                \frac{2r+1}{2r+2}, & \text{ when } r<\frac12, \\[5pt]
                \frac23, & \text{ when } r\geq\frac12,
            \end{cases}
        \end{equation*}
        and $\theta_2=1-\theta_1$. Then, with probability at least $1-2\delta$ , for all $1\leq t<\infty$, the following holds:
        \begin{equation*}
            \E(h_{t+1})-\E(h^\dagger)
            \leq \widetilde{c}_{2,1}
            \begin{cases}
                (t+t_0)^{-\theta_1}\log^4(t+t_0)\log^4\frac2\delta, & \text{ when } s<1, \\[5pt]
                (t+t_0)^{-\theta_1}\log^5(t+t_0)\log^4\frac2\delta, & \text{ when } s=1.
            \end{cases}
        \end{equation*}
         Here the constant $\widetilde{c}_{2,1}$ is independent of $t$ and $\delta$.
    \end{corollary}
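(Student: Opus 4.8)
\textbf{Proof proposal for Corollary~\ref{coro1}.}
The plan is to promote the fixed-horizon estimate of Theorem~\ref{thm5} to a bound valid simultaneously for all $t$ by a union bound over the integers, run with a time-varying confidence level whose logarithm grows only like $\log t$.

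Fix $\delta\in(0,1)$ and, for each integer $t\geq1$, put $\delta_t:=\delta/\big(t(t+1)\big)$, so that $\sum_{t\geq1}2\delta_t=2\delta$. Theorem~\ref{thm5}, applied with horizon $t$ and confidence parameter $\delta_t$ in place of $\delta$, supplies for each $t$ an event $\Omega_t$ with $\mathbb{P}(\Omega_t)\geq1-2\delta_t$ on which $\E(h_{t+1})-\E(h^\dagger)$ obeys the displayed estimate of Theorem~\ref{thm5} (in the case $s<1$ or $s=1$, respectively) with $(T,\delta)$ replaced by $(t,\delta_t)$. Setting $\Omega:=\bigcap_{t\geq1}\Omega_t$, a union bound gives $\mathbb{P}(\Omega)\geq1-2\delta$, and on $\Omega$ all of these estimates hold at once; it then remains only to collapse this family into the single closed form of the corollary.

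For the simplification I would use three elementary facts. First, $\log\tfrac{2}{\delta_t}=\log\tfrac2\delta+\log\big(t(t+1)\big)\lesssim\log(t+t_0)+\log\tfrac2\delta$ for all $t\geq1$ (with a harmless additive constant inside the logarithm if $t_0$ is small, absorbed into $\widetilde c_{2,1}$). Second, consequently $\log^k\tfrac{2}{\delta_t}\lesssim\log^k(t+t_0)+\log^k\tfrac2\delta\lesssim\log^k(t+t_0)\log^k\tfrac2\delta$, since both logarithms are bounded below by a positive absolute constant. Third, the prescribed exponent satisfies $\theta_1>\tfrac12$ for every $r>0$ (it equals $\tfrac{2r+1}{2r+2}>\tfrac12$ when $r<\tfrac12$ and $\tfrac23$ when $r\geq\tfrac12$), hence $1-3\theta_1<-\theta_1$, so that $(t+t_0)^{1-3\theta_1}$ times any fixed power of $\log(t+t_0)$ is $\lesssim(t+t_0)^{-\theta_1}$. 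Substituting these into the two branches of Theorem~\ref{thm5}—carrying the extra $\log(T+t_0)$ factor through unchanged in the case $s=1$—the cross term $(t+t_0)^{1-3\theta_1}\log^2(t+t_0)\log^4\tfrac{2}{\delta_t}$ collapses to $\lesssim(t+t_0)^{-\theta_1}\log^4\tfrac2\delta$, while the leading term contributes $\lesssim(t+t_0)^{-\theta_1}\log^2(t+t_0)\log^2\tfrac2\delta$ (one further power of $\log(t+t_0)$ when $s=1$); bounding the total crudely by the larger polylogarithmic factor yields $(t+t_0)^{-\theta_1}\log^4(t+t_0)\log^4\tfrac2\delta$ when $s<1$ and $(t+t_0)^{-\theta_1}\log^5(t+t_0)\log^4\tfrac2\delta$ when $s=1$, with $\widetilde c_{2,1}$ depending only on the fixed problem parameters and not on $t$ or $\delta$. (Letting $\delta\to0$ along a sequence then also gives the almost-sure convergence $\E(h_{t+1})-\E(h^\dagger)\to0$.)

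The argument is essentially bookkeeping, and no new probabilistic input beyond Theorem~\ref{thm5} is needed; the only point that genuinely uses the structure of the problem is the third fact above, namely that after absorbing all polylogarithmic factors the polynomially decaying cross term $(t+t_0)^{1-3\theta_1}(\cdots)$ is still dominated by the main term $(t+t_0)^{-\theta_1}$. This relies on the strict inequality $\theta_1>\tfrac12$, which in turn is exactly where the regularization-tuned choice of $\theta_1$ in Theorem~\ref{thm5} (together with $r>0$) enters, so I would single that verification out as the main, if modest, obstacle.
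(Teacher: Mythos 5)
Your proposal is correct and follows essentially the same route as the paper: apply Theorem \ref{thm5} at each horizon $t$ with a per-step confidence level $\delta_t$ decaying like $t^{-2}$ (the paper uses $\delta_t=(t+t_0)^{-2}t_0\delta$), take a union bound, and absorb $\log\frac{2}{\delta_t}\lesssim\log(t+t_0)\log\frac{2}{\delta}$ into the extra polylogarithmic factors, with the cross term killed because $\theta_1>\tfrac12$. The only difference is cosmetic (your choice $\delta_t=\delta/(t(t+1))$ versus the paper's), so no further comment is needed.
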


    In Theorem \ref{thm8} and Corollary \ref{coro2}, we focus on the estimation error in the online setting.

    \begin{theorem} \label{thm8}
        Under the conditions of Theorem \ref{thm5}, choose 
        \begin{equation*}
            \theta_1=
            \begin{cases}
                \frac{1+2r+s}{3+2r+s}, &\text{ when } r<\frac{1-s}{2}, \\
                \frac{2\min\{r,1\}+s}{1+2\min\{r,1\}+s}, &\text{ when } r\geq\frac{1-s}{2},
            \end{cases}
        \end{equation*}
        and $\theta_2=1-\theta_1$. Then for any $T\geq1$ and $\delta\in(0,2/e)$, with at least $1-2\delta$ probability, the following holds:
        \begin{itemize}
            \item[(1)] If $r<\frac{1-s}{2}$,
            \begin{equation*}
                \begin{aligned}
                    \left\|h_{T+1}-h^{\dagger}\right\|_{\H}^2
            &\leq c_{2,2}(T+t_0)^{-\frac{4r}{3+2r+s}}\log^2(T+t_0)\log^4\frac2\delta.
                \end{aligned}    
            \end{equation*}
            \item[(2)] If $r\geq\frac{1-s}{2}$,
            \begin{equation*}
                \begin{aligned}
                    \left\|h_{T+1}-h^{\dagger}\right\|_{\H}^2
            &\leq c_{2,2}\l((T+t_0)^{-\frac{2\min\{r,1\}}{1+2\min\{r,1\}+s}}+(T+t_0)^{-\frac{4\min\{r,1\}+s-1}{1+2\min\{r,1\}+s}}\log^2(T+t_0)\log^2\frac2\delta\r)\log^2\frac2\delta \\
            &\lesssim (T+t_0)^{-\frac{2\min\{r,1\}}{1+2\min\{r,1\}+s}}\log^4\frac2\delta.
                \end{aligned}    
            \end{equation*}
        \end{itemize}        
         Here the constant $c_{2,2}$ is independent of $T$ and $\delta$.
    \end{theorem}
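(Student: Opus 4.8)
The plan is to establish Theorem~\ref{thm8} by the same error-decomposition and induction strategy used for the prediction-error bound in Theorem~\ref{thm5}, but now tracking the Hilbert--Schmidt norm $\|H_t - H^\dagger\|_{\mathrm{HS}}^2$ rather than the prediction semi-norm. First I would invoke the decomposition of $H_{T+1} - H^\dagger$ (as set up in Section~\ref{decomposition}) into a deterministic \emph{regularization-path} term $H_{\lambda_T} - H^\dagger$, an \emph{approximation} term $H_{T+1}^{\circ} - H_{\lambda_T}$ describing how the noiseless (mean) iteration tracks the path, and a \emph{sample-error} (martingale) term $H_{T+1} - H_{T+1}^{\circ}$. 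For the first term, Assumption~\ref{a2} with smoothness $r$ gives $\|H_{\lambda_T} - H^\dagger\|_{\mathrm{HS}}^2 \lesssim \lambda_T^{2\min\{r,1\}}$, which with $\lambda_t = \bar\lambda(t+t_0)^{-\theta_2}$ and $\theta_2 = 1-\theta_1$ produces the $(T+t_0)^{-2\min\{r,1\}(1-\theta_1)}$ scaling — this is what forces the stated choice of $\theta_1$ in the two regimes, balancing bias against the sample-error fluctuations. For the deterministic tracking term I would use the contraction factor $\prod_{k}(1-\eta_k\lambda_k)$ together with the condition $\bar\eta\bar\lambda > \theta_2\min\{r,1\}$ (ensuring the contraction wins against the moving target), exactly as in the expectation analysis behind Theorem~\ref{thm2}.

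The crux is the sample-error term, and here I would reuse the high-probability machinery the paper advertises as its novel contribution. Writing $H_{T+1} - H_{T+1}^{\circ}$ as a sum of the accumulated, re-weighted martingale increments $\Delta_k := (y_k - H_k\phi(x_k))\otimes\phi(x_k) - \be_{z_k}[\cdot]$ propagated through the product of $(I - \eta_k(\cdot\otimes\cdot) - \eta_k\lambda_k)$ operators, I would bound each increment in HS norm using Assumption~\ref{a5} ($\|y\|_\Y \le M_\rho$) plus an \emph{a priori} uniform bound on $\|H_k\|_{\mathrm{HS}}$ — which itself must be bootstrapped, since $\|H_k\phi(x_k)\|_\Y \le \kappa\|H_k\|_{\mathrm{HS}}$ enters the increment bound. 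Then a Pinelis-type Hilbert-space Bernstein/Azuma inequality for martingales (the tool used implicitly to get the $\log\frac2\delta$ factors in Theorem~\ref{thm5}) yields, with probability $\ge 1-2\delta$, a bound of order (variance sum)$\cdot\log^2\frac2\delta$ plus a lower-order term. The variance sum $\sum_k \eta_k^2 \prod_{j>k}(\cdots)^2 \cdot(\text{increment scale})$ behaves like $\sum_k \eta_k^2(k+t_0)^{\text{(contraction exponent)}}$, and after the standard summation lemma (of the type "$\sum_{k\le T}(k+t_0)^{-a}\prod_{j=k+1}^{T}(1-\eta_j\lambda_j)\lesssim \eta_T^{-1}\lambda_T^{-1}(T+t_0)^{-a}$") this collapses to the advertised $(T+t_0)^{-\theta_1}$-type rate for the HS-norm square. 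The two bullets in the statement correspond to whether the bias term $(T+t_0)^{-2\min\{r,1\}(1-\theta_1)}$ dominates (the $r \ge \frac{1-s}{2}$ regime, first summand) or the capacity-$s$-dependent variance term dominates, giving the $\log^2(T+t_0)$ factor and, in the small-$r$ case, the single clean rate $(T+t_0)^{-4r/(3+2r+s)}$.

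I would carry out the steps in this order: (i) record the three-term decomposition and the closed forms $\eta_t, \lambda_t$; (ii) bound the regularization-path bias via Assumption~\ref{a2}; (iii) establish a uniform high-probability bound $\sup_{k\le T+1}\|H_k\|_{\mathrm{HS}} \lesssim \mathrm{polylog}$ by an induction that simultaneously controls the martingale term (this is where one threads the needle — the self-referential nature of the increment bound), using $(t_0+1)^{\theta_1}\ge \bar\eta(\kappa^2+\bar\lambda)$ to keep each one-step operator a contraction; (iv) apply the vector-valued Bernstein inequality to the propagated martingale and invoke the summation lemma with the capacity exponent $s$ from Assumption~\ref{a3} to evaluate the variance sum; (v) combine (ii) and (iv), optimize $\theta_1$ in each regime, and collect constants into $c_{2,2}$. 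The main obstacle I anticipate is step (iii): closing the induction requires that the polylogarithmic slack accumulated from the high-probability bound on $\|H_k\|_{\mathrm{HS}}$ does not blow up the variance estimate in step (iv), which is exactly why the hypothesis $\bar\eta\bar\lambda > \max\{\theta_2\min\{r,1\},\theta_1,2\theta_1-\tfrac12\}$ is imposed — the last two lower bounds on $\bar\eta\bar\lambda$ are precisely what make the cross-terms in the squared-norm recursion summable and the induction self-consistent. A secondary technical point is handling $\min\{r,1\}$ uniformly: for $r>1$ the bias saturates because $H_{\lambda} - H^\dagger = -\lambda(C+\lambda)^{-1}H^\dagger$ and $\|(C+\lambda)^{-1}C^{\min\{r,1\}}\| \lesssim \lambda^{\min\{r,1\}-1}$ cannot be improved past $r=1$, which I would state as a lemma (or cite from the Section~\ref{section:basic} estimates) rather than re-derive.
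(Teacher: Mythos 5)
Your overall architecture (error decomposition into bias, deterministic tracking, and a propagated martingale term; bias via Assumption \ref{a2}; martingale via a Pinelis-type inequality plus a capacity-weighted summation lemma; then optimizing $\theta_1$ in two regimes) matches the paper's route through Proposition \ref{proposition error2}, Propositions \ref{prop2}, \ref{prop3}, \ref{prop7}, and Propositions \ref{prop10}--\ref{prop11}. The genuine gap is your step (iii). You propose to break the self-referential loop by proving, through an induction, a uniform high-probability bound $\sup_{k\le T+1}\|H_k\|_{\mathrm{HS}}\lesssim\mathrm{polylog}$, but you give no mechanism for closing that circularity (the martingale increments are bounded in terms of $\|H_k\|_{\mathrm{HS}}$, which is exactly the quantity being bounded), and such a polylog bound is not what the analysis actually delivers: in the regime $r<\frac{1-s}{2}$ one has $\theta_1=\frac{1+2r+s}{3+2r+s}<\frac12$ for small $r,s$, and the correct uniform high-probability control grows polynomially, of order $(t+t_0)^{\frac12-\theta_1}\log(t+t_0)\log\frac2\delta$.

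The paper resolves this not by a bootstrap on $\|H_t\|_{\mathrm{HS}}$ but by the splitting $H_t-H^\dagger=L_t+R_t$ of \eqref{temp66}: $L_t$ is a deterministic recursion bounded by $\|H^\dagger\|_{\mathrm{HS}}$ (Lemma \ref{lemma6}), while $R_t$ satisfies a recursion whose increments $K_t=(y_t-H^\dagger\phi(x_t))\otimes\phi(x_t)+L_t(C-C_t)$ involve only $y_t$, $H^\dagger$ and $L_t$, i.e.\ quantities bounded independently of the trajectory, so Proposition \ref{prop10} applies directly and yields $\|R_t\|_{\mathrm{HS}}\lesssim(t+t_0)^{\frac12-\theta_1}\log(t+t_0)\log\frac2\delta$ uniformly in $t$ (Proposition \ref{corollary}). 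One then truncates the increments $\chi_t$ on the event $A_t$ where this bound holds, obtaining a genuine martingale difference sequence $\widetilde{\chi}_t$ whose sup-norm and conditional variance (Proposition \ref{prop11}) carry the extra polynomial factor $(t+t_0)^{1-2\theta_1}\log^2(t+t_0)\log^2\frac2\delta$; it is precisely this factor that produces the second summand in case (2) and the $\log^2(T+t_0)$ in case (1) of the statement, terms your sketch does not account for. Also, the hypotheses $\bar\eta\bar\lambda\ge\theta_1$ and $\bar\eta\bar\lambda\ge2\theta_1-\frac12$ are used to make the truncated increments largest at $t=T$ and to sum the conditional variances, not to close an induction. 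Without the $L_t+R_t$ device (or an equivalent truncation/stopping-time argument), your step (iii) does not close, and the polylog uniform bound you assume fails exactly in the small-$r$ regime that constitutes case (1).
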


    \begin{corollary} \label{coro2}
        Under the conditions of Theorem \ref{thm5}, choose $\theta_1$ and $\theta_2$ as in Theorem \ref{thm8}. Then, for any $\delta\in(0,2/e)$, with probability at least $1-2\delta$ , for all $1\leq t<\infty$, the following holds:
        \begin{equation*}
            \left\|h_{t+1}-h^{\dagger}\right\|_{\H}^2
            \leq
            \begin{cases}
                \widetilde{c}_{2,2}(t+t_0)^{-\frac{4r}{3+2r+s}}\log^6(t+t_0)\log^4\frac2\delta, & \text{ when }r<\frac{1-s}{2}, \\
                \widetilde{c}_{2,2}(t+t_0)^{-\frac{2\min\{r,1\}}{1+2\min\{r,1\}+s}}\log^4(t+t_0)\log^4\frac2\delta, & \text{ when }r\geq\frac{1-s}{2}.             
            \end{cases}
        \end{equation*}
         Here the constant $\widetilde{c}_{2,2}$ is independent of $t$ and $\delta$.
    \end{corollary}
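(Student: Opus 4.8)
The plan is to derive Corollary~\ref{coro2} from Theorem~\ref{thm8} by a standard dyadic-blocking (peeling) argument combined with a union bound, exactly paralleling how Corollary~\ref{coro1} is obtained from Theorem~\ref{thm5}. Theorem~\ref{thm8} already gives, for each fixed horizon $T$, a bound on $\|h_{T+1}-h^\dagger\|_\H^2$ that holds with probability at least $1-2\delta$. The obstacle is that the event on which the bound holds depends on $T$, so one cannot naively intersect over all $T$. The remedy is to apply the theorem along a geometric sequence of horizons $T_j = 2^j$ (or $T_j=\lceil a^j\rceil$ for some $a>1$), each time with a rescaled confidence parameter $\delta_j$ chosen so that $\sum_j \delta_j \le \delta$ — e.g.\ $\delta_j = \delta/(j(j+1))$ or $\delta_j = c\,\delta\, 2^{-\epsilon j}$ — and then take a union bound so that, with probability at least $1-2\delta$, the conclusion of Theorem~\ref{thm8} holds simultaneously at every $T_j$.

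On that global good event I would then interpolate: for an arbitrary $t\ge 1$, pick $j$ with $T_j \le t < T_{j+1}$, so that $t \asymp T_j \asymp T_{j+1}$ up to a factor of $2$. The point is that the iterate index $t+1$ that we care about lies in the block $(T_j, T_{j+1}]$, and we have control of $\|h_{T_j+1}-h^\dagger\|_\H^2$. To pass from the endpoint $T_j+1$ to a general index $t+1$ inside the block one needs a within-block stability estimate for the SGD recursion \eqref{iter}, i.e.\ a bound showing that over the $O(T_j)$ steps from $T_j+1$ to $t+1$ the estimation error does not inflate by more than a constant (or at most a logarithmic) factor. Such an estimate should already be available from the intermediate bounds in Section~\ref{section:basic} or from the telescoping/contraction arguments used to prove Theorems~\ref{thm5} and \ref{thm8}; concretely, the operator $\prod_{i}(I-\eta_i(T_i^{(x)}+\lambda_i))$-type products are contractive, and the extra noise/drift accumulated over a geometric block is of the same order as the bound at the block endpoint. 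Substituting the per-block bound from Theorem~\ref{thm8} with $T=T_j$ and $\delta_j$, and absorbing the $\log(1/\delta_j) \lesssim \log j + \log(1/\delta) \lesssim \log\log t + \log(1/\delta) \lesssim \log(t+t_0)+\log\frac2\delta$ overhead, yields the stated rate with the advertised extra logarithmic powers: the $r<\frac{1-s}{2}$ branch picks up $\log^6$ (the $\log^2$ from Theorem~\ref{thm8} times additional $\log^4$ from the peeling/stability overhead), and the $r\ge\frac{1-s}{2}$ branch keeps $\log^4$.

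The main obstacle I anticipate is bookkeeping the logarithmic factors so that they match the $\log^6$ versus $\log^4$ split in the statement: one must track (i) the $\log^2(T+t_0)$ or $\log^2\frac2\delta$ factors already present in Theorem~\ref{thm8}, (ii) the replacement of $\delta$ by $\delta_j$, which contributes $\log^2(\cdot)$ more through the $\log^2\frac{2}{\delta}$ prefactors once $\log\frac{1}{\delta_j}$ is expanded, and (iii) any extra logarithmic loss from the within-block stability step. A secondary technical point is verifying that the hypotheses of Theorem~\ref{thm8} (in particular the constraints on $\bar\eta\bar\lambda$, $t_0$, and the choice of $\theta_1,\theta_2$) are uniform in the horizon $T_j$ — which they are, since those conditions do not reference $T$ — so that the theorem genuinely applies at every $T_j$ with the same constants. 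Once these are in place the argument is essentially the same as for Corollary~\ref{coro1}, and I would present it compactly, reusing that proof's structure and citing the relevant within-block estimate rather than re-deriving it.
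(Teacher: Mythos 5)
There is a genuine gap. Your plan hinges on a ``within-block stability estimate'' showing that over the $O(T_j)$ steps between $T_j+1$ and $t+1$ the estimation error inflates by at most a constant factor, and you assert this ``should already be available'' from Section~\ref{section:basic}. It is not. A crude per-step bound only gives $\|H_{t+1}-H_t\|_{\mathrm{HS}}\lesssim \eta_t\,\kappa\l(M_\rho+\kappa\|H_t\|_{\mathrm{HS}}\r)$, and summing this over a block of length $T_j$ with $\eta_t\asymp T_j^{-\theta_1}$ produces a drift of order $T_j^{1-\theta_1}$, which diverges; controlling the intra-block fluctuation at the right scale requires exactly the kind of martingale decomposition and concentration (the $L_t+R_t$ splitting, Proposition \ref{corollary}, Proposition \ref{prop11}, Proposition \ref{prop10}) that constitute the proof of Theorem \ref{thm8} itself. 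So the stability step cannot simply be cited; without it the dyadic argument does not close, and establishing it would essentially amount to re-proving the uniform-in-$t$ statement. Your log bookkeeping also does not fit this route: with $\delta_j=\delta/(j(j+1))$ and $T_j=2^j$ the union-bound overhead is only $\log\frac1{\delta_j}\lesssim\log\log t+\log\frac2\delta$, so the advertised extra $\log^4(t+t_0)$ would not arise from peeling.

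The paper's proof is much simpler and avoids blocking altogether: in the online setting the iterates do not depend on any horizon, so Theorem \ref{thm8} applies verbatim to every index $t\geq1$. One applies it at each $t$ with confidence parameter $\delta_t=(t+t_0)^{-2}t_0\,\delta$, notes $\sum_{t\geq1}\delta_t\leq\delta$, and takes a union bound over all $t$ (not a dyadic subsequence), so no interpolation between horizons is ever needed. The substitution $\log\frac{2}{\delta_t}\lesssim\log(t+t_0)\log\frac2\delta$ is precisely what turns the $\log^4\frac2\delta$ of Theorem \ref{thm8} into the extra $\log^4(t+t_0)\log^4\frac2\delta$ in the corollary, and in the regime $r<\frac{1-s}{2}$ it combines with the existing $\log^2(T+t_0)$ to give the stated $\log^6(t+t_0)$. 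If you want to salvage the dyadic scheme you must first prove the intra-block high-probability stability bound, which is the real content you are currently assuming.
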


  The following two theorems provide high-probability convergence rates for the prediction and estimation errors, respectively, in the finite-horizon setting.

    \begin{theorem} \label{thm6}
        Suppose that Assumption \ref{a1}, Assumption \ref{a2}, Assumption \ref{a3} and Assumption \ref{a5} are satisfied. Define $\{h_t\}_{t\in\bn_T}$ through \eqref{iter} with step sizes $\{\eta_t=\eta_1 T^{-\theta_3}\}_{t\in\bn_{T}}$ and regularization parameters $\{\lambda_t=\lambda_1 T^{-\theta_4}\}_{t\in{\bn_T}}$, where $T\geq2$ and       $\eta_1(\kappa^2+\lambda_1)\leq1$. Choose $\theta_3=\frac{2r+1}{2r+2}$ and $\theta_4\geq\frac{2r+1}{(2r+2)\min\{2r+1,2\}}$. Then, for any $\delta\in(0,2/e)$, with probability at least $1-2\delta$,
            \begin{equation*}
                \begin{aligned}
                    \mathcal{E}(h_{T+1})-\mathcal{E}(h^\dagger)&\leq
                c_{2,3}
            \begin{cases}
                T^{-\theta_3}\log^2\frac2\delta+T^{1-3\theta_3}\log^2T\log^4\frac2\delta, & \text{ when } s<1, \\
                T^{-\theta_3}\log T\log^2\frac2\delta+T^{1-3\theta_3}\log^3T\log^4\frac2\delta, & \text{ when } s=1, \\
            \end{cases} \\
            &\lesssim\log^4\frac2\delta
            \begin{cases}
                T^{-\theta_3}, & \text{ when } s<1, \\
                T^{-\theta_3}\log T, & \text{ when } s=1.
            \end{cases}
                \end{aligned}
            \end{equation*}
            Here the constant $c_{2,3}$ is independent of $T$ and $\delta$.
    \end{theorem}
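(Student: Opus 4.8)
The argument follows the same strategy as the proof of Theorem~\ref{thm5} (the online analogue), specialised to the constant schedule $\eta_t\equiv\eta:=\eta_1 T^{-\theta_3}$, $\lambda_t\equiv\lambda:=\lambda_1 T^{-\theta_4}$. Working throughout with the reformulated iteration~\eqref{iter} in $\B_{\mathrm{HS}}(\H_\K,\Y)$ and recalling $\E(h_{T+1})-\E(h^\dagger)=\|(H_{T+1}-H^\dagger)C^{1/2}\|_{\mathrm{HS}}^2$, I would start from the error decomposition of Section~\ref{decomposition}: introduce the population recursion $\bar H_1=\mathbf 0$, $\bar H_{t+1}=\bar H_t(I-\eta(C+\lambda))+\eta H^\dagger C$, which equals $\be_{z^{t-1}}[H_t]$ and converges to the Tikhonov minimiser $h_\lambda=H^\dagger C(C+\lambda)^{-1}$, and bound $\|(H_{T+1}-H^\dagger)C^{1/2}\|_{\mathrm{HS}}^2$ by twice the squared \emph{bias} $\|(\bar H_{T+1}-H^\dagger)C^{1/2}\|_{\mathrm{HS}}^2$ plus twice the squared \emph{fluctuation} $\|(H_{T+1}-\bar H_{T+1})C^{1/2}\|_{\mathrm{HS}}^2$.

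The bias is deterministic and handled by spectral calculus on the commuting operators $C$ and $I-\eta(C+\lambda)$ together with the source condition $H^\dagger=S^\dagger C^r$ (Assumption~\ref{a2}): splitting $\bar H_{T+1}-H^\dagger=(\bar H_{T+1}-h_\lambda)+(h_\lambda-H^\dagger)$ gives $\|(h_\lambda-H^\dagger)C^{1/2}\|_{\mathrm{HS}}\lesssim\lambda^{\min\{r+1/2,1\}}$ and $\|(\bar H_{T+1}-h_\lambda)C^{1/2}\|_{\mathrm{HS}}=\|h_\lambda(I-\eta(C+\lambda))^{T}C^{1/2}\|_{\mathrm{HS}}\lesssim(\eta T)^{-(r+1/2)}$. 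The choice $\theta_3=\frac{2r+1}{2r+2}$ makes $(\eta T)^{-(2r+1)}\asymp T^{-\theta_3}$, and $\theta_4\ge\frac{2r+1}{(2r+2)\min\{2r+1,2\}}$ makes $\lambda^{2\min\{r+1/2,1\}}\lesssim T^{-\theta_3}$; hence the squared bias is $\lesssim T^{-\theta_3}$, with no dependence on $\delta$.

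For the fluctuation I would unroll~\eqref{iter} against $\bar H_t$ to obtain the martingale representation $H_{T+1}-\bar H_{T+1}=-\eta\sum_{t=1}^{T}\chi_t(I-\eta(C+\lambda))^{T-t}$ with $\chi_t:=(H_t\phi(x_t)-y_t)\otimes\phi(x_t)-(H_t-H^\dagger)C$, $\be[\chi_t\mid\mathcal F_{t-1}]=\mathbf 0$, and split $\chi_t$ into an additive part $-\epsilon_t\otimes\phi(x_t)$, a.s.\ bounded by $\kappa M_\rho$ via Assumption~\ref{a5} and $\|\phi\|_{\H_\K}\le\kappa$, and a multiplicative part $((H_t-H^\dagger)\phi(x_t))\otimes\phi(x_t)-(H_t-H^\dagger)C$, whose size is controlled by the random quantity $\|H_t-H^\dagger\|_{\mathrm{HS}}$. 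To handle the latter I would deploy the self-bounding (bootstrapping) device that is the technical heart of Section~\ref{prob}: on the event where $\sup_{t\le T}\|H_t-H^\dagger\|_{\mathrm{HS}}$ stays below a moderate (essentially $T$-independent) radius $R$, the multiplicative martingale increments are a.s.\ $\lesssim\eta\kappa^2 R$, so a Pinelis--Bernstein inequality for $\B_{\mathrm{HS}}$-valued martingales — applied to the stopped additive and multiplicative sums, with the step-weighted conditional variances evaluated through $\sum_{j\ge0}(1-\eta(u_k+\lambda))^{2j}\le(2\eta(u_k+\lambda))^{-1}$ and the effective dimension $\mathcal N(\lambda)=\mathrm{Tr}(C(C+\lambda)^{-1})\le\mathrm{Tr}(C^s)\lambda^{-s}$ — yields, on an event of probability $\ge1-2\delta$, a fluctuation bound that both closes the bootstrap (because $\eta\to0$ faster than any polylog) and contributes $T^{-\theta_3}(\log T)^{\mathbf{1}\{s=1\}}\log^2\tfrac2\delta+T^{1-3\theta_3}\log^2 T\log^4\tfrac2\delta$ to the squared error. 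Combining with the bias, and noting that $\theta_3>\tfrac12$ for every $r>0$ so that $T^{1-3\theta_3}$ is dominated by $T^{-\theta_3}$, gives the stated bounds; the extra $\log T$ when $s=1$ is the same borderline-variance phenomenon as in the expectation bound of Theorem~\ref{thm3}, and the powers of $\log\tfrac2\delta$ are the combined cost of the Bernstein tails and the bootstrap union bound.

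The main obstacle is the multiplicative-noise martingale: its increments scale with the a priori unknown $\|H_t-H^\dagger\|_{\mathrm{HS}}$, and the only cheap deterministic bound, $\|H_t\|_{\mathrm{HS}}\le\kappa M_\rho/\lambda$, is far too lossy — inserted into the a.s.\ term of Bernstein it would leave a non-vanishing $T^{2\theta_4-\theta_3}$ remainder. Making the self-bounding argument rigorous — constructing the stopping time, verifying that the bootstrap radius can be taken of moderate size for the prescribed $(\theta_3,\theta_4)$, and confirming that the recursion closes — is where essentially all the difficulty lies; the bias estimate and the final assembly are routine.
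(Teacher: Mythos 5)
Your decomposition and your bias computation match the paper's: your population recursion $\bar H_t=\be[H_t]$ reproduces exactly the paper's terms $\T_1$ (Tikhonov approximation error, Proposition \ref{prop2}) and $\T_2$ (initial error, Proposition \ref{prop3}), your $\chi_t$ is (up to sign) the paper's $\B_t$, so your "fluctuation" is precisely the martingale term $J_3=\sum_t\chi_t$ in Propositions \ref{Proposition error 1}--\ref{proposition error2}, and the final rate assembly with $\theta_3=\frac{2r+1}{2r+2}$, $\theta_4\geq\frac{2r+1}{(2r+2)\min\{2r+1,2\}}$ is the same. You also correctly diagnose the crux: the multiplicative part of the increments scales with the a priori uncontrolled $\|H_t-H^\dagger\|_{\mathrm{HS}}$, and the crude bound $\kappa M_\rho/\lambda$ is useless.

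The genuine gap is that you never supply the argument that resolves this crux — you propose a stopping-time/self-bounding bootstrap and then explicitly defer "constructing the stopping time \dots and confirming that the recursion closes" as the hard part. That step is the entire content of the paper's Propositions \ref{corollary1} and \ref{prop13}, and the paper closes it \emph{without any bootstrap}: one splits $H_t-H^\dagger=L_t+R_t$ as in \eqref{temp66}, where $L_t$ is the deterministic part of the recursion, uniformly bounded by $\|H^\dagger\|_{\mathrm{HS}}$ (Lemma \ref{lemma6}), and $R_t=\sum_{i<t}\eta_iK_i\prod_{j>i}(I-\eta_j(C_j+\lambda_jI))$ is a (backward) martingale whose increments $K_i=(y_i-H^\dagger\phi(x_i))\otimes\phi(x_i)+L_i(C-C_i)$ are almost surely bounded by $\kappa M_\rho+3\kappa^2\|H^\dagger\|_{\mathrm{HS}}$ — a bound that involves only $L_i$ and the bounded output, not $H_t$ or $R_t$ itself. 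Pinelis (Proposition \ref{prop10}) then gives $\sup_{t\le T}\|R_t\|_{\mathrm{HS}}\lesssim T^{\frac12-\theta_3}\log T\log\frac2\delta$ directly (via a union bound over $t$, which your sketch also omits but which is needed since the increment bound must hold at every step, not only at $t=T$), and the martingale $\chi_t$ is then truncated on this event, $\overline\chi_t=\chi_t\mathbbm{1}_{\overline A_t}$, so that a second application of Pinelis with the bounds $M_2,\tau_2$ of Proposition \ref{prop13} yields the stated $T^{-\theta_3}(\log T)^{\mathbbm{1}\{s=1\}}\log^2\frac2\delta+T^{1-3\theta_3}\log^2T\log^4\frac2\delta$. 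Whether your self-consistent stopping-time scheme can be made to close for the prescribed $(\theta_3,\theta_4)$ is plausible but unverified; as written, the proposal leaves the decisive high-probability control of $\sup_{t\le T}\|H_t-H^\dagger\|_{\mathrm{HS}}$ unproved, so the proof is incomplete at exactly the point where the theorem differs from a routine bias--variance calculation.
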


    \begin{theorem} \label{thm7}
        Under the conditions of Theorem \ref{thm6},
        choose 
        \begin{equation*}
            \theta_3=
            \begin{cases}
                \frac{1+2r+s}{3+2r+s}, &\text{ when }r<\frac{1-s}{2}, \\[5pt]
                \frac{2r+s}{1+2r+s}, &\text{ when }r\geq\frac{1-s}{2},
            \end{cases}
        \end{equation*}
        and
        \begin{equation*}
            \theta_4\geq
            \begin{cases}
                \frac{2r}{(3+2r+s)r}, &\text{ when }r<\frac{1-s}{2}, \\
                \frac{r}{(1+2r+s)\min\{r,1\}}, &\text{ when }r\geq\frac{1-s}{2}.
            \end{cases}
        \end{equation*}
        Then, for any $\delta\in(0,2/e)$, with probability at least $1-2\delta$, the following holds:
        \begin{itemize}
            \item[(1)] If $r<\frac{1-s}{2}$,
            \begin{equation*}
                \begin{aligned}
                    \left\|h_{T+1}-h^{\dagger}\right\|_{\H}^2
                    &\leq c_{2,4}\l(T^{-\frac{1+2r-s}{3+2r+s}}+T^{-\frac{4r}{3+2r+s}}\log^2 T\log^2\frac{2}{\delta}\r)\log^2\frac{2}{\delta} \\
                    &\lesssim T^{-\frac{4r}{3+2r+s}}\log^2 T\log^4\frac{2}{\delta}.
                \end{aligned}
            \end{equation*}
            \item[(2)] 
            If $r\geq\frac{1-s}{2}$,
            \begin{equation*}
                \begin{aligned}
                    \left\|h_{T+1}-h^{\dagger}\right\|_{\H}^2
                    &\leq c_{2,4}\l(T^{-\frac{2r}{1+2r+s}}+T^{-\frac{4r+s-1}{1+2r+s}}\log^2 T\log^2\frac{2}{\delta}\r)\log^2\frac{2}{\delta} \\
                    &\lesssim T^{-\frac{2r}{1+2r+s}}\log^4\frac{2}{\delta}.
                \end{aligned}
            \end{equation*}
        \end{itemize}
        Here the constant $c_{2,4}$ is independent of $T$ and $\delta$.
    \end{theorem}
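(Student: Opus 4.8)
By Proposition~\ref{transform} with $W=I$ it suffices to bound $\|H_{T+1}-H^\dagger\|_{\mathrm{HS}}^2$ for the iteration \eqref{iter} with the constant schedule $\eta_t\equiv\eta:=\eta_1T^{-\theta_3}$, $\lambda_t\equiv\lambda:=\lambda_1T^{-\theta_4}$, which I would carry out through the error decomposition of Section~\ref{decomposition} taken around the regularized minimizer $H_\lambda$. Writing $D_t:=H_t-H_\lambda$ and using the population fixed-point identity $H_\lambda(C+\lambda I)=H^\dagger C$, the SGD iteration becomes the affine recursion $D_{t+1}=D_t(I-\eta(C+\lambda I))-\eta(\chi_t^{(0)}+\chi_t^{(1)})$, $D_1=-H_\lambda$, where $\chi_t^{(0)}:=(H_\lambda\phi(x_t)-y_t)\otimes\phi(x_t)-(H_\lambda-H^\dagger)C$ is i.i.d., mean zero, and a.s.\ bounded in $\mathrm{HS}$-norm by a constant $M$ (using $\|\phi(x)\|_{\H_\K}^2\le\kappa^2$ from Assumption~\ref{a1} and $\|y\|_\Y\le M_\rho$ from Assumption~\ref{a5}), while $\chi_t^{(1)}:=D_t(\phi(x_t)\otimes\phi(x_t)-C)$ is a martingale difference with $\|\chi_t^{(1)}\|_{\mathrm{HS}}\le 2\kappa^2\|D_t\|_{\mathrm{HS}}$. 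Unrolling gives a four-term identity: $H_{T+1}-H^\dagger$ is the sum of the regularization bias $H_\lambda-H^\dagger$, the transient $-H_\lambda(I-\eta(C+\lambda I))^T$, and the two martingale sums $-\eta\sum_{t=1}^T\chi_t^{(j)}(I-\eta(C+\lambda I))^{T-t}$, $j=0,1$; the hypothesis $\eta_1(\kappa^2+\lambda_1)\le 1$ makes $0\preceq I-\eta(C+\lambda I)\preceq(1-\eta\lambda)I$, so all the matrix factors are contractions.

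The two deterministic terms are handled by spectral calculus on $C$: by Assumption~\ref{a2}, $H_\lambda-H^\dagger=-\lambda S^\dagger C^r(C+\lambda I)^{-1}$ and $H_\lambda(I-\eta(C+\lambda I))^T=S^\dagger C^{r+1}(C+\lambda I)^{-1}(I-\eta(C+\lambda I))^T$, whence $\|H_\lambda-H^\dagger\|_{\mathrm{HS}}^2\lesssim\lambda^{2\min\{r,1\}}$ and $\|H_\lambda(I-\eta(C+\lambda I))^T\|_{\mathrm{HS}}^2\lesssim\sup_{0<u\le\kappa^2}(u^r e^{-\eta Tu})^2\lesssim(\eta T)^{-2r}$. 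Substituting $\lambda=\lambda_1T^{-\theta_4}$ and $\eta T=\eta_1T^{1-\theta_3}$, the bias is dominated by the target rate precisely because of the prescribed lower bound on $\theta_4$, and the transient matches the target rate for the prescribed $\theta_3$, in both regimes $r<\tfrac{1-s}{2}$ and $r\ge\tfrac{1-s}{2}$.

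The stochastic terms carry the analysis. The sum $\Xi_0:=\eta\sum_{t}\chi_t^{(0)}(I-\eta(C+\lambda I))^{T-t}$ is a martingale with i.i.d.\ mean-zero increments of a.s.\ norm $\le 2\eta M(1-\eta\lambda)^{T-t}$ and conditional second moment $\lesssim\eta^2\sigma^2\mathrm{Tr}((I-\eta(C+\lambda I))^{2(T-t)}C)$ up to lower order; a Bernstein-type inequality for $\B_{\mathrm{HS}}(\H_\K,\Y)$-valued martingales then gives, with probability $\ge 1-\delta$, $\|\Xi_0\|_{\mathrm{HS}}^2\lesssim\eta^2\sum_{t}\mathrm{Tr}((I-\eta(C+\lambda I))^{2(T-t)}C)\log^2\tfrac2\delta$. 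Using Assumption~\ref{a3} (through $u_k\lesssim k^{-1/s}$) together with the fact that the prescribed exponents satisfy $\theta_3+\theta_4\ge 1$, so that $\eta\lambda T\le\eta_1\lambda_1<1$ and hence $\sum_{j\ge 0}(1-\eta(u_k+\lambda))^{2j}\le\min\{T,(\eta(u_k+\lambda))^{-1}\}$ with the first branch governing the small eigenvalues, one obtains $\eta^2\sum_{t}\mathrm{Tr}((I-\eta(C+\lambda I))^{2(T-t)}C)\lesssim\eta^{1+s}T^s$, so $\|\Xi_0\|_{\mathrm{HS}}^2\lesssim\eta^{1+s}T^s\log^2\tfrac2\delta$, which is exactly the target rate by the choice of $\theta_3$. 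For $\Xi_1:=\eta\sum_{t}\chi_t^{(1)}(I-\eta(C+\lambda I))^{T-t}$, whose increments are controlled only by the random $\|D_t\|_{\mathrm{HS}}$, the key device — the novel technique of Section~\ref{prob} — is to first establish, with probability $\ge 1-\delta$, a uniform-in-$t$ bound $\sup_{1\le t\le T}\|D_t\|_{\mathrm{HS}}^2\lesssim\Phi_T\log^2\tfrac2\delta$ with $\Phi_T$ of the order of the claimed estimation rate. This is proved by induction on $t$ along the recursion for $D_t$: the deterministic and $\chi^{(0)}$-contributions are bounded as above, the $\chi^{(1)}$-contribution via $\|\chi_t^{(1)}(I-\eta(C+\lambda I))^{T-t}\|_{\mathrm{HS}}^2\le\kappa^2\|D_t\|_{\mathrm{HS}}^2\mathrm{Tr}((I-\eta(C+\lambda I))^{2(T-t)}C)$ and the inductive hypothesis, and the loop is closed by a Doob/Ville-type maximal inequality for the associated nonnegative supermartingale — the essential point being that the self-referential coefficient is $\lesssim\eta^{1+s}T^s\to 0$, hence $<1$ for $T$ large. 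Feeding this back, $\Xi_1$ becomes a martingale with a.s.-bounded increments, and a second Bernstein application gives $\|\Xi_1\|_{\mathrm{HS}}^2\lesssim\eta^{1+s}T^s\,\Phi_T\log^4\tfrac2\delta$, of strictly lower order than the dominant term in each regime. A union bound over the at-most-two exceptional events of probability $\le\delta$ yields confidence $1-2\delta$, and summing the four squared-norm estimates while substituting the explicit schedules produces the two displayed bounds, with the sub-dominant term carrying the stated polylogarithmic factors in $T$.

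\emph{Main obstacle.} The crux is the multiplicative-noise sum $\Xi_1$: its martingale increments are bounded only in terms of the iterate deviations $\|D_t\|_{\mathrm{HS}}$, which is exactly the quantity one is trying to estimate, so a self-bounding argument is unavoidable. The delicate part is to organize the estimates so that the self-referential coefficient genuinely vanishes as $T\to\infty$ — this is where the regularization schedule enters, via $\theta_3+\theta_4\ge 1$ — rather than being a constant $\ge 1$, which would make the induction vacuous; a secondary, bookkeeping difficulty is to keep the polylogarithmic factors under control through the two nested Bernstein applications so as to land on exactly $\log^4\tfrac2\delta$. By comparison, the deterministic terms and $\Xi_0$ reduce to routine spectral-calculus and concentration estimates.
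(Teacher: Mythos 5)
Your overall architecture (decompose around the regularized minimizer, bound the bias $\lambda^{2\min\{r,1\}}$ and the transient $(\eta T)^{-2r}$ by spectral calculus, control the stochastic part by a martingale inequality after first establishing a high-probability control on the iterates) is the same as the paper's, and your treatment of the deterministic terms and of $\Xi_0$ matches Propositions \ref{prop2}, \ref{prop3}, \ref{prop12} and the Pinelis-type bound of Proposition \ref{prop10}. The gap is in the treatment of the iterate-dependent term $\Xi_1$. Your plan hinges on proving, with high probability, $\sup_{1\le t\le T}\|D_t\|_{\mathrm{HS}}^2\lesssim \Phi_T\log^2\tfrac2\delta$ with $\Phi_T$ \emph{of the order of the claimed estimation rate}. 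This cannot hold: with constant step size the early iterates are nowhere near $H_\lambda$; already $D_1=-H_\lambda$ has $\|D_1\|_{\mathrm{HS}}$ bounded below by a constant independent of $T$ (it tends to $\|H^\dagger\|_{\mathrm{HS}}$ as $\lambda\to0$), while your $\Phi_T\to0$. So the base case of the self-bounding induction already violates the claimed bound, and no vanishing self-referential coefficient can rescue it; at best one can prove a bound that includes the (order-one, slowly decaying) transient, which destroys the uniformity you need to declare the increments of $\Xi_1$ small. Consistently with this, your conclusion that $\Xi_1$ is "of strictly lower order in each regime" contradicts the statement being proved: in the regime $r<\frac{1-s}{2}$ the dominant term in the theorem is precisely the correction $T^{-\frac{4r}{3+2r+s}}\log^2T\log^2\frac2\delta$ coming from the iterate-dependent noise, not the $\Xi_0$-type term $T^{-\frac{1+2r-s}{3+2r+s}}$.

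What the paper does instead — and what repairs your argument — is to ask for much less: write $H_t-H^\dagger=L_t+R_t$ with $L_t$ the deterministic (noise-free) recursion, bounded by $\|H^\dagger\|_{\mathrm{HS}}$ (Lemma \ref{lemma6}), and prove only the weak uniform bound $\|R_t\|_{\mathrm{HS}}\lesssim T^{\frac12-\theta_3}\log T\log\frac2\delta$ by one application of Proposition \ref{prop10} to the unrolled recursion with \emph{empirical} contractions (Proposition \ref{corollary1}); note this bound may even grow in $T$ when $\theta_3<\tfrac12$, which happens exactly in the regime $r<\frac{1-s}{2}$. This weak control is then injected through truncation indicators into the sup-norm and conditional-variance bounds of the martingale $\sum_t\chi_t$ (Proposition \ref{prop13}), and a second application of Proposition \ref{prop10} yields both displayed terms, including the $\log^2T$ factor. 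If you replace your claimed uniform bound by this weaker one (or prove an analogous statement for your $D_t$ after peeling off the deterministic transient), your scheme goes through; as written, the induction step for $\Xi_1$ does not close.
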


Building upon the results from the previous theorems, we note that Theorem \ref{thm6} for $s=1$ and Theorem \ref{thm7} for $r\geq\frac{1-s}{2}$ achieve the minimax lower bound derived in Theorem 2.9 of \citep{shi2024learning}, up to a logarithmic factor.

\medskip
\noindent\textbf{A key decomposition.}
To help the reader better understand the proof strategy, we briefly indicate the common quantity underlying our error analysis, namely
\[
\left\|\left(H_{T+1}-H^\dagger\right)C^\alpha\right\|_{\mathrm{HS}}^2,
\qquad 0\le \alpha\le \frac12.
\]
By the identity
\[
\mathcal{E}(H)-\mathcal{E}(H^\dagger)
=
\|(H-H^\dagger)C^{1/2}\|_{\mathrm{HS}}^2,
\]
the choice $\alpha=\frac12$ corresponds to the prediction error, while $\alpha=0$ corresponds to the estimation error. The full details are given in Supplementary Material, Section~\ref{decomposition}; here we only record the corresponding decompositions and refer the reader there for the precise definitions of the auxiliary quantities.

In the expectation analysis, we derive
\begin{equation*}
\be_{z^{T}}\left[
\left\|\left(H_{T+1}-H^{\dagger}\right)C^\alpha\right\|_{\mathrm{HS}}^2
\right]
\leq
\T_1+\T_2+\T_3+\T_4,
\end{equation*}
where
\begin{equation*}
\begin{split}
&\T_1:=2\left\|(H_{\lambda_T}-H^\dagger)C^\alpha\right\|_{\mathrm{HS}}^{2},\\
&\T_2:=6\left\|H_{\lambda_0}C^\alpha\prod_{t=1}^{T}\left(I-\eta_{t}(C+\lambda_{t}I)\right)\right\|^2_{\mathrm{HS}},\\
&\T_3:=6\left\|\sum_{t=1}^{T}(H_{\lambda_{t-1}}-H_{\lambda_{t}})C^\alpha
\prod_{j=t}^{T}(I-\eta_j(C+\lambda_j I))\right\|^2_{\mathrm{HS}},\\
&\T_4:=6\sqrt{c}\sum_{t=1}^{T}\eta_t^{2}\left(\sqrt{c}\be_{z^{t-1}}\left\|(H_t-H^\dagger)\phi(x_t)\right\|_{\Y}^2+\sigma^2\right)
\mathrm{Tr}\left(C^{1+2\alpha}\prod_{j=t+1}^{T}(I-\eta_j(C+\lambda_j I))^2\right).
\end{split}
\end{equation*}
Here $\T_1$, $\T_2$, $\T_3$, and $\T_4$ are referred to as the approximation term, the initial term, the drift term, and the sample term, respectively. In the finite-horizon setting, the drift term $\T_3$ vanishes since the regularization parameter is constant.

For the high-probability analysis, the corresponding decomposition takes the form
\begin{equation*}
\left\|\left(H_{T+1}-H^\dagger\right)C^\alpha\right\|_{\mathrm{HS}}^2
\leq
\T_1+\T_2+\T_3+6\left\|\sum_{t=1}^T \chi_t\right\|_{\mathrm{HS}}^2,
\end{equation*}
where $\{\chi_t\}$ is the stochastic fluctuation sequence appearing in the high-probability analysis; see Supplementary Material, Section~\ref{prob}. The deterministic terms $\T_1,\T_2,\T_3$ are estimated in the same way as in the expectation analysis, whereas the last stochastic term is handled differently. In expectation, it is controlled through $\T_4$ using Assumption~\ref{a4}. In the high-probability setting, it is treated by a truncation argument together with martingale techniques.

\medskip
\noindent\textbf{Relation to the unregularized analysis.}
The decomposition above also clarifies the relation between the present paper and our earlier unregularized SGD analysis \citep{shi2024learning}. While the basic operator-learning setting and the overall expectation-based framework are similar, the explicit regularization term changes the recursion in an essential way. In particular, it introduces the regularization path $\{H_{\lambda_t}\}$ and the drift term $\T_3$, and it changes the treatment of $\T_2$ and $\T_4$, especially in the critical regime $\theta_1+\theta_2=1$.

These changes are reflected both in the analysis and in the resulting guarantees. At the level of results, explicit regularization yields sharper conclusions in the online setting, especially for the estimation error. At the methodological level, it leads to a different decomposition and a different treatment of the stochastic recursion. At the probabilistic level, it also enables high-probability bounds and hence almost sure convergence in the infinite-dimensional setting.

\section{Discussion} \label{new dis}

This section places the abstract framework studied in this paper in a broader operator-learning context and further interprets its functional-analytic meaning. 
We first show that, for a suitable class of scalar kernels, the induced vector-valued RKHS is norm-equivalent to a Bochner--Sobolev space, thereby giving a concrete interpretation of both the hypothesis space and the estimation error in our analysis. 
We then discuss the relation of our results to recent operator learning literature and conclude with several remarks on possible extensions and on the scope and limitations of the present theory.

\medskip
\noindent\textbf{Bochner--Sobolev spaces.}
In the discussion below, we further assume that the input space \(\X\subset\mathbb{R}^d\) is a bounded Lipschitz domain, equipped with the Lebesgue measure. For a Banach space \(E\) and \(1\le p\le\infty\), let \(W^{k,p}(\X;E)\) denote the Bochner--Sobolev space of \(E\)-valued functions on \(\X\) whose weak derivatives up to order \(k\) belong to \(L^p(\X;E)\); see \citep{amann1995linear,aubin2000applied,cardanobile2009parabolic}. When \(E\) is a Hilbert space, we write
\[
H^k(\X;E)=W^{k,2}(\X;E).
\]
Since our output space \(\Y\) is a separable Hilbert space, the space \(H^k(\X;\Y)\) is well defined.

The next proposition shows that, for a suitable class of scalar kernels, the induced vector-valued RKHS is norm-equivalent to a Bochner--Sobolev space.

\begin{proposition}\label{sobolev}
Suppose that \(\mathcal{K}\) is a translation-invariant scalar-valued kernel on \(\mathbb{R}^{d}\) whose Fourier transform satisfies
\[
\widehat{\mathcal{K}}(\xi)\asymp (1+\|\xi\|_2^2)^{-k},
\qquad \xi\in\mathbb{R}^{d},
\]
for some \(k>d/2\). Then the RKHS induced by the restriction of \(\mathcal{K}\) to \(\X\times\X\) is norm-equivalent to \(H^k(\X)\).

Define the operator-valued kernel
\[
K:\X\times\X\to\mathcal{B}(\Y),
\qquad
K(x,x')=\mathcal{K}(x,x')\,I,
\]
where \(I\) denotes the identity operator on \(\Y\). Then the corresponding vector-valued RKHS \(\mathcal{H}\) is norm-equivalent to \(H^k(\X;\Y)\).
\end{proposition}

This proposition is essentially a specialization of the vector-valued Sobolev RKHS characterization developed in \citep[Section 6]{li2024towards}. The underlying argument combines three ingredients: the norm equivalence between scalar Mat\'ern-type RKHSs and Sobolev spaces on \(\X\) \citep[Corollary 10.13 and Theorem 10.46]{wendland2004scattered}; the Hilbert-valued Sobolev representation $H^k(\X;E)\cong E\widehat{\otimes}H^k(\X),$ where \(\widehat{\otimes}\) denotes the Hilbert tensor product, from \citep[Theorem 12.7.1]{aubin2000applied}; and the fact that an operator-valued kernel of the form \(K(x,x')=\mathcal{K}(x,x')I\) induces a vector-valued RKHS canonically identified with \(\Y\widehat{\otimes}\mathcal{H}_{\mathcal K}\); see \citep[Example 5(i)]{carmeli2010vector}. A concrete example covered by this proposition is provided by Mat\'ern kernels with Sobolev smoothness index \(k\). Consequently, in such settings, the estimation error in our analysis can be interpreted in terms of a Sobolev norm.

\medskip
\noindent\textbf{Relation to operator learning.}
The present work studies operator learning from the perspective of nonparametric kernel methods. In the setting \(K(x,x')=\mathcal{K}(x,x')I\), the operator-valued kernel \(K\) is \(L^2\)-universal if and only if the scalar kernel \(\mathcal K\) is \(L^2\)-universal \footnote{Though elementary, this equivalence does not seem to be stated explicitly in the literature. Indeed, by \citep[Example 5(i)]{carmeli2010vector}, $\mathcal{H} \cong \mathcal{Y}\widehat{\otimes}\mathcal{H}_{\mathcal K},$
and by \citep[Theorem 12.6.1]{aubin2000applied}, $L^2(\mathcal{X},\rho_{\mathcal X};\mathcal{Y})
\cong
\mathcal{Y}\widehat{\otimes}L^2(\mathcal{X},\rho_{\mathcal X}),$
where \(\rho_{\mathcal X}\) is any probability measure on \(\mathcal X\).}. This provides a qualitative parallel to the universal approximation results established for neural operator architectures such as FNO \citep{kovachki2021universal}, DeepONet \citep{lanthaler2022error}, and PCA-Net \citep{lanthaler2023operator}.

Much of the existing quantitative theory for operator learning concerns approximation, parameter complexity, and sample complexity for achieving accuracy \(\epsilon\). A substantial part of this theory either treats highly structured operator classes, such as infinite-dimensional holomorphic maps \citep{adcock2024optimal}, Fréchet differentiable operators learned via prespecified neural operators \citep{cheng2026learning}, or derives rates for concrete PDEs by showing that neural architectures can emulate classical numerical schemes such as Fourier--Galerkin discretizations and fixed-point iterations \citep{kovachki2021universal,lanthaler2022error,lanthaler2023operator}. Related statistical learning theory has also been developed for neural operators from the viewpoint of empirical risk minimization and statistical learning theory \citep{reinhardt2024statistical}, while recent theory-to-practice gap results reveal substantial sampling barriers for neural operators in relevant approximation spaces  \citep{grohs2025theory}. For broader classes, algebraic rates are generally unavailable: for Lipschitz operator classes and Gaussian--Sobolev settings, both parameter and sample complexity may exhibit curse-of-complexity behavior \citep{adcock2024sample,lanthaler2023operator,kovachki2024data}, while near-sharp minimax risk bounds for Lipschitz operators were recently established in \citep{adcock2025towards}. 

In contrast to works that focus primarily on approximation, architectural complexity, or sample complexity for specific operator-learning models, the present paper studies a structured nonparametric kernel framework and analyzes regularized SGD as a concrete learning algorithm in infinite-dimensional spaces. Under RKHS-based regularity assumptions, we establish algebraic convergence rates for both prediction and estimation errors, in expectation and with high probability, in both online and finite-horizon settings. Our results therefore complement the existing operator-learning literature by highlighting the roles of optimization and regularization, and by providing almost sure convergence guarantees.

\medskip
\noindent\textbf{Remarks on extensions.}
The regularized SGD analysis also extends beyond Assumption~\ref{a1} to more general operator-valued kernels; see Appendix~\ref{Section 3.1}. Since this extension does not include the noncompact case \(K(x,x')=\mathcal K(x,x')I\), the two settings are complementary. Together they cover a broader class of operator learning problems.

We next describe a viewpoint that links the prediction error, the estimation error, and Assumption~\ref{a2}. As noted in \citep[Remark 3]{yang2025kernel}, Assumption~\ref{a2} is equivalent to \(h^\dagger \in \mathrm{ran}(L_K^{\,r+1/2})\), where \(\rho_{\X}\) denotes the marginal distribution of \(\rho\) on \(\X\), and \(L_K:L^2(\X,\rho_{\X};\Y)\to L^2(\X,\rho_{\X};\Y)\) is the integral operator induced by \(K\). Restricting \(L_K\) to \(\ker(L_K)^\perp\), we see that for \(h\in\H\), the prediction error is \(\|h-h^\dagger\|_{L^2(\X,\rho_{\X};\Y)}^2\), while the estimation error can be written as \(\|h-h^\dagger\|_{\H}^2=\|L_K^{-1/2}(h-h^\dagger)\|_{L^2(\X,\rho_{\X};\Y)}^2\). More generally, for \(\beta\in(0,1)\), the quantity \(\|L_K^{-\beta/2}(h-h^\dagger)\|_{L^2(\X,\rho_{\X};\Y)}^2\) measures the error in the interpolation space \([L^2(\X,\rho_{\X};\Y),\H]_{\beta,2}\); see \citep[Theorem 2.3]{yang2025kernel}. By the interpolation inequality,
\[
\|L_K^{-\beta/2}(h-h^\dagger)\|_{L^2(\X,\rho_{\X};\Y)}^2
\le
\|h-h^\dagger\|_{L^2(\X,\rho_{\X};\Y)}^{2(1-\beta)}
\|h-h^\dagger\|_{\H}^{2\beta}.
\]
Applying this to the estimator \(h_{T+1}\) and taking expectation with respect to the sample, Hölder's inequality yields
\[
\text{interpolation-space error}
\le
(\text{prediction error})^{1-\beta}\times(\text{estimation error})^{\beta}.
\]
Under the setting of Proposition~\ref{sobolev}, if \(\rho_{\X}\) is equivalent to the uniform measure on \(\X\), so that \(L^2(\X,\rho_{\X})\) and \(L^2(\X)\) coincide with equivalent norms, then the interpolation space \([L^2(\X,\rho_{\X};\Y),\H]_{\beta,2}\) corresponds to the Bochner--Sobolev space \(H^{\beta k}(\X;\Y)\); see \citep[Section 6]{li2024towards}.

\section{Applications and Extensions} \label{discussion}
This section discusses several applications and related connections of our results. We begin with structured prediction problems, then turn to a class of parametric PDEs. A brief discussion of the connection with the PCA encoder-decoder framework is deferred to Appendix \ref{Section 3.3}.

\subsection{Applications to Structured Prediction} \label{Section 3.2}

Now, we formulate the surrogate approach for structured prediction as an application example of our model \eqref{model}. In structured prediction, the input takes values in $\X$ and the output takes values in $\Z$, where $\X$ is a Polish space and $\Z$ represents the structured output space. A structured loss function $\mathscr{D}:\Z \times \Z \to \mathbb{R}$ is defined on $\Z$ to measure the discrepancy between the true and the predicted outputs. Let $x$ denote the input random variable and $z$ the output random variable. Given a set of independent and identically distributed input-output samples, our goal is to learn a mapping from the inputs to the structured outputs. To this end, we minimize the prediction error defined by
\[
    \mathcal{R}(f):=\be\l[\mathscr{D}(f(x),z)\r],
\]
where $f$ is an estimator of $f^\dagger$. The function $f^\dagger:\X\to\Z$ is the minimizer of $\mathcal{R}$, i.e., $f^\dagger=\arg\min_f \mathcal{R}(f)$. 
    
We focus on the case where the loss function $\mathscr{D}$ is induced by a scalar-valued kernel $k_\Z : \Z \times \Z \to \mathbb{R}$. Specifically, denote the RKHS induced by $k_\Z$ by $\left(\Y, \|\cdot\|_{\Y}\right)$, and embed $\Z$ into $\Y$ via the canonical feature map $\phi(z):=k_{\Z}(z,\cdot)$. We then define the structured loss as $\mathscr{D}(z,z')=\|\phi(z)-\phi(z')\|_{\Y}^2$. Building on extensive research on kernels for structured objects \citep{gartner2003survey}, this class of loss functions addresses various structured prediction problems. Instead of directly learning $f^\dagger$, we adopt a surrogate model $h^\dagger : \X \rightarrow \Y$, where $h^\dagger(x) := \mathbb{E}[y|x]$ and $y := \phi(z)$ is a random variable taking values in $\Y$. This reduces the original structured prediction task to the model \eqref{model}. We then reformulate the original structured prediction problem as the following surrogate nonlinear operator learning problem:
    \[
    \min_{h:\X\rightarrow\Y}\be[\|h(x)-y\|_{\Y}^2].
    \]

    We solve this problem using the SGD algorithm presented in this paper, which yields an approximation of $h^\dagger$, denoted by $\hat{h}$. During prediction, we use a decoding operator $D$ defined as
    \[
    D(h)(\cdot):=\mathop{\arg\min}_{z\in\Z}\l\{\|h(\cdot)-\phi(z)\|_{\Y}\r\}
    \]
        for any estimator $h$, as detailed in \citep{ciliberto2016consistent, brogat2022vector}. Let $\hat{f} = D(\hat{h})$, denote the estimator for $f^\dagger$ obtained via the algorithm. The surrogate approach for structured prediction is illustrated in Figure \ref{fig:enter-label}.

    According to \citep{ciliberto2016consistent}, the following properties hold:
    \begin{itemize}
        \item[(1)] Fisher Consistency: $D(h^\dagger) = f^\dagger$ almost surely.
        \item[(2)] Comparison Inequality:
        \[
    \mathcal{R}(\hat f)-\mathcal{R}(f^\dagger)\lesssim
    \l(\be\l[\|\hat{h}(x)-h^\dagger(x)\|_{\Y}^2\r]\r)^{\frac12}.
    \]
    \end{itemize}
Thus, to bound $\mathcal{R}(\hat{f}) - \mathcal{R}(f^\dagger)$, it suffices to bound $\mathbb{E}[\| \hat{h}(x) - h^\dagger(x) \|_{\Y}^2]=\mathcal{E}(\hat{h})-\mathcal{E}(h^\dagger)$, as conducted in this paper. This guarantees decay rates of the prediction error under mild assumptions.

In many structured prediction tasks, such as those in natural language processing (e.g., sequence labeling and machine translation) or time series forecasting, data often arrive sequentially or in streams. In such cases, SGD is particularly well suited, as it allows for incremental model updates with each new data point. This makes it an effective tool for structured prediction in streaming or time-dependent environments.

\tikzset{every picture/.style={line width=0.75pt}} 

\begin{figure}
    \centering
    
\begin{tikzpicture}[x=0.75pt,y=0.75pt,yscale=-1,xscale=1]


\draw    (224.5,205) -- (344.5,205) ;
\draw [shift={(346.5,204)}, rotate = 179.53] [color={rgb, 255:red, 0; green, 0; blue, 0 }  ][line width=0.75]    (10.93,-3.29) .. controls (6.95,-1.4) and (3.31,-0.3) .. (0,0) .. controls (3.31,0.3) and (6.95,1.4) .. (10.93,3.29)   ;
\draw    (359.5,193) -- (359.5,93) ;
\draw [shift={(359.5,91)}, rotate = 90] [color={rgb, 255:red, 0; green, 0; blue, 0 }  ][line width=0.75]    (10.93,-3.29) .. controls (6.95,-1.4) and (3.31,-0.3) .. (0,0) .. controls (3.31,0.3) and (6.95,1.4) .. (10.93,3.29)   ;
\draw    (224.5,195) -- (348.97,90.29) ;
\draw [shift={(350.5,89)}, rotate = 139.93] [color={rgb, 255:red, 0; green, 0; blue, 0 }  ][line width=0.75]    (10.93,-3.29) .. controls (6.95,-1.4) and (3.31,-0.3) .. (0,0) .. controls (3.31,0.3) and (6.95,1.4) .. (10.93,3.29)   ;
\draw    (374.5,87) .. controls (392.61,102.98) and (403.06,123.22) .. (402.83,144.99) .. controls (402.62,163.71) and (394.53,183.56) .. (376.61,202.82) ;
\draw [shift={(375.5,204)}, rotate = 313.6] [color={rgb, 255:red, 0; green, 0; blue, 0 }  ][line width=0.75]    (10.93,-3.29) .. controls (6.95,-1.4) and (3.31,-0.3) .. (0,0) .. controls (3.31,0.3) and (6.95,1.4) .. (10.93,3.29)   ;

\draw (209,197.5) node [anchor=north west][inner sep=0.75pt]    {$\mathcal{X}$};
\draw (351,197.5) node [anchor=north west][inner sep=0.75pt]    {$\mathcal{Z}$};
\draw (352,69.4) node [anchor=north west][inner sep=0.75pt]    {$\mathcal{Y}$};
\draw (269,123.4) node [anchor=north west][inner sep=0.75pt]    {$\hat{h}$};
\draw (366,134.4) node [anchor=north west][inner sep=0.75pt]    {$\phi $};
\draw (258,209.4) node [anchor=north west][inner sep=0.75pt]    {$\hat{f} =D(\hat{h})$};
\draw (412,137.4) node [anchor=north west][inner sep=0.75pt]    {$D$};

\end{tikzpicture}

\caption{Surrogate approach for structured prediction}

\label{fig:enter-label}

\end{figure}

\subsection{Applications to Parametric PDEs}

In this subsection, we apply regularized SGD to a class of parameter-dependent elliptic PDEs, following the setting of \citep{burman2026solving}. This example shows that the abstract RKHS framework considered in our analysis naturally covers a concrete class of parametric PDEs, and that the corresponding estimation norm admits a Sobolev interpretation.

Let \(\Omega_x \subset \mathbb{R}^d\), with \(d\in\{2,3\}\), be either a convex polygonal/polyhedral domain or a bounded \(C^{1,1}\) domain, and let $\Omega_\zeta = [0,1]^{N_\zeta},$
where \(N_\zeta \ge 1\) may be large. We denote by \(x\in\Omega_x\) the spatial variable and by \(\zeta\in\Omega_\zeta\) the parameter vector.
Consider the following Schrödinger-type elliptic parameter-dependent PDE:
\begin{equation}\label{PDE}
-\Delta u(x,\zeta) + q(x,\zeta)\,u(x,\zeta) = f(x,\zeta),
\qquad (x,\zeta)\in \Omega_x\times\Omega_\zeta,
\end{equation}
subject to homogeneous Dirichlet boundary conditions on \(\partial\Omega_x\). Here, \(f=f(x,\zeta)\) and \(q=q(x,\zeta)\) denote the source term and the potential, respectively. Our interest is in the dependence of the solution on the parameter \(\zeta\).

To ensure existence, uniqueness, and regularity of the solution, we impose the following assumptions on \(q\) and \(f\):
\begin{enumerate}
    \item[(1)] \textbf{Coefficients (boundedness and coercivity).}
    There exist constants \(0<q_{\min}\le q_{\max}<\infty\) such that
    \[
    q_{\min}\le q(x,\zeta)\le q_{\max}
    \qquad \text{for a.e. } (x,\zeta)\in \Omega_x\times\Omega_\zeta,
    \]
    where
    \[
    q(x,\zeta)=q_0(x)+\sum_{i=1}^{N_\zeta}\zeta_i\,q_i(x),
    \qquad \zeta\in\Omega_\zeta,
    \]
    with \(q_i\in L^\infty(\Omega_x)\) for \(i=0,1,\dots,N_\zeta\).

    \item[(2)] \textbf{Source term.}
    The source term \(f\) depends affinely on \(\zeta\), namely,
    \[
    f(x,\zeta)=f_0(x)+\sum_{i=1}^{N_\zeta}\zeta_i\,f_i(x),
    \qquad \zeta\in\Omega_\zeta,
    \]
    with \(f_i\in H^{-1}(\Omega_x)\) for \(i=0,1,\dots,N_\zeta\).
\end{enumerate}
As a consequence, \(f\in W^{k,\infty}(\Omega_\zeta;H^{-1}(\Omega_x))\) for every \(k\ge 0\).

Here, $H_0^1(\Omega_x)$, $H^{-1}(\Omega_x)$, and $L^\infty(\Omega_x)$ respectively denote the Sobolev, dual Sobolev, and essentially bounded function spaces on $\Omega_x$. We also use the Bochner--Sobolev spaces $W^{k,p}(\Omega_\zeta;X)$, where $X$ is a Banach space and $1\le p\le\infty$, as defined in Section \ref{new dis}. In particular, when \(X\) is a Hilbert space, we write
\[
H^k(\Omega_\zeta;X)=W^{k,2}(\Omega_\zeta;X).
\]

Under the above assumptions, \eqref{PDE} admits a unique weak solution
\[
u\in L^2(\Omega_\zeta;H_0^1(\Omega_x)),
\]
and, moreover, by \citep[Theorem 2.1]{burman2026solving}, for any \(k\ge 0\),
\[
\|u\|_{W^{k,\infty}(\Omega_\zeta;H_0^1(\Omega_x))}
\lesssim
\|f\|_{W^{k,\infty}(\Omega_\zeta;H^{-1}(\Omega_x))},
\]
with a constant independent of \(\zeta\). We thus define the parameter-to-solution map
\[
h^\dagger:\Omega_\zeta\to H_0^1(\Omega_x),
\qquad
h^\dagger(\zeta)=u(\cdot,\zeta).
\]
In particular,
\[
h^\dagger \in W^{k,\infty}(\Omega_\zeta;H_0^1(\Omega_x)).
\]
Since \(\Omega_\zeta\) has finite measure, it follows that
\[
W^{k,\infty}(\Omega_\zeta;H_0^1(\Omega_x))
\hookrightarrow
H^k(\Omega_\zeta;H_0^1(\Omega_x)),
\]
and hence $h^\dagger \in H^k(\Omega_\zeta;H_0^1(\Omega_x)).$
Furthermore, \(h^\dagger\in W^{k,\infty}(\Omega_\zeta;H^m(\Omega_x))\) under a uniform elliptic shift property \citep[Remark 2.1]{burman2026solving} of order \(m\). 

This regularity fits naturally into the RKHS framework studied in this paper. 
Indeed, if $\mathcal K$ is a scalar kernel on $\Omega_\zeta$ whose RKHS is norm-equivalent to $H^k(\Omega_\zeta)$, for example a Mat\'ern-type kernel of Sobolev smoothness $k$, then the associated vector-valued RKHS generated by $K(\zeta,\zeta')=\mathcal K(\zeta,\zeta')I$ is norm-equivalent to $H^k(\Omega_\zeta;H_0^1(\Omega_x))$. 
Hence $h^\dagger$ belongs to the corresponding RKHS, and the estimation error in our analysis can be interpreted through the Bochner--Sobolev norm with respect to $\zeta$.

As a further example, consider the finite-dimensional parametric elliptic diffusion problem
\[
-\nabla\!\cdot\!\big(a(x,\zeta)\nabla u(x,\zeta)\big)=f(x),
\qquad \zeta\in\Omega_\zeta\subset\mathbb R^{N_\zeta},
\]
subject to homogeneous Dirichlet boundary conditions, where
\[
a(x,\zeta)=a_0(x)+\sum_{j=1}^{N_\zeta}\zeta_j a_j(x)
\]
depends affinely on the parameter and satisfies a uniform ellipticity condition. In this setting, classical results show that the parameter-to-solution map \(\zeta\mapsto u(\cdot,\zeta)\) is analytic with values in \(H_0^1(\Omega_x)\); see \citet{cohen2011analytic} (see also \citet{cohen2015approximation}). Since this map admits an analytic extension to a complex neighborhood of the bounded finite-dimensional parameter domain \(\Omega_\zeta\), one obtains
\[
\zeta\mapsto u(\cdot,\zeta)\in W^{m,\infty}(\Omega_\zeta;H_0^1(\Omega_x))
\subset H^m(\Omega_\zeta;H_0^1(\Omega_x))
\]
for every finite \(m\). This also provides a natural class of examples for the present framework, in which the estimation error can again be interpreted in terms of a Sobolev norm.

\section{Numerical Experiments} \label{Numerical experiments}

In the experiments, we represent inputs through the first $m$ basis functions of $\X=L^2[0,1]$,
\[
e_k(u)=\sqrt{2}\cos(k\pi u),\qquad u\in[0,1],\quad 1\le k\le m,
\]
and generate
\[
x=\sum_{k=1}^m \sqrt{\mu_k}\,\xi_k e_k,
\qquad
\mu_k=k^{-\alpha},
\]
where $\alpha>0$, $m\gg1$, and $\xi_k\sim\mathcal N(0,1)$ are independent. We consider the scalar kernel
\[
\mathcal K(x,x')
=
\sum_{k\ge1}\nu_k\langle x,e_k\rangle_{L^2}\langle x',e_k\rangle_{L^2},
\qquad
\nu_k=k^{-\beta},
\]
and define the operator-valued kernel $K(x,x')=\mathcal K(x,x')I_{\Y}$ with $\Y=L^2[0,1]$. We choose the target operator in the form $H^\dagger=S^\dagger C^r,$ where $C$ is the covariance operator associated with $\mathcal K$. The observations are generated by
\[
y_t=h^\dagger(x_t)+\epsilon_t,
\qquad
\epsilon_t=\frac{\sigma}{\sqrt m}\sum_{k=1}^m \zeta_k^t e_k,
\]
where $\zeta_k^t\sim\mathcal N(0,1)$ are independent. The experimental results are displayed in Figure~\ref{fig:twofigs_vertical}; see Appendix~\ref{Detailed setup} for additional details.
\begin{figure}[bp]
    \centering
    \includegraphics[width=0.65\textwidth]{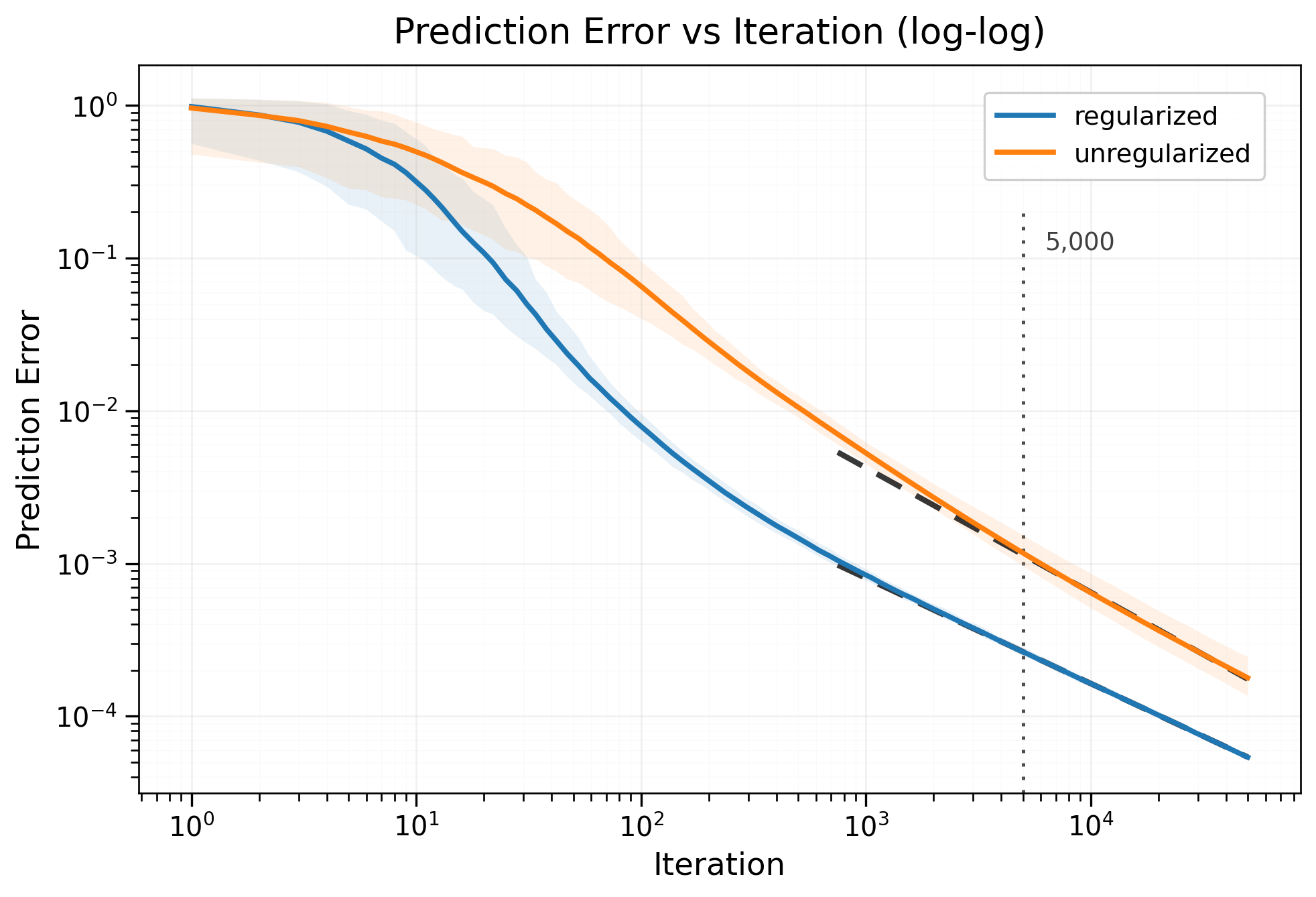}

    \vspace{0.8em}

    \includegraphics[width=0.65\textwidth]{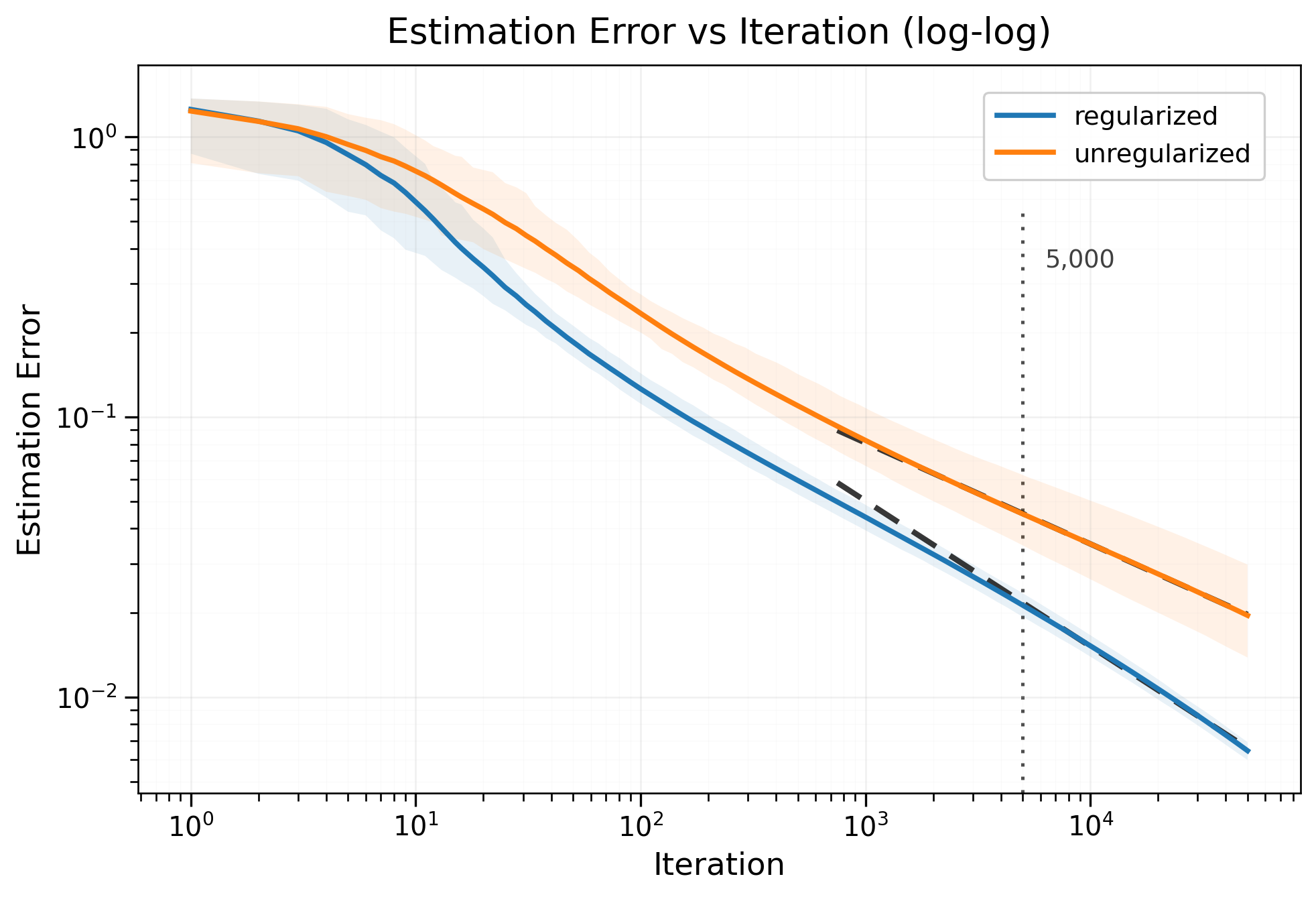}
    \caption{Log--log plots of the prediction and estimation errors versus the iteration number in the online setting, comparing regularized SGD with unregularized SGD. We set $m=200$, $\alpha=0.3$, $\beta=0.9$, $r=1.0$, and $\sigma=0.1$. Each experiment is run for $50{,}000$ iterations and repeated $100$ times. The shaded regions represent the empirical $90\%$ confidence bands. Both axes are shown on a logarithmic scale to highlight the polynomial decay behavior, and linear interpolation is used for $t>5000$. The results suggest that regularization yields smaller errors and improved stability. In particular, regularized SGD consistently outperforms unregularized SGD in prediction error and shows an even more pronounced advantage in estimation error.
    }
    \label{fig:twofigs_vertical}
\end{figure}

In practice, both the step sizes and the regularization parameters can be tuned according to validation performance, and the output iterate can be determined by validation-based early stopping. This stopping strategy can be regarded as a form of implicit regularization. Our theoretical results also indicate that the benefit of explicit regularization is especially pronounced under a decaying step-size schedule.

\bibliographystyle{plainnat}
\bibliography{reference}

\clearpage
\thispagestyle{empty}

\vspace*{1.2cm}

\begin{center}
{\LARGE\bfseries Supplementary Material\par}
\vspace{0.8em}

\begin{minipage}{0.78\textwidth}
\centering
{\normalsize
for ``Learning Operators by Regularized Stochastic Gradient Descent\\
with Operator-valued Kernels''
\par}
\end{minipage}
\end{center}

\vspace{1.5em}

\setcounter{page}{1}

\setcounter{section}{0}
\renewcommand{\thesection}{\Alph{section}}

\makeatletter
\renewcommand{\theHsection}{\Alph{section}}
\makeatother

\setcounter{equation}{0}
\renewcommand{\theequation}{\thesection.\arabic{equation}}

\makeatletter
\renewcommand{\theHequation}{\Alph{section}.\arabic{equation}}
\makeatother

This supplementary material provides all technical proofs and auxiliary results that support the main paper. It includes the error decomposition, intermediate estimates, and convergence analyses for both expectation and high probability, as well as additional technical details in the appendix.

\begin{itemize}
    \item Section \ref{decomposition} presents the error decomposition tailored to the regularized SGD algorithm.
    \item In Section \ref{section:basic}, we provide essential intermediate estimates used in the subsequent analysis.
    \item Section \ref{expected} and Section \ref{prob} establish bounds on the prediction and estimation errors—first in expectation, then with high probability. 
    \item For clarity and conciseness, some of the technical proofs are presented in the appendix.
\end{itemize}

For readability, we use labels and cross-references throughout the supplement to avoid repeating definitions and intermediate estimates.

\section{Error Decomposition} \label{decomposition}
       
In this section, we present the error decomposition employed in the convergence analysis of upper bounds.  We begin with several useful observations.
 
        For any $H\in\B_{\mathrm{HS}}(\H_\K,\Y)$, by the definition of $\E(H)$ in the main text,
        \begin{equation*}
            \begin{aligned}
                \mathcal{E}(H)-\mathcal{E}(H^{\dagger})
		&=\mathbb{E}\left[\|y-H\phi(x)\|_{\Y}^{2}\right]-\mathbb{E}\left[\|y-H^\dagger\phi(x)\|_{\Y}^{2}\right]
		\\&=\mathbb{E}\left[\|(H^\dagger-H)\phi(x)+\epsilon\|_{\Y}^{2}\right]-\sigma^2 
		\\&=\mathbb{E}\left[\|(H-H^\dagger)\phi(x)\|_{\Y}^{2}\right]+2\mathbb{E}\left[\langle \epsilon,(H^\dagger-H)\phi(x)\rangle_{\Y}\right]. 
            \end{aligned}
        \end{equation*}
		Since $\epsilon$ is a centered noise independent of $x$, we have $\mathbb{E}\left[\langle \epsilon,(H^\dagger-H)\phi(x)\rangle_{\Y}\right]=0$. Therefore,
	\begin{equation*}
		\mathcal{E}(H)-\mathcal{E}(H^{\dagger})=\mathbb{E}\left[\|(H-H^\dagger)\phi(x)\|_{\Y}^{2}\right].
	\end{equation*}
	Furthermore, suppose that $\{f_j\}_{j\geq1}$ is an orthonormal basis of the separable Hilbert space $\Y$. We express $(H-H^\dagger)\phi(x)$ using a Fourier expansion: 
	\begin{equation}
		\begin{aligned}
			\mathcal{E}(H)-\mathcal{E}(H^{\dagger})&=\mathbb{E}\left[\sum_{j\geq1}\langle(H-H^\dagger)\phi(x),f_j\rangle_\Y^2\right]
			\\ &=\sum_{j\geq1}\mathbb{E}\left[\left\langle(H-H^\dagger)\phi(x)\otimes \phi(x)(H-H^\dagger)^* f_j,f_j\right\rangle_\Y\right]
			\\ &=\left\|(H-H^\dagger)C^{\frac{1}{2}}\right\|_{\mathrm{HS}}^{2}.
		\end{aligned}	\label{form2}
	\end{equation}
	
	Our goal in this paper is to estimate $\left\|(H-H^\dagger)C^{\alpha}\right\|_{\mathrm{HS}}^{2}$ for $\alpha=0$ or $1/2$, corresponding respectively to the estimation error and the prediction error, for a given estimator $H$.
     
     We define the regularizing operator as	
    \begin{align} \label{temp1}
		H_\lambda&:=\mathop{\arg\min}_{H\in\mathcal{B}_{\mathrm{HS}}(\H_\K,\Y)} \mathcal{E}(H)+\lambda\|H\|_\mathrm{HS}^2 
	\end{align}
        for some $\lambda>0$. By computing the Fr$\acute{e}$chet derivative on $H$, we obtain
        \begin{equation} \label{regular}
            H_\lambda=H^\dagger C(C+\lambda I)^{-1}=S^\dagger C^{1+r}(C+\lambda I)^{-1},
        \end{equation}
	where the final equality follows from Assumption \ref{a2}.

    As introduced in Section \ref{introduction}, we consider two types of step sizes and regularization parameters. Both can be uniformly expressed in the following form:
    \begin{equation} \label{setting}
        \begin{cases}
            \eta_t=\bar{\eta}(t+t_0)^{-\theta_1},&\\
            \lambda_t=\bar{\lambda}(t+t_0)^{-\theta_2},&
        \end{cases}
    \end{equation}
    where $t_0\geq0$, $\eta_t$ is the step size with $\theta_1\in[0,1)$ and $\lambda_t$ denotes the regularization parameter with $\theta_2\in[0,1)$.
    To avoid confusion, we clarify the parameter settings below:
    \begin{enumerate}
        \item The online setting. In this setting, in \eqref{setting} we require that $\theta_1, \theta_2 \in (0,1)$, $t_0 > 0$, and $\bar{\eta} , \bar{\lambda}> 0$ be constants independent of the current iteration $t$.
        
        \item The finite-horizon setting. In this setting, we set
\[
\eta_t \equiv \bar{\eta}, \qquad \lambda_t \equiv \bar{\lambda},
\qquad t=1,2,\dots,T,
\]
where $\bar{\eta}=\eta_1 T^{-\theta_3}$ and $\bar{\lambda}=\lambda_1 T^{-\theta_4}$, with $t_0=\theta_1=\theta_2=0$, $\theta_3\in(0,1)$, and $\theta_4>0$. Unlike the decaying case, here $\bar{\eta} = \bar{\eta}(T)$ and $\bar{\lambda} = \bar{\lambda}(T)$ depend on $T$, while $\eta_1$ and $\lambda_1$ are constants independent of $T$.
    \end{enumerate}

	\begin{lemma}
		Let $\{H_t\}_{t\geq1}$ be defined as \eqref{iter}. Then, we have
		\begin{equation} \label{temp3}
			\begin{aligned}
				H_{t+1}-H_{\lambda_t}=&(H_t-H_{\lambda_{t-1}})(I-\eta_t(C+\lambda_t I))
				\\&+(H_{\lambda_{t-1}}-H_{\lambda_{t}})(I-\eta_t(C+\lambda_t I))+\eta_{t}\B_t, 
			\end{aligned}
		\end{equation}
		where $I$ denotes the identity operator, and $\B_t$ is defined by
		\begin{equation*}
			\B_t=(H_t-H^\dagger)C+(y_t-H_t\phi(x_t))\otimes \phi(x_t). 
		\end{equation*}
		Moreover, for any $t\in\bn_T$, it holds that $\mathbb{E}_{z_t}[\B_t]=0.$
	\end{lemma}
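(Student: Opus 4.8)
The plan is to derive the identity \eqref{temp3} by a direct algebraic manipulation of the iteration \eqref{iter}, and then verify the zero-mean property of $\B_t$ using the definitions of $C$ and $h^\dagger$ together with the independence and centeredness of the noise. First I would rewrite the SGD update $H_{t+1}=H_t-\eta_t\left((H_t\phi(x_t)-y_t)\otimes\phi(x_t)+\lambda_t H_t\right)$ by inserting and subtracting the term $\eta_t\left((H_t-H^\dagger)C + H^\dagger C\right)$, so that the stochastic part $(y_t-H_t\phi(x_t))\otimes\phi(x_t)$ is grouped with $(H_t-H^\dagger)C$ to form $\B_t$, while the deterministic part reassembles into a contraction applied to $H_t$. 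Concretely, using $\phi(x_t)\otimes\phi(x_t)$ whose expectation is $C$, one writes $(H_t\phi(x_t)-y_t)\otimes\phi(x_t) = (H_t-H^\dagger)\phi(x_t)\otimes\phi(x_t) + (H^\dagger\phi(x_t)-y_t)\otimes\phi(x_t)$, then adds and subtracts the same expression with $\phi(x_t)\otimes\phi(x_t)$ replaced by $C$; the remainder is exactly $-\B_t$ (up to sign) plus $(H_t-H^\dagger)C$ terms.

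Next I would organize the deterministic terms. After the substitution, the update becomes $H_{t+1} = H_t - \eta_t\left((H_t-H^\dagger)C + \lambda_t H_t\right) + \eta_t \B_t$. The operator $H_t - \eta_t H_t(C+\lambda_t I) + \eta_t H^\dagger C$ should be rewritten relative to the regularizing operator $H_{\lambda_t}$ defined in \eqref{temp1}, whose defining equation \eqref{regular} gives $H^\dagger C = H_{\lambda_t}(C+\lambda_t I)$. Substituting this identity for $H^\dagger C$ yields $H_{t+1} = H_t - \eta_t(H_t - H_{\lambda_t})(C+\lambda_t I) + \eta_t \B_t$, i.e. $H_{t+1} - H_{\lambda_t} = (H_t - H_{\lambda_t})(I - \eta_t(C+\lambda_t I)) + \eta_t \B_t$. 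Finally I would split $H_t - H_{\lambda_t} = (H_t - H_{\lambda_{t-1}}) + (H_{\lambda_{t-1}} - H_{\lambda_t})$ inside the first term to arrive at the stated form \eqref{temp3}; this last splitting is what introduces the "drift" term $(H_{\lambda_{t-1}}-H_{\lambda_t})(I-\eta_t(C+\lambda_t I))$ accounting for the moving regularization target.

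For the zero-mean claim, I would compute $\be_{z_t}[\B_t]$ by taking expectation term by term. The term $(H_t-H^\dagger)C$ is independent of $z_t$ (since $H_t$ depends only on $z_1,\dots,z_{t-1}$), so it is unaffected. For the remaining term, $\be_{z_t}\left[(y_t - H_t\phi(x_t))\otimes\phi(x_t)\right] = \be_{z_t}\left[(y_t - h^\dagger(x_t))\otimes\phi(x_t)\right] - \be_{z_t}\left[(H_t - H^\dagger)\phi(x_t)\otimes\phi(x_t)\right]$, using $h^\dagger(x_t) = H^\dagger\phi(x_t)$. The second piece equals $(H_t-H^\dagger)C$ by definition of $C = \be[\phi(x)\otimes\phi(x)]$, cancelling the first term of $\B_t$. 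The first piece vanishes: conditioning on $x_t$ and using $y_t - h^\dagger(x_t) = \epsilon_t$ with $\epsilon$ centered and independent of $x$, we get $\be[\epsilon_t \otimes \phi(x_t)] = \be[\be[\epsilon_t \mid x_t]\otimes\phi(x_t)] = 0$. Hence $\be_{z_t}[\B_t] = 0$.

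I do not anticipate a serious obstacle here, as the argument is purely algebraic bookkeeping plus an elementary conditioning argument; the only point requiring care is keeping track of which operators multiply on the left versus the right (everything acts by right-multiplication on $\B_{\mathrm{HS}}(\H_\K,\Y)$, so $C+\lambda_t I$ and $I - \eta_t(C+\lambda_t I)$ always appear to the right), and correctly using the characterization \eqref{regular} of $H_{\lambda_t}$ to eliminate $H^\dagger C$. One should also note that the rank-one term is written as $(y_t - H_t\phi(x_t))\otimes\phi(x_t)$ with the convention $f\otimes g(g') = \langle g,g'\rangle f$ stated in Section \ref{results}, which is consistent with the Fréchet-derivative heuristic quoted after \eqref{iter2}.
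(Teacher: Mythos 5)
Your proposal is correct and follows essentially the same route as the paper: you use the identity $H^\dagger C=H_{\lambda_t}(C+\lambda_t I)$ from \eqref{regular} to rewrite the update, split $H_t-H_{\lambda_t}=(H_t-H_{\lambda_{t-1}})+(H_{\lambda_{t-1}}-H_{\lambda_t})$, and verify $\mathbb{E}_{z_t}[\B_t]=0$ via the same conditioning on $x_t$ with centered noise. The only difference is that you spell out the algebra the paper leaves as ``directly verified,'' which is fine.
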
	
	
	\begin{proof}
		From \eqref{regular}, we have $H^\dagger C=H_{\lambda_t}(C+\lambda_t I)$. Combining this with the update rule in algorithm \eqref{iter}, we obtain the equality in \eqref{temp3}, which can be directly verified.
    
	Note that $H_t$ depends on $z^{t-1}$ and is independent of $z_t$. Therefore, we have
        \begin{equation*}
            \begin{aligned}
                \mathbb{E}_{z_t}[\B_t]&=(H_t-H^\dagger)C+\mathbb{E}_{z_t}[(y_t-H_t\phi(x_t))\otimes \phi(x_t)]\\
			&=(H_t-H^\dagger)C+\mathbb{E}_{z_t}[\left((H^\dagger-H_t)\phi(x_t)+\epsilon_t\right)\otimes \phi(x_t)].
            \end{aligned}
        \end{equation*}
		Since $\epsilon_t$ is a centered noise independent of $x_t$, it follows that $\mathbb{E}_{z_t}[\epsilon_t\otimes \phi(x_t)]=0$. Hence,
		\[
		\mathbb{E}_{z_t}[\B_t]=(H_t-H^\dagger)C+(H^\dagger-H_t)C=0.
		\]
		The proof is then completed.
	\end{proof}

     We set $\lambda_0=t_0^{-\theta_2}$ in the online setting, and $\lambda_0=\bar\lambda$ in the finite-horizon setting. Let $\prod_{j=T+1}^{T}(I-\eta_j(C+\lambda_j I))=I$. By applying induction to the equality \eqref{temp3}, we derive the following key identity used in the error decomposition:
	\begin{equation} \label{induction}
		\begin{aligned}
			H_{T+1}-H_{\lambda_T}=&(H_T-H_{\lambda_{T-1}})(I-\eta_T(C+\lambda_T I)) 
			\\&+(H_{\lambda_{T-1}}-H_{\lambda_{T}})(I-\eta_T(C+\lambda_T I))+\eta_{T}\B_T 
			\\=&\cdots 
			\\=&-H_{\lambda_0}\prod_{t=1}^{T}\left(I-\eta_{t}(C+\lambda_{t}I)\right)
			+\sum_{t=1}^{T}(H_{\lambda_{t-1}}-H_{\lambda_{t}})\prod_{j=t}^{T}(I-\eta_j(C+\lambda_j I))
			\\&+\sum_{t=1}^{T}\eta_{t}\B_{t}\prod_{j=t+1}^{T}(I-\eta_j(C+\lambda_j I)).
		\end{aligned}
	\end{equation}

    In the next proposition, we decompose the expectation of the prediction error (when \(\alpha=1/2\)) and the estimation error (when \(\alpha=0\)), $\be_{z^{T}}\!\left[
\left\|\left(H_{T+1}-H^{\dagger}\right)C^\alpha\right\|_{\mathrm{HS}}^2
\right],$
into four terms that can be estimated separately.

Its derivation, as well as the subsequent expectation estimates, builds on ideas from earlier work. In the scalar case, \citet{ying2008online} studied kernel SGD for functional regression. For Hilbert-space-valued outputs, \citet{ciliberto2020general,brogat2022vector} used the isometric identification between RKHSs of the form \(K=\mathcal{K}I\) and Hilbert--Schmidt operator spaces (an identification already available in \citep{carmeli2010vector}) to analyze least-squares-type methods through an equivalent Hilbert--Schmidt operator formulation in that setting. Later, \citet{guo2023capacity,shi2024learning} extended the analysis of \citep{ying2008online} to the linear-operator setting and studied SGD in that framework.

The present proposition builds on this line of work, but the introduction of regularization under Assumption~\ref{a1}  requires additional terms and estimates that are specific to the regularized SGD analysis developed in this paper. In this sense, while the overall decomposition strategy is connected to the existing literature, the parts of the argument involving regularization are new in the present paper.
    \begin{proposition}\label{Proposition error 1}
		Let $\{H_t\}_{t\in\bn_T}$ be defined as \eqref{iter}. Suppose that Assumption \ref{a4} holds with some $c>0$. Then, for any $T\geq1$ and $0\leq\alpha\leq\frac{1}{2}$, the following inequality holds:
            \begin{equation}
                \be_{z^{T}}\left[
		\left\|\left(H_{T+1}-H^{\dagger}\right)C^\alpha\right\|_{\mathrm{HS}}^2\right]
            \leq
            \T_1+\T_2+\T_3+\T_4,
            \end{equation}
            where
            \begin{equation} \label{t}
                \begin{split}
                    &\T_1:=2\left\|(H_{\lambda_T}-H^\dagger)C^\alpha\right\|_{\mathrm{HS}}^{2},  \\
                    &\T_2:=6\left\|H_{\lambda_0}C^\alpha\prod_{t=1}^{T}\left(I-\eta_{t}(C+\lambda_{t}I)\right)\right\|^2_{\mathrm{HS}}, \\
                    &\T_3:=6\left\|\sum_{t=1}^{T}(H_{\lambda_{t-1}}-H_{\lambda_{t}})C^\alpha\prod_{j=t}^{T}(I-\eta_j(C+\lambda_j I))\right\|^2_{\mathrm{HS}}, \\
                    &\T_4:=6\sqrt{c}\sum_{t=1}^{T}\eta_t^{2}\left(\sqrt{c}\be_{z^{t-1}}\left\|(H_t-H^\dagger)\phi(x_t)\right\|_{\Y}^2+\sigma^2\right)\mathrm{Tr}\left(C^{1+2\alpha}\prod_{j=t+1}^{T}(I-\eta_j(C+\lambda_j I))^2\right).
                \end{split}
            \end{equation}
	\end{proposition}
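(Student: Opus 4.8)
The plan is to start from the key identity \eqref{induction} and split the error $H_{T+1}-H^\dagger$ into the bias-type piece $H_{\lambda_T}-H^\dagger$ and the piece $H_{T+1}-H_{\lambda_T}$, which itself decomposes into three summands by \eqref{induction}. Concretely, I would write
\begin{equation*}
    \left(H_{T+1}-H^\dagger\right)C^\alpha = \left(H_{\lambda_T}-H^\dagger\right)C^\alpha + \left(H_{T+1}-H_{\lambda_T}\right)C^\alpha,
\end{equation*}
and then substitute \eqref{induction} for the second term, giving four pieces: the ``approximation'' term $\left(H_{\lambda_T}-H^\dagger\right)C^\alpha$, the ``initialization'' term $-H_{\lambda_0}C^\alpha\prod_{t=1}^T(I-\eta_t(C+\lambda_t I))$, the ``drift of the regularization path'' term $\sum_{t=1}^T (H_{\lambda_{t-1}}-H_{\lambda_t})C^\alpha\prod_{j=t}^T(I-\eta_j(C+\lambda_j I))$, and the ``noise/sample'' term $\sum_{t=1}^T \eta_t \B_t C^\alpha\prod_{j=t+1}^T(I-\eta_j(C+\lambda_j I))$. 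Note that $C$ commutes with every factor $I-\eta_j(C+\lambda_j I)$, so $C^\alpha$ can be moved freely; I would use this throughout. Applying the elementary inequality $\|a+b+c+d\|_{\mathrm{HS}}^2 \leq 2\|a\|_{\mathrm{HS}}^2 + 6\|b\|_{\mathrm{HS}}^2 + 6\|c\|_{\mathrm{HS}}^2 + 6\|d\|_{\mathrm{HS}}^2$ (which follows from $\|a+w\|^2 \leq 2\|a\|^2 + 2\|w\|^2$ and then $\|b+c+d\|^2 \leq 3\|b\|^2+3\|c\|^2+3\|d\|^2$) produces exactly the constants $2$ and $6$ appearing in $\T_1,\T_2,\T_3$ and the prefactor $6$ in $\T_4$ before the expectation and the $\sqrt c$ are introduced.

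The only term requiring genuine work is $\T_4$, coming from $6\,\be_{z^T}\left\|\sum_{t=1}^T \eta_t \B_t C^\alpha \prod_{j=t+1}^T(I-\eta_j(C+\lambda_j I))\right\|_{\mathrm{HS}}^2$. The key point is the martingale-difference structure: since $\be_{z_t}[\B_t]=0$ and $\B_t$ depends only on $z^t$ while the operator factors are deterministic, the cross terms vanish in expectation, i.e. for $t<t'$ one has $\be_{z^T}\langle \eta_t \B_t C^\alpha \Pi_t, \eta_{t'} \B_{t'} C^\alpha \Pi_{t'}\rangle_{\mathrm{HS}} = 0$ after conditioning on $z^{t'-1}$ and using $\be_{z_{t'}}[\B_{t'}]=0$. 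Hence the squared norm of the sum equals the sum of squared norms:
\begin{equation*}
    \be_{z^T}\left\|\sum_{t=1}^T \eta_t \B_t C^\alpha \Pi_t\right\|_{\mathrm{HS}}^2 = \sum_{t=1}^T \eta_t^2\, \be_{z^T}\left\|\B_t C^\alpha \Pi_t\right\|_{\mathrm{HS}}^2,
\end{equation*}
where $\Pi_t := \prod_{j=t+1}^T(I-\eta_j(C+\lambda_j I))$. Then I would bound $\be\|\B_t C^\alpha \Pi_t\|_{\mathrm{HS}}^2$ by writing $\B_t = (H_t-H^\dagger)C + (y_t - H_t\phi(x_t))\otimes\phi(x_t)$, expanding, and again killing the cross term in expectation over $z_t$ (its conditional mean is zero by the centering argument already used in the lemma). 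This leaves $\be\|(H_t-H^\dagger)C\cdot C^\alpha\Pi_t\|_{\mathrm{HS}}^2$ plus $\be\|\big((y_t-H_t\phi(x_t))\otimes\phi(x_t)\big)C^\alpha\Pi_t\|_{\mathrm{HS}}^2$; both are handled by the identity $\|AB\|_{\mathrm{HS}}^2 \leq \|A\|_{\mathrm{HS}}^2\|B\|^2$ or, better, via a trace computation that pulls out $\mathrm{Tr}\left(C^{1+2\alpha}\Pi_t^2\right)$ as the common operator factor. Specifically, for a rank-one $v\otimes\phi(x_t)$ one computes $\|(v\otimes\phi(x_t))C^\alpha\Pi_t\|_{\mathrm{HS}}^2 = \|v\|_\Y^2\,\|C^\alpha\Pi_t\phi(x_t)\|_{\H_\K}^2 = \|v\|_\Y^2\,\langle C^{2\alpha}\Pi_t^2\phi(x_t),\phi(x_t)\rangle_{\H_\K}$, and taking expectation in $x_t$ turns $\be[\phi(x_t)\otimes\phi(x_t)]$ into $C$, yielding $\mathrm{Tr}\left(C^{1+2\alpha}\Pi_t^2\right)$ up to the scalar $\be\|v\|_\Y^2$. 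It is precisely here that Assumption \ref{a4} enters: to control the fourth-moment-type quantity arising from $v = (H^\dagger - H_t)\phi(x_t)$ (so that $\|v\|_\Y^2$ and $\phi(x_t)$ are correlated), one uses the Cauchy--Schwarz-type bound together with Assumption \ref{a4} (equivalently \eqref{a4'}) to extract the factor $\sqrt c$ and reduce to $\be_{z^{t-1}}\|(H_t-H^\dagger)\phi(x_t)\|_\Y^2$ times $\mathrm{Tr}\left(C^{1+2\alpha}\Pi_t^2\right)$, while the noise contribution $\epsilon_t$ contributes the $\sigma^2$ term directly since $\epsilon_t$ is independent of $x_t$ with $\be\|\epsilon_t\|_\Y^2 = \sigma^2$.

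I expect the main obstacle to be the careful bookkeeping in the $\T_4$ estimate: correctly separating the $(H_t - H^\dagger)C$ part (whose contribution one would want to also fold into the ``$\|(H_t-H^\dagger)\phi(x_t)\|_\Y^2$'' language via $\|(H_t-H^\dagger)C\,C^\alpha\Pi_t\|_{\mathrm{HS}}^2 \le \|(H_t-H^\dagger)C^{1/2}\|_{\mathrm{HS}}^2 \cdot \|C^{1/2+\alpha}\Pi_t\|^2 \le \be_{z^{t-1}}\|(H_t-H^\dagger)\phi(x_t)\|_\Y^2 \cdot \mathrm{Tr}(C^{1+2\alpha}\Pi_t^2)$, using \eqref{form2} and that $C^{1/2+\alpha}\Pi_t$ has operator norm bounded and $\|C^{1/2}\Pi_t\cdot C^\alpha\|_{\mathrm{HS}}$ controls the trace factor) from the rank-one part, and making sure the constant that emerges is exactly $6\sqrt c$ with the $\sqrt c$ multiplying both the $\sqrt c\,\be_{z^{t-1}}\|(H_t-H^\dagger)\phi(x_t)\|_\Y^2$ and the $\sigma^2$ inside the bracket. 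The commutativity of $C$ with the product factors, the tower property of conditional expectation, and the rank-one computation above are the three ingredients that make everything fit; once these are in place, summing over $t$ and invoking $\eta_t^2$ in front gives $\T_4$ verbatim.
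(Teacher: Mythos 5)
Your overall architecture (the $2/6$ split of the four pieces from \eqref{induction}, the vanishing of the cross terms across different indices $t\neq t'$ via $\be_{z_{t'}}[\B_{t'}]=0$, the rank-one trace computation producing $\mathrm{Tr}\bigl(C^{1+2\alpha}\Pi_t^2\bigr)$, and the use of Cauchy--Schwarz with Assumption \ref{a4} to extract $\sqrt{c}$) matches the paper. But there is a genuine error in your treatment of the single-index term $\be_{z_t}\bigl\|\B_t C^\alpha\Pi_t\bigr\|_{\mathrm{HS}}^2$. You split $\B_t=(H_t-H^\dagger)C+N_t$ with $N_t:=(y_t-H_t\phi(x_t))\otimes\phi(x_t)$ and claim the cross term dies "by the centering argument already used in the lemma." The lemma centers $\B_t$, not $N_t$: in fact $\be_{z_t}[N_t]=-(H_t-H^\dagger)C\neq\mathbf{0}$, so the conditional cross term equals $-2\bigl\|(H_t-H^\dagger)C\,C^\alpha\Pi_t\bigr\|_{\mathrm{HS}}^2$, not zero. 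The paper exploits exactly this: writing $\B_t=N_t-\be_{z_t}[N_t]$ and using the variance bound $\be\|X-\be X\|_{\mathrm{HS}}^2\leq\be\|X\|_{\mathrm{HS}}^2$, the entire $(H_t-H^\dagger)C$ contribution is \emph{eliminated}, leaving only $\be_{z_t}\|N_tC^\alpha\Pi_t\|_{\mathrm{HS}}^2$.

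The consequence is quantitative, not just cosmetic. Proceeding as you propose—keeping both $\be\|(H_t-H^\dagger)C\,C^\alpha\Pi_t\|_{\mathrm{HS}}^2$ and $\be\|N_tC^\alpha\Pi_t\|_{\mathrm{HS}}^2$—your bound on the extra term via $\|(H_t-H^\dagger)C^{1/2}\|_{\mathrm{HS}}^2\,\|C^{1/2+\alpha}\Pi_t\|^2\leq \be_{z^{t-1}}\|(H_t-H^\dagger)\phi(x_t)\|_\Y^2\,\mathrm{Tr}\bigl(C^{1+2\alpha}\Pi_t^2\bigr)$ is fine, but it adds a term of size $6\,\eta_t^2\,\be_{z^{t-1}}\|(H_t-H^\dagger)\phi(x_t)\|_\Y^2\,\mathrm{Tr}(\cdot)$ on top of the $6\sqrt{c}\bigl(\sqrt{c}\,\be_{z^{t-1}}\|(H_t-H^\dagger)\phi(x_t)\|_\Y^2+\sigma^2\bigr)\mathrm{Tr}(\cdot)$ you get from the rank-one part, i.e.\ a coefficient $6(1+c)$ in place of the stated $6c$ in front of the $\be_{z^{t-1}}\|(H_t-H^\dagger)\phi(x_t)\|_\Y^2$ piece. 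So your route proves only a strictly weaker inequality than the proposition as stated (harmless for the eventual rates, but not the claimed $\T_4$). The fix is one line: replace your "cross term vanishes" step by the observation that $\B_t$ is the conditional centering of $N_t$ and invoke $\be_{z_t}\|\B_tC^\alpha\Pi_t\|_{\mathrm{HS}}^2\leq\be_{z_t}\|N_tC^\alpha\Pi_t\|_{\mathrm{HS}}^2$; after that, your rank-one computation, the conditional expectation over $\epsilon_t$ giving $\|(H_t-H^\dagger)\phi(x_t)\|_\Y^2+\sigma^2$, and the double application of Assumption \ref{a4} (once to $\|(H_t-H^\dagger)\phi(x_t)\|_\Y^4$ and once to $\|C^\alpha\Pi_t\phi(x_t)\|_{\H_\K}^4$, which is what produces the outer $\sqrt{c}$ multiplying both terms) reproduce $\T_4$ exactly as in the paper.
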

	
	\begin{proof}
		Since $(H_{T+1}-H^\dagger)C^\alpha=(H_{T+1}-H_{\lambda_T})C^\alpha+(H_{\lambda_T}-H^\dagger)C^\alpha$, we have 
		\begin{equation*}
			\begin{aligned}
				\be_{z^{T}}\left[
				\left\|\left(H_{T+1}-H^{\dagger}\right)C^\alpha\right\|_{\mathrm{HS}}^2\right]
				&=\be_{z^{T}}\left[\left\|(H_{T+1}-H_{\lambda_T})C^\alpha+(H_{\lambda_T}-H^\dagger)C^\alpha\right\|_{\mathrm{HS}}^{2}\right]
				\\&\leq2\be_{z^{T}}\left[\left\|(H_{T+1}-H_{\lambda_T})C^\alpha\right\|_{\mathrm{HS}}^{2}\right]+2\left\|(H_{\lambda_T}-H^\dagger)C^\alpha\right\|_{\mathrm{HS}}^{2}.
			\end{aligned}
		\end{equation*}
	We aim to bound $\be_{z^{T}}\left[\left\|(H_{T+1}-H_{\lambda_T})C^\alpha\right\|_{\mathrm{HS}}^{2}\right]$.
		From the equality \eqref{induction}, it follows that
			\begin{align}
				(H_{T+1}-H_{\lambda_T})C^\alpha=&-H_{\lambda_0}C^\alpha\prod_{t=1}^{T}\left(I-\eta_{t}(C+\lambda_{t}I)\right)
				+\sum_{t=1}^{T}(H_{\lambda_{t-1}}-H_{\lambda_{t}})C^\alpha\prod_{j=t}^{T}(I-\eta_j(C+\lambda_j I)) \nonumber
				\\&+\sum_{t=1}^{T}\eta_{t}\B_{t}C^\alpha\prod_{j=t+1}^{T}(I-\eta_j(C+\lambda_j I))=:J_1+J_2+J_3. \label{temp31}
			\end{align}
            Then, 
            \begin{equation*} 
                \be_{z^{T}}\left[\left\|(H_{T+1}-H_{\lambda_T})C^\alpha\right\|_{\mathrm{HS}}^{2}\right]
                \leq 3\|J_1\|_{\mathrm{HS}}^2+3\|J_2\|_{\mathrm{HS}}^2+
                3\be_{z^T}\left[\|J_3\|_{\mathrm{HS}}^2\right].
            \end{equation*}

		We express $\be_{z^{T}}\left[\|J_3\|^2_{\mathrm{HS}}\right]=\be_{z^{T}}\left[\left\|\sum_{t=1}^{T}\eta_{t}\B_{t}C^\alpha\prod_{j=t+1}^{T}(I-\eta_j(C+\lambda_j I))\right\|_{\mathrm{HS}}^{2}\right]$ as
		\[
 \sum_{t=1}^{T}\sum_{t^\prime=1}^{T}\eta_{t}\eta_{t^\prime}\be_{z^{T}}\left[\left\langle\B_{t}C^\alpha\prod_{j=t+1}^{T}(I-\eta_j(C+\lambda_j I)),\B_{t^\prime}C^\alpha\prod_{j=t^\prime+1}^{T}(I-\eta_j(C+\lambda_j I))\right\rangle_{\mathrm{HS}}\right].
		\]
		Using the property $\mathbb{E}_{z_t}[\B_t]=0$, for $t>t^\prime$,  we obtain
        \begin{equation*} 
            \begin{aligned}
			&\be_{z^{T}}\left[\left\langle\B_{t}C^\alpha\prod_{j=t+1}^{T}(I-\eta_j(C+\lambda_j I)),\B_{t^\prime}C^\alpha\prod_{j=t^\prime+1}^{T}(I-\eta_j(C+\lambda_j I))\right\rangle_{\mathrm{HS}}\right]
			\\ &=\be_{z^{t-1}}\be_{z_{t}}\left[\left\langle\B_{t}C^\alpha\prod_{j=t+1}^{T}(I-\eta_j(C+\lambda_j I)),\B_{t^\prime}C^\alpha\prod_{j=t^\prime+1}^{T}(I-\eta_j(C+\lambda_j I))\right\rangle_{\mathrm{HS}}\right]
			\\ &=\be_{z^{t-1}}\left[\left\langle\be_{z_{t}}\B_{t}C^\alpha\prod_{j=t+1}^{T}(I-\eta_j(C+\lambda_j I)),\B_{t^\prime}C^\alpha\prod_{j=t^\prime+1}^{T}(I-\eta_j(C+\lambda_j I))\right\rangle_{\mathrm{HS}}\right]=0.
		\end{aligned}
        \end{equation*}
		Similarly, the above equality also holds for $t<t^\prime$.
		Consequently, there holds	
            \begin{equation} \label{temp5}
                \be_{z^T}\left[\|J_3\|_{\mathrm{HS}}^2\right]
=\sum_{t=1}^{T}\be_{z^{T}}\left[\left\|\eta_{t}\B_{t}C^\alpha\prod_{j=t+1}^{T}(I-\eta_j(C+\lambda_j I))\right\|_{\mathrm{HS}}^{2}\right].
            \end{equation}
		
		Using the property $\mathbb{E}_{z_t}[\B_t]=0$ again, we have
		\[\B_{t}=-\mathbb{E}_{z_t}[(y_t-H_t\phi(x_t))\otimes \phi(x_t)]+(y_t-H_t\phi(x_t))\otimes \phi(x_t).\]
		Denote $\eta_t\left[(y_t-H_t\phi(x_t))\otimes \phi(x_t)\right]C^\alpha\prod_{j=t+1}^{T}(I-\eta_j(C+\lambda_j I))$ by $\A$, then substituting $\A$ into (\ref{temp5}) yields that
        \begin{equation} \label{temp-1}
            \begin{gathered}
			\be_{z^{t}}\left[\left\|\eta_{t}\B_{t}C^\alpha\prod_{j=t+1}^{T}(I-\eta_j(C+\lambda_j I))\right\|_{\mathrm{HS}}^{2}\right]=\be_{z^{t-1}}\be_{z_{t}}\left[\left\|-\mathbb{E}_{z_t}[\A]+\A\right\|_{\mathrm{HS}}^{2}\right]
			\\ \leq \be_{z^{t}}\left[\left\|\A\right\|_{\mathrm{HS}}^{2}\right]
			=\be_{z^{t}}\left[\left\|\eta_t\left[(y_t-H_t\phi(x_t))\otimes \phi(x_t)\right]C^\alpha\prod_{j=t+1}^{T}(I-\eta_j(C+\lambda_j I))\right\|_{\mathrm{HS}}^{2}\right].
		\end{gathered}
        \end{equation}
		Take  $\{e_i\}_{i\geq1}$ to be an orthonormal basis of Hilbert space $\H_\K$. Since $C$ is self-adjoint, by \eqref{temp5}, \eqref{temp-1} and the definition of the Hilbert-Schmidt norm, there holds
        \begin{equation} \label{temp6}
            \begin{aligned}
&\be_{z^T}\left[\|J_3\|_{\mathrm{HS}}^2\right]
\\ \leq&\sum_{t=1}^{T}\be_{z^{t}}\left[\sum_{i\geq1}\left\|\eta_t\left[(y_t-H_t\phi(x_t))\otimes \phi(x_t)\right]C^\alpha\prod_{j=t+1}^{T}(I-\eta_j(C+\lambda_j I))e_i\right\|_{\H_\K}^{2}\right]
			\\ =&\sum_{t=1}^{T}\be_{z^{t}}\left[\sum_{i\geq1}\left\|\eta_t(y_t-H_t\phi(x_t))\right\|_{\Y}^2\left\langle \phi(x_t),C^\alpha\prod_{j=t+1}^{T}(I-\eta_j(C+\lambda_j I))e_i\right\rangle_{\H_\K}^2\right]
			\\ =&\sum_{t=1}^{T}\eta_t^{2}\be_{z^{t}}\left[\left\|y_t-H_t\phi(x_t)\right\|_{\Y}^{2}\left\|C^\alpha\prod_{j=t+1}^{T}(I-\eta_j(C+\lambda_j I))\phi(x_t)\right\|_{\H_\K}^{2}\right]
			\\=&\sum_{t=1}^{T}\eta_t^{2}\be_{z^{t-1}}\be_{x_t}\left[\be_{\epsilon_t}\left\|(H^\dagger-H_t)\phi(x_t)+\epsilon_t\right\|_{\Y}^{2}\left\|C^\alpha\prod_{j=t+1}^{T}(I-\eta_j(C+\lambda_j I))\phi(x_t)\right\|_{\H_\K}^{2}\right],
		\end{aligned}
        \end{equation}
	where we use $y_t=H^\dagger\phi(x_t)+\epsilon_t$ in the last equality.
		It is obvious that 
        $$\be_{\epsilon_t}\left[\left\|(H^\dagger-H_t)\phi(x_t)+\epsilon_t\right\|_{\Y}^{2}\right]
		=\|(H_t-H^\dagger)\phi(x_t)\|_{\Y}^2+\sigma^2,$$ where $\sigma^2=\be[\|\epsilon\|_{\Y}^2]$ is the variance of $\epsilon$. Substitute it back into \eqref{temp6} and use the Cauchy-Schwarz inequality. Then we obtain
		\begin{align*}
\be_{z^T}\left[\|J_3\|_{\mathrm{HS}}^2\right]&\leq\sum_{t=1}^{T}\eta_t^{2}\be_{z^{t-1}}\be_{x_{t}}\left[\left(\left\|(H^\dagger-H_t)\phi(x_t)\right\|_{\Y}^{2}+\sigma^2\right)\left\|C^\alpha\prod_{j=t+1}^{T}(I-\eta_j(C+\lambda_j I))\phi(x_t)\right\|_{\H_\K}^{2}\right]
			\\&\leq\sum_{t=1}^{T}\eta_t^{2}\left(\be_{z^{t-1}}\sqrt{\be_{x_t}\left\|(H_t-H^\dagger)\phi(x_t)\right\|_{\Y}^4}+\sigma^2\right)
			\\ &\qquad \times\left(\be_{x_t}\left\|C^\alpha\prod_{j=t+1}^{T}(I-\eta_j(C+\lambda_j I))\phi(x_t)\right\|_{\H_\K}^{4}\right)^{1/2}
			\\ &\leq \sqrt{c}\sum_{t=1}^{T}\eta_t^{2}\left(\sqrt{c}\be_{z^{t-1}}\left\|(H_t-H^\dagger)\phi(x_t)\right\|_{\Y}^2+\sigma^2\right)
            \\ &\qquad\times\be_{x_t}\left\|C^\alpha\prod_{j=t+1}^{T}(I-\eta_j(C+\lambda_j I))\phi(x_t)\right\|_{\H_\K}^{2},
		\end{align*}
	where the last inequality is due to Assumption \ref{a4} and Proposition \ref{lemma in Appendix} in Appendix \ref{Appendix lemma}. Since 
 \begin{align*}
     \be_{x_t}\left\|C^\alpha\prod_{j=t+1}^{T}(I-\eta_j(C+\lambda_j I))\phi(x_t)\right\|_{\H_\K}^{2}
     &=\sum_{i\geq1}\be_{x_t}\left\langle C^\alpha\prod_{j=t+1}^{T}(I-\eta_j(C+\lambda_j I))\phi(x_t), e_i\right\rangle_{\H_\K}^2
     \\&=\sum_{i\geq1}\left\langle C^{1+2\alpha}\prod_{j=t+1}^{T}(I-\eta_j(C+\lambda_j I))^2 e_i, e_i\right\rangle_{\H_\K}
     \\&=\mathrm{Tr}\left(C^{1+2\alpha}\prod_{j=t+1}^{T}(I-\eta_j(C+\lambda_j I))^2\right),
 \end{align*}
	there holds
 \[
 \be_{z^T}\left[\|J_3\|_{\mathrm{HS}}^2\right]
 \leq\sqrt{c}\sum_{t=1}^{T}\eta_t^{2}\left(\sqrt{c}\be_{z^{t-1}}\left\|(H_t-H^\dagger)\phi(x_t)\right\|_{\Y}^2+\sigma^2\right)\mathrm{Tr}\left(C^{1+2\alpha}\prod_{j=t+1}^{T}(I-\eta_j(C+\lambda_j I))^2\right),
 \]
 which finishes our proof.
	\end{proof}

Hereafter, we refer to $\T_1$ as the approximation error, $\T_2$ as the initial error, $\T_3$ as the drift error, and $\T_4$ as the sample error, respectively.

We now present the error decomposition of $\left\|\left(H_{T+1}-H^{\dagger}\right)C^\alpha\right\|_{\mathrm{HS}}^2$, which serves to establish a high-probability upper bound. For any random variable $\mu$ taking values in $\B_{\mathrm{HS}}(\H_\K,\Y)$, we denote the $L^{\infty}$ norm of $\|\mu\|_{\mathrm{HS}}$ by $\|\mu\|_{L^\infty_{\mathrm{HS}}}$.

\begin{proposition} \label{proposition error2}
    Let $\{H_t\}_{t\in\bn_T}$ be defined as \eqref{iter}. Suppose that Assumption \ref{a5} holds for some $M_\rho>0$. Then, for any $T\geq1$ and $0\leq\alpha\leq\frac{1}{2}$, the quantity 
    $\left\|\left(H_{T+1}-H^{\dagger}\right)C^\alpha\right\|_{\mathrm{HS}}^2$
		admits the decomposition
            \begin{equation}
		\left\|\left(H_{T+1}-H^{\dagger}\right)C^\alpha\right\|_{\mathrm{HS}}^2
            \leq
            \T_1+\T_2+\T_3+6\left\|\sum_{t=1}^T\chi_t\right\|^2_{\mathrm{HS}},
            \end{equation}
        where $\T_1$, $\T_2$, and $\T_3$ are defined in \eqref{t}, and $\chi_t=\eta_{t}\B_{t}C^\alpha\prod_{j=t+1}^{T}(I-\eta_j(C+\lambda_j I))$
        satisfies
        \begin{equation} \label{temp63}
        \|\chi_t\|_{\mathrm{HS}}\leq2\eta_t\kappa\l(M_\rho+\kappa\l\|H_t\r\|_{L^\infty_{\mathrm{HS}}}\r)\left\|C^\alpha\prod_{j=t+1}^{T}(I-\eta_j(C+\lambda_j I))\right\|, \quad  \forall t\in\bn_T.
        \end{equation}
\end{proposition}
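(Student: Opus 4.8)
The plan is to mirror the decomposition carried out in the proof of Proposition~\ref{Proposition error 1}, but to \emph{stop before taking any expectation}, and then to bound each term $\chi_t$ pathwise, exploiting the rank-one structure of the stochastic part of $\B_t$.

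First, I would recycle the identity \eqref{induction}. Writing $(H_{T+1}-H^\dagger)C^\alpha=(H_{T+1}-H_{\lambda_T})C^\alpha+(H_{\lambda_T}-H^\dagger)C^\alpha$ and invoking \eqref{temp31}, we have $(H_{T+1}-H_{\lambda_T})C^\alpha=J_1+J_2+J_3$, where $J_1=-H_{\lambda_0}C^\alpha\prod_{t=1}^T(I-\eta_t(C+\lambda_t I))$, $J_2=\sum_{t=1}^T(H_{\lambda_{t-1}}-H_{\lambda_t})C^\alpha\prod_{j=t}^T(I-\eta_j(C+\lambda_j I))$, and, crucially, $J_3=\sum_{t=1}^T\chi_t$. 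Since no expectation is taken, the cross terms no longer vanish as they did in Proposition~\ref{Proposition error 1}; instead I simply apply the elementary inequalities $\|a+b\|_{\mathrm{HS}}^2\le 2\|a\|_{\mathrm{HS}}^2+2\|b\|_{\mathrm{HS}}^2$ and $\|u+v+w\|_{\mathrm{HS}}^2\le 3(\|u\|_{\mathrm{HS}}^2+\|v\|_{\mathrm{HS}}^2+\|w\|_{\mathrm{HS}}^2)$ to obtain $\|(H_{T+1}-H^\dagger)C^\alpha\|_{\mathrm{HS}}^2\le 2\|(H_{\lambda_T}-H^\dagger)C^\alpha\|_{\mathrm{HS}}^2+6\|J_1\|_{\mathrm{HS}}^2+6\|J_2\|_{\mathrm{HS}}^2+6\|J_3\|_{\mathrm{HS}}^2$. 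Identifying the first three quantities with $\T_1,\T_2,\T_3$ from \eqref{t} and the last with $6\|\sum_{t=1}^T\chi_t\|_{\mathrm{HS}}^2$ yields the asserted decomposition.

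Second, for the pathwise bound \eqref{temp63}, I would use the centering $\mathbb{E}_{z_t}[\B_t]=0$. Put $\widetilde{\B}_t:=(y_t-H_t\phi(x_t))\otimes\phi(x_t)$ and $P_t:=C^\alpha\prod_{j=t+1}^T(I-\eta_j(C+\lambda_j I))$, so that (exactly as in the proof of Proposition~\ref{Proposition error 1}) $\B_t=\widetilde{\B}_t-\mathbb{E}_{z_t}[\widetilde{\B}_t]$ and $\chi_t=\eta_t\big(\widetilde{\B}_t-\mathbb{E}_{z_t}[\widetilde{\B}_t]\big)P_t$. Because $C^\alpha$ and each $I-\eta_j(C+\lambda_j I)$ are self-adjoint and mutually commuting (being functions of $C$), $P_t$ is self-adjoint, and the rank-one calculus $(f\otimes g)P_t=f\otimes(P_t g)$ gives $\widetilde{\B}_tP_t=(y_t-H_t\phi(x_t))\otimes(P_t\phi(x_t))$; hence $\|\widetilde{\B}_tP_t\|_{\mathrm{HS}}=\|y_t-H_t\phi(x_t)\|_\Y\,\|P_t\phi(x_t)\|_{\H_\K}$. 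Using $\|\phi(x_t)\|_{\H_\K}^2=\K(x_t,x_t)\le\kappa^2$ (Assumption~\ref{a1}), Assumption~\ref{a5} for $\|y_t\|_\Y\le M_\rho$, the bound $\|H_t\phi(x_t)\|_\Y\le\|H_t\|\,\|\phi(x_t)\|_{\H_\K}\le\kappa\|H_t\|_{\mathrm{HS}}$ (operator norm bounded by Hilbert-Schmidt norm), and $\|P_t\phi(x_t)\|_{\H_\K}\le\kappa\|P_t\|$, we obtain $\|\widetilde{\B}_tP_t\|_{\mathrm{HS}}\le\kappa(M_\rho+\kappa\|H_t\|_{\mathrm{HS}})\|P_t\|\le\kappa(M_\rho+\kappa\|H_t\|_{L^\infty_{\mathrm{HS}}})\|P_t\|$ almost surely; since $H_t$ is $z^{t-1}$-measurable and independent of $z_t$, the operator $H_t$ is frozen inside $\mathbb{E}_{z_t}[\cdot]$, so Jensen's inequality propagates the same bound to $\|\mathbb{E}_{z_t}[\widetilde{\B}_t]P_t\|_{\mathrm{HS}}\le\mathbb{E}_{z_t}\|\widetilde{\B}_tP_t\|_{\mathrm{HS}}$. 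Combining the two contributions through the triangle inequality produces the factor $2$ and gives \eqref{temp63}.

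I do not anticipate a genuine obstacle: the proposition is essentially a bookkeeping exercise resting on the identity \eqref{induction} and the rank-one form of $\widetilde{\B}_t$. The only points needing care are (i) that, without expectations, the crude three-term splitting replaces the orthogonality argument used for $J_3$ in Proposition~\ref{Proposition error 1}; (ii) that the composition of the rank-one operator $\widetilde{\B}_t$ with $P_t$ must be rewritten as another rank-one operator so that its Hilbert-Schmidt norm factors exactly, and that one keeps the operator norm $\|P_t\|$ (not $\|P_t\|_{\mathrm{HS}}$) at the end; and (iii) that the almost-sure bounds on $\|y_t\|_\Y$ and on the frozen-$H_t$ conditional expectation are expressed through $\|H_t\|_{L^\infty_{\mathrm{HS}}}$ rather than a pathwise quantity.
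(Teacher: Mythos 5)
Your proposal is correct and follows essentially the same route as the paper: the identical pathwise $2$--$3$ splitting of \eqref{induction} into $\T_1,\T_2,\T_3$ and $6\bigl\|\sum_{t=1}^T\chi_t\bigr\|_{\mathrm{HS}}^2$, followed by the centering-plus-rank-one estimate for $\chi_t$. The only cosmetic difference is that the paper first bounds $\|\B_t\|_{\mathrm{HS}}\leq 2\kappa\bigl(M_\rho+\kappa\|H_t\|_{L^\infty_{\mathrm{HS}}}\bigr)$ and then multiplies by the operator norm of $C^\alpha\prod_{j=t+1}^T(I-\eta_j(C+\lambda_j I))$, whereas you compose the rank-one operator with this product before estimating, yielding the same bound \eqref{temp63}.
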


\begin{proof}
    The proof follows a similar strategy to the previous proposition. As in the proof of Proposition \ref{Proposition error 1}, we readily obtain
    \[
    \left\|\left(H_{T+1}-H^{\dagger}\right)C^\alpha\right\|_{\mathrm{HS}}^2
    \leq2\left\|(H_{T+1}-H_{\lambda_T})C^\alpha\right\|_{\mathrm{HS}}^{2}+2\left\|(H_{\lambda_T}-H^\dagger)C^\alpha\right\|_{\mathrm{HS}}^{2}
    \]
    and 
    \begin{equation*}
        \begin{aligned}
            \left\|(H_{T+1}-H_{\lambda_T})C^\alpha\right\|_{\mathrm{HS}}^{2}\leq
            3\|J_1\|_{\mathrm{HS}}^2+3\|J_2\|_{\mathrm{HS}}^2+
                3\|J_3\|_{\mathrm{HS}}^2,
        \end{aligned}
    \end{equation*}
    where $J_1$, $J_2$, and $J_3$ are defined in \eqref{temp31}. Defining $\chi_t=\eta_{t}\B_{t}C^\alpha\prod_{j=t+1}^{T}(I-\eta_j(C+\lambda_j I))$, we then have $J_3=\sum_{t=1}^{T}\chi_t$. Since $\B_t$ can be expressed as
    \[
    \B_t=(y_t-H_t\phi(x_t))\otimes \phi(x_t)-\be_{z_t}\l[(y_t-H_t\phi(x_t))\otimes \phi(x_t)\r],
    \]
    it follows from Assumption \ref{a5} that
    \[
    \|\B_t\|_\HS\leq2\l\|(y_t-H_t\phi(x_t))\otimes \phi(x_t)\r\|_{L^\infty_{\mathrm{HS}}}
    \leq2\kappa\l(M_\rho+\kappa\l\|H_t\r\|_{L^\infty_{\mathrm{HS}}}\r).
    \]
    Thus,
    \[
    \|\chi_t\|_{\mathrm{HS}}\leq2\eta_t\kappa\l(M_\rho+\kappa\l\|H_t\r\|_{L^\infty_{\mathrm{HS}}}\r)\left\|C^\alpha\prod_{j=t+1}^{T}(I-\eta_j(C+\lambda_j I))\right\|.
    \]
    
    The proof is then finished.
\end{proof}

	\section{Intermediate Estimates for Error Analysis} \label{section:basic}
	In this section, we derive bounds for $\T_1$, $\T_2$, $\T_3$, and $\T_4$. The bounds for $\T_1$ and $\T_2$ are presented in a unified form encompassing the finite-horizon setting. The term $\T_3$ arises exclusively in the online setting and is therefore analyzed only within that context. For $\T_4$, we provide separate bounds corresponding to these two settings. These intermediate results play an important role in the subsequent analysis of prediction and estimation errors, both in expectation and with high probability.

	\subsection{Bounding Approximation Error}
        We bound $\T_1$ in the following proposition.
	\begin{proposition} \label{prop2}
		Under the Assumption \ref{a2} with $r>0$, there exists a constant $c_1=c_1(\|S^\dagger\|_{\mathrm{HS}},r,\alpha)$ independent of $t_0$, $T$, $\bar{\eta}$, and $\bar{\lambda}$, such that
		\begin{equation*}
			\T_1=2\left\|(H_{\lambda_T}-H^\dagger)C^{\alpha}\right\|_{\mathrm{HS}}^2
			\leq c_1\lambda_T^{\min\{2(r+\alpha),2\}}.
		\end{equation*}
	\end{proposition}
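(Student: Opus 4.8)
The plan is to reduce the bound to a single scalar spectral estimate for functions of the self-adjoint compact operator $C$, exploiting the closed form \eqref{regular} of the regularizing operator together with Assumption \ref{a2}. First I would rewrite the difference $H_{\lambda_T}-H^\dagger$. Since $H_{\lambda_T}=H^\dagger C(C+\lambda_T I)^{-1}$ by \eqref{regular}, a direct algebraic manipulation gives
\[
H_{\lambda_T}-H^\dagger = H^\dagger\big(C(C+\lambda_T I)^{-1}-I\big) = -\lambda_T\, H^\dagger (C+\lambda_T I)^{-1}.
\]
Using Assumption \ref{a2} to substitute $H^\dagger=S^\dagger C^r$, and noting that $C^r$, $C^\alpha$ and $(C+\lambda_T I)^{-1}$ are all bounded Borel functions of $C$ and hence commute, I obtain the compact expression
\[
(H_{\lambda_T}-H^\dagger)C^\alpha = -\lambda_T\, S^\dagger\, C^{r+\alpha}(C+\lambda_T I)^{-1}.
\]

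Next I would apply the submultiplicativity inequality $\|AB\|_{\mathrm{HS}}\le\|A\|_{\mathrm{HS}}\|B\|$ with $A=S^\dagger\in\B_{\mathrm{HS}}(\H_\K,\Y)$ and $B=C^{r+\alpha}(C+\lambda_T I)^{-1}\in\B(\H_\K)$, giving
\[
\big\|(H_{\lambda_T}-H^\dagger)C^\alpha\big\|_{\mathrm{HS}} \le \lambda_T\,\|S^\dagger\|_{\mathrm{HS}}\,\big\|C^{r+\alpha}(C+\lambda_T I)^{-1}\big\|.
\]
The remaining operator norm is computed by spectral calculus: since $C$ is self-adjoint with spectrum contained in $[0,\|C\|]\subset[0,\kappa^2]$,
\[
\big\|C^{r+\alpha}(C+\lambda_T I)^{-1}\big\| = \sup_{u\in\sigma(C)}\frac{u^{r+\alpha}}{u+\lambda_T}.
\]
Here I split into two cases. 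If $r+\alpha\le 1$, the weighted arithmetic--geometric mean inequality yields $u^{r+\alpha}\lambda_T^{1-(r+\alpha)}\le (r+\alpha)u+(1-(r+\alpha))\lambda_T\le u+\lambda_T$, so the supremum is at most $\lambda_T^{r+\alpha-1}$. If $r+\alpha>1$ (which forces $r>\tfrac12$), I instead bound $\frac{u^{r+\alpha}}{u+\lambda_T}\le u^{r+\alpha-1}\le\kappa^{2(r+\alpha-1)}$, using $\frac{u}{u+\lambda_T}\le1$ and $u\le\kappa^2$.

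Combining the two cases, $\|(H_{\lambda_T}-H^\dagger)C^\alpha\|_{\mathrm{HS}}\le\|S^\dagger\|_{\mathrm{HS}}\max\{1,\kappa^{2(r+\alpha-1)}\}\,\lambda_T^{\min\{r+\alpha,1\}}$; squaring and multiplying by $2$ then gives $\T_1\le c_1\lambda_T^{\min\{2(r+\alpha),2\}}$ with, say, $c_1=2\|S^\dagger\|_{\mathrm{HS}}^2\max\{1,\kappa^{4r-2}\}$ (a crude bound valid uniformly over $\alpha\in[0,\tfrac12]$), which depends only on $S^\dagger$, $\kappa$, and $r$, and in particular not on $t_0$, $T$, $\bar\eta$, or $\bar\lambda$. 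There is no serious obstacle in this argument; the only subtlety worth flagging is the saturation at $r+\alpha=1$: for $r+\alpha>1$ the function $u\mapsto u^{r+\alpha-1}$ is bounded on $\sigma(C)$ rather than growing like $\lambda_T^{r+\alpha-1}$ as $\lambda_T\to0$, so this regime must be handled separately, and it is precisely this case split that produces the $\min\{2(r+\alpha),2\}$ exponent in the statement.
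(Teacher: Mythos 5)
Your proposal is correct and follows essentially the same route as the paper: the identity $H_{\lambda_T}-H^\dagger=-\lambda_T S^\dagger C^r(C+\lambda_T I)^{-1}$ from \eqref{regular} and Assumption \ref{a2}, the bound $\|AB\|_{\mathrm{HS}}\leq\|A\|_{\mathrm{HS}}\|B\|$, and the spectral estimate $\sup_{0\leq u\leq\kappa^2}u^{r+\alpha}/(u+\lambda_T)$ with the case split at $r+\alpha=1$. The only difference is cosmetic: the paper optimizes the constant in the $r+\alpha<1$ case to $(r+\alpha)^{r+\alpha}(1-r-\alpha)^{1-r-\alpha}\lambda_T^{r+\alpha-1}$, whereas your AM--GM bound gives $\lambda_T^{r+\alpha-1}$, which is equally sufficient.
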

	
\begin{proof}
		We know from \eqref{regular} that 
		\begin{align*}
			H_{\lambda_T}-H^\dagger&=H^\dagger C(C+\lambda_T I)^{-1}-H^\dagger
			\\&=-\lambda_T H^\dagger(C+\lambda_T I)^{-1}=-\lambda_T S^\dagger C^r(C+\lambda_T I)^{-1},
		\end{align*}
        where the last identity uses Assumption \ref{a2}.
		It then follows that 
            \begin{equation} \label{temp7}
                \begin{aligned}
			\left\|(H_{\lambda_T}-H^\dagger)C^{\alpha}\right\|_{\mathrm{HS}}
			&=\|\lambda_T S^\dagger C^{r+\alpha}(C+\lambda_T I)^{-1}\|_{\mathrm{HS}}
			\\&\leq\lambda_T\|S^\dagger\|_{\mathrm{HS}}\left\|C^{r+\alpha}(C+\lambda_T I)^{-1}\right\|.
		\end{aligned}
            \end{equation}
		Since
        \begin{equation} \label{temp18}
            \begin{aligned}
		&\left\|C^{r+\alpha}(C+\lambda_T I)^{-1}\right\|\leq\sup_{0\leq x\leq\kappa^2}\frac{x^{r+\alpha}}{x+\lambda_{T}}
		\\&\leq\begin{cases}
			\kappa^{2(r+\alpha-1)}, & \text{ when } r+\alpha\geq1,\\
			(r+\alpha)^{r+\alpha}(1-r-\alpha)^{1-r-\alpha}\lambda_{T}^{r+\alpha-1}, & \text{ when } r+\alpha<1,
		\end{cases}
	\end{aligned}
        \end{equation}
		combining \eqref{temp7} with \eqref{temp18}, there exists a constant $c_1$ such that
	\begin{equation*}
		\left\|(H_{\lambda_T}-H^\dagger)C^{\alpha}\right\|_{\mathrm{HS}}^2
		\leq c_1\lambda_T^{\min\{2(r+\alpha),2\}}.
	\end{equation*}
        It is clear that $c_1$ is independent of $t_0$, $T$, $\bar{\eta}$, and $\bar{\lambda}$, which completes this proof.
\end{proof}
	
	\subsection{Bounding Initial Error}
    The following two lemmas will be used repeatedly throughout our analysis.
	\begin{lemma}  \label{lemma1}
		Let $\beta>0$ and let $\eta_t,\lambda_t$ be defined as  \eqref{setting}. Let $l,m$ be integers satisfying $1\leq l \leq m$. Suppose that $(t_0+1)^{\theta_1}\geq\bar{\eta}(\kappa^2+\bar{\lambda})$. Then, the following estimates hold:
        \begin{itemize}
            \item[(1)] $\left\|C^\beta\prod_{t=l}^m\left(I-\eta_t(C+\lambda_{t}I)\right)\right\|\leq\exp\left\{-\sum_{t=l}^{m}\eta_{t}\lambda_t\right\}\frac{2(\kappa^{2\beta}+(\beta/e)^\beta)}{1+\left(\sum_{t=l}^{m}\eta_{t}\right)^\beta}.$
            \item[(2)] $\left\|C^\beta\prod_{t=l}^m\left(I-\eta_t(C+\lambda_{t}I)\right)^2\right\|
			\leq\left(\frac{\beta}{2e}\right)^{\beta}\left(\sum_{t=l}^{m}\eta_{t}\right)^{-\beta}\exp\left\{-2\sum_{t=l}^{m}\eta_{t}\lambda_{t}\right\}$.
            \item[(3)] $\left\|C^\beta\prod_{t=l}^m\left(I-\eta_t(C+\lambda_{t}I)\right)^2\right\|
		\leq\exp\left\{-2\sum_{t=l}^{m}\eta_{t}\lambda_t\right\}\frac{2(\kappa^{2\beta}+(\beta/(2e))^\beta)}{1+\left(\sum_{t=l}^{m}\eta_{t}\right)^\beta}.$
        \end{itemize}
	\end{lemma}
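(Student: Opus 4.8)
The plan is to prove all three bounds as functional-calculus estimates for the self-adjoint positive operator $C$, since each operator appearing inside the norm is a function of $C$ acting on $\H_\K$ with spectrum contained in $[0,\kappa^2]$. Write $\Lambda=\sum_{t=l}^{m}\eta_t$ and $\Theta=\sum_{t=l}^{m}\eta_t\lambda_t$ for brevity. For part (1), note that $\prod_{t=l}^{m}(I-\eta_t(C+\lambda_t I))=\prod_{t=l}^{m}(I-\eta_t\lambda_t I)\prod_{t=l}^{m}(I-\eta_t C)$ only after one checks the factors commute — they do, being functions of $C$ — but it is cleaner to bound each spectral value $x\in[0,\kappa^2]$ directly: $x^\beta\prod_{t=l}^m(1-\eta_t(x+\lambda_t))$. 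First I would verify that the condition $(t_0+1)^{\theta_1}\ge\bar\eta(\kappa^2+\bar\lambda)$ guarantees $\eta_t(\kappa^2+\lambda_t)\le 1$ for all relevant $t$, so that each factor $1-\eta_t(x+\lambda_t)\in[0,1]$ and the product is nonnegative; this justifies dropping absolute values. Then use $1-a\le e^{-a}$ to peel off the regularization part, getting $\prod_t(1-\eta_t(x+\lambda_t))\le e^{-\Theta}\prod_t(1-\eta_t x)$ (again valid because all factors are in $[0,1]$, so multiplying the bound $1-\eta_t x\le$ (its value) by the nonnegative quantity $\prod_t(1-\eta_t(x+\lambda_t))/\prod_t(1-\eta_t x)\le e^{-\Theta}$ is legitimate — more carefully, $1-\eta_t(x+\lambda_t)\le (1-\eta_t x)e^{-\eta_t\lambda_t}$ since $1-\eta_t(x+\lambda_t)=(1-\eta_t x)-\eta_t\lambda_t\le(1-\eta_t x)(1-\eta_t\lambda_t/(1-\eta_t x))\le (1-\eta_t x)e^{-\eta_t\lambda_t/(1-\eta_t x)}\le(1-\eta_t x)e^{-\eta_t\lambda_t}$ using $1-\eta_t x\le 1$). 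It then remains to bound $x^\beta\prod_t(1-\eta_t x)\le x^\beta e^{-\Lambda x}$, and to show $\sup_{x\ge 0}x^\beta e^{-\Lambda x}\le \frac{2(\kappa^{2\beta}+(\beta/e)^\beta)}{1+\Lambda^\beta}$; this last elementary supremum is $(\beta/(e\Lambda))^\beta$ attained at $x=\beta/\Lambda$, and a short case split (whether $\Lambda\le 1$ or $\Lambda>1$, and whether the maximizer lies in $[0,\kappa^2]$ or the sup over $[0,\kappa^2]$ is $\kappa^{2\beta}$) yields the stated form with the crude factor $2$ absorbing the $\frac{1+\Lambda^\beta}{\max\{1,\Lambda^\beta\}}$-type loss.

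For part (2), the same reduction to scalars gives $x^\beta\prod_t(1-\eta_t(x+\lambda_t))^2\le e^{-2\Theta}x^\beta\prod_t(1-\eta_t x)^2\le e^{-2\Theta}x^\beta e^{-2\Lambda x}$, and now $\sup_{x\ge 0}x^\beta e^{-2\Lambda x}=(\beta/(2e\Lambda))^\beta=(\beta/(2e))^\beta\Lambda^{-\beta}$, which is exactly the claimed bound — here I keep the exact maximum rather than the $1+\Lambda^\beta$ form. Part (3) is the analogue of part (1) for the squared product: $x^\beta\prod_t(1-\eta_t(x+\lambda_t))^2\le e^{-2\Theta}x^\beta e^{-2\Lambda x}$ and then bound $\sup_{0\le x\le\kappa^2}x^\beta e^{-2\Lambda x}\le\frac{2(\kappa^{2\beta}+(\beta/(2e))^\beta)}{1+\Lambda^\beta}$ by the same case analysis as in (1), with $2\Lambda$ in place of $\Lambda$ producing $(\beta/(2e))^\beta$ in the numerator. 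So (2) and (3) are essentially immediate once (1) is in place; (2) is the sharper version valid for all $x\ge 0$, while (3) is the version that interpolates between the $\kappa^{2\beta}$ bound (good when $\Lambda$ is small) and the decaying bound.

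The main obstacle — really the only non-routine point — is establishing the scalar inequality $1-\eta_t(x+\lambda_t)\le(1-\eta_t x)e^{-\eta_t\lambda_t}$ together with the nonnegativity of all the factors, since everything else is the standard "$x^\beta e^{-cx}$ has maximum $(\beta/(ec))^\beta$" computation plus bookkeeping of constants. Nonnegativity requires $\eta_t(x+\lambda_t)\le 1$ for all $x\in[0,\kappa^2]$ and all $t$ in the range $[l,m]$; since $\eta_t=\bar\eta(t+t_0)^{-\theta_1}$ is decreasing in $t$ and $\lambda_t=\bar\lambda(t+t_0)^{-\theta_2}$ is also decreasing, the worst case is $t=l\ge 1$, and $\eta_t(\kappa^2+\lambda_t)\le\bar\eta(t_0+1)^{-\theta_1}(\kappa^2+\bar\lambda(t_0+1)^{-\theta_2})\le\bar\eta(t_0+1)^{-\theta_1}(\kappa^2+\bar\lambda)\le 1$ by hypothesis. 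I would state this as a preliminary observation at the top of the proof and then carry out the three bounds in the order (1), then (2), then (3), reusing the scalar estimates.
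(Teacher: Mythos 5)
Your proposal is correct and follows essentially the same route as the paper's proof: reduce to the scalar supremum over the spectrum $[0,\kappa^2]$ (with nonnegativity of the factors guaranteed by $(t_0+1)^{\theta_1}\geq\bar\eta(\kappa^2+\bar\lambda)$), apply $1-a\leq e^{-a}$ to reach $x^\beta e^{-\Lambda x}e^{-\Theta}$, and use the maximum of $x^\beta e^{-cx}$ together with the trivial bound $\kappa^{2\beta}e^{-\Theta}$. Your case split on $\Lambda\leq1$ versus $\Lambda>1$ is just an equivalent form of the paper's $\min\{a,b\}\leq\frac{2}{1/a+1/b}$ combination, and your (2) matches the paper's observation that the squared product is the square of the product with exponent $\beta/2$.
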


	\begin{proof}
            Recall that $C$ (defined in Section \ref{results}) is self-adjoint and compact. By the definition of the operator norm, we have
            \begin{equation} \label{temp8}
                \begin{aligned}
			\left\|C^\beta\prod_{t=l}^m\left(I-\eta_t(C+\lambda_{t}I)\right)\right\|
			&\leq\sup_{0\leq x\leq\kappa^2}x^\beta\prod_{t=l}^m\left(1-\eta_t(x+\lambda_{t})\right)
			\\&\leq\sup_{0\leq x\leq\kappa^2}x^\beta\exp\left\{-\sum_{t=l}^{m}\eta_{t}(x+\lambda_{t})\right\}
			\\&=\left(\frac{\beta}{e}\right)^{\beta}\left(\sum_{t=l}^{m}\eta_{t}\right)^{-\beta}\exp\left\{-\sum_{t=l}^{m}\eta_{t}\lambda_{t}\right\},
		\end{aligned}
            \end{equation}
            where the first inequality follows from the fact
            $1-\eta_t(x+\lambda_{t})\geq0$ for all $t\geq 1$ and $0\leq x\leq\kappa^2$, which is ensured by the condition $(t_0+1)^{\theta_1}\geq\bar{\eta}(\kappa^2+\bar{\lambda})$.
		On the other hand, we also have
		\begin{align} \label{temp19}
			\left\|C^\beta\prod_{t=l}^m\left(I-\eta_t(C+\lambda_{t}I)\right)\right\|
			\leq\kappa^{2\beta}\prod_{t=l}^m(1-\eta_{t}\lambda_{t})
			\leq\kappa^{2\beta}\exp\left\{-\sum_{t=l}^{m}\eta_{t}\lambda_t\right\}.
		\end{align}
		Applying the inequality $\min\{a,b\}\leq\frac{2}{1/a+1/b},  \forall a,b>0$ and combining \eqref{temp8} with \eqref{temp19}, we obtain
		\begin{equation*}
			\left\|C^\beta\prod_{t=l}^m\left(I-\eta_t(C+\lambda_{t}I)\right)\right\|\leq\exp\left\{-\sum_{t=l}^{m}\eta_{t}\lambda_t\right\}\frac{2(\kappa^{2\beta}+(\beta/e)^\beta)}{1+\left(\sum_{t=l}^{m}\eta_{t}\right)^\beta}.
		\end{equation*}
            Now, using \eqref{temp19} once more, there holds
		\begin{align*}
			\left\|C^\beta\prod_{t=l}^m\left(I-\eta_t(C+\lambda_{t}I)\right)^2\right\|
			&=\left\|C^{\beta/2}\prod_{t=l}^m\left(I-\eta_t(C+\lambda_{t}I)\right)\right\|^2
			\\&\leq\left(\frac{\beta}{2e}\right)^{\beta}\left(\sum_{t=l}^{m}\eta_{t}\right)^{-\beta}\exp\left\{-2\sum_{t=l}^{m}\eta_{t}\lambda_{t}\right\}.
		\end{align*}
	Moreover, since $\left\|C^\beta\prod_{t=l}^m\left(I-\eta_t(C+\lambda_{t}I)\right)^2\right\|\leq\kappa^{2\beta}\exp\left\{-\sum_{t=l}^{m}\eta_{t}\lambda_t\right\}$, applying $\min\{a,b\}\leq\frac{2}{1/a+1/b}$ again yields
	\begin{align*}
		\left\|C^\beta\prod_{t=l}^m\left(I-\eta_t(C+\lambda_{t}I)\right)^2\right\|
		\leq\exp\left\{-2\sum_{t=l}^{m}\eta_{t}\lambda_t\right\}\frac{2(\kappa^{2\beta}+(\beta/(2e))^\beta)}{1+\left(\sum_{t=l}^{m}\eta_{t}\right)^\beta}.
	\end{align*}
        This completes the proof.
	\end{proof}

        The next lemma establishes lower bounds for $\sum_{t=l}^{m}\eta_{t}$ and $\sum_{t=1}^{T}\eta_{t}\lambda_{t}$.
	\begin{lemma} \label{lemma2}
		Let $0\leq\theta_1<1$, $0\leq\theta_2<1$, and $\eta_t,\lambda_t$ be defined as \eqref{setting}. Then the following bounds hold for $1\leq l\leq m$ with $l \in \mathbb{N}$:
            \begin{itemize}
                \item[(1)] $\sum_{t=l}^{m}\eta_{t}\geq\frac{\bar{\eta}}{1-\theta_1}\left[(m+t_0+1)^{1-\theta_1}-(l+t_0)^{1-\theta_1}\right].$
                \item[(2)] 
                \begin{equation*}
                    \sum_{t=l}^{m}\eta_{t}\lambda_{t}\geq
                    \begin{cases}
					\frac{\bar{\eta}\bar{\lambda}}{1-\theta_1-\theta_2}\left[(m+t_0+1)^{1-\theta_1-\theta_2}-(l+t_0)^{1-\theta_1-\theta_2}\right], & \text{ when } \theta_1+\theta_2\not=1, \\
					\bar{\eta}\bar{\lambda}\log\left(\frac{m+t_0+1}{t_0+l}\right), & \text{ when } \theta_1+\theta_2=1.
				    \end{cases}
                \end{equation*}
                In particular, when $l=1$ and $m=T$ with $T\geq t_0+1$, we have:
                \item[(3)] $\sum_{t=1}^{T}\eta_{t}\geq\frac{1-2^{\theta_1-1}}{1-\theta_1}\bar{\eta}(T+t_0)^{1-\theta_1}.$ 
                \item[(4)] \begin{equation*}
			\sum_{t=1}^{T}\eta_{t}\lambda_{t}\geq
			\begin{cases}
				\frac{\bar{\eta}\bar{\lambda}}{1-\theta_1-\theta_2}(1-2^{\theta_1+\theta_2-1})(T+t_0)^{1-\theta_1-\theta_2}, & \text{ when } 0\leq\theta_1+\theta_2<1, \\
				\bar{\eta}\bar{\lambda}\log\left(\frac{T+t_0}{t_0+1}\right), & \text{ when } \theta_1+\theta_2=1, \\
				\frac{\bar{\eta}\bar{\lambda}}{\theta_1+\theta_2-1}
				(1-2^{1-\theta_1-\theta_2})(t_0+1)^{1-\theta_1-\theta_2}, & \text{ when } \theta_1+\theta_2>1.
			\end{cases}
		\end{equation*}
            \end{itemize}
	\end{lemma}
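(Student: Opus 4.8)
The plan is to prove all four bounds by direct integral comparison, since each sum $\sum_t (t+t_0)^{-\gamma}$ can be bracketed between integrals $\int (x+t_0)^{-\gamma}\,dx$ over suitable ranges. For part (1), I would write $\sum_{t=l}^m \eta_t = \bar\eta\sum_{t=l}^m (t+t_0)^{-\theta_1}$ and use the standard fact that for a decreasing function $f$, $\sum_{t=l}^m f(t) \geq \int_l^{m+1} f(x)\,dx$. Applying this with $f(x)=(x+t_0)^{-\theta_1}$ gives $\sum_{t=l}^m \eta_t \geq \bar\eta\int_l^{m+1}(x+t_0)^{-\theta_1}\,dx = \frac{\bar\eta}{1-\theta_1}\big[(m+1+t_0)^{1-\theta_1}-(l+t_0)^{1-\theta_1}\big]$, which is exactly the claimed bound. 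Part (2) is identical in structure: $\eta_t\lambda_t = \bar\eta\bar\lambda(t+t_0)^{-\theta_1-\theta_2}$, so the same decreasing-function lower bound with exponent $\theta_1+\theta_2$ yields the $\theta_1+\theta_2\neq 1$ case via $\int x^{-\gamma}$, and the $\theta_1+\theta_2=1$ case via $\int (x+t_0)^{-1}\,dx = \log\frac{m+1+t_0}{l+t_0}$.

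For parts (3) and (4), the idea is to specialize $l=1$, $m=T$ and then further lower-bound the resulting expressions using the hypothesis $T\geq t_0+1$. In part (3), from part (1) we get $\sum_{t=1}^T \eta_t \geq \frac{\bar\eta}{1-\theta_1}\big[(T+1+t_0)^{1-\theta_1}-(1+t_0)^{1-\theta_1}\big]$; dropping the "+1" (since $1-\theta_1>0$ makes the power increasing) gives $\geq \frac{\bar\eta}{1-\theta_1}\big[(T+t_0)^{1-\theta_1}-(1+t_0)^{1-\theta_1}\big]$, and then $1+t_0 \leq \frac{T+t_0}{2}$ (equivalent to $T\geq t_0+2$; if only $T\geq t_0+1$ one needs a slightly more careful constant, but the monotonicity argument $(1+t_0)\leq\frac{1}{2}(T+t_0)$ holds once $T+t_0\geq 2+2t_0$, so I would either assume $T\ge t_0+1$ suffices by noting $1+t_0 \le \tfrac{1}{2}(T+t_0)$ is equivalent to $t_0+2\le T$, hence adopt the convention or absorb the edge case) yields $(1+t_0)^{1-\theta_1}\leq 2^{\theta_1-1}(T+t_0)^{1-\theta_1}$, so $\sum_{t=1}^T\eta_t \geq \frac{\bar\eta(1-2^{\theta_1-1})}{1-\theta_1}(T+t_0)^{1-\theta_1}$.

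Part (4) splits into three cases according to the sign of $1-\theta_1-\theta_2$. When $0\leq\theta_1+\theta_2<1$, apply part (2)'s first case with $l=1,m=T$ and the same $(1+t_0)\leq\frac{1}{2}(T+t_0)$ trick to pull out $(1-2^{\theta_1+\theta_2-1})(T+t_0)^{1-\theta_1-\theta_2}$. When $\theta_1+\theta_2=1$, part (2)'s second case directly gives $\bar\eta\bar\lambda\log\frac{T+1+t_0}{t_0+1}\geq \bar\eta\bar\lambda\log\frac{T+t_0}{t_0+1}$. When $\theta_1+\theta_2>1$, the exponent $1-\theta_1-\theta_2$ is negative, so in part (2)'s first case I rewrite $\frac{\bar\eta\bar\lambda}{1-\theta_1-\theta_2}\big[(T+1+t_0)^{1-\theta_1-\theta_2}-(1+t_0)^{1-\theta_1-\theta_2}\big] = \frac{\bar\eta\bar\lambda}{\theta_1+\theta_2-1}\big[(1+t_0)^{1-\theta_1-\theta_2}-(T+1+t_0)^{1-\theta_1-\theta_2}\big]$ and then lower-bound by dropping the subtracted term only after extracting a factor: since $(T+1+t_0)\geq 2(1+t_0)$ (using $T\geq t_0+1$) and the exponent is negative, $(T+1+t_0)^{1-\theta_1-\theta_2}\leq 2^{1-\theta_1-\theta_2}(1+t_0)^{1-\theta_1-\theta_2}$, giving the bracket $\geq (1-2^{1-\theta_1-\theta_2})(1+t_0)^{1-\theta_1-\theta_2}$.

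The only genuine subtlety — and the main thing to be careful about — is the edge case in the inequality $(1+t_0)\leq\frac{1}{2}(T+t_0)$, which actually requires $T\geq t_0+2$ rather than $T\geq t_0+1$; so I would either verify the paper's stated hypothesis $T\geq t_0+1$ is compatible (by noting all the bounds in parts (3)-(4) are used only asymptotically, or by slightly weakening the constants), or simply track that $1+t_0 \le \tfrac12(T+t_0)$ whenever $T+t_0 \ge 2(1+t_0)$, i.e. $T \ge t_0+2$, and note the case $T=t_0+1$ can be handled separately or absorbed. Everything else is routine monotonicity of $x\mapsto(x+t_0)^\gamma$ and the elementary integral comparison for monotone summands; no delicate estimates are needed.
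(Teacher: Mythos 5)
Your proof follows essentially the same route as the paper: the integral comparison $\sum_{t=l}^m(t+t_0)^{-\gamma}\geq\int_l^{m+1}(x+t_0)^{-\gamma}\,dx$ for parts (1)--(2), then the factor-of-two comparison to extract $(1-2^{\gamma-1})$ in parts (3)--(4). The one subtlety you flag (that $1+t_0\leq\tfrac12(T+t_0)$ needs $T\geq t_0+2$) is not actually an issue, and the fix is the same device you already used in the $\theta_1+\theta_2>1$ case: do not drop the ``$+1$'' before comparing. Since $t_0+1\leq\tfrac12(T+t_0+1)$ is \emph{exactly} equivalent to $T\geq t_0+1$, you get $(t_0+1)^{1-\theta_1}\leq 2^{\theta_1-1}(T+t_0+1)^{1-\theta_1}$, hence
\[
(T+t_0+1)^{1-\theta_1}-(t_0+1)^{1-\theta_1}\geq\bigl(1-2^{\theta_1-1}\bigr)(T+t_0+1)^{1-\theta_1}\geq\bigl(1-2^{\theta_1-1}\bigr)(T+t_0)^{1-\theta_1},
\]
and likewise with exponent $1-\theta_1-\theta_2$ in the first case of (4); this is precisely how the paper argues, and with this adjustment your proof is complete under the stated hypothesis $T\geq t_0+1$ with no edge case to handle separately.
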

	\begin{proof}
            We bound the summation $\sum_{t=l}^{m}\eta_{t}$ using
		\begin{equation*}
			\begin{aligned}
				\sum_{t=l}^{m}\eta_{t}&=\bar{\eta}\sum_{t=l}^{m}(t+t_0)^{-\theta_1}
				\geq\bar{\eta}\int_{l}^{m+1}(x+t_0)^{-\theta_1}dx
				\\&=\frac{\bar{\eta}}{1-\theta_1}\left[(m+t_0+1)^{1-\theta_1}-(l+t_0)^{1-\theta_1}\right].
			\end{aligned}
		\end{equation*}
		For the specific case where $l=1$, $m=T$, and $T\geq t_0+1$, we obtain
		\begin{align*}
			\sum_{t=1}^{T}\eta_{t}&\geq\frac{\bar{\eta}}{1-\theta_1}\left[(T+t_0+1)^{1-\theta_1}-(t_0+1)^{1-\theta_1}\right]
			\\ &\geq\frac{1-2^{\theta_1-1}}{1-\theta_1}\bar{\eta}(T+t_0)^{1-\theta_1}.
		\end{align*}
            Next, we analyze the summation involving $\lambda_t$ using the same estimate as before:
		\begin{equation*}
			\begin{aligned}
				\sum_{t=l}^{m}\eta_{t}\lambda_{t}&=\bar{\eta}\bar{\lambda}\sum_{t=l}^{m}(t+t_0)^{-\theta_1-\theta_2}\geq\bar{\eta}\bar{\lambda}\int_{l}^{m+1}(x+t_0)^{-\theta_1-\theta_2}dx
				\\&=
				\begin{cases}
					\frac{\bar{\eta}\bar{\lambda}}{1-\theta_1-\theta_2}\left[(m+t_0+1)^{1-\theta_1-\theta_2}-(l+t_0)^{1-\theta_1-\theta_2}\right], & \text{ when } \theta_1+\theta_2\not=1, \\
					\bar{\eta}\bar{\lambda}\log\left(\frac{m+t_0+1}{t_0+l}\right), & \text{ when } \theta_1+\theta_2=1.
				\end{cases}
			\end{aligned}
		\end{equation*}
        For the case $l=1$, $m=T$, and $T\geq t_0+1$, we obtain:
		\begin{equation*}
			\sum_{t=1}^{T}\eta_{t}\lambda_{t}\geq
			\begin{cases}
				\frac{\bar{\eta}\bar{\lambda}}{1-\theta_1-\theta_2}(1-2^{\theta_1+\theta_2-1})(T+t_0)^{1-\theta_1-\theta_2}, & \text{ when } \theta_1+\theta_2<1, \\
				\bar{\eta}\bar{\lambda}\log\left(\frac{T+t_0}{t_0+1}\right), & \text{ when } \theta_1+\theta_2=1, \\
				\frac{\bar{\eta}\bar{\lambda}}{\theta_1+\theta_2-1}
				(1-2^{1-\theta_1-\theta_2})(t_0+1)^{1-\theta_1-\theta_2}, & \text{ when } \theta_1+\theta_2>1.
			\end{cases}
		\end{equation*}

        The proof is then finished.
	\end{proof}

    Next, based on the two lemmas above, we provide a unified upper bound for $\T_2$ under the following two settings:
    \begin{itemize}
        \item[(1)] $0<\theta_1<1$, $0<\theta_2<1$, and $t_0>0$, corresponding to the online setting;
        \item[(2)] $\theta_1=\theta_2=t_0=0$, corresponding to the finite-horizon setting.
    \end{itemize}
    The bound in both cases is established by the following proposition.

	\begin{proposition} \label{prop3}
        Suppose that Assumption \ref{a2} holds with $S^\dagger\in \B_{\mathrm{HS}}(\H_\K,\Y)$ and $r>0$. Then, for any $T\geq t_0+1$, $t_0\geq0$, $0\leq\theta_1<1$, and $0\leq\theta_2<1$, the quantity
		\[
			\T_2=6\left\|H_{\lambda_0}C^\alpha\prod_{t=1}^{T}\left(I-\eta_{t}(C+\lambda_{t}I)\right)\right\|_{\mathrm{HS}}^2
		\]
        admits the following bound:
        \begin{equation*}
            \T_2\leq c_2\bar{\eta}^{-2(r+\alpha)}
				\begin{cases}
					(T+t_0)^{-2(r+\alpha)(1-\theta_1)}\exp\{-\tau\bar{\eta}\bar{\lambda}(T+t_0)^{1-\theta_1-\theta_2}\}, & \text{ when } 0\leq\theta_1+\theta_2<1, \\
					(t_0+1)^{2\bar{\eta}\bar{\lambda}}(T+t_0)^{-2(r+\alpha)(1-\theta_1)-2\bar{\eta}\bar{\lambda}}, & \text{ when } \theta_1+\theta_2=1, \\
					(T+t_0)^{-2(r+\alpha)(1-\theta_1)}, & \text{ when }  \theta_1+\theta_2>1,
				\end{cases}
        \end{equation*}
        where $c_2=c_2(\|S^\dagger\|_{\mathrm{HS}},r,\alpha,\theta_1)$ and $\tau=\tau(\theta_1,\theta_2)$
are positive constants independent of $t_0$, $T$, $\bar{\eta}$, and $\bar{\lambda}$.
		
	\end{proposition}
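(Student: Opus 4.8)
The plan is to plug in the closed form of $H_{\lambda_0}$ given by \eqref{regular}, reduce the Hilbert--Schmidt norm to the operator norm of a function of $C$, and then feed that into Lemma \ref{lemma1} and Lemma \ref{lemma2}.

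First I would use $H_{\lambda_0}=S^\dagger C^{1+r}(C+\lambda_0 I)^{-1}$, which follows from \eqref{regular} together with Assumption \ref{a2}. Since $C^{1+r}$, $(C+\lambda_0 I)^{-1}$, $C^{\alpha}$ and $\prod_{t=1}^{T}(I-\eta_t(C+\lambda_t I))$ are all bounded Borel functions of the self-adjoint operator $C$, they commute, so
\begin{equation*}
	H_{\lambda_0}C^\alpha\prod_{t=1}^{T}\l(I-\eta_{t}(C+\lambda_{t}I)\r)
	=S^\dagger\,C(C+\lambda_0 I)^{-1}\,C^{r+\alpha}\prod_{t=1}^{T}\l(I-\eta_{t}(C+\lambda_{t}I)\r).
\end{equation*}
Applying $\|AB\|_{\mathrm{HS}}\leq\|A\|_{\mathrm{HS}}\|B\|$ and the trivial bound $\|C(C+\lambda_0 I)^{-1}\|\leq1$ (valid because $C$ is positive and $\lambda_0\geq0$), this reduces the task to
\begin{equation*}
	\T_2\leq6\|S^\dagger\|_{\mathrm{HS}}^2\left\|C^{r+\alpha}\prod_{t=1}^{T}\l(I-\eta_{t}(C+\lambda_{t}I)\r)\right\|^2,
\end{equation*}
i.e.\ to estimating a single operator norm.

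Next I would invoke Lemma \ref{lemma1}(1) with $\beta=r+\alpha$, $l=1$, $m=T$; its hypothesis $(t_0+1)^{\theta_1}\geq\bar\eta(\kappa^2+\bar\lambda)$ is in force under the standing assumptions in both regimes (in the finite-horizon case with $t_0=\theta_1=0$ it is just $\eta_1(\kappa^2+\lambda_1)\leq1$). This gives
\begin{equation*}
	\left\|C^{r+\alpha}\prod_{t=1}^{T}\l(I-\eta_{t}(C+\lambda_{t}I)\r)\right\|
	\leq\exp\left\{-\sum_{t=1}^{T}\eta_{t}\lambda_{t}\right\}\,\frac{2\l(\kappa^{2(r+\alpha)}+((r+\alpha)/e)^{r+\alpha}\r)}{1+\l(\sum_{t=1}^{T}\eta_{t}\r)^{r+\alpha}}.
\end{equation*}
Then I would bound the denominator below by $\l(\sum_{t=1}^{T}\eta_{t}\r)^{r+\alpha}$ and use Lemma \ref{lemma2}(3), $\sum_{t=1}^{T}\eta_{t}\geq\tfrac{1-2^{\theta_1-1}}{1-\theta_1}\bar\eta(T+t_0)^{1-\theta_1}$, which produces a factor $\lesssim\bar\eta^{-(r+\alpha)}(T+t_0)^{-(r+\alpha)(1-\theta_1)}$. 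For the exponential factor I would split into the three regimes of Lemma \ref{lemma2}(4): when $0\leq\theta_1+\theta_2<1$ it yields $\exp\{-\sum_{t=1}^{T}\eta_t\lambda_t\}\leq\exp\{-\tfrac{1-2^{\theta_1+\theta_2-1}}{1-\theta_1-\theta_2}\bar\eta\bar\lambda(T+t_0)^{1-\theta_1-\theta_2}\}$; when $\theta_1+\theta_2=1$ it yields $\exp\{-\sum_{t=1}^{T}\eta_t\lambda_t\}\leq\l(\tfrac{t_0+1}{T+t_0}\r)^{\bar\eta\bar\lambda}$; and when $\theta_1+\theta_2>1$ it is $\leq1$. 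Squaring the whole chain, absorbing the factors that depend only on $r,\alpha,\theta_1,\theta_2,\kappa$ into $c_2$ and setting $\tau=\tfrac{1-2^{\theta_1+\theta_2-1}}{1-\theta_1-\theta_2}$ gives exactly the three-case estimate claimed; the finite-horizon case falls under the first regime ($\theta_1+\theta_2=0<1$, $t_0=0$), so the bound is indeed unified.

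The computation is essentially mechanical; the only place requiring genuine care is the critical regime $\theta_1+\theta_2=1$, where the exponential factor is \emph{not} absorbed into $c_2$ but instead contributes the polynomial $(t_0+1)^{\bar\eta\bar\lambda}(T+t_0)^{-\bar\eta\bar\lambda}$ (hence $(t_0+1)^{2\bar\eta\bar\lambda}(T+t_0)^{-2\bar\eta\bar\lambda}$ after squaring), so $\bar\eta\bar\lambda$ must be carried explicitly in the exponent there. A minor bookkeeping point is that, to keep a single constant valid over the whole stated range $0\leq\alpha\leq\tfrac12$, one takes the supremum of the $\alpha$-dependent constants over the compact interval $[0,1/2]$, which is harmless since they depend continuously on $\alpha$.
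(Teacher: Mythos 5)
Your proposal is correct and follows essentially the same route as the paper's proof: substitute the closed form $H_{\lambda_0}=S^\dagger C^{1+r}(C+\lambda_0 I)^{-1}$, discard $\|C(C+\lambda_0 I)^{-1}\|\le 1$, reduce to the operator norm of $C^{r+\alpha}\prod_{t=1}^T\bigl(I-\eta_t(C+\lambda_t I)\bigr)$, and apply Lemma \ref{lemma1} together with Lemma \ref{lemma2}(3)--(4), splitting on $\theta_1+\theta_2$. The only cosmetic differences are that the paper invokes part (2) of Lemma \ref{lemma1} with $\beta=2(r+\alpha)$ after squaring rather than part (1) with $\beta=r+\alpha$, and its $\tau$ carries the factor $2$ from squaring the exponential, neither of which affects validity.
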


	\begin{proof}
            According to equality \eqref{regular}, it follows that
		\begin{align}
			\left\|H_{\lambda_0}C^\alpha\prod_{t=1}^{T}\left(I-\eta_{t}(C+\lambda_{t}I)\right)\right\|_{\mathrm{HS}}^2
			&=\left\|S^\dagger C^{r+\alpha+1}(C+\lambda_0I)^{-1}\prod_{t=1}^{T}\left(I-\eta_{t}(C+\lambda_{t}I)\right)\right\|_{\mathrm{HS}}^2 \nonumber
			\\ &\leq\|S^\dagger\|_{\mathrm{HS}}^2\left\|C^{2(r+\alpha)}\prod_{t=1}^{T}\left(I-\eta_{t}(C+\lambda_{t}I)\right)^2\right\|.
		\end{align}
		By applying (2) in Lemma \ref{lemma1} with $\beta=2(r+\alpha)$ and Lemma \ref{lemma2} (3), the following inequality holds:
		\begin{equation}
			\begin{aligned}
				&\left\|H_{\lambda_0}C^\alpha\prod_{t=1}^{T}\left(I-\eta_{t}(C+\lambda_{t}I)\right)\right\|_{\mathrm{HS}}^2
				\\\leq&\|S^\dagger\|_{\mathrm{HS}}^2\left(\frac{r+\alpha}{e}\right)^{2(r+\alpha)}\left(\sum_{t=1}^{T}\eta_{t}\right)^{-2(r+\alpha)}\exp\left\{-2\sum_{t=1}^{T}\eta_{t}\lambda_{t}\right\}
				\\\leq&\|S^\dagger\|_{\mathrm{HS}}^2\left(\frac{(r+\alpha)(1-\theta_1)}{e(1-2^{\theta_1-1})}\right)^{2(r+\alpha)}\bar{\eta}^{-2(r+\alpha)}(T+t_0)^{-2(r+\alpha)(1-\theta_1)}\exp\left\{-2\sum_{t=1}^{T}\eta_{t}\lambda_{t}\right\}.
			\end{aligned}
		\end{equation}
		Next, using (4) in Lemma \ref{lemma2}, the exponential term can be bounded as:		
		\begin{equation*}
			\exp\left\{-2\sum_{t=1}^{T}\eta_{t}\lambda_{t}\right\}
			\leq\begin{cases}
				\exp\{-\tau\bar{\eta}\bar{\lambda}(T+t_0)^{1-\theta_1-\theta_2}\}, & \text{ when } 0\leq\theta_1+\theta_2<1, \\
				(t_0+1)^{2\bar{\eta}\bar{\lambda}}(T+t_0)^{-2\bar{\eta}\bar{\lambda}}, & \text{ when } \theta_1+\theta_2=1, \\
				1, & \text{ when } \theta_1+\theta_2>1,
			\end{cases}
		\end{equation*}
		where $\tau=\frac{2}{1-\theta_1-\theta_2}(1-2^{\theta_1+\theta_2-1})$. Therefore, the bound for $\T_2$ becomes:
		\begin{equation}
			\begin{aligned}
				\T_2&=6\left\|H_{\lambda_0}C^\alpha\prod_{t=1}^{T}\left(I-\eta_{t}(C+\lambda_{t}I)\right)\right\|_{\mathrm{HS}}^2
				\\ &\leq 
				c_2\bar{\eta}^{-2(r+\alpha)}
				\begin{cases}
					(T+t_0)^{-2(r+\alpha)(1-\theta_1)}\exp\{-\tau\bar{\eta}\bar{\lambda}(T+t_0)^{1-\theta_1-\theta_2}\}, & \text{ when }  0\leq\theta_1+\theta_2<1, \\
					(t_0+1)^{2\bar{\eta}\bar{\lambda}}(T+t_0)^{-2(r+\alpha)(1-\theta_1)-2\bar{\eta}\bar{\lambda}}, & \text{ when } \theta_1+\theta_2=1, \\
					(T+t_0)^{-2(r+\alpha)(1-\theta_1)}, & \text{ when } \theta_1+\theta_2>1,
				\end{cases}
			\end{aligned}
		\end{equation}
		where $c_2=6\|S^\dagger\|_{\mathrm{HS}}^2\left(\frac{(r+\alpha)(1-\theta_1)}{e(1-2^{\theta_1-1})}\right)^{2(r+\alpha)}$ is independent of $t_0$, $T$, $\bar{\eta}$, and $\bar{\lambda}$.

        The desired result is established and the proof is complete.
	\end{proof}

Proposition~\ref{prop3} is written in a unified form in order to cover both the online setting and the finite-horizon setting. In particular, in the finite-horizon case, the bound is obtained by applying the proposition with $\theta_1=\theta_2=0$, while $\theta_3$ and $\theta_4$ enter through the definitions of the constant step size $\bar\eta$ and regularization parameter $\bar\lambda$.
    
	\subsection{Bounding Drift Error}
        In the finite-horizon setting, where $\lambda_t = \bar{\lambda}$ is fixed depending on $T$, we have $\T_3 = 0$. Therefore, it is sufficient to bound $\T_3$ under the regime of decaying step sizes and regularization parameters. In what follows, we focus on the setting where $0 < \theta_1 < 1$ and $0 < \theta_2 < 1$.
	\begin{lemma} \label{lemma3}
     Suppose that Assumption \ref{a2} holds with $S^\dagger\in \B_{\mathrm{HS}}(\H_\K,\Y)$ and $r>0$ and let $t_0\geq1$. Then, for any $t\geq1$, the following bound holds:
		\begin{equation*}
			\left\|H_{\lambda_{t-1}}-H_{\lambda_{t}}\right\|_{\mathrm{HS}}
                \leq \widetilde{c}_3\bar{\lambda}^{\min\{r,1\}}(t+t_0)^{-\theta_2\min\{r,1\}-1},
		\end{equation*}
            where $\widetilde{c}_3=\widetilde{c}_3(\|S^\dagger\|_{\mathrm{HS}},r,\theta_2)$ is a constant independent of $t_0$, $t$, $\bar\eta$, and $\bar\lambda$.
	\end{lemma}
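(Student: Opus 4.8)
The plan is to write the difference $H_{\lambda_{t-1}} - H_{\lambda_t}$ explicitly using the closed form \eqref{regular}, namely $H_\lambda = S^\dagger C^{1+r}(C+\lambda I)^{-1}$, and then bound the resulting operator in Hilbert–Schmidt norm using the regularity $S^\dagger \in \B_{\mathrm{HS}}(\H_\K,\Y)$ together with spectral calculus on $C$. Concretely, I would first compute
\[
H_{\lambda_{t-1}} - H_{\lambda_t}
= S^\dagger C^{1+r}\left[(C+\lambda_{t-1}I)^{-1} - (C+\lambda_t I)^{-1}\right]
= S^\dagger C^{1+r}(C+\lambda_{t-1}I)^{-1}(\lambda_t - \lambda_{t-1})(C+\lambda_t I)^{-1},
\]
using the resolvent identity $A^{-1} - B^{-1} = A^{-1}(B-A)B^{-1}$. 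Taking norms and pulling out $\|S^\dagger\|_{\mathrm{HS}}$,
\[
\|H_{\lambda_{t-1}} - H_{\lambda_t}\|_{\mathrm{HS}}
\leq \|S^\dagger\|_{\mathrm{HS}}\,|\lambda_{t-1} - \lambda_t|\,\left\|C^{1+r}(C+\lambda_{t-1}I)^{-1}(C+\lambda_t I)^{-1}\right\|.
\]

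The next step is to estimate the two factors separately. For the increment of the regularization parameters, since $\lambda_t = \bar\lambda(t+t_0)^{-\theta_2}$ and $x \mapsto x^{-\theta_2}$ has derivative bounded by $\theta_2(t+t_0)^{-\theta_2-1}$ on $[t-1+t_0,\, t+t_0]$ (using $t_0 \geq 1$ so the argument stays bounded away from $0$), the mean value theorem gives $|\lambda_{t-1} - \lambda_t| \leq \bar\lambda\,\theta_2\,(t-1+t_0)^{-\theta_2-1} \lesssim \bar\lambda (t+t_0)^{-\theta_2-1}$. For the operator factor, I would use the spectral bound: on the spectrum $\sigma(C) \subset [0,\kappa^2]$,
\[
\left\|C^{1+r}(C+\lambda_{t-1}I)^{-1}(C+\lambda_t I)^{-1}\right\|
\leq \sup_{0\leq x\leq \kappa^2}\frac{x^{1+r}}{(x+\lambda_{t-1})(x+\lambda_t)}
\leq \sup_{0\leq x\leq\kappa^2}\frac{x^{1+r}}{(x+\lambda_t)^2},
\]
since $\lambda_{t-1} \geq \lambda_t$. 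Splitting into cases: when $1+r \geq 2$ (i.e. $r\geq 1$), this supremum is at most $\kappa^{2(r-1)}$; when $1+r < 2$, the standard bound $\sup_{x\geq 0} x^{1+r}/(x+\lambda)^2 \leq c_r\,\lambda^{r-1}$ holds (the supremum behaves like $\lambda^{1+r-2}$ times a constant depending only on $r$). In the first case the factor is $O(1)$, contributing nothing extra; in the second case it contributes $\lambda_t^{r-1} = \bar\lambda^{r-1}(t+t_0)^{-\theta_2(r-1)}$. Combining both factors yields, in the case $r<1$,
\[
\|H_{\lambda_{t-1}} - H_{\lambda_t}\|_{\mathrm{HS}} \lesssim \bar\lambda^r (t+t_0)^{-\theta_2 r - 1},
\]
and in the case $r\geq 1$, $\|H_{\lambda_{t-1}} - H_{\lambda_t}\|_{\mathrm{HS}} \lesssim \bar\lambda\,(t+t_0)^{-\theta_2 - 1}$; these combine into the claimed bound with exponent $\theta_2\min\{r,1\}+1$ and prefactor $\bar\lambda^{\min\{r,1\}}$, with a constant $\widetilde c_3$ that depends only on $\|S^\dagger\|_{\mathrm{HS}}$, $\kappa$, $r$, and $\theta_2$ — in particular independent of $t_0$, $t$, $\bar\eta$, $\bar\lambda$.

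I do not anticipate a serious obstacle here; the only point requiring mild care is making sure the constant stays genuinely independent of $t_0$ — this is why the hypothesis $t_0 \geq 1$ matters, so that $(t-1+t_0)^{-\theta_2-1} \leq 2^{\theta_2+1}(t+t_0)^{-\theta_2-1}$ uniformly, and the resolvent supremum bounds are taken over the fixed interval $[0,\kappa^2]$ rather than anything depending on $t_0$. One should also double-check that the case analysis $r \lessgtr 1$ is handled so that the single exponent $\min\{r,1\}$ correctly subsumes both regimes (for $r\geq 1$ the extra factor from the resolvent is bounded, so the decay rate is pinned at $-\theta_2 - 1$, matching $\theta_2\min\{r,1\}+1$ with $\min\{r,1\}=1$).
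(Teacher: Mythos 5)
Your proposal is correct and follows essentially the same route as the paper's proof: the closed form $H_\lambda = S^\dagger C^{1+r}(C+\lambda I)^{-1}$, the resolvent identity producing the factor $|\lambda_{t-1}-\lambda_t|$, the mean value theorem with $t_0\geq 1$ to get $(t+t_0)^{-\theta_2-1}$ decay, and spectral calculus on $[0,\kappa^2]$ with the case split $r<1$ versus $r\geq1$. The only cosmetic difference is that the paper absorbs one resolvent via $\|C(C+\lambda_{t-1}I)^{-1}\|\leq 1$ and bounds $\sup_x x^r/(x+\lambda_t)$, whereas you bound $\sup_x x^{1+r}/(x+\lambda_t)^2$ directly; both yield the same $\bar\lambda^{\min\{r,1\}}$ prefactor and constant independent of $t_0$, $t$, $\bar\eta$, $\bar\lambda$.
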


	\begin{proof}
		Based on the expression for $H_\lambda$ in \eqref{regular}
        and under Assumption \ref{a2}, we deduce that
            \begin{equation} \label{temp10}
                \begin{aligned}
                    \left\|H_{\lambda_{t-1}}-H_{\lambda_{t}}\right\|_{\mathrm{HS}}
			&=\left\|H^\dagger C(C+\lambda_{t-1} I)^{-1}-H^\dagger C(C+\lambda_t I)^{-1}\right\|_{\mathrm{HS}}
			\\&=\lvert\lambda_t-\lambda_{t-1}\rvert\left\|S^\dagger C^{r+1}(C+\lambda_{t}I)^{-1}(C+\lambda_{t-1}I)^{-1}\right\|_{\mathrm{HS}}
			\\&\leq\|S^\dagger\|_{\mathrm{HS}}\l\lvert\lambda_t-\lambda_{t-1}\r\rvert\left\|C^r(C+\lambda_{t}I)^{-1}\right\|
			\\&\leq\|S^\dagger\|_{\mathrm{HS}}\bar{\lambda}\l\lvert(t+t_0-1)^{-\theta_2}-(t+t_0)^{-\theta_2}\r\rvert
			\begin{cases}
				\kappa^{2r-2}, & \text{ when } r\geq1, \\
				r^r(1-r)^{1-r}\lambda_{t}^{r-1}, & \text{ when } r<1,
			\end{cases}
                \end{aligned}
            \end{equation}
		where the last inequality uses the fact that
		\[
			\left\|C^r(C+\lambda_{t}I)^{-1}\right\|
			\leq\sup_{0\leq x\leq\kappa^2}\left\{x^r(x+\lambda_{t})^{-1}\right\}
			\leq\begin{cases}
				\kappa^{2r-2}, & \text{ when } r\geq1, \\
				r^r(1-r)^{1-r}\lambda_{t}^{r-1}, & \text{ when } r<1.
			\end{cases}
		\]
		Applying the mean value theorem, there exists $\xi\in(0,1)$ such that
		\begin{align}
			\lvert(t+t_0-1)^{-\theta_2}-(t+t_0)^{-\theta_2}\rvert \nonumber
			&=\theta_2(t+t_0-\xi)^{-(\theta_2+1)}\leq\theta_2(t+t_0-1)^{-(\theta_2+1)}
			\\&\leq2^{\theta_2+1}\theta_2(t+t_0)^{-(\theta_2+1)} \label{temp9},
		\end{align}
	where the last inequality uses $t+t_0-1\geq(t+t_0)/2$. Substituting \eqref{temp9} into \eqref{temp10}, we arrive at
		\begin{align*}
			\left\|H_{\lambda_{t-1}}-H_{\lambda_{t}}\right\|_{\mathrm{HS}}
			&\leq2^{\theta_2+1}\theta_2\|S^\dagger\|_{\mathrm{HS}}\bar{\lambda}(t+t_0)^{-(\theta_2+1)}
			\begin{cases}
				\kappa^{2r-2}, & \text{ when } r\geq1, \\
				r^r(1-r)^{1-r}\lambda_{t}^{r-1}, & \text{ when } r<1,
			\end{cases}
			\\&\leq \widetilde{c}_3\bar{\lambda}^{\min\{r,1\}}(t+t_0)^{-\theta_2\min\{r,1\}-1},
		\end{align*}
		where $\widetilde{c}_3$ is a constant independent of $t_0$, $t$, $\bar{\eta}$, and $\bar{\lambda}$.

        The proof is complete.
	\end{proof}

        Now, we derive an error bound for $\T_3$ in the case $\alpha=0$, applied to the analysis of estimation error.
	\begin{proposition} \label{prop7}
		Suppose that Assumption \ref{a2} holds with $S^\dagger\in \B_{\mathrm{HS}}(\H_\K,\Y)$ and $r>0$. Set $\alpha=0$ in $\T_3$ and assume $t_0\geq1$. Let $T\geq t_0+1$ when $\theta_1+\theta_2<1$, and $T\geq1$ otherwise. Additionally, assume $(t_0+1)^{\theta_1}\geq\bar{\eta}(\kappa^2+\bar{\lambda})$ and $\bar\eta\bar\lambda>\theta_2\min\{r,1\}$. Then, the following bound holds for $\T_3$:
		\begin{equation*}
		\begin{aligned}
			\T_3=6\left\|\sum_{t=1}^{T}(H_{\lambda_{t-1}}-H_{\lambda_{t}})\prod_{j=t}^{T}(I-\eta_j(C+\lambda_j I))\right\|^2_{\mathrm{HS}}
			\leq
			c_3
			\begin{cases}
				1, & \text{ when } \theta_1+\theta_2>1, \\
				(T+t_0)^{-2\theta_2\min\{r,1\}}, & \text{ when } \theta_1+\theta_2\leq1,
			\end{cases}
		\end{aligned}
	\end{equation*}
	where $c_3=c_3(t_0,\bar\lambda,\bar\eta,\|S^\dagger\|_{\mathrm{HS}},r,\theta_1,\theta_2)$ is a constant independent of $T$.
	\end{proposition}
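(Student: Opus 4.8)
The plan is to move the Hilbert--Schmidt norm inside the sum and reduce the whole estimate to a scalar series. By the triangle inequality and the submultiplicativity $\|AB\|_{\mathrm{HS}}\leq\|A\|_{\mathrm{HS}}\|B\|$ (with $A=H_{\lambda_{t-1}}-H_{\lambda_t}\in\B_{\mathrm{HS}}(\H_\K,\Y)$ and $B=\prod_{j=t}^{T}(I-\eta_j(C+\lambda_j I))\in\B(\H_\K)$),
\[
\left\|\sum_{t=1}^{T}(H_{\lambda_{t-1}}-H_{\lambda_{t}})\prod_{j=t}^{T}(I-\eta_j(C+\lambda_j I))\right\|_{\mathrm{HS}}\leq\sum_{t=1}^{T}\left\|H_{\lambda_{t-1}}-H_{\lambda_{t}}\right\|_{\mathrm{HS}}\left\|\prod_{j=t}^{T}(I-\eta_j(C+\lambda_j I))\right\|.
\]
The condition $(t_0+1)^{\theta_1}\geq\bar\eta(\kappa^2+\bar\lambda)$ ensures $0\leq 1-\eta_j(x+\lambda_j)\leq 1-\eta_j\lambda_j$ for $0\leq x\leq\kappa^2$, so that $\left\|\prod_{j=t}^{T}(I-\eta_j(C+\lambda_j I))\right\|\leq\prod_{j=t}^{T}(1-\eta_j\lambda_j)\leq\exp\{-\sum_{j=t}^{T}\eta_j\lambda_j\}$ (this is the $\beta=0$ instance of Lemma \ref{lemma1}(1)). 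Combining this with the bound of Lemma \ref{lemma3} and writing $a:=\theta_2\min\{r,1\}$, the task reduces to estimating
\[
\Sigma:=\sum_{t=1}^{T}(t+t_0)^{-a-1}\exp\Bigl\{-\sum_{j=t}^{T}\eta_j\lambda_j\Bigr\}.
\]

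When $\theta_1+\theta_2>1$, I bound the exponential by $1$; since $a>0$ (as $\theta_2>0$, $r>0$), the series $\sum_{t\geq1}(t+t_0)^{-a-1}$ converges, giving a $T$-independent bound and hence the first branch $\T_3\leq c_3$. When $\theta_1+\theta_2\leq1$, I invoke the lower bounds for $\sum_{j=t}^{T}\eta_j\lambda_j$ from Lemma \ref{lemma2}(2) with $l=t$, $m=T$. In the borderline case $\theta_1+\theta_2=1$ this yields $\exp\{-\sum_{j=t}^{T}\eta_j\lambda_j\}\leq\bigl((t+t_0)/(T+t_0+1)\bigr)^{\bar\eta\bar\lambda}$, so $\Sigma\leq(T+t_0+1)^{-\bar\eta\bar\lambda}\sum_{t=1}^{T}(t+t_0)^{\bar\eta\bar\lambda-a-1}$; here the standing assumption $\bar\eta\bar\lambda>\theta_2\min\{r,1\}=a$ makes the exponent $\bar\eta\bar\lambda-a-1>-1$, so an integral comparison controls the sum by a constant times $(T+t_0)^{\bar\eta\bar\lambda-a}$ and $\Sigma\lesssim(T+t_0)^{-a}$. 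In the case $\theta_1+\theta_2<1$, set $b:=1-\theta_1-\theta_2\in(0,1)$ and $\gamma:=\bar\eta\bar\lambda/b$; Lemma \ref{lemma2}(2) gives $\exp\{-\sum_{j=t}^{T}\eta_j\lambda_j\}\leq\exp\{-\gamma(T+t_0+1)^b\}\exp\{\gamma(t+t_0)^b\}$, whence $\Sigma\leq\exp\{-\gamma(T+t_0+1)^b\}\sum_{t=1}^{T}(t+t_0)^{-a-1}e^{\gamma(t+t_0)^b}$.

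The main obstacle is this last stretched-exponential sum, which is heavily concentrated near $t=T$. I compare the summand to the derivative of $g(x):=x^{-a}e^{\gamma x^b}$. Since $g'(x)=x^{-a-1}e^{\gamma x^b}(\gamma b x^b-a)$, there is a threshold $X_0=X_0(\gamma,a,b)$ beyond which $g'(x)\geq\tfrac{\gamma b}{2}x^{-a-1}e^{\gamma x^b}$ and the summand $x\mapsto x^{-a-1}e^{\gamma x^b}$ is increasing; monotone integral comparison then gives $\sum_{t+t_0\geq X_0}(t+t_0)^{-a-1}e^{\gamma(t+t_0)^b}\leq\tfrac{2}{\gamma b}g(T+t_0)$, while the finitely many smaller indices contribute a $T$-independent constant. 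Substituting back, the dominant piece of $\Sigma$ becomes $\tfrac{2}{\gamma b}(T+t_0)^{-a}e^{\gamma[(T+t_0)^b-(T+t_0+1)^b]}\leq\tfrac{2}{\gamma b}(T+t_0)^{-a}$, and the leftover term $C'e^{-\gamma(T+t_0+1)^b}$ is $\leq C''(T+t_0)^{-a}$ because a stretched exponential decays faster than any power; both $C',C''$ depend only on $t_0,\bar\eta,\bar\lambda$ (and $r,\theta_1,\theta_2$), not on $T$. Therefore $\Sigma\leq c(T+t_0)^{-a}$ in the whole regime $\theta_1+\theta_2\leq1$ (for the few small values of $T$ not covered by the integral comparison both sides are bounded below and above by constants, so the bound persists after enlarging the constant). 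Squaring the resulting estimate for the Hilbert--Schmidt norm and absorbing the factor $36\,\widetilde c_3^{\,2}\bar\lambda^{2\min\{r,1\}}c^2$ into $c_3=c_3(t_0,\bar\lambda,\bar\eta)$ yields $\T_3\leq c_3(T+t_0)^{-2\theta_2\min\{r,1\}}$, completing the proof.
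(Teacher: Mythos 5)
Your proposal is correct, and for the cases $\theta_1+\theta_2\geq1$ it coincides with the paper's argument: the same chain of triangle inequality, Lemma \ref{lemma3}, the operator-norm bound $\bigl\|\prod_{j=t}^{T}(I-\eta_j(C+\lambda_jI))\bigr\|\leq\exp\{-\sum_{j=t}^{T}\eta_j\lambda_j\}$, and Lemma \ref{lemma2}(2), with the condition $\bar\eta\bar\lambda>\theta_2\min\{r,1\}$ doing the same work in the borderline case. The only genuine divergence is in the regime $\theta_1+\theta_2<1$: the paper splits the sum at $t=T/2$, uses $t+t_0\leq\frac34(T+t_0+1)$ (this is where the hypothesis $T\geq t_0+1$ enters) to make the exponential factor uniformly stretched-exponentially small on the first half, bounds the second half by the power tail $\lesssim(T+t_0)^{-\theta_2\min\{r,1\}}$, and then absorbs the stretched exponential into a polynomial rate; you instead keep the full sum $\sum_t(t+t_0)^{-a-1}e^{\gamma(t+t_0)^b}$ and control it by a monotone comparison with $g(x)=x^{-a}e^{\gamma x^b}$, so that the prefactor $e^{-\gamma(T+t_0+1)^b}$ cancels the last-term growth exactly. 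Your route is slightly cleaner in that it does not need the $T\geq t_0+1$ restriction or the $3/4$ trick, at the cost of the threshold bookkeeping around $X_0$ (and note a harmless off-by-one: the comparison naturally produces $g(T+t_0+1)$ rather than $g(T+t_0)$, which still cancels against $e^{-\gamma(T+t_0+1)^b}$). The final constant should be $6\widetilde c_3^{\,2}\bar\lambda^{2\min\{r,1\}}c^2$ rather than $36\widetilde c_3^{\,2}\bar\lambda^{2\min\{r,1\}}c^2$, but since everything is absorbed into $c_3(t_0,\bar\lambda,\bar\eta)$ this is immaterial.
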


	\begin{proof}
		By Lemma \ref{lemma3}, we deduce that
            \begin{equation}
                \begin{aligned}
                    &\left\|\sum_{t=1}^{T}(H_{\lambda_{t-1}}-H_{\lambda_{t}})\prod_{j=t}^{T}(I-\eta_j(C+\lambda_j I))\right\|_{\mathrm{HS}}
			\\ \leq&\widetilde{c}_3\bar{\lambda}^{\min\{r,1\}}\sum_{t=1}^{T}(t+t_0)^{-\theta_2\min\{r,1\}-1}\left\|\prod_{j=t}^{T}(I-\eta_j(C+\lambda_j I))\right\|.
                \end{aligned}
            \end{equation}
            Since $C$ is self-adjoint and compact, with the operator norm $\|C\|\leq\kappa^2$ (see Section \ref{results}), it follows that
            \begin{equation}
                \begin{aligned}
                    \left\|\prod_{j=t}^{T}(I-\eta_j(C+\lambda_j I))\right\|
			\leq\sup_{0\leq x\leq\kappa^2}\prod_{j=t}^{T}(1-\eta_j(x+\lambda_j))
			\\\leq\sup_{0\leq x\leq\kappa^2}\left\{\exp\left\{-\sum_{j=t}^{T}\eta_j(x+\lambda_j)\right\}\right\}
			\leq\exp\left\{-\sum_{j=t}^{T}\eta_j\lambda_j\right\}.
                \end{aligned}
            \end{equation}	
            Hence, we have the following bound for $\left\|\sum_{t=1}^{T}(H_{\lambda_{t-1}}-H_{\lambda_{t}})\prod_{j=t}^{T}(I-\eta_j(C+\lambda_j I))\right\|_{\mathrm{HS}}$:
            \[
            \widetilde{c}_3\bar{\lambda}^{\min\{r,1\}}\sum_{t=1}^{T}(t+t_0)^{-\theta_2\min\{r,1\}-1}\exp\left\{-\sum_{j=t}^{T}\eta_j\lambda_j\right\}.
            \]
		From (2) in Lemma \ref{lemma2}, we know that $\exp\left\{-\sum_{j=t}^{T}\eta_j\lambda_j\right\}$ is bounded by
        \[
        \begin{cases}
					\exp\left\{-\frac{\bar{\eta}\bar{\lambda}}{1-\theta_1-\theta_2}\left[(T+t_0+1)^{1-\theta_1-\theta_2}-(t+t_0)^{1-\theta_1-\theta_2}\right]\right\}
					, & \text{ when } \theta_1+\theta_2\not=1, \\
					\exp\left\{-\bar{\eta}\bar{\lambda}\log\left(\frac{T+t_0+1}{t+t_0}\right)\right\}, & \text{ when } \theta_1+\theta_2=1.
				\end{cases}
        \]
        Thus, we obtain the following bound for the exponential term:
        \begin{equation*}
            \begin{aligned}
                &\exp\left\{-\sum_{j=t}^{T}\eta_j\lambda_j\right\}
                \\\leq&
                \begin{cases}
				\exp\left\{-\frac{\bar{\eta}\bar{\lambda}}{1-\theta_1-\theta_2}\left[(T+t_0+1)^{1-\theta_1-\theta_2}-(t+t_0)^{1-\theta_1-\theta_2}\right]\right\}
				, & \text{ when } \theta_1+\theta_2<1, \\
				(t+t_0)^{\bar{\eta}\bar{\lambda}}(T+t_0)^{-\bar{\eta}\bar{\lambda}}, & \text{ when } \theta_1+\theta_2=1, \\
				1, & \text{ when } \theta_1+\theta_2>1.
			\end{cases}
            \end{aligned}
        \end{equation*}
        We next consider the three cases corresponding to $\theta_1 + \theta_2 > 1$, $\theta_1 + \theta_2 = 1$, and $\theta_1 + \theta_2 < 1$.
        \\
	\textbf{Case 1:} When $\theta_1+\theta_2>1$, we obtain
	\begin{align*}
		&\left\|\sum_{t=1}^{T}(H_{\lambda_{t-1}}-H_{\lambda_{t}})\prod_{j=t}^{T}(I-\eta_j(C+\lambda_j I))\right\|_{\mathrm{HS}}
		\\\leq&\widetilde{c}_3\bar{\lambda}^{\min\{r,1\}}\sum_{t=1}^{T}(t+t_0)^{-\theta_2\min\{r,1\}-1} \leq\widetilde{c}_3\frac{t_0^{-\theta_2\min\{r,1\}}}{\theta_2\min\{r,1\}}\bar{\lambda}^{\min\{r,1\}}.
	\end{align*}
	\textbf{Case 2:} When $\theta_1+\theta_2=1$, we derive the following bound:
	\begin{align*}
		&\left\|\sum_{t=1}^{T}(H_{\lambda_{t-1}}-H_{\lambda_{t}})\prod_{j=t}^{T}(I-\eta_j(C+\lambda_j I))\right\|_{\mathrm{HS}}
		\\ \leq&\widetilde{c}_3\bar{\lambda}^{\min\{r,1\}}(T+t_0)^{-\bar{\eta}\bar{\lambda}}\sum_{t=1}^{T}(t+t_0)^{-\theta_2\min\{r,1\}-1+\bar{\eta}\bar{\lambda}}
		\\\leq&\frac{\widetilde{c}_3}{\bar{\eta}\bar{\lambda}-\theta_2\min\{r,1\}}\bar{\lambda}^{\min\{r,1\}}(T+t_0)^{-\theta_2\min\{r,1\}},
	\end{align*}
        where the last inequality uses the condition that $\bar\eta\bar\lambda>\theta_2\min\{r,1\}$.\\   
	\textbf{Case 3:} When $\theta_1+\theta_2<1$, there holds that
	\begin{gather*}
		\left\|\sum_{t=1}^{T}(H_{\lambda_{t-1}}-H_{\lambda_{t}})\prod_{j=t}^{T}(I-\eta_j(C+\lambda_j I))\right\|_{\mathrm{HS}}
		\leq\widetilde{c}_3\bar{\lambda}^{\min\{r,1\}}\sum_{t=1}^{T}(t+t_0)^{-\theta_2\min\{r,1\}-1}
		\\\times\exp\left\{-\frac{\bar{\eta}\bar{\lambda}}{1-\theta_1-\theta_2}\left[(T+t_0+1)^{1-\theta_1-\theta_2}-(t+t_0)^{1-\theta_1-\theta_2}\right]\right\}.
	\end{gather*}
        Now, we estimate the summation in the last inequality. Since $T\geq t_0+1$, we have $t+t_0\leq \frac34(T+t_0+1)$ when $t\leq\frac T2$.  By splitting the summation into two parts, from $1$ to $T/2$ and from $T/2$ to $T$, we deduce that
	\begin{align*}
		&\sum_{t=1}^{T}(t+t_0)^{-\theta_2\min\{r,1\}-1}
		\exp\left\{-\frac{\bar{\eta}\bar{\lambda}}{1-\theta_1-\theta_2}\left[(T+t_0+1)^{1-\theta_1-\theta_2}-(t+t_0)^{1-\theta_1-\theta_2}\right]\right\}
		\\ \leq&\sum_{t=1}^{T/2}(t+t_0)^{-\theta_2\min\{r,1\}-1}\exp\left\{-\frac{\bar{\eta}\bar{\lambda}}{1-\theta_1-\theta_2}\left(1-\left(3/4\right)^{1-\theta_1-\theta_2}\right)
		(T+t_0)^{1-\theta_1-\theta_2}\right\}
		\\&+\sum_{t=T/2}^{T}(t+t_0)^{-\theta_2\min\{r,1\}-1}
		\\\leq&\frac{t_0^{-\theta_2\min\{r,1\}}}{\theta_2\min\{r,1\}}\exp\left\{-\frac{\bar{\eta}\bar{\lambda}}{1-\theta_1-\theta_2}\left(1-\left(3/4\right)^{1-\theta_1-\theta_2}\right)(T+t_0)^{1-\theta_1-\theta_2}\right\}
		\\&+\frac{4^{\theta_2\min\{r,1\}}-1}{\theta_2\min\{r,1\}}(T+t_0)^{-\theta_2\min\{r,1\}}.
	\end{align*}
	Using the fact that for any constants $k,\gamma>0$, there exists a constant $m$ such that  
    \[
    \exp\left\{-k(T+t_0)^{1-\theta_1-\theta_2}\right\}\leq m\left(T+t_0\right)^{-\gamma},
    \]
    we conclude from this that
	\begin{equation*}
		\begin{aligned}
			\T_3&=6\left\|\sum_{t=1}^{T}(H_{\lambda_{t-1}}-H_{\lambda_{t}})\prod_{j=t}^{T}(I-\eta_j(C+\lambda_j I))\right\|^2_{\mathrm{HS}}
			\\ &\leq
			c_3
			\begin{cases}
				1, & \text{ when } \theta_1+\theta_2>1, \\
				(T+t_0)^{-2\theta_2\min\{r,1\}}, & \text{ when } \theta_1+\theta_2\leq1, 
			\end{cases}
		\end{aligned}
	\end{equation*}
	where $c_3$ is a constant independent of $T$.
	
        We then finish the proof.
	\end{proof}

        \begin{remark}
            We will only use the bound for the case $\theta_1+\theta_2=1$ in the above proposition, as it provides better convergence rates than the other cases. Note that we cannot guarantee convergence of the estimation error when $\theta_1+\theta_2>1$. However, convergence of the prediction error is assured when $\theta_1+\theta_2>1$ with $0<\theta_1,\theta_2<1$, and both the prediction and estimation errors converge when $\theta_1+\theta_2\leq1$. The proofs for the remaining cases are similar and are omitted here to avoid repetition.
        \end{remark}

       The following proposition plays a key role in deriving upper bounds for the drift and sample errors. Its technical proof is provided in Appendix \ref{Appendix 1}.

	\begin{proposition} \label{prop1}
		Let $v>0$, $\theta\in\mathbb{R}$, $t_0\geq1$, and $T\geq t_0+1$. The step size $\eta_t$ is defined as \eqref{setting}. Suppose $\bar{\eta}\bar{\lambda}>\theta-1$ and $\theta_1+\theta_2=1$. Then, 
		\begin{align*}
			\sum_{t=1}^{T}\exp\left\{-\sum_{j=t+1}^{T}\eta_{j}\lambda_j\right\}\frac{(t+t_0)^{-\theta}}{1+\left(\sum_{j=t+1}^{T}\eta_{j}\right)^{v}}
			\leq\delta_1
			\begin{cases}
				(T+t_0)^{-\theta+\theta_1}, & \text{ when } v>1, \\
				(T+t_0)^{-\theta+\theta_1}\log(T+t_0), & \text{ when } v=1, \\
				(T+t_0)^{-\theta+1-v(1-\theta_1)}, & \text{ when } v<1,
			\end{cases}
		\end{align*}
		where $\delta_1=\delta_1(\bar\lambda,\bar\eta,\theta_1,\theta)$ is a constant independent of $T$ and $t_0$.
	\end{proposition}

    \begin{remark} \label{remark1}
        The inequality in Proposition \ref{prop1} remains valid for all $T \geq 1$, not only when $T \geq t_0 + 1$, provided that the constant $\delta_1$ is allowed to depend on $t_0$ and is chosen sufficiently large. More specifically, there exists a constant $\delta_2 = \delta_2(t_0, \bar\lambda, \bar\eta)$ such that, for any $T\geq1$,
        \begin{align*}
			\sum_{t=1}^{T}\exp\left\{-\sum_{j=t+1}^{T}\eta_{j}\lambda_j\right\}\frac{(t+t_0)^{-\theta}}{1+\left(\sum_{j=t+1}^{T}\eta_{j}\right)^{v}}
			\leq\delta_2
			\begin{cases}
				(T+t_0)^{-\theta+\theta_1}, & \text{ when } v>1, \\
				(T+t_0)^{-\theta+\theta_1}\log(T+t_0), & \text{ when } v=1, \\
				(T+t_0)^{-\theta+1-v(1-\theta_1)}, & \text{ when } v<1.
			\end{cases}
		\end{align*}
        Note that $\delta_2=\delta_2(t_0,\bar\lambda,\bar\eta,\theta_1,\theta)$ depends on $t_0$, whereas $\delta_1$ does not.
    \end{remark}

    The following proposition establishes a bound on $\T_3$ for the case $\alpha = \frac{1}{2}$, which is instrumental in analyzing the prediction error.
	\begin{proposition} \label{prop4}
		Suppose that Assumption \ref{a2} holds with $S^\dagger\in \B_{\mathrm{HS}}(\H_\K,\Y)$ and $r>0$. Set $\alpha=1/2$ in $\T_3$. Let $\theta_1+\theta_2=1$, $t_0\geq1$ and $T\geq t_0+1$. Suppose that $(t_0+1)^{\theta_1}\geq\bar{\eta}(\kappa^2+\bar{\lambda})$ and $\bar\eta\bar\lambda>\theta_2\min\{r,1\}$. Then, there holds 
		\begin{equation}
			\T_3=6\left\|\sum_{t=1}^{T}(H_{\lambda_{t-1}}-H_{\lambda_{t}})C^{1/2}\prod_{j=t}^{T}(I-\eta_j(C+\lambda_j I))\right\|^2_{\mathrm{HS}}\leq c_3(T+t_0)^{-2\theta_2\min\{r,1\}+\theta_1-1},
		\end{equation}
        where $c_3=c_3(\bar\lambda,\bar\eta,\|S^\dagger\|_{\mathrm{HS}},r,\theta_1)$ is a constant independent of $T$ and $t_0$.
	\end{proposition}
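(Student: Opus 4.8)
The plan is to bound the Hilbert–Schmidt norm appearing in $\T_3$ by the triangle inequality, estimate each summand separately using the results already in hand, and close the argument with Proposition \ref{prop1}.

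\textbf{Step 1 (split the sum).} I would first use $\|AB\|_{\mathrm{HS}}\leq\|A\|_{\mathrm{HS}}\|B\|$ to write
\[
\left\|\sum_{t=1}^{T}(H_{\lambda_{t-1}}-H_{\lambda_{t}})C^{1/2}\prod_{j=t}^{T}(I-\eta_j(C+\lambda_j I))\right\|_{\mathrm{HS}}
\leq \sum_{t=1}^{T}\left\|H_{\lambda_{t-1}}-H_{\lambda_{t}}\right\|_{\mathrm{HS}}\left\|C^{1/2}\prod_{j=t}^{T}(I-\eta_j(C+\lambda_j I))\right\|.
\]
For the first factor, Lemma \ref{lemma3} (applicable since $t_0\geq1$) gives $\left\|H_{\lambda_{t-1}}-H_{\lambda_{t}}\right\|_{\mathrm{HS}}\leq\widetilde{c}_3\bar\lambda^{\min\{r,1\}}(t+t_0)^{-\theta_2\min\{r,1\}-1}$. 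For the second, part (1) of Lemma \ref{lemma1} with $\beta=1/2$ (legitimate because $(t_0+1)^{\theta_1}\geq\bar\eta(\kappa^2+\bar\lambda)$ is assumed) yields
\[
\left\|C^{1/2}\prod_{j=t}^{T}(I-\eta_j(C+\lambda_j I))\right\|
\leq \frac{2\big(\kappa+(1/(2e))^{1/2}\big)}{1+\left(\sum_{j=t}^{T}\eta_j\right)^{1/2}}\exp\Big\{-\sum_{j=t}^{T}\eta_j\lambda_j\Big\}.
\]
Since $\sum_{j=t}^{T}\eta_j\geq\sum_{j=t+1}^{T}\eta_j$ and $\sum_{j=t}^{T}\eta_j\lambda_j\geq\sum_{j=t+1}^{T}\eta_j\lambda_j$, both factors on the right are dominated by their $\sum_{j=t+1}^{T}$ analogues.

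\textbf{Step 2 (apply Proposition \ref{prop1}).} Combining the three displays, the Hilbert–Schmidt norm inside $\T_3$ is at most
\[
c'\bar\lambda^{\min\{r,1\}}\sum_{t=1}^{T}\frac{(t+t_0)^{-\theta_2\min\{r,1\}-1}}{1+\left(\sum_{j=t+1}^{T}\eta_j\right)^{1/2}}\exp\Big\{-\sum_{j=t+1}^{T}\eta_j\lambda_j\Big\},
\]
with $c'=2\widetilde{c}_3\big(\kappa+(1/(2e))^{1/2}\big)$ a pure constant. This is exactly the left-hand side of Proposition \ref{prop1} with $\theta=\theta_2\min\{r,1\}+1$ and $v=1/2$. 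Its hypotheses all hold: $\theta_1+\theta_2=1$ and $T\geq t_0+1$ are assumed, and $\bar\eta\bar\lambda>\theta-1=\theta_2\min\{r,1\}$ is precisely the standing assumption. Since $v=1/2<1$, the third branch applies and gives a bound $\delta_1(T+t_0)^{-\theta+1-v(1-\theta_1)}=\delta_1(T+t_0)^{-\theta_2\min\{r,1\}-\frac12(1-\theta_1)}$. Using $1-\theta_1=\theta_2$, the exponent equals $-\theta_2\min\{r,1\}-\theta_2/2=\tfrac12\big(-2\theta_2\min\{r,1\}+\theta_1-1\big)$; squaring and multiplying by $6$ then yields $\T_3\leq c_3(T+t_0)^{-2\theta_2\min\{r,1\}+\theta_1-1}$ with $c_3=6\big(c'\bar\lambda^{\min\{r,1\}}\delta_1\big)^2$, which is independent of $T$ and $t_0$ because $\delta_1$ is (here it is essential that we use $\delta_1$ from Proposition \ref{prop1}, valid under $T\geq t_0+1$, rather than $\delta_2$ from Remark \ref{remark1}).

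\textbf{Main obstacle.} There is no genuinely new estimate required; the work is entirely bookkeeping. The two points that need care are: (i) the passage from $\sum_{j=t}^{T}$ to $\sum_{j=t+1}^{T}$ inside both the exponential and the denominator, so that Proposition \ref{prop1} applies verbatim (the $t=T$ term, with an empty sum, is consistent with this); and (ii) the substitution $\theta=\theta_2\min\{r,1\}+1$, $v=1/2$ into the $v<1$ branch, together with the observation that the standing hypothesis $\bar\eta\bar\lambda>\theta_2\min\{r,1\}$ is exactly the condition $\bar\eta\bar\lambda>\theta-1$ demanded by that proposition.
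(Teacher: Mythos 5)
Your proposal is correct and follows essentially the same route as the paper's proof: triangle inequality plus Lemma \ref{lemma3}, then Lemma \ref{lemma1}(1) with $\beta=1/2$, and finally Proposition \ref{prop1} with $\theta=\theta_2\min\{r,1\}+1$, $v=1/2$, closing with the same exponent arithmetic via $1-\theta_1=\theta_2$. Your explicit remark on passing from $\sum_{j=t}^{T}$ to $\sum_{j=t+1}^{T}$ is a small bookkeeping point the paper glosses over, but it does not change the argument.
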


	\begin{proof}
		Applying Lemma \ref{lemma3}, we have
            \begin{equation*}
                \begin{aligned}
                    &\left\|\sum_{t=1}^{T}(H_{\lambda_{t-1}}-H_{\lambda_{t}})C^{1/2}\prod_{j=t}^{T}(I-\eta_j(C+\lambda_j I))\right\|_{\mathrm{HS}}
                \\\leq&\widetilde{c}_3\bar{\lambda}^{\min\{r,1\}}\sum_{t=1}^{T}(t+t_0)^{-\theta_2\min\{r,1\}-1}\left\|C^{1/2}\prod_{j=t}^{T}(I-\eta_j(C+\lambda_j I))\right\|.
                \end{aligned}
            \end{equation*}
            Using Lemma \ref{lemma1} (1) with $\beta=\frac12$, the above inequality is further bounded as
		\begin{align*}
			&\left\|\sum_{t=1}^{T}(H_{\lambda_{t-1}}-H_{\lambda_{t}})C^{1/2}\prod_{j=t}^{T}(I-\eta_j(C+\lambda_j I))\right\|_{\mathrm{HS}}
			\\
			\leq&
			2(\kappa+(1/(2e))^{1/2})\widetilde{c}_3\bar{\lambda}^{\min\{r,1\}}\sum_{t=1}^{T}\exp\left\{-\sum_{j=t}^{T}\eta_{j}\lambda_j\right\}\frac{(t+t_0)^{-\theta_2\min\{r,1\}-1}}{1+\left(\sum_{j=t}^{T}\eta_{j}\right)^{1/2}}
			\\ \leq&
			2(\kappa+(1/(2e))^{1/2})\widetilde{c}_3\bar{\lambda}^{\min\{r,1\}}\delta_1(T+t_0)^{-\theta_2\min\{r,1\}-(1-\theta_1)/2},
		\end{align*}
            where in the last inequality we use Proposition \ref{prop1} with 
            $\theta=\theta_2\min\{r,1\}+1$ and $v=1/2<1$. As a consequence,
		\begin{equation*}
			\T_3=6\left\|\sum_{t=1}^{T}(H_{\lambda_{t-1}}-H_{\lambda_{t}})C^{1/2}\prod_{j=t}^{T}(I-\eta_j(C+\lambda_j I))\right\|^2_{\mathrm{HS}}
			\leq c_3(T+t_0)^{-2\theta_2\min\{r,1\}+\theta_1-1},
		\end{equation*}
	    where $c_3$ is a constant independent of $T$ and $t_0$. 
     
        The proof is then finished. \end{proof}

	\subsection{Bounding Sample Error}
         Let $\be_{z^0}[\xi]=\xi$ for any random variable $\xi$.
         The next proposition applies to the online setting.
	\begin{proposition} \label{prop5}
		Suppose that Assumption \ref{a3} holds with $0<s\leq1$. If $t_0\geq1$, $T\geq t_0+1$, $0<\theta_1<1$, $0<\theta_2<1$, $(t_0+1)^{\theta_1}\geq\bar\eta(\kappa^2+\bar\lambda)$, and the following condition holds for any 
        $t\in\mathbb{N}_T$:
            \begin{equation} \label{temp-2}
                \be_{z^{t-1}}\left\|(H_t-H^\dagger)\phi(x_t)\right\|_{\Y}^2\left(=\be_{z^{t-1}}\left[
		\left\|\left(H_{t}-H^{\dagger}\right)C^{\frac12}\right\|_{\mathrm{HS}}^2\right]\right)\leq M,
            \end{equation}
            where $M$ is independent of $T$. Then, the following bound holds for $\T_4$:
		\begin{align*}
			\T_4&=6\sqrt{c}\sum_{t=1}^{T}\eta_t^{2}\left(\sqrt{c}\be_{z^{t-1}}\left\|(H_t-H^\dagger)\phi(x_t)\right\|_{\Y}^2+\sigma^2\right)\mathrm{Tr}\left(C^{1+2\alpha}\prod_{j=t+1}^{T}(I-\eta_j(C+\lambda_j I))^2\right)
			\\ &\leq c_4\left(\sqrt{c}M+\sigma^2\right)
			\begin{cases}
				(T+t_0)^{-\theta_1}, & \text{ when } 2\alpha>s, \\
				(T+t_0)^{-\theta_1}\log(T+t_0), & \text{ when } 2\alpha=s, \\
				(T+t_0)^{-(1+s-2\alpha)\theta_1+s-2\alpha}, & \text{ when } 2\alpha<s,
			\end{cases}
		\end{align*}
		where $c_4=c_4(\bar\lambda,\bar\eta,\theta_1,\alpha,s,c)$ is a constant independent of $T$, $t_0$, and $M$.
	\end{proposition}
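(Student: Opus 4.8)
The plan is to decouple the stochastic factor in each summand from the deterministic trace factor, handle the former with the uniform bound \eqref{temp-2}, and control the latter by splitting $C^{1+2\alpha}=C^{s}C^{1+2\alpha-s}$ so that exactly $s$ powers of $C$ are absorbed by Assumption \ref{a3} while the leftover $1+2\alpha-s$ powers combine with the spectral contraction. First, since \eqref{temp-2} gives $\be_{z^{t-1}}\|(H_t-H^\dagger)\phi(x_t)\|_\Y^2\le M$ for every $t\in\bn_T$, we obtain at once
\[
\T_4\le 6\sqrt c\,(\sqrt c M+\sigma^2)\sum_{t=1}^{T}\eta_t^2\,\mathrm{Tr}\!\left(C^{1+2\alpha}\prod_{j=t+1}^{T}(I-\eta_j(C+\lambda_jI))^2\right),
\]
so everything reduces to bounding the displayed sum by a constant — depending only on $\bar\eta,\bar\lambda$ and the fixed parameters $\kappa,r,s$ — times the three claimed rates.

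Second, I would estimate each trace factor. Working in an orthonormal eigenbasis $\{e_k\}_{k\ge1}$ of the compact self-adjoint operator $C$ with eigenvalues $\{u_k\}_{k\ge1}$, and using $1+2\alpha-s\ge1-s\ge0$, one has
\[
\mathrm{Tr}\!\left(C^{1+2\alpha}\prod_{j=t+1}^{T}(I-\eta_j(C+\lambda_jI))^2\right)=\sum_{k\ge1}u_k^{s}\,u_k^{1+2\alpha-s}\prod_{j=t+1}^{T}(1-\eta_j(u_k+\lambda_j))^2\le \mathrm{Tr}(C^{s})\left\|C^{1+2\alpha-s}\prod_{j=t+1}^{T}(I-\eta_j(C+\lambda_jI))^2\right\|,
\]
where $\mathrm{Tr}(C^{s})<\infty$ by Assumption \ref{a3}. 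Applying Lemma \ref{lemma1}(3) with $\beta:=1+2\alpha-s\ge0$, and then $\exp\{-2\sum_{j=t+1}^{T}\eta_j\lambda_j\}\le\exp\{-\sum_{j=t+1}^{T}\eta_j\lambda_j\}$, bounds $\sum_{t=1}^{T}\eta_t^2\,\mathrm{Tr}(\cdots)$ by
\[
2\bar\eta^2\,\mathrm{Tr}(C^{s})\bigl(\kappa^{2\beta}+(\beta/2e)^\beta\bigr)\sum_{t=1}^{T}\exp\!\left\{-\sum_{j=t+1}^{T}\eta_j\lambda_j\right\}\frac{(t+t_0)^{-2\theta_1}}{1+\bigl(\sum_{j=t+1}^{T}\eta_j\bigr)^{\beta}}.
\]

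Third, the remaining sum is exactly the object controlled by Proposition \ref{prop1}, applied with $\theta=2\theta_1$ and $v=\beta=1+2\alpha-s$ (this is where the relation $\theta_1+\theta_2=1$ and the growth condition on $\bar\eta\bar\lambda$ required by Proposition \ref{prop1} enter; the degenerate case $\beta=0$, which forces $s=1$ and $\alpha=0$, is disposed of separately by the elementary estimate $\sum_{t=1}^{T}\eta_t^2\exp\{-\sum_{j>t}\eta_j\lambda_j\}\le c(T+t_0)^{1-2\theta_1}$). The trichotomy $v>1$, $v=1$, $v<1$ of Proposition \ref{prop1} is precisely $2\alpha>s$, $2\alpha=s$, $2\alpha<s$, and substituting $\theta=2\theta_1$ into the three bounds and simplifying the exponents — using $-\theta+\theta_1=-\theta_1$ and $-\theta+1-v(1-\theta_1)=-(1+s-2\alpha)\theta_1+s-2\alpha$ — produces the three claimed rates. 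Collecting $6\sqrt c$, $2\bar\eta^2\mathrm{Tr}(C^{s})(\kappa^{2\beta}+(\beta/2e)^\beta)$ and the constant $\delta_1$ from Proposition \ref{prop1} into a single $c_4=c_4(\bar\lambda,\bar\eta)$ — which is genuinely independent of $T$, $t_0$ and $M$, since $\delta_1$ has this property once $T\ge t_0+1$ — completes the proof.

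The main obstacle is the trace estimate of the second step. The naive bound $\mathrm{Tr}(C^{1+2\alpha}\prod(\cdot)^2)\le\mathrm{Tr}(C^{1+2\alpha})\,\|\prod(\cdot)^2\|$ discards the eigenvalue decay supplied by Assumption \ref{a3} and is far too weak for near-optimal rates; the correct move is to spend exactly $s$ powers of $C$ on the trace-class factor and to couple the leftover $1+2\alpha-s$ powers with the spectral contraction, which is what makes the exponent $\beta=1+2\alpha-s$ surface in the denominator and hence produces the clean split $2\alpha\gtrless s$. One must also use Lemma \ref{lemma1}(3) rather than Lemma \ref{lemma1}(2), because for $t$ near $T$ the partial sum $\sum_{j=t+1}^{T}\eta_j$ degenerates to $0$ and a factor $(\sum_{j=t+1}^{T}\eta_j)^{-\beta}$ would be meaningless; once this is in place, the only remaining work is the routine summation already packaged in Proposition \ref{prop1}.
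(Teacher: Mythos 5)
Your proposal is correct and follows essentially the same route as the paper: bound the stochastic factor uniformly by $M$, spend exactly $s$ powers of $C$ on $\mathrm{Tr}(C^s)$ and the remaining $1+2\alpha-s$ powers on the operator norm via Lemma \ref{lemma1}(3), then invoke Proposition \ref{prop1} with $\theta=2\theta_1$ and $v=1+2\alpha-s$ to obtain the three regimes $2\alpha\gtrless s$. Your separate elementary treatment of the degenerate case $1+2\alpha-s=0$ is a small extra precaution the paper's proof of this proposition glosses over (it handles that case explicitly only in later propositions), and it yields the same $(T+t_0)^{1-2\theta_1}$ rate, so nothing is lost.
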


	\begin{proof}
            Assumption \ref{a3} on $C$ guarantees that
            \[\mathrm{Tr}\left(C^{1+2\alpha}\prod_{j=t+1}^{T}(I-\eta_j(C+\lambda_j I))^2\right)\leq\mathrm{Tr}\left(C^s\right)\left\|C^{1+2\alpha-s}\prod_{j=t+1}^{T}(I-\eta_j(C+\lambda_j I))^2\right\|.\]
            Then we use (3) in Lemma \ref{lemma1} with $\beta=1+2\alpha-s$ and \eqref{temp-2} to bound the operator norm as
		\begin{align*}
			&\sum_{t=1}^{T}\eta_t^{2}\left(\sqrt{c}\be_{Z^{t-1}}\left\|(H_t-H^\dagger)\phi(x_t)\right\|_{\Y}^2+\sigma^2\right)\mathrm{Tr}\left(C^{1+2\alpha}\prod_{j=t+1}^{T}(I-\eta_j(C+\lambda_j I))^2\right)
			\\\leq&
			\sum_{t=1}^{T}\eta_t^{2}\left(\sqrt{c}M+\sigma^2\right)\mathrm{Tr}\left(C^s\right)\left\|C^{1+2\alpha-s}\prod_{j=t+1}^{T}(I-\eta_j(C+\lambda_j I))^2\right\|
			\\\leq&
			2\left(\kappa^{2+4\alpha-2s}+((1+2\alpha-s)/(2e))^{1+2\alpha-s}\right)\left(\sqrt{c}M+\sigma^2\right)\mathrm{Tr}\left(C^s\right)
			\\&
			\times\sum_{t=1}^{T}\exp\left\{-2\sum_{j=t+1}^{T}\eta_{j}\lambda_j\right\}
			\frac{\eta_t^{2}}{1+\left(\sum_{j=t+1}^{T}\eta_{j}\right)^{1+2\alpha-s}}.
		\end{align*}
            Now, applying Proposition \ref{prop1} with $\theta=2\theta_1$ and $v=1+2\alpha-s$, we get
            \begin{equation*}
                \begin{aligned}
			\T_4&=6\sqrt{c}\sum_{t=1}^{T}\eta_t^{2}\left(\sqrt{c}\be_{Z^{t-1}}\left\|(H_t-H^\dagger)\phi(x_t)\right\|_{\Y}^2+\sigma^2\right)\mathrm{Tr}\left(C^{1+2\alpha}\prod_{j=t+1}^{T}(I-\eta_j(C+\lambda_j I))^2\right)
			\\ &\leq c_4\left(\sqrt{c}M+\sigma^2\right)
			\begin{cases}
				(T+t_0)^{-\theta_1}, & \text{ when } 2\alpha>s, \\
				(T+t_0)^{-\theta_1}\log(T+t_0), & \text{ when } 2\alpha=s, \\
				(T+t_0)^{-(1+s-2\alpha)\theta_1+s-2\alpha}, & \text{ when } 2\alpha<s,
			\end{cases}
		\end{aligned}
            \end{equation*}
            where $c_4=12\sqrt{c}\left(\kappa^{2+4\alpha-2s}+((1+2\alpha-s)/(2e))^{1+2\alpha-s}\right)\mathrm{Tr}\left(C^s\right)\delta_1\bar\eta^2$.

            The proof is thus complete.
	\end{proof}

        The next proposition is used to bound $\T_4$ in the finite-horizon setting. We define $0^0:=0$ for convenience.

        \begin{proposition} \label{prop12}
        Let $v\geq0$, $\bar\eta=\eta_1T^{-\theta_3}$, $0<\theta_3<1$, $\theta_4>0$, and $\eta_1$, $\lambda_1$ be constants independent of $T$. Then, there exists a constant $\delta_3=\delta_3(\lambda_1,\eta_1,\theta_3,\theta_4,v)$, independent of $T$, such that for any $T\geq2$,
            \begin{equation*}
                \begin{aligned}
                    \sum_{t=0}^{T-1}\frac{\exp\left\{-2\lambda_1\eta_1 t T^{-\theta_4-\theta_3}\right\}}{1+\left(t\bar\eta\right)^{v}}
                    \leq\delta_3
                    \begin{cases}
                    T^{v\theta_3+(1-v)\min\{1,\theta_3+\theta_4\}}, & \text{ when } 0\leq v<1, \\
                    T^{\theta_3}\log T, & \text{ when } v=1, \\
                    T^{\theta_3}, & \text{ when } v>1.
                    \end{cases}
                \end{aligned}                
            \end{equation*}
        \end{proposition}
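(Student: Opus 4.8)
The plan is to estimate the sum $\sum_{t=0}^{T-1} \frac{\exp\{-2\lambda_1\eta_1 t T^{-\theta_4-\theta_3}\}}{1+(t\bar\eta)^v}$ by splitting the range of summation according to whether the exponential factor or the polynomial denominator dominates, and then comparing each piece with an integral. Since $\bar\eta = \eta_1 T^{-\theta_3}$ is a constant for fixed $T$, the denominator behaves like $(t\eta_1 T^{-\theta_3})^v = \eta_1^v t^v T^{-v\theta_3}$ for large $t$. Set $\mu := 2\lambda_1\eta_1 T^{-\theta_4-\theta_3}$, so that $\mu^{-1} \asymp T^{\theta_3+\theta_4}$ is the natural decay scale of the exponential. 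The key observation is that the effective cutoff for the whole sum is at $t^* \asymp \min\{T, \mu^{-1}\} = \min\{T, cT^{\theta_3+\theta_4}\}$: beyond $t = T$ there are no more terms, and beyond $t \asymp \mu^{-1}$ the exponential kills everything. Writing $m := \min\{1, \theta_3+\theta_4\}$, we have $t^* \asymp T^{m}$ up to constants.

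I would carry out the estimate as follows. First, bound the sum by the corresponding integral, $\sum_{t=0}^{T-1} \frac{e^{-\mu t}}{1+(\bar\eta t)^v} \leq 1 + \int_0^{\infty} \frac{e^{-\mu x}}{1+(\bar\eta x)^v}\,dx$. Second, substitute $u = \bar\eta x$ to get $\frac{1}{\bar\eta}\int_0^\infty \frac{e^{-(\mu/\bar\eta)u}}{1+u^v}\,du$, where $\frac1{\bar\eta} = \eta_1^{-1}T^{\theta_3}$ and $\mu/\bar\eta = 2\lambda_1 T^{-\theta_4}$. Third, analyze $I(a) := \int_0^\infty \frac{e^{-au}}{1+u^v}\,du$ as $a = 2\lambda_1 T^{-\theta_4} \to 0$: when $v > 1$ the integral $\int_0^\infty (1+u^v)^{-1}\,du$ converges, so $I(a) = O(1)$ and we obtain the bound $T^{\theta_3}$; when $v = 1$, split at $u = 1/a$ to find $I(a) \asymp \log(1/a) \asymp \log T$, giving $T^{\theta_3}\log T$; when $0 \le v < 1$, the tail $\int (1+u^v)^{-1}$ diverges and the exponential truncates it at $u \asymp 1/a$, so $I(a) \asymp a^{-(1-v)} = T^{(1-v)\theta_4}$ (for $v=0$ this is $I(a) \asymp a^{-1} = T^{\theta_4}$, consistent with the convention $0^0 := 0$ in the denominator making the $v=0$ case just $\sum e^{-\mu t} \asymp \mu^{-1}$). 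Multiplying by $\frac1{\bar\eta} = \eta_1^{-1}T^{\theta_3}$ gives $T^{\theta_3 + (1-v)\theta_4}$ in the case $0 \le v < 1$. Finally, I reconcile this with the stated exponents: the bound must also never exceed the trivial one $\sum_{t=0}^{T-1}(\text{terms} \le 1) = T$ contribution capped appropriately, and more precisely in the regime $0\le v<1$ one should take the minimum of $T^{\theta_3+(1-v)\theta_4}$ and $T^{v\theta_3 + (1-v)\cdot 1}$ (the latter coming from truncating the integral at $x = T$ rather than at $x = 1/\mu$), which together yield the claimed exponent $v\theta_3 + (1-v)\min\{1,\theta_3+\theta_4\}$. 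Collecting constants — all of which depend only on $\eta_1,\lambda_1,v,\theta_3,\theta_4$ and not on $T$ — into a single $\delta_3$ completes the argument.

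The main obstacle I anticipate is the bookkeeping in the case $0 \le v < 1$, where one must correctly identify which of the two competing cutoffs ($x \asymp T$ from the finite sum length versus $x \asymp \mu^{-1} = T^{\theta_3+\theta_4}$ from the exponential) is binding, and verify that in both sub-cases the resulting exponent collapses to the unified form $v\theta_3 + (1-v)\min\{1,\theta_3+\theta_4\}$. Concretely: if $\theta_3+\theta_4 \ge 1$ then $\mu^{-1} \gtrsim T \ge $ summation length, so the sum is controlled by $\int_0^T \frac{dx}{1+(\bar\eta x)^v} \asymp T \cdot (\bar\eta T)^{-v} + O(1) = \eta_1^{-v}T^{1-v}T^{v\theta_3} = \eta_1^{-v} T^{v\theta_3 + (1-v)}$; if $\theta_3+\theta_4 < 1$ then the exponential cuts in first and the earlier integral analysis gives $T^{\theta_3+(1-v)\theta_4} = T^{v\theta_3 + (1-v)(\theta_3+\theta_4)}$. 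In both cases the exponent is exactly $v\theta_3 + (1-v)\min\{1,\theta_3+\theta_4\}$, so the formula is consistent; the $v=1$ and $v>1$ cases are more routine since the denominator alone controls the tail. A minor additional care is needed at $t=0$ where the term equals $1$ (using $0^0:=0$), which is absorbed into the additive $1$ in the integral comparison and hence into $\delta_3$.
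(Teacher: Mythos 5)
Your proposal is correct and follows essentially the same route as the paper: an integral comparison after the substitution $u=\bar\eta x$, with the case $0\le v<1$ resolved by playing the exponential cutoff at scale $T^{\theta_3+\theta_4}$ against the trivial cutoff at the summation length $T$ — exactly the two bounds whose minimum is the stated exponent $v\theta_3+(1-v)\min\{1,\theta_3+\theta_4\}$. The only difference is organizational: you retain the exponential inside a single Laplace-type integral $I(a)=\int_0^\infty e^{-au}(1+u^v)^{-1}\,du$ with $a=2\lambda_1T^{-\theta_4}$ and take the minimum of two global bounds, whereas the paper treats $v=0$ separately via the geometric series, drops the exponential for $v>0$, and recovers the sharper exponent in the sub-case $0<v<1$, $\theta_3+\theta_4<1$ by splitting the sum at $t=T^{\theta_3+\theta_4}$; the underlying estimates coincide.
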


        \begin{proof}
            We divide the proof into three cases: $v=0$, $v>0$, and $0<v<1$ with $\theta_3+\theta_4<1$. The third case is an improvement upon the analysis in the second case. \\
            \textbf{Case 1: }$v=0$\\ We first apply the inequality $1-\exp\{-x\}\geq\exp\{-2\lambda_1\eta_1\}x$ for $0\leq x\leq 2\lambda_1\eta_1$, yielding:
                \begin{equation*}
                    \begin{aligned}
                        \sum_{t=0}^{T-1}\frac{\exp\left\{-2\lambda_1\eta_1 t T^{-\theta_4-\theta_3}\right\}}{1+\left(t\bar\eta\right)^{v}} &=
                    \frac{1-\exp\left\{-2\lambda_1\eta_1 T^{1-\theta_4-\theta_3}\right\}}{1-\exp\left\{-2\lambda_1\eta_1 T^{-\theta_4-\theta_3}\right\}}
                    \\ &\leq\frac{\exp\{2\lambda_1\eta_1\}}{2\lambda_1\eta_1}T^{\theta_3+\theta_4}\l(1-\exp\left\{-2\lambda_1\eta_1T^{1-\theta_4-\theta_3}\right\}\r).
                    \end{aligned}
                \end{equation*}
                When $\theta_3+\theta_4\leq1$, this term simplifies as 
                \[
                \sum_{t=0}^{T-1}\frac{\exp\left\{-2\lambda_1\eta_1 t T^{-\theta_4-\theta_3}\right\}}{1+\left(t\bar\eta\right)^{v}} \leq\frac{\exp\{2\lambda_1\eta_1\}}{2\lambda_1\eta_1}T^{\theta_3+\theta_4}.
                \]
                When $\theta_3+\theta_4>1$, we use the inequality $1-\exp\{-x\}\leq x$ to obtain
                \[
                \sum_{t=0}^{T-1}\frac{\exp\left\{-2\lambda_1\eta_1 t T^{-\theta_4-\theta_3}\right\}}{1+\left(t\bar\eta\right)^{v}} \leq \exp\{2\lambda_1\eta_1\}T.
                \]
                \textbf{Case 2: }$v>0$ \\
                We bound the summation as  
            \begin{equation} \label{temp75}
                \begin{aligned}
                    \sum_{t=0}^{T-1}\frac{\exp\left\{-2\lambda_1\eta_1 t T^{-\theta_4-\theta_3}\right\}}{1+\left(t\bar\eta\right)^{v}}
                    \leq& 1+\int_{0}^{T-1}\frac{1}{1+\left(t\bar\eta\right)^{v}}dt
                    \leq 1+\frac{1}{\bar\eta}\int_{0}^{\bar\eta(T-1)}\frac{1}{1+t^{v}}dt
                    \\ \leq& 1+\frac{1}{\bar\eta}\left(1+\int_{1}^{\bar\eta(T-1)}t^{-v}dt\right)
                    \\ \leq& 1+\frac{1}{\bar\eta}+\frac{1}{\bar\eta}
                    \begin{cases}
                        \frac{\left(\bar\eta T\right)^{1-v}}{1-v}, & \text{ when } 0<v<1, \\
                        \log(\bar\eta T), & \text{ when } v=1, \\
                        \frac{1}{v-1}, & \text{ when } v>1,
                    \end{cases}
                    \\ \leq& \delta_3
                    \begin{cases}
                        T^{1-v+\theta_3v}, & \text{ when } 0<v<1, \\
                        T^{\theta_3}\log T, & \text{ when } v=1, \\
                        T^{\theta_3}, & \text{ when } v>1,
                    \end{cases}
                \end{aligned}
            \end{equation}
            where $\delta_3$ is a constant independent of $T$. \\
            \textbf{Case 3: }$0<v<1$ with $\theta_3+\theta_4<1$ \\
            In this case, a more refined estimation can be achieved compared to Case 2.
            We split the summation into three parts,
            \begin{equation} \label{temp76}
                \begin{aligned}
                    &\sum_{t=0}^{T-1}\frac{\exp\left\{-2\lambda_1\eta_1 t T^{-\theta_4-\theta_3}\right\}}{1+\left(t\bar\eta\right)^{v}}
                    \\ \leq&1+\sum_{t=1}^{T^{\theta_3+\theta_4}}\frac{1}{1+\left(t\bar\eta\right)^{v}}
                    +\sum_{t=T^{\theta_3+\theta_4}}^{T}\frac{\exp\left\{-2\lambda_1\eta_1 t T^{-\theta_4-\theta_3}\right\}}{1+\left(t\bar\eta\right)^{v}}
                    \\=:& 1+\A_1+\A_2.
                \end{aligned}
            \end{equation}
            We estimate $\A_1$ in the same manner as in \eqref{temp75}. Noting that $\bar\eta T^{\theta_3+\theta_4}=\eta_1 T^{\theta_4}$, we obtain 
            \begin{equation*}
                \begin{aligned}
                    \A_1&\leq \int_{0}^{T^{\theta_3+\theta_4}}\frac{1}{1+\left(t\bar\eta\right)^{v}}dt
                    \leq\frac{1}{\bar\eta}+\frac{1}{\bar\eta}\int_{1}^{\bar\eta T^{\theta_3+\theta_4}}
                    \frac{1}{t^{v}}dt
                    \\&\leq \frac{1}{\eta_1} T^{\theta_3}\left(1+\frac{\left(\eta_1T^{\theta_4}\right)^{1-v}}{1-v}\right)
                    \leq \frac{1}{\eta_1}\left(1+\frac{\eta_1^{1-v}}{1-v}\right)T^{\theta_4(1-v)+\theta_3}.
                \end{aligned}
            \end{equation*}
            Now, we estimate $\A_2$. Since $T^{\theta_3+\theta_4}-1\geq kT^{\theta_3+\theta_4}$ for $T\geq2$, where $k=1-2^{-\theta_4-\theta_3}$, it follows that 
            \begin{equation*}
                \begin{aligned}
                    \A_2&\leq
                    \int_{kT^{\theta_3+\theta_4}}^{T}\frac{1}{1+\left(t\bar\eta\right)^{v}}\exp\left\{-2\lambda_1\eta_1 t T^{-\theta_4-\theta_3}\right\}dt.
                \end{aligned}
            \end{equation*}
            Letting $x=tT^{-\theta_4-\theta_3}$, we rewrite the above as
            \begin{equation*}
                \begin{aligned}
                    \A_2\leq\eta_1^{-v}T^{(1-v)\theta_4+\theta_3}\int_{k}^{+\infty}x^{-v}\exp\{-2\lambda_1\eta_1x\}dx.
                \end{aligned}
            \end{equation*}
            Since the integral is finite and satisfies
            \[
            \int_{k}^{+\infty}x^{-v}\exp\{-2\lambda_1\eta_1x\}dx\leq k^{-v}
            \frac{\exp\{-2\lambda_1\eta_1k\}}{2\lambda_1\eta_1}<\infty,
            \]
            we combine the bounds for $\A_1$ and $\A_2$, and use \eqref{temp76}, to conclude that
            \begin{equation*}
                \sum_{t=0}^{T-1}\frac{\exp\left\{-2\lambda_1\eta_1 t T^{-\theta_4-\theta_3}\right\}}{1+\left(t\bar\eta\right)^{v}}
                \leq\l(1+\frac{1}{\eta_1}\left(1+\frac{\eta_1^{1-v}}{1-v}\right)+\eta_1^{-v}k^{-v}
            \frac{\exp\{-2\lambda_1\eta_1k\}}{2\lambda_1\eta_1}\r)T^{(1-v)\theta_4+\theta_3}.
            \end{equation*}                

            We then finish the proof.
        \end{proof}

        The following proposition concerns the finite-horizon setting. Before presenting the result, we highlight the following distinctions: in this setting, we have $t_0 = \theta_1 = \theta_2 = 0$, the step size is fixed as $\eta_t \equiv \bar\eta = \eta_1 T^{-\theta_3}$, the regularization parameter is set to be $\lambda_t \equiv \bar\lambda = \lambda_1 T^{-\theta_4}$, where $\eta_1$ and $\lambda_1$ are constants independent of $T$.
        By contrast, under the online setting in the unified form \eqref{setting}, $\bar\eta$ and $\bar\lambda$ are fixed constants, while $\eta_t$ and $\lambda_t$ depend directly on the current iteration $t$, rather than on a prescribed total number of iterations.

        \begin{proposition} \label{prop9}
            Suppose that Assumption \ref{a3} holds with $0<s\leq1$. Let $\alpha\in[0,\frac12]$, and set the parameters $t_0=\theta_1=\theta_2=0$, $\eta_t\equiv\bar\eta=\eta_1 T^{-\theta_3}$ with $0<\theta_3<1$ and $\lambda_t\equiv\bar\lambda=\lambda_1 T^{-\theta_4}$ with $\theta_4>0$.
            Additionally, assume that $T\geq2$, $\eta_1(\kappa^2+\lambda_1)\leq1$ and there exists a constant $\widetilde M$ independent of $T$, such that for all $t\in\mathbb{N}_T$,
            \begin{equation} \label{temp20}
                \be_{z^{t-1}}\left\|(H_t-H^\dagger)\phi(x_t)\right\|_{\Y}^2\left(=\be_{z^{t-1}}\left[
		\left\|\left(H_{t}-H^{\dagger}\right)C^{\frac12}\right\|_{\mathrm{HS}}^2\right]\right)\leq \widetilde M.
            \end{equation}
   Recall that
            \begin{align*}
			\T_4=6\sqrt{c}\sum_{t=1}^{T}\eta_t^{2}\left(\sqrt{c}\be_{z^{t-1}}\left\|(H_t-H^\dagger)\phi(x_t)\right\|_{\Y}^2+\sigma^2\right)\mathrm{Tr}\left(C^{1+2\alpha}\prod_{j=t+1}^{T}(I-\eta_j(C+\lambda_j I))^2\right).
		\end{align*}
            Then, the following bound holds for $\T_4$:
            \begin{equation*} 
                \T_4\leq
                \widetilde{c_4}\begin{cases}
                    T^{-(1-2\alpha+s)\theta_3+(s-2\alpha)\min\{1,\theta_3+\theta_4\}}, & \text{ when } 2\alpha<s\leq 1+2\alpha, \\
                    T^{-\theta_3}\log T, & \text{ when } 2\alpha=s, \\
                    T^{-\theta_3}, & \text{ when } 2\alpha>s,
                \end{cases}
            \end{equation*}
            where $\widetilde{c_4}=\widetilde{c_4}(\lambda_1,\eta
            _1,\theta_3,\theta_4,\alpha,s,\widetilde M,c)$ is a constant independent of $T$.
        \end{proposition}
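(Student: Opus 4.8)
The plan is to follow the same scheme as in Proposition \ref{prop5}, but now exploiting the constant-step-size, constant-regularization structure and invoking Proposition \ref{prop12} in place of Proposition \ref{prop1}. First I would use Assumption \ref{a3} to peel off a factor of $\mathrm{Tr}(C^s)$, writing
\begin{equation*}
\mathrm{Tr}\!\left(C^{1+2\alpha}\prod_{j=t+1}^{T}(I-\eta_j(C+\lambda_j I))^2\right)
\leq \mathrm{Tr}(C^s)\left\|C^{1+2\alpha-s}\prod_{j=t+1}^{T}(I-\eta_j(C+\lambda_j I))^2\right\|,
\end{equation*}
which is legitimate precisely when $1+2\alpha-s\geq 0$, i.e. $s\leq 1+2\alpha$; this is why the first branch of the claimed bound carries that restriction (when $2\alpha>s$ the exponent is positive automatically, and when $2\alpha=s$ one instead bounds $C^{1+2\alpha-s}=C$ by $\kappa^2$). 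Then I would apply part (3) of Lemma \ref{lemma1} with $\beta=1+2\alpha-s$ and $l=t+1$, $m=T$, noting that with constant step size $\sum_{j=t+1}^{T}\eta_j=(T-t)\bar\eta$ and $\sum_{j=t+1}^{T}\eta_j\lambda_j=(T-t)\bar\eta\bar\lambda=(T-t)\eta_1\lambda_1 T^{-\theta_3-\theta_4}$, to get
\begin{equation*}
\left\|C^{1+2\alpha-s}\prod_{j=t+1}^{T}(I-\eta_j(C+\lambda_j I))^2\right\|
\leq \frac{2\big(\kappa^{2(1+2\alpha-s)}+((1+2\alpha-s)/(2e))^{1+2\alpha-s}\big)}{1+((T-t)\bar\eta)^{1+2\alpha-s}}\exp\{-2(T-t)\eta_1\lambda_1 T^{-\theta_3-\theta_4}\}.
\end{equation*}

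Next I would substitute this into the definition of $\T_4$, bound the bracket $\sqrt c\,\be_{z^{t-1}}\|(H_t-H^\dagger)\phi(x_t)\|_\Y^2+\sigma^2$ uniformly by $\sqrt c\,\widetilde M+\sigma^2$ using the hypothesis \eqref{temp20}, pull out $\eta_t^2=\bar\eta^2=\eta_1^2 T^{-2\theta_3}$, and reindex the sum by $\tau=T-t$ so that it becomes $\sum_{\tau=0}^{T-1}\frac{\exp\{-2\eta_1\lambda_1\tau T^{-\theta_3-\theta_4}\}}{1+(\tau\bar\eta)^{1+2\alpha-s}}$. This is exactly the sum controlled by Proposition \ref{prop12} with $v=1+2\alpha-s$ (and with $\lambda_1$ here playing the role of $\lambda_1$ there, up to the harmless relabeling of the constant $2\lambda_1\eta_1$). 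Reading off the three cases of Proposition \ref{prop12}: when $v>1$, i.e. $2\alpha>s$, the sum is $O(T^{\theta_3})$, so $\T_4=O(T^{-2\theta_3}\cdot T^{\theta_3})=O(T^{-\theta_3})$; when $v=1$, i.e. $2\alpha=s$, the sum is $O(T^{\theta_3}\log T)$, giving $\T_4=O(T^{-\theta_3}\log T)$; and when $0\leq v<1$, i.e. $2\alpha<s$, the sum is $O(T^{v\theta_3+(1-v)\min\{1,\theta_3+\theta_4\}})$, so after multiplying by $T^{-2\theta_3}$ and substituting $v=1+2\alpha-s$ one obtains the exponent $-2\theta_3+(1+2\alpha-s)\theta_3+(s-2\alpha)\min\{1,\theta_3+\theta_4\}=-(1-2\alpha+s)\theta_3+(s-2\alpha)\min\{1,\theta_3+\theta_4\}$, matching the first branch. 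Collecting the absolute constants ($6\sqrt c$, the factor from Lemma \ref{lemma1}, $\mathrm{Tr}(C^s)$, $\delta_3$, $\eta_1^2$, and $\sqrt c\,\widetilde M+\sigma^2$) into a single $\widetilde{c_4}$ independent of $T$ completes the argument.

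The one hypothesis that must be checked to license Lemma \ref{lemma1} is $1-\eta_t(x+\lambda_t)\geq 0$ for all $0\leq x\leq\kappa^2$, which here reads $\eta_1 T^{-\theta_3}(\kappa^2+\lambda_1 T^{-\theta_4})\leq 1$; this follows from the stated condition $\eta_1(\kappa^2+\lambda_1)\leq 1$ since $T^{-\theta_3},T^{-\theta_4}\leq 1$. The only real subtlety — and the step I expect to require the most care — is the case split on the sign of $1+2\alpha-s$: the clean factorization through $\mathrm{Tr}(C^s)$ only works when $s\leq 1+2\alpha$, so the borderline and reverse regimes ($2\alpha=s$ and $2\alpha>s$) must be handled by bounding the residual power of $C$ directly by a power of $\kappa^2$ before applying Lemma \ref{lemma1}, and one must double-check that Proposition \ref{prop12} is being invoked with the correct value of $v=1+2\alpha-s\geq 0$ in each regime (in particular $v=0$ exactly when $s=1+2\alpha$, which is the extreme endpoint of the first branch and is covered by Case 1 of Proposition \ref{prop12}). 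Everything else is bookkeeping.
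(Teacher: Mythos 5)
Your proposal matches the paper's own proof essentially step for step: peel off $\mathrm{Tr}(C^s)$ via Assumption \ref{a3}, apply Lemma \ref{lemma1}(3) with $\beta=1+2\alpha-s$ (handling the degenerate exponent $1+2\alpha-s=0$ separately by bounding the product's norm directly), bound the bracket uniformly by $\sqrt{c}\,\widetilde M+\sigma^2$ using \eqref{temp20}, reindex by $T-t$, and invoke Proposition \ref{prop12} with $v=1+2\alpha-s$, yielding exactly the stated exponents. The only blemish is your parenthetical suggesting the case $2\alpha=s$ needs a crude $\kappa^2$ bound — it does not, since then $v=1>0$ and Lemma \ref{lemma1}(3) applies directly, which is precisely how you handle it later — so the argument as actually executed is correct and the same as the paper's.
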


        \begin{proof}
            Applying the assumed condition \eqref{temp20} and Assumption \ref{a3}, we get
            \begin{equation*} 
                \begin{aligned}
                    \T_4\leq&
                    6\sqrt{c}\left(\sqrt{c}\widetilde M+\sigma^2\right)\bar\eta^2\sum_{t=1}^{T}\mathrm{Tr}\left(C^{1+2\alpha}\prod_{j=t+1}^{T}(I-\eta_j(C+\lambda_j I))^2\right)
                    \\ \leq& 6\sqrt{c}\left(\sqrt{c}\widetilde M+\sigma^2\right)\mathrm{Tr}(C^s)\bar\eta^2
                    \sum_{t=1}^{T}\left\|C^{1+2\alpha-s}\prod_{j=t+1}^{T}(I-\eta_j(C+\lambda_j I))^2\right\|.
                \end{aligned}
            \end{equation*}
            If $1+2\alpha-s>0$, applying Lemma \ref{lemma1} (3) for $1\leq t\leq T-1$, we obtain the following estimate, which also holds for $t=T$,
            \[
            \left\|C^{1+2\alpha-s}\prod_{j=t+1}^{T}(I-\eta_j(C+\lambda_j I))^2\right\| \leq
            \exp\left\{-2(T-t)\bar\lambda\bar\eta\right\}
            \frac{2(\kappa^{2(1+2\alpha-s)}+(\frac{1+2\alpha-s}{2e})^{1+2\alpha-s})}{1+\left((T-t)\bar\eta\right)^{1+2\alpha-s}}.
            \]
            If $1+2\alpha-s=0$, we derive the bound
            \[
            \left\|C^{1+2\alpha-s}\prod_{j=t+1}^{T}(I-\eta_j(C+\lambda_j I))^2\right\| \leq \kappa^{2(1+2\alpha-s)}\exp\left\{-2(T-t)\bar\lambda\bar\eta\right\}
            \]
            for any $1\leq t\leq T$.
            Thus, based on the above estimates, we obtain the bound for $\T_4$:
            \begin{equation*}
                \begin{aligned}
                    \T_4\lesssim \bar\eta^2\sum_{t=0}^{T-1}\frac{\exp\left\{-2t\bar\lambda\bar\eta\right\}}{1+\left(t\bar\eta\right)^{1+2\alpha-s}},
                \end{aligned}
            \end{equation*}
            where we use the notation $\lesssim$ to omit constants independent of $T$ and $t$ for simplicity, indicating an inequality up to a multiplicative constant.

            Since $\bar\eta=\eta_1T^{-\theta_3}$ and  $\bar{\lambda} = \lambda_1 T^{-\theta_4}$, and applying Proposition \ref{prop12} with $v=1+2\alpha-s$, we obtain
            \begin{equation*}
                \begin{aligned}
                    \T_4\leq\widetilde{c_4}
                    \begin{cases}
                    T^{-(1-2\alpha+s)\theta_3+(s-2\alpha)\min\{1,\theta_3+\theta_4\}}, & \text{ when } 2\alpha<s\leq 1+2\alpha, \\
                    T^{-\theta_3}\log T, & \text{ when } 2\alpha=s, \\
                    T^{-\theta_3}, & \text{ when } 2\alpha>s,
                \end{cases}
                \end{aligned}
            \end{equation*}
            where $\widetilde{c_4}:=12(\kappa^{2(1+2\alpha-s)}+(\frac{1+2\alpha-s}{2e})^{1+2\alpha-s})\sqrt{c}\left(\sqrt{c}\widetilde M+\sigma^2\right)\mathrm{Tr}(C^s)\eta_1^2\delta_3$ is a constant independent of $T$.

            The proof is then complete.
\end{proof}

\subsection{Key Bounds for Estimating Prediction Error}
     In this subsection, we establish the key bounds for estimating the prediction error, specifically \eqref{temp-2} in Proposition \ref{prop5} and \eqref{temp20} in Proposition \ref{prop9}. The following proposition pertains to the online setting.
	\begin{proposition} \label{prop6}
		Under Assumption \ref{a2}, Assumption \ref{a3} and Assumption \ref{a4}, if $\theta_1+\theta_2=1$, $t_0\geq1$, $\bar\eta\bar\lambda>\theta_2\min\{r,1\}$, $t_0\geq\exp\{\frac{1}{\theta_1}\}$, and $(t_0+1)^{\theta_1}\geq\bar{\eta}(\kappa^2+\bar{\lambda})$, then there exists a constant $M=M(\bar\eta,\bar\lambda,t_0,\theta_1,r,\sigma^2,\|S^\dagger\|_{\mathrm{HS}},s,c)$ independent of $t$, such that
		\begin{equation} \label{temp15}
		    \be_{z^{t-1}}\left[
				\left\|\left(H_{t}-H^{\dagger}\right)C^{1/2}\right\|_{\mathrm{HS}}^2\right]\leq M, \quad \forall t \geq 1.
		\end{equation}
	\end{proposition}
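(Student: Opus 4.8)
The plan is a bootstrapping (self-consistency) argument built on the error decomposition of Proposition~\ref{Proposition error 1} specialized to $\alpha=\tfrac12$. Abbreviate $a_t:=\be_{z^{t-1}}\big[\|(H_t-H^\dagger)C^{1/2}\|_{\mathrm{HS}}^2\big]$; by the Fourier-expansion identity behind \eqref{form2} one has $a_t=\be_{z^{t-1}}\|(H_t-H^\dagger)\phi(x_t)\|_\Y^2$, which is exactly the quantity constrained in hypothesis \eqref{temp-2} of Proposition~\ref{prop5}. We must exhibit a single $M$, independent of $t$, with $a_t\le M$ for all $t\ge1$. Applying Proposition~\ref{Proposition error 1} with $\alpha=\tfrac12$ and horizon $T=t-1$ gives $a_t\le \T_1+\T_2+\T_3+\T_4$, in which $\T_4$ is assembled from $a_1,\dots,a_{t-1}$; this circular dependence is the heart of the matter.

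First I would dispose of finiteness and small indices. Rewriting \eqref{iter} as $H_{t+1}=H_t\big(I-\eta_t\,\phi(x_t)\otimes\phi(x_t)-\eta_t\lambda_t I\big)+\eta_t\,y_t\otimes\phi(x_t)$ and using the standing hypothesis $(t_0+1)^{\theta_1}\ge\bar\eta(\kappa^2+\bar\lambda)$, which forces $\eta_t(\kappa^2+\lambda_t)\le1$ and hence $\|I-\eta_t\,\phi(x_t)\otimes\phi(x_t)-\eta_t\lambda_t I\|\le1$, one obtains the almost-sure bound $\|H_t\|_{\mathrm{HS}}\le\kappa\sum_{j=1}^{t-1}\eta_j\|y_j\|_\Y$. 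Since $\be\|y\|_\Y^2\le(\kappa\|H^\dagger\|_{\mathrm{HS}}+\sigma)^2<\infty$, Cauchy--Schwarz gives $\be\|H_t\|_{\mathrm{HS}}^2<\infty$, so every $a_t$ is finite. Next, for all $t\ge t_0+2$ (so that $T=t-1\ge t_0+1$), Propositions~\ref{prop2}, \ref{prop3} and \ref{prop4}---the latter two invoking $\theta_1+\theta_2=1$, $\bar\eta\bar\lambda>\theta_2\min\{r,1\}$ and $(t_0+1)^{\theta_1}\ge\bar\eta(\kappa^2+\bar\lambda)$---bound $\T_1$, $\T_2$, $\T_3$ by expressions that decay in $t$; in particular $\T_1+\T_2+\T_3\le A$ for a constant $A$ independent of $t$.

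For $\T_4$ I would use Proposition~\ref{prop5} with $\alpha=\tfrac12$, so that $2\alpha=1\ge s$ for every admissible $s\in(0,1]$: under the inductive hypothesis that $a_{t'}\le M$ for all $t'\le T=t-1$ (precisely condition \eqref{temp-2}), it yields $\T_4\le c_4(\sqrt{c}\,M+\sigma^2)\,g(t)$, where $g(t)=(t-1+t_0)^{-\theta_1}$ if $s<1$ and $g(t)=(t-1+t_0)^{-\theta_1}\log(t-1+t_0)$ if $s=1$. Because $\theta_1>0$, $g(t)\to0$, hence $g$ is bounded, say by $G$, and there is an index $N\ge t_0+2$ with $c_4\sqrt{c}\,g(t)\le\tfrac12$ for all $t\ge N$. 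I would then set $M:=\max\Big\{\max_{1\le t\le N}a_t,\ 2\big(A+c_4\sigma^2 G\big)\Big\}$, which is finite by the previous paragraph, and verify $a_t\le M$ for all $t$ by induction on $t$: the range $t\le N$ is immediate, while for $t>N$,
\[
a_t\le A+\T_4\le A+c_4\sqrt{c}\,M\,g(t)+c_4\sigma^2 G\le \big(A+c_4\sigma^2 G\big)+\tfrac12 M\le M .
\]

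The only genuine obstacle is the circular dependence of $\T_4$ on the earlier prediction errors $a_{t'}$; it is resolved by the decay $g(t)\to0$, which is itself a consequence of the choice $\alpha=\tfrac12\ge s/2$ in Proposition~\ref{prop5} combined with $\theta_1>0$. This is exactly what lets the induction close and propagates the prediction-error regularity uniformly in $t$, giving the asserted constant $M$.
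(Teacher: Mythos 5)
Your proof is correct, and its skeleton is the same bootstrap that the paper uses: apply Proposition \ref{Proposition error 1} with $\alpha=\tfrac12$, bound $\T_1,\T_2,\T_3$ uniformly by Propositions \ref{prop2}--\ref{prop4}, and close an induction in which $\T_4$ is controlled by Proposition \ref{prop5} under the hypothesis that the earlier prediction errors are at most $M$. Where you genuinely differ is in how the induction is closed and how the burn-in indices are handled. The paper makes the contraction coefficient small by taking $t_0$ large — it uses $t_0\geq\exp\{1/\theta_1\}$ and $c_4\sqrt{c}\,t_0^{-\theta_1}\log t_0<1$ (conditions that appear in Theorem \ref{thm1}, not in the statement of Proposition \ref{prop6}) so that the $\T_4$ coefficient is already $<1$ from the first admissible step, and it controls the indices $t\le \lfloor t_0\rfloor+1$ through an explicitly defined recursive majorant $f(t)$ of the error. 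You instead establish finiteness of every $a_t$ by the crude almost-sure bound $\|H_t\|_{\mathrm{HS}}\le\kappa\sum_{j<t}\eta_j\|y_j\|_{\Y}$ (valid because $(t_0+1)^{\theta_1}\ge\bar\eta(\kappa^2+\bar\lambda)$ makes each update contractive in operator norm), wait until an index $N$ at which the decaying factor satisfies $c_4\sqrt{c}\,g(t)\le\tfrac12$, and absorb $\max_{t\le N}a_t$ into $M$. What your route buys is that the proposition is proved under exactly its stated hypotheses, with no largeness condition on $t_0$; what you give up is the explicit form of $M$ (yours involves $\max_{t\le N}a_t$ for an unspecified $N$, whereas the paper's $M$ is written in closed form in terms of $f(\lfloor t_0\rfloor+1)$ and the constants $c_1,\dots,c_4$), which the paper later carries into the constants $c_{1,1},c_{1,2}$. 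A cosmetic further difference: the paper runs the induction with $s=1$ (permissible since Assumption \ref{a3} with $s\le1$ implies it with $s=1$), while you keep both cases $s<1$ and $s=1$; either choice works since $2\alpha=1\ge s$ guarantees $g(t)\to0$.
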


	\begin{proof}
            The proposition is proved by induction. We have already bounded $\T_1$, $\T_2$, $\T_3$, and $\T_4$ through four propositions, where $M$ in Proposition \ref{prop5} will share the same value during the induction process. An important fact we need to be aware of is that the bounds of $\T_2$, $\T_3$ and $\T_4$ require that $t\geq t_0+1$ when we bound $\be_{z^{t-1}}\left[
		\left\|\left(H_{t}-H^{\dagger}\right)C^{1/2}\right\|_{\mathrm{HS}}^2\right]$. 
        Hence, we first bound $\be_{z^{t-1}}\left[
		\left\|\left(H_{t}-H^{\dagger}\right)C^{1/2}\right\|_{\mathrm{HS}}^2\right]$ when $t\leq \lfloor t_0\rfloor+1$.
        When $t=1$,
            \begin{equation*}
                \be_{z^0}\left[\left\|(H_1-H^\dagger)C^{\frac12}\right\|_{\mathrm{HS}}^2\right]
                \leq\left\|H^\dagger C^{\frac12}\right\|_{\mathrm{HS}}^2
                \leq\kappa^2\left\|H^\dagger\right\|_{\mathrm{HS}}^2.
            \end{equation*}
        Note that $\T_1$, $\T_2$, and $\T_3$ are deterministic and can be regarded as functions of $t$.
        Define a function $f:\{1,2,\cdots\}\rightarrow \mathbb{R}$ iteratively, as $f(1):=\kappa^2\left\|H^\dagger\right\|_{\mathrm{HS}}^2$
        and,
                \begin{equation*}
                    \begin{aligned}
                    f(t+1):=&\T_1(t+1)+\T_2(t+1)+\T_3(t+1)
                    \\&+6\sqrt{c}\sum_{k=1}^{t}\eta_k^{2}\left(\sqrt{c}f(k)+\sigma^2\right)\mathrm{Tr}\left(C^{2}\prod_{j=k+1}^{t}(I-\eta_j(C+\lambda_j I))^2\right)
                    \end{aligned}
                \end{equation*}
                when $t>1$.                
        Then, by the error decomposition in Proposition \ref{Proposition error 1},
        \begin{equation} \label{eq M}
            \be_{z^{t-1}}\left[
		\left\|\left(H_{t}-H^{\dagger}\right)C^{1/2}\right\|_{\mathrm{HS}}^2\right]\leq f(t)
        \end{equation}
        for any $t\geq1$.
		Choose 
		\begin{equation*}
			M=f(\lfloor t_0\rfloor+1)+ \frac{c_1\bar{\lambda}^{\min\{2r+1,2\}}+c_2\bar{\eta}^{-2(r+\alpha)}+c_3
				+c_4\sigma^2t_0^{-\theta_1}\log t_0}{1-c_4\sqrt{c}t_0^{-\theta_1}\log t_0}.
		\end{equation*}
            Then, \eqref{temp15} holds for any $t\leq\lfloor t_0\rfloor+1$. Suppose \eqref{temp15} holds until some $t\geq\lfloor t_0\rfloor+1$. For $t+1$,
            set $\alpha=1/2$, corresponding to the prediction error $\be_{z^{t}}\left[
				\left\|\left(H_{t+1}-H^{\dagger}\right)C^{1/2}\right\|_{\mathrm{HS}}^2\right]$.
		Since Assumption \ref{a3} is satisfied with $s=1$, we set $s=1$ accordingly. 
		Using Proposition \ref{Proposition error 1}, Proposition \ref{prop2}, Proposition \ref{prop3}, Proposition \ref{prop4} and Proposition \ref{prop5}, we obtain that
        \begin{equation*}
\begin{aligned}
&\be_{z^{t}}\!\bigl[
\bigl\|(H_{t+1}-H^\dagger)C^{1/2}\bigr\|_{\mathrm{HS}}^2
\bigr]
\\
\le{}&
c_1\bigl(\bar{\lambda}(t+t_0)^{-\theta_2}\bigr)^{\min\{2r+1,2\}}
+c_2\bar{\eta}^{-2(r+\alpha)}(t_0+1)^{2\bar{\eta}\bar{\lambda}}
(t+t_0)^{-2(r+\alpha)(1-\theta_1)-2\bar{\eta}\bar{\lambda}}
\\
&
+c_3(t+t_0)^{-2\theta_2\min\{r,1\}+\theta_1-1}
+c_4(\sqrt{c}M+\sigma^2)(t+t_0)^{-\theta_1}\log(t+t_0)
\\
\le{}&
c_1\bar{\lambda}^{\min\{2r+1,2\}}
+c_2\bar{\eta}^{-2(r+\alpha)}
+c_3
+c_4(\sqrt{c}M+\sigma^2)(t+t_0)^{-\theta_1}\log(t+t_0)
\\
\le{}&
c_1\bar{\lambda}^{\min\{2r+1,2\}}
+c_2\bar{\eta}^{-2(r+\alpha)}
+c_3
+c_4(\sqrt{c}M+\sigma^2)t_0^{-\theta_1}\log t_0.
\end{aligned}
\end{equation*}
            where the last inequality holds when $t_0\geq\exp\{\frac{1}{\theta_1}\}$.
            Since $c_4$ is independent of $t_0$, for sufficiently large $t_0$, we have
            \[
                c_4\sqrt{c}t_0^{-\theta_1}\log t_0<1.
            \]
		By \eqref{eq M}, it follows that
		\[
			\be_{z^{t}}\left[
				\left\|\left(H_{t+1}-H^{\dagger}\right)C^{1/2}\right\|_{\mathrm{HS}}^2\right]\leq M,
		\]
        which advances the induction.
        
	    The proof is then complete.
	\end{proof}

        Next, we establish a similar bound for the finite-horizon setting.
        \begin{proposition} \label{prop8}
            Under Assumption \ref{a2} and Assumption \ref{a4}, if $t_0=\theta_2=\theta_1=0$, suppose $\eta_1(\kappa^2+\lambda_1)\leq1$ and
            \[\eta_1<\frac{1}{6c\kappa^2\left(1+\frac{1}{2e\theta_3}\right)}.\] 
            Then, for any $T\geq2$, there exists a constant $\widetilde{M}=\widetilde{M}(\lambda_1,\eta_1,\theta_3,\|S^\dagger\|_{\mathrm{HS}},r,\sigma^2,c)$ independent of $T$, such that
            \begin{equation} \label{temp22}
                \be_{z^{t-1}}\left[
				\left\|\left(H_{t}-H^{\dagger}\right)C^{1/2}\right\|_{\mathrm{HS}}^2\right]\leq \widetilde M,
            \end{equation}
            for any $t\in\bn_T$.
        \end{proposition}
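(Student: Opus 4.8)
The plan is to mimic the inductive argument of Proposition \ref{prop6}, but now in the finite-horizon regime where $t_0=\theta_1=\theta_2=0$, $\eta_t\equiv\bar\eta=\eta_1T^{-\theta_3}$, and $\lambda_t\equiv\bar\lambda=\lambda_1T^{-\theta_4}$. Fix $T\geq 2$ and work with $\alpha=\tfrac12$ throughout. By Proposition \ref{Proposition error 1}, for each $t\in\bn_T$ we have the decomposition
\[
\be_{z^{t-1}}\!\left[\left\|(H_t-H^\dagger)C^{1/2}\right\|_{\mathrm{HS}}^2\right]\leq \T_1(t)+\T_2(t)+\T_3(t)+\T_4(t),
\]
where, in the finite-horizon setting, $\T_3(t)=0$ since $\lambda_t$ is constant. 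So it suffices to bound $\T_1$, $\T_2$ uniformly in $t\in\bn_T$ (with constants independent of $T$), and to control the self-referential term $\T_4$, which depends on the quantities $\be_{z^{k-1}}\|(H_k-H^\dagger)\phi(x_k)\|_\Y^2=\be_{z^{k-1}}[\|(H_k-H^\dagger)C^{1/2}\|_{\mathrm{HS}}^2]$ for $k<t$. The term $\T_4$ with $\alpha=1/2$ involves $\mathrm{Tr}\big(C^{2}\prod_{j=k+1}^{t}(I-\eta_j(C+\lambda_jI))^2\big)$, which we bound using $\mathrm{Tr}(C^2)\leq\kappa^2\,\mathrm{Tr}(C)\leq\kappa^4$ — or more simply by Lemma \ref{lemma1}(2)/(3) with $\beta=2$ — to extract a factor that decays in $t-k$, while the prefactor $\eta_k^2=\eta_1^2T^{-2\theta_3}$ is uniform.

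The key step is to set up a scalar recursion that dominates the sequence $a_t:=\be_{z^{t-1}}[\|(H_t-H^\dagger)C^{1/2}\|_{\mathrm{HS}}^2]$. Define $f(1):=\kappa^2\|H^\dagger\|_{\mathrm{HS}}^2$ (using $H_1=\mathbf 0$), and iteratively
\begin{equation*}
f(t+1):=\T_1(t+1)+\T_2(t+1)+6\sqrt c\sum_{k=1}^{t}\eta_k^2\big(\sqrt c\,f(k)+\sigma^2\big)\mathrm{Tr}\Big(C^{2}\prod_{j=k+1}^{t}(I-\eta_j(C+\lambda_jI))^2\Big),
\end{equation*}
so that $a_t\leq f(t)$ for all $t$ by Proposition \ref{Proposition error 1} and induction. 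The bounds for $\T_1$ (Proposition \ref{prop2}, giving $\T_1\lesssim\bar\lambda^{\min\{2r+1,2\}}\leq\lambda_1^{\min\{2r+1,2\}}$, bounded uniformly in $T$) and $\T_2$ (Proposition \ref{prop3} with $\theta_1=\theta_2=0$, i.e. the "$\theta_1+\theta_2>1$"-type bound $\T_2\lesssim\bar\eta^{-2(r+1/2)}(T+t_0)^{-2(r+1/2)(1-\theta_1)}$ — here I must be careful since $\bar\eta=\eta_1T^{-\theta_3}$, so this term is $\lesssim \eta_1^{-(2r+1)}T^{(2r+1)\theta_3}T^{-(2r+1)}=\eta_1^{-(2r+1)}T^{-(2r+1)(1-\theta_3)}$, which is $\leq\eta_1^{-(2r+1)}$, again uniform in $T$) give a uniform constant, call it $B_0$. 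The crucial observation is that the discrete kernel $6\sqrt c\,c\,\eta_k^2\,\mathrm{Tr}(C^2\prod\cdots)$ summed over $k$ from $1$ to $t$ is uniformly less than $1$: indeed $\sum_{k=1}^{t}\eta_1^2T^{-2\theta_3}\,\kappa^2\,\big\|C\prod_{j=k+1}^{t}(I-\eta_j(C+\lambda_jI))^2\big\|$, and by Lemma \ref{lemma1}(3) with $\beta=1$, $\big\|C\prod\cdots\big\|\lesssim(1+((t-k)\bar\eta)^{-1})\wedge\kappa^2$; summing gives $\lesssim\bar\eta^{-1}(1+\tfrac{1}{2e\theta_3}\cdot\text{something})$ after comparing with an integral, yielding a bound of order $\eta_1\,\kappa^2(1+\tfrac{1}{2e\theta_3})$. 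The hypothesis $\eta_1<\frac{1}{6c\kappa^2(1+\frac{1}{2e\theta_3})}$ is precisely what makes this strictly less than $1$; hence with $q:=6\sqrt c\cdot\sqrt c\cdot\eta_1\kappa^2(1+\tfrac{1}{2e\theta_3})<1$ and some constant $B_1$ (from the $\sigma^2$ contribution plus $B_0$), a standard discrete Gronwall / geometric-series argument gives $f(t)\leq \widetilde M:=B_1/(1-q)$ for all $t\in\bn_T$, with $\widetilde M$ independent of $T$.

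The main obstacle — and the step requiring the most care — is verifying that the self-referential sum in the recursion contracts with a factor strictly below $1$ *uniformly in $T$ and in $t$*, which forces a precise accounting of how the $T$-dependent quantities $\bar\eta=\eta_1T^{-\theta_3}$ cancel. The factor $T^{-2\theta_3}$ from $\eta_k^2$ must exactly offset the $T^{\theta_3}$ growth produced when summing $\big\|C\prod_{j=k+1}^t(I-\eta_j(C+\lambda_jI))^2\big\|$ over $k$ (which behaves like $\bar\eta^{-1}\sim\eta_1^{-1}T^{\theta_3}$ by a Riemann-sum comparison, using that $\lambda_j>0$ only helps). The role of the condition $\eta_1(\kappa^2+\lambda_1)\leq1$ is to ensure $1-\eta_j(x+\lambda_j)\geq 0$ for $0\leq x\leq\kappa^2$, so that Lemma \ref{lemma1} applies (its hypothesis $(t_0+1)^{\theta_1}\geq\bar\eta(\kappa^2+\bar\lambda)$ reduces, when $t_0=\theta_1=0$, to $1\geq\eta_1T^{-\theta_3}(\kappa^2+\lambda_1T^{-\theta_4})$, which follows from $\eta_1(\kappa^2+\lambda_1)\leq1$). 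Once the contraction constant is pinned down, the rest is the routine geometric-series bookkeeping already illustrated in the proof of Proposition \ref{prop6}; I would state the final constant $\widetilde M$ explicitly in terms of $\eta_1$, $\lambda_1$, $\kappa$, $c$, $\sigma^2$, $r$, $\theta_3$, and $\|H^\dagger\|_{\mathrm{HS}}$, and emphasize its independence of $T$.
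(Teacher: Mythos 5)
Your proposal is correct and follows essentially the same route as the paper: the decomposition of Proposition \ref{Proposition error 1} with $\T_3=0$, uniform bounds on $\T_1,\T_2$ from Propositions \ref{prop2}--\ref{prop3}, and an inductive (fixed-point) re-estimation of $\T_4$ via Lemma \ref{lemma1} with $\beta=1$, where the harmonic sum's $\log T$ is absorbed by $T^{-\theta_3}$ and the hypothesis $\eta_1<\frac{1}{6c\kappa^2(1+\frac{1}{2e\theta_3})}$ supplies exactly the contraction factor $q<1$. The only cosmetic point is that your final constant should also dominate the base value $\kappa^2\|H^\dagger\|_{\mathrm{HS}}^2$ (as the paper's $\widetilde M$ does), not just $B_1/(1-q)$.
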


        \begin{proof}
            We prove this proposition by induction.
            Set
            \[
            \widetilde M= \kappa^2\|H^\dagger\|_{\mathrm{HS}}^2 + \frac{c_1\lambda_1^{\min\{2r+1,2\}} + c_2\eta_1^{-(2r+1)}+6\sqrt{c}\sigma^2\kappa^2\left(1+\frac{1}{2e\theta_3}\right)\eta_1}{1-6c\kappa^2\left(1+\frac{1}{2e\theta_3}\right)\eta_1}.
            \]
            For $t=1$, it is clear that 
            \[
            \be_{z^0}\left[\left\|(H_1-H^\dagger)C^{\frac12}\right\|_{\mathrm{HS}}^2\right]\leq\kappa^2\|H^\dagger\|_{\mathrm{HS}}^2\leq\widetilde M.
            \]
            Assume that \eqref{temp22} holds from $1$ to $t$. We now prove that it also holds for $t+1$. Using Proposition \ref{Proposition error 1}, Proposition \ref{prop2}, Proposition \ref{prop3} with $t_0=0$ and $\T_3=0$ , we have
            \begin{equation} \label{temp23}
                \begin{aligned}
                    \be_{z^{t}}\left[
				\left\|\left(H_{t+1}-H^{\dagger}\right)C^{1/2}\right\|_{\mathrm{HS}}^2\right]
                \leq c_1\lambda_1^{\min\{2r+1,2\}} + c_2\eta_1^{-(2r+1)}+\T_4.
                \end{aligned}
            \end{equation}
            Note that Proposition \ref{prop9} cannot be used in the induction process, because the current step size $\eta_t$ relies on the total number of iterations $T$.
            Therefore, we re-estimate $\T_4$. By the definition of $\T_4$ in Proposition \ref{Proposition error 1} and the induction hypothesis,
            \begin{equation*}
                \begin{aligned}
                    \T_4&\leq6\sqrt{c}(\sqrt{c}\widetilde M+\sigma^2)\bar\eta^2\sum_{i=1}^t\mathrm{Tr}\left(C^2(I-\bar\eta(C+\bar\lambda I)^{2(t-i)})\right)
                    \\ &\leq6\sqrt{c}(\sqrt{c}\widetilde M+\sigma^2)\mathrm{Tr}(C)\bar\eta^2\left(\kappa^2+\sum_{i=1}^{t-1}\left\|C(I-\bar\eta(C+\bar\lambda I)^{2i})\right\|\right)
                    \\ &\leq6\sqrt{c}(\sqrt{c}\widetilde M+\sigma^2)\mathrm{Tr}(C)\bar\eta^2\left(\kappa^2+\frac{1}{2e}\sum_{i=1}^{t-1} \left(\bar\eta i\right)^{-1}\exp\{-2\bar\eta\bar\lambda i\}\right),
                \end{aligned}
            \end{equation*}
            where we have used Lemma \ref{lemma1} (2) with $\beta=1$.
            Using $\eta_1\kappa^2\leq1$, $\bar\eta=\eta_1 T^{-\theta_3}$ and $\sum_{i=1}^{t-1}i^{-1}\leq 1+\log T$, it follows that
            \begin{equation*}
                \begin{aligned}
                    \T_4&\leq6\sqrt{c}(\sqrt{c}\widetilde M+\sigma^2)\mathrm{Tr}(C)\eta_1\left(1+\frac{1}{2e}(1+\log T)\right)T^{-\theta_3}
                    \\ &\leq6\sqrt{c}(\sqrt{c}\widetilde M+\sigma^2)\mathrm{Tr}(C)\left(1+\frac{1}{2e\theta_3}\right)
                    \eta_1,
                \end{aligned}
            \end{equation*}
            where we have used the fact that $\sup_{x>0}(1+\log x)x^{-\theta_3}=\frac{1}{\theta_3}\exp\{\theta_3-1\}$. Substituting this into \eqref{temp23} yields that
            \begin{equation*}
                \begin{aligned}
                    \be_{z^{t}}\left[
				\left\|\left(H_{t+1}-H^{\dagger}\right)C^{1/2}\right\|_{\mathrm{HS}}^2\right]&\leq c_1\lambda_1^{\min\{2r+1,2\}} + c_2\eta_1^{-(2r+1)}+
                6\sqrt{c}(\sqrt{c}\widetilde M+\sigma^2)\kappa^2\left(1+\frac{1}{2e\theta_3}\right)
                    \eta_1
                \\&\leq \widetilde M,
                \end{aligned}
            \end{equation*}
            which advances the induction.
            
            The proof is thus complete.
        \end{proof}

	\section{Convergence Analysis in Expectation} \label{expected}
    In this section, we prove the error bounds in expectation provided by Subsection \ref{sub2}.
    
	\begin{proof}[Proof of Theorem \ref{thm1}]
		Let $\theta_1+\theta_2=1$ and $\alpha=\frac12$. If $T\geq t_0+1$,
		from Proposition \ref{Proposition error 1}, Proposition \ref{prop2}, Proposition \ref{prop3}, Proposition \ref{prop4}, Proposition \ref{prop5} and Proposition \ref{prop6} with $\alpha=\frac12$ and $0<s\leq1$, there holds
		\begin{equation*}
			\begin{aligned}
				\be_{z^{T}}[\mathcal{E}(H_{T+1})-\mathcal{E}(H^\dagger)]
				\leq& c_1\left(\bar{\lambda}(T+t_0)^{-\theta_2}\right)^{\min\{2r+1,2\}}
				\\&+c_2\bar{\eta}^{-(2r+1)}(t_0+1)^{2\bar{\eta}\bar{\lambda}}(T+t_0)^{-(2r+1)(1-\theta_1)-2\bar{\eta}\bar{\lambda}}
				\\&+c_3(T+t_0)^{-2\theta_2\min\{r,1\}+\theta_1-1}
				\\&+c_4\left(\sqrt{c}M+\sigma^2\right)
				\begin{cases}
					(T+t_0)^{-\theta_1}, &  \text{ when } s<1, \\
					(T+t_0)^{-\theta_1}\log(T+t_0), & \text{ when } s=1.
				\end{cases}
			\end{aligned}
		\end{equation*}
	We choose $\theta_1=\frac{2\min\{r+1/2,1\}}{1+2\min\{r+1/2,1\}}$ and $\theta_2=\frac{1}{1+2\min\{r+1/2,1\}}$, then 
	\begin{equation} \label{temp59}
		\be_{z^{T}}[\mathcal{E}(H_{T+1})-\mathcal{E}(H^\dagger)]
            \leq c_{1,1}
		\begin{cases}
			(T+t_0)^{-\frac{2\min\{r+1/2,1\}}{1+2\min\{r+1/2,1\}}}\log(T+t_0), & \text{ when } s=1, \\
			(T+t_0)^{-\frac{2\min\{r+1/2,1\}}{1+2\min\{r+1/2,1\}}}, & \text{ when } s<1,
		\end{cases}
	\end{equation}
	for any $T\geq t_0+1$, where the constant $c_{1,1}=c_1\bar\lambda^{\min\{2r+1,2\}}+c_2\bar{\eta}^{-(2r+1)}(t_0+1)^{2\bar{\eta}\bar{\lambda}}+c_3+c_4\left(\sqrt{c}M+\sigma^2\right)$ is independent of $T$. Let $c_{1,1}$ be sufficiently large such that \eqref{temp59} holds true for $1\leq T< t_0+1$.
	
	We then finish the proof.
	\end{proof}

	\begin{proof}[Proof of Theorem \ref{thm2}]
            Let $\theta_1+\theta_2=1$ and $\alpha=0$. If $T\geq t_0+1$,
		from Proposition \ref{Proposition error 1}, Proposition \ref{prop2}, Proposition \ref{prop3}, Proposition \ref{prop7}, Proposition \ref{prop5} and Proposition \ref{prop6} with $\alpha=0$ and $0<s\leq1$, there holds
		\begin{equation*}
			\begin{aligned}
				\be_{z^{T}}[\left\|H_{T+1}-H^\dagger\right\|_{\mathrm{HS}}^2]
				\leq& c_1\left(\bar{\lambda}(T+t_0)^{-\theta_2}\right)^{2\min\{r,1\}}
				\\&+c_2\bar{\eta}^{-2r}(t_0+1)^{2\bar{\eta}\bar{\lambda}}(T+t_0)^{-2r(1-\theta_1)-2\bar{\eta}\bar{\lambda}}
				\\&+c_3(T+t_0)^{-2\theta_2\min\{r,1\}}
				\\&+c_4\left(\sqrt{c}M+\sigma^2\right)(T+t_0)^{-(1+s)\theta_1+s}.
			\end{aligned}
		\end{equation*}
		We choose $\theta_1=\frac{s+2\min\{r,1\}}{1+s+2\min\{r,1\}}$ and $\theta_2=\frac{1}{1+s+2\min\{r,1\}}$, then 
		\begin{equation} \label{temp60}
			\be_{z^{T}}\Big[\left\|H_{T+1}-H^\dagger\right\|_{\mathrm{HS}}^2\Big] \leq
			c_{1,2}(T+t_0)^{-\frac{2\min\{r,1\}}{1+s+2\min\{r,1\}}},
		\end{equation}
            for any $T\geq t_0+1$, where $c_{1,2}=c_1\bar\lambda^{2\min\{r,1\}}+c_2\bar{\eta}^{-2r}(t_0+1)^{2\bar{\eta}\bar{\lambda}}+c_3+c_4\left(\sqrt{c}M+\sigma^2\right)$ is a constant independent of $T$. 
            Let $c_{1,2}$ be sufficiently large such that \eqref{temp60} holds true for $1\leq T< t_0+1$.
	
	The proof is complete.
	\end{proof}

        \begin{proof}[Proof of Theorem \ref{thm3}] 
            If the conditions $\eta_1(\kappa^2+\lambda_1)\leq1$ and
            \[\eta_1<\frac{1}{6c\kappa^2\left(1+\frac{1}{2e{\theta_3}}\right)}
            \] hold, then, by Proposition \ref{Proposition error 1}, Proposition \ref{prop2}, Proposition \ref{prop3}, Proposition \ref{prop9}, and  Proposition \ref{prop8} with $\alpha=1/2$ and $0<s\leq1$, we obtain that
            \begin{equation*}
                \begin{aligned}
                    \be_{z^{T}}[\mathcal{E}(H_{T+1})-\mathcal{E}(H^\dagger)]
				\leq& c_1\left(\lambda_1 T^{-\theta_4}\right)^{\min\{2r+1,2\}}
				\\&+c_2\eta_1^{-(2r+1)}T^{-(2r+1)(1-\theta_3)}\exp\{-\tau \eta_1\lambda_1T^{1-\theta_4-\theta_3}\}
                \\&+\widetilde{c_4}\begin{cases}
                    T^{-\theta_3}\log T, & \text{ when } s=1, \\
                    T^{-\theta_3}, & \text{ when } s<1.
                \end{cases}
                \end{aligned}
            \end{equation*}
            We choose $\theta_3=\frac{2r+1}{2r+2}$ and $\theta_4\geq\frac{2r+1}{(2r+2)\min\{2r+1,2\}}$, then
            \begin{equation*}
                \be_{z^{T}}[\mathcal{E}(H_{T+1})-\mathcal{E}(H^\dagger)]\leq
                c_{1,3}\begin{cases}
                    T^{-\frac{2r+1}{2r+2}} , & \text{ when } s<1, \\
                    T^{-\frac{2r+1}{2r+2}}\log T , & \text{ when } s=1,
                \end{cases}
            \end{equation*}
            where $c_{1,3}=c_1\lambda_1^{\min\{2r+1,2\}}+c_2\eta_1^{-(2r+1)}+\widetilde{c_4}$ is a constant independent of $T$.

            The proof is complete.
        \end{proof}

        \begin{proof}[Proof of Theorem \ref{thm4}]
            If the conditions $\eta_1(\kappa^2+\lambda_1)\leq1$ and
            \[\eta_1<\frac{1}{6c\kappa^2\left(1+\frac{1}{2e{\theta_3}}\right)}
            \] hold, using Proposition \ref{Proposition error 1}, Proposition \ref{prop2}, Proposition \ref{prop3}, Proposition \ref{prop9}, and  Proposition \ref{prop8} with $\alpha=0$ and $0<s\leq1$, there holds
            \begin{equation} \label{temp61}
                \begin{aligned}
                    \be_{z^{T}}\Big[\left\|H_{T+1}-H^\dagger\right\|_{\mathrm{HS}}^2\Big]
				\leq& c_1\left(\lambda_1 T^{-\theta_4}\right)^{\min\{2r,2\}}
				\\&+c_2\eta_1^{-2r}T^{-2r(1-\theta_3)}\exp\{-\tau \eta_1\lambda_1T^{1-\theta_4-\theta_3}\}
                \\&+\widetilde{c_4}
                    T^{-(1+s)\theta_3+s\min\{1,\theta_3+\theta_4\}}.
                \end{aligned}
            \end{equation}
            Choosing $\theta_3=\frac{2r+s}{1+2r+s}$ and $\theta_4\geq\frac{2r}{(1+2r+s)\min\{2r,2\}}$, then
            \[
            \be_{z^{T}}\Big[\left\|H_{T+1}-H^\dagger\right\|_{\mathrm{HS}}^2\Big]
		\leq c_{1,4}T^{-\frac{2r}{1+2r+s}},
            \]
            where $c_{1,4}=c_1\lambda_1^{\min\{2r,2\}}+c_2\eta_1^{-2r}+\widetilde{c_4}$ is a constant independent of $T$.

            The proof is complete.
        \end{proof}

	\begin{remark}
		If we choose $\theta_3+\theta_4<1$ in the above two proofs under the constant step size, then $\exp\{-\tau T^{1-\theta_4-\theta_3}\}=o(T^{-k})$ for any $k>0$. Although the second term on the right-hand side of \eqref{temp61} decays faster than any polynomial, the overall learning rate would be slower than that achieved by our results.
	\end{remark}
	
	\section{Convergence Analysis in High Probability }
    \label{prob}
    In this section, we derive the high-probability error bounds presented in Subsection \ref{sub3}. Our proofs are mainly based on the following proposition. This proposition is from \citep[Proposition A.3]{tarres2014online}, and is an extension of \citep[Theorem 3.4]{pinelis1994optimum}.

    \begin{proposition} \label{prop10}
        Let $(\mathcal F_i)_{i\ge 0}$ be a filtration, and denote by
\[
\be_{i-1}[\cdot]:=\mathbb E[\cdot\mid\mathcal F_{i-1}]
\]
the conditional expectation with respect to $\mathcal F_{i-1}$. 

Let $(\xi_i)_{i\ge1}$ be a martingale difference sequence in a Hilbert space with respect to $(\mathcal F_i)$, i.e., each $\xi_i$ is $\mathcal F_i$-measurable and satisfies $\be_{i-1}[\xi_i]=0$.
Suppose that $\|\xi_i\|\leq M_\xi$ and $\sum_{i=1}^t\be_{i-1}\|\xi_i\|^2\leq\tau^2$ almost surely for some constant $M_\xi>0$ and $\tau>0$. Then, for any $\delta\in(0,2/e)$, with probability at least $1-\delta$, the following inequality holds:
        \begin{equation*}
            \sup_{1\leq k\leq t}\l\|\sum_{i=1}^k\xi_i\r\|\leq2\l(\frac {M_\xi}3+\tau\r)\log\frac2\delta.
        \end{equation*}
        Additionally,
        \[
        \sup_{1\leq k\leq t}\l\|\sum_{i=1}^k\xi_i\r\|^2\leq8\l(M_\xi^2+\tau^2\r)\log^2\frac2\delta.
        \]
    \end{proposition}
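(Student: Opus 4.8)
The plan is to derive Proposition~\ref{prop10} from the Bernstein-type maximal inequality for martingales with values in a Hilbert space, followed by a short inversion of the resulting tail estimate. First I would recall the estimate behind \cite[Proposition~A.3]{tarres2014online}, itself a special case of \cite[Theorem~3.4]{pinelis1994optimum}: writing $S_k=\sum_{i=1}^{k}\xi_i$, under the hypotheses $\|\xi_i\|\leq M_\xi$ and $\sum_{i=1}^{t}\be_{i-1}\|\xi_i\|^2\leq\tau^2$ almost surely, one has, for every $r>0$,
\begin{equation*}
    \mathbb{P}\left(\sup_{1\leq k\leq t}\|S_k\|\geq r\right)\leq 2\exp\left\{-\frac{r^2}{2\left(\tau^2+M_\xi r/3\right)}\right\}.
\end{equation*}
The substantive part is establishing this bound: for fixed $\lambda\in[0,3/M_\xi)$, one uses that a Hilbert space is $2$-uniformly smooth with constant $1$, so that the increments of suitable convex functions of $\|S_k\|$ are dominated in conditional expectation by $\be_{k-1}\|\xi_k\|^2$; this produces a nonnegative supermartingale comparable, up to a deterministic factor, to $\exp\{\lambda\|S_k\|\}$, and Doob's maximal inequality for nonnegative supermartingales gives $\mathbb{P}(\sup_{1\leq k\leq t}\|S_k\|\geq r)\leq 2\exp\{-\lambda r+\lambda^2\tau^2/(2(1-\lambda M_\xi/3))\}$; optimizing over $\lambda$ puts this in the Bernstein form above. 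I expect this exponential-supermartingale construction in infinite dimensions to be the main obstacle --- but as it is precisely the content of the cited results, in the write-up I would simply invoke \cite{tarres2014online,pinelis1994optimum}.

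Granting the displayed tail bound, the remainder is routine. Setting its right-hand side equal to $\delta$ yields the quadratic $r^2-\tfrac{2M_\xi}{3}\big(\log\tfrac2\delta\big)r-2\tau^2\log\tfrac2\delta=0$, whose unique positive root is
\begin{equation*}
    r_\delta=\frac{M_\xi}{3}\log\frac2\delta+\sqrt{\frac{M_\xi^{2}}{9}\log^{2}\frac2\delta+2\tau^{2}\log\frac2\delta}.
\end{equation*}
I would then split the square root using $\sqrt{a+b}\leq\sqrt a+\sqrt b$ and apply the elementary inequality $\sqrt{2\log\tfrac2\delta}\leq 2\log\tfrac2\delta$, valid for every $\delta\in(0,1)$ because $\log\tfrac2\delta\geq\log 2>\tfrac12$ forces $2\log\tfrac2\delta\geq1$, to obtain
\begin{equation*}
    r_\delta\leq\frac{2M_\xi}{3}\log\frac2\delta+\tau\sqrt{2\log\frac2\delta}\leq 2\left(\frac{M_\xi}{3}+\tau\right)\log\frac2\delta.
\end{equation*}
On the complementary event, which has probability at least $1-\delta$, this is exactly the first assertion $\sup_{1\leq k\leq t}\|S_k\|\leq 2(\tfrac{M_\xi}{3}+\tau)\log\tfrac2\delta$.

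For the second assertion I would square this inequality on the same event (squaring is monotone on nonnegative reals, so the supremum passes through) and use $(a+b)^2\leq 2a^2+2b^2$ together with $\tfrac{M_\xi^{2}}{9}\leq M_\xi^{2}$:
\begin{equation*}
    \sup_{1\leq k\leq t}\|S_k\|^{2}\leq 4\left(\frac{M_\xi}{3}+\tau\right)^{2}\log^{2}\frac2\delta\leq 8\left(\frac{M_\xi^{2}}{9}+\tau^{2}\right)\log^{2}\frac2\delta\leq 8\left(M_\xi^{2}+\tau^{2}\right)\log^{2}\frac2\delta,
\end{equation*}
which completes the plan. The only genuinely delicate ingredient is the Hilbert-space Bernstein maximal inequality itself; everything downstream of it is the elementary quadratic inversion above.
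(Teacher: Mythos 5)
Your proposal is correct and takes essentially the same route as the paper, which does not prove Proposition \ref{prop10} at all but imports it directly from \cite[Proposition A.3]{tarres2014online} (itself an extension of \cite[Theorem 3.4]{pinelis1994optimum}); your write-up simply makes explicit the standard derivation of that cited result, delegating the Hilbert-space Bernstein maximal tail bound to the same references. The downstream steps — the quadratic inversion giving $r_\delta$, the bound $\sqrt{2\log\frac2\delta}\le 2\log\frac2\delta$ valid since $\log\frac2\delta>\log 2>\tfrac12$, and the squaring via $(a+b)^2\le 2a^2+2b^2$ — are all correct, so both displayed conclusions hold on the same event of probability at least $1-\delta$.
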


    By Proposition \ref{proposition error2}, since $\T_1$, $\T_2$ and $\T_3$ have already been bounded in Section \ref{section:basic}, our goal is to bound the remaining term $6\left\|\sum_{t=1}^T\chi_t\right\|^2_{\mathrm{HS}}$ with high probability. According to Proposition \ref{prop10}, this requires the uniform bound on $\l\|\chi_t\r\|_\HS$ for $1\leq t\leq T$. Using \eqref{temp63}, it is sufficient to bound $\|H_t\|_\HS$. However, $\l\|H_t\r\|_\HS$ may grow rapidly with increasing $t$. Therefore, we first establish a high-probability bound on $\|H_t\|_\HS$, which motivates the decomposition of $H_t - H^\dagger$ into $L_t + R_t$ as follows.

    Let us denote $\phi(x_t)\otimes\phi(x_t)$ by $C_t$.
    We define two random processes $(L_t)_{t\geq1}$ and $(R_t)_{t\geq1}$ recursively by
    \[
    L_1=-H^\dagger,\quad R_1=\mathbf{0},
    \]
    and for any $t\geq1$, 
    \begin{equation} \label{temp66}
        \begin{split}
            L_{t+1}&:=L_t\left(I-\eta_t(C+\lambda_t I)\right)-\eta_t\lambda_tH^\dagger, \\
            R_{t+1}&:=R_t\left(I-\eta_t(C_t+\lambda_t I)\right)+\eta_t(y_t-H^\dagger\phi(x_t))\otimes\phi(x_t)+\eta_tL_t(C-C_t).
        \end{split}
    \end{equation}
     Note that for any $t \geq 1$, $L_t$ is deterministic, while $R_t$ depends on $z^{t-1}$ and is independent of $z_t$.  Moreover, by induction, one can verify that
     \begin{equation}\label{LtRt}
         L_t + R_t = H_t - H^\dagger
     \end{equation}
     for all $t \geq 1$.

We then provide a bound on $L_t$ in Lemma \ref{lemma6}.
     
\begin{lemma} \label{lemma6}
        Suppose that $(t_0+1)^{\theta_1}\geq\bar{\eta}(\kappa^2+\bar{\lambda})$ holds. Then, for any $t\geq1$,
        \[
        \|L_t\|_{\mathrm{HS}}\leq\|H^\dagger\|_{\mathrm{HS}}.
        \]
    \end{lemma}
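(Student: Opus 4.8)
The plan is to prove the bound by induction on $t$, using the recursive definition of $L_t$ in \eqref{temp66}. The base case $t=1$ is immediate since $L_1 = -H^\dagger$, so $\|L_1\|_{\mathrm{HS}} = \|H^\dagger\|_{\mathrm{HS}}$. For the inductive step, suppose $\|L_t\|_{\mathrm{HS}} \leq \|H^\dagger\|_{\mathrm{HS}}$. Applying the triangle inequality and submultiplicativity of the operator norm with the Hilbert-Schmidt norm to $L_{t+1} = L_t(I - \eta_t(C + \lambda_t I)) - \eta_t\lambda_t H^\dagger$, I would obtain
\[
\|L_{t+1}\|_{\mathrm{HS}} \leq \|L_t\|_{\mathrm{HS}}\,\|I - \eta_t(C + \lambda_t I)\| + \eta_t\lambda_t\|H^\dagger\|_{\mathrm{HS}}.
\]

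The key estimate is that $\|I - \eta_t(C + \lambda_t I)\| \leq 1 - \eta_t\lambda_t$. Indeed, since $C$ is self-adjoint with spectrum in $[0,\kappa^2]$, the operator $I - \eta_t(C+\lambda_t I)$ is self-adjoint with eigenvalues $1 - \eta_t(x + \lambda_t)$ for $x \in [0,\kappa^2]$; these lie in $[1 - \eta_t(\kappa^2 + \lambda_t),\, 1 - \eta_t\lambda_t]$. The condition $(t_0+1)^{\theta_1} \geq \bar\eta(\kappa^2 + \bar\lambda)$ ensures that $\eta_t(\kappa^2 + \lambda_t) \leq \eta_1(\kappa^2 + \lambda_1)$... more precisely, since $\eta_t\lambda_t \geq 0$ and $\eta_t(\kappa^2+\lambda_t) \le \bar\eta(\kappa^2+\bar\lambda)(t+t_0)^{-\theta_1} \le \bar\eta(\kappa^2+\bar\lambda)(t_0+1)^{-\theta_1} \le 1$, the lower endpoint $1 - \eta_t(\kappa^2+\lambda_t)$ is nonnegative, so $\|I - \eta_t(C+\lambda_t I)\| = \max\{|1-\eta_t\lambda_t|, |1 - \eta_t(\kappa^2+\lambda_t)|\} \leq 1 - \eta_t\lambda_t$ (using also that $\eta_t\lambda_t \le 1$).

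Combining, $\|L_{t+1}\|_{\mathrm{HS}} \leq (1 - \eta_t\lambda_t)\|H^\dagger\|_{\mathrm{HS}} + \eta_t\lambda_t\|H^\dagger\|_{\mathrm{HS}} = \|H^\dagger\|_{\mathrm{HS}}$, which closes the induction. There is no real obstacle here; the only point requiring care is verifying that the spectral bound $\|I - \eta_t(C+\lambda_t I)\| \le 1 - \eta_t\lambda_t$ follows cleanly from the stated condition on $t_0$, which is exactly the same nonexpansiveness fact already used (in the $\|\cdot\|$ setting) in Lemma~\ref{lemma1}. This argument is essentially the statement that the "bias" recursion for $L_t$ is a contraction toward a fixed point whose norm is controlled by $\|H^\dagger\|_{\mathrm{HS}}$.
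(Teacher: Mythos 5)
Your proof is correct and follows essentially the same route as the paper: induction on $t$, with the condition $(t_0+1)^{\theta_1}\geq\bar\eta(\kappa^2+\bar\lambda)$ guaranteeing $1-\eta_t(\kappa^2+\lambda_t)\geq 0$ so that $\|I-\eta_t(C+\lambda_t I)\|\leq 1-\eta_t\lambda_t$, which yields $\|L_{t+1}\|_{\mathrm{HS}}\leq(1-\eta_t\lambda_t)\|L_t\|_{\mathrm{HS}}+\eta_t\lambda_t\|H^\dagger\|_{\mathrm{HS}}$ and closes the induction.
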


    \begin{proof}
         We prove it by induction. For $t=1$, $\|L_1\|_{\mathrm{HS}}=\|H^\dagger\|_{\mathrm{HS}}$.
            Since $(t_0+1)^{\theta_1}\geq\bar{\eta}(\kappa^2+\bar{\lambda})$, there holds
            $1-\eta_t(\kappa^2+\lambda_t)\geq0$ for any $t\geq1$. Thus,           
            \begin{equation*}
                \|L_{t+1}\|_{\mathrm{HS}}\leq\|L_t\|_{\mathrm{HS}}(1-\eta_t\lambda_t)+\eta_t\lambda_t\|H^\dagger\|_{\mathrm{HS}},
            \end{equation*}
            which implies that $\|L_t\|_{\mathrm{HS}}\leq \|H^\dagger\|_{\mathrm{HS}}$ for any $t\geq1$. We then finish the proof.
    \end{proof}

    By the following proposition, we can, with high probability, control the increasing rate of $R_t$ in the online setting.

    \begin{proposition} \label{corollary}
        Under Assumption \ref{a5}, suppose that $\theta_1+\theta_2=1$,  $(t_0+1)^{\theta_1}\geq\bar{\eta}(\kappa^2+\bar{\lambda})$, and $\bar\eta\bar\lambda\geq\theta_1$. Then, with  probability at least $1-\delta$, there holds
        \begin{equation*}
            \begin{aligned}
                \|R_t\|_\HS\leq d_2(t+t_0)^{\frac12-\theta_1}\log(t+t_0)\log\frac2\delta, \quad 1\leq t\leq T,
            \end{aligned}
        \end{equation*}
        where $d_2=d_2(\bar\eta,\bar\lambda,\theta_1,t_0,M_\rho,\|S^\dagger\|_{\mathrm{HS}})$ is a constant independent of $t$, $T$, and $\delta$.
    \end{proposition}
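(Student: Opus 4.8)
The plan is to bound $\|R_t\|_{\mathrm{HS}}$ by iterating the recursion \eqref{temp66} for $R_t$ and applying the martingale concentration inequality, Proposition~\ref{prop10}. First I would unroll the recursion: since $R_1=\mathbf{0}$, we obtain
\[
R_{t+1}=\sum_{k=1}^{t}\eta_k\Big(\big(y_k-H^\dagger\phi(x_k)\big)\otimes\phi(x_k)+L_k(C-C_k)\Big)\prod_{j=k+1}^{t}\big(I-\eta_j(C_j+\lambda_j I)\big).
\]
The key observation is that, conditioning on $z^{k-1}$, the term $\xi_k:=\eta_k\big((y_k-H^\dagger\phi(x_k))\otimes\phi(x_k)+L_k(C-C_k)\big)\prod_{j=k+1}^{t}(I-\eta_j(C_j+\lambda_j I))$ is \emph{not} a martingale difference because the product $\prod_{j>k}(\cdots)$ depends on $z_j$, $j>k$. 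The standard remedy (as in \cite{tarres2014online}) is to fix the terminal index $t$, split the product at a fixed intermediate index, and instead estimate the inner sum $\sum_{k=1}^{m}\eta_k\B_k'\prod_{j=k+1}^{m}\!\big(I-\eta_j(C+\lambda_jI)\big)$ with the \emph{population} operator $C$ in the product, then separately control the discrepancy between $C_j$ and $C$. Concretely, I would write $R_{t+1}=\widetilde R_{t+1}+(\text{error involving }C_j-C)$, where $\widetilde R$ uses the deterministic product $\prod(I-\eta_j(C+\lambda_jI))$; for $\widetilde R$, the summands $\widetilde\xi_k=\eta_k\big((y_k-H^\dagger\phi(x_k))\otimes\phi(x_k)+L_k(C-C_k)\big)\prod_{j=k+1}^{t}(I-\eta_j(C+\lambda_jI))$ genuinely form a martingale difference sequence in $\B_{\mathrm{HS}}(\H_\K,\Y)$ since $\be_{z_k}[(y_k-H^\dagger\phi(x_k))\otimes\phi(x_k)]=\be_{z_k}[\epsilon_k\otimes\phi(x_k)]=0$ (using $y_k=H^\dagger\phi(x_k)+\epsilon_k$) and $\be_{z_k}[C_k]=C$ with $L_k$ deterministic.

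**Applying the concentration bound.**
For the martingale part, I need the two quantities required by Proposition~\ref{prop10}: a uniform bound $M_\xi$ on $\|\widetilde\xi_k\|_{\mathrm{HS}}$ and a bound $\tau^2$ on $\sum_{k=1}^{t}\be_{z^{k-1}}\|\widetilde\xi_k\|_{\mathrm{HS}}^2$. Using Assumption~\ref{a5}, $\|y_k\|_\Y\le M_\rho$, Lemma~\ref{lemma6} ($\|L_k\|_{\mathrm{HS}}\le\|H^\dagger\|_{\mathrm{HS}}$), and $\|\phi(x_k)\|_{\H_\K}\le\kappa$, $\|C\|\le\kappa^2$, one gets $\|(y_k-H^\dagger\phi(x_k))\otimes\phi(x_k)+L_k(C-C_k)\|_{\mathrm{HS}}\lesssim M_\rho\kappa+\kappa^2\|H^\dagger\|_{\mathrm{HS}}+2\kappa^2\|H^\dagger\|_{\mathrm{HS}}=:\widetilde M$, hence $\|\widetilde\xi_k\|_{\mathrm{HS}}\le\widetilde M\eta_k\|\prod_{j=k+1}^{t}(I-\eta_j(C+\lambda_jI))\|\le\widetilde M\eta_k\exp\{-\sum_{j=k+1}^t\eta_j\lambda_j\}$. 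For the variance, $\be_{z^{k-1}}\|\widetilde\xi_k\|_{\mathrm{HS}}^2\le\widetilde M^2\eta_k^2\exp\{-2\sum_{j=k+1}^t\eta_j\lambda_j\}$, so $\tau^2\lesssim\sum_{k=1}^t\eta_k^2\exp\{-2\sum_{j=k+1}^t\eta_j\lambda_j\}$. Both sums are handled by Proposition~\ref{prop1}/Remark~\ref{remark1} (with $\theta=2\theta_1$, $v=0$, and $\theta=\theta_1$, $v=0$ respectively, using $\bar\eta\bar\lambda\ge\theta_1>\theta-1$ in the relevant cases), yielding $M_\xi\lesssim(t+t_0)^{-\theta_1}$ and $\tau\lesssim(t+t_0)^{1/2-\theta_1}$. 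Then Proposition~\ref{prop10} gives $\sup_{1\le k\le t}\|\sum_{i\le k}\widetilde\xi_i\|_{\mathrm{HS}}\lesssim(t+t_0)^{1/2-\theta_1}\log\frac2\delta$ with probability $\ge1-\delta$.

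**The discrepancy term and assembling the bound.**
The remaining piece is the error from replacing $C_j$ by $C$ inside the products; this is where the $\log(t+t_0)$ factor and the main technical difficulty arise. I would control it by a telescoping/perturbation argument: writing $\prod_{j=k+1}^{t}(I-\eta_j(C_j+\lambda_jI))-\prod_{j=k+1}^{t}(I-\eta_j(C+\lambda_jI))$ as a sum of terms each containing one factor $\eta_j(C-C_j)$, and bounding $\|C-C_j\|\le 2\kappa^2$ deterministically, I expect a bound of the form (number of summation indices)$\times\eta_k\times(\text{decaying products})$, which after using Proposition~\ref{prop1} introduces an extra factor $\sum_j\eta_j\lesssim(t+t_0)^{1-\theta_1}$ or a logarithm depending on how the exponential damping is exploited — this is the step I expect to be the main obstacle, since naive bounds lose too much and one must carefully pair the $C-C_j$ perturbation against the $\exp\{-\sum\eta_j\lambda_j\}$ damping (this is precisely where the hypothesis $\bar\eta\bar\lambda\ge\theta_1$ is used, to ensure the damping beats the $\sum\eta_j$ growth). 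A cleaner alternative is to bound $\|R_t\|_{\mathrm{HS}}$ directly by Grönwall-type iteration of \eqref{temp66}: from $\|R_{t+1}\|_{\mathrm{HS}}\le\|R_t\|_{\mathrm{HS}}\|I-\eta_t(C_t+\lambda_tI)\|+\eta_t\widetilde M$ one gets $\|R_t\|_{\mathrm{HS}}\le\widetilde M\sum_{k<t}\eta_k\prod_{j=k+1}^{t-1}(1-\eta_j\lambda_j)$, but this only yields an $\be[\|R_t\|^2]$-type bound, not the high-probability one; so the martingale route is needed for the sharp statement and I would combine the two: use the crude Grönwall bound to get $\|R_t\|_{\mathrm{HS}}\lesssim(t+t_0)^{1-\theta_1}$ deterministically as a fallback, then refine with the martingale argument. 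Finally, combining the martingale bound $\lesssim(t+t_0)^{1/2-\theta_1}\log\frac2\delta$ with the discrepancy bound $\lesssim(t+t_0)^{1/2-\theta_1}\log(t+t_0)\log\frac2\delta$ (the logarithm coming from a borderline $v=1$ case in Proposition~\ref{prop1}), we obtain $\|R_t\|_{\mathrm{HS}}\le d_2(t+t_0)^{1/2-\theta_1}\log(t+t_0)\log\frac2\delta$ for all $1\le t\le T$ simultaneously with probability $\ge1-\delta$, where $d_2$ collects the constants $\widetilde M$, $\kappa$, $\|H^\dagger\|_{\mathrm{HS}}$, $\delta_1$, and the universal constants from Proposition~\ref{prop10}, completing the proof.
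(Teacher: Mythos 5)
Your proposal starts from the same unrolled recursion, but it then takes a wrong turn at the decisive step, and the resulting gap is not repaired. You assert that the summands $\eta_k K_k\prod_{j=k+1}^{t}(I-\eta_j(C_j+\lambda_jI))$ (with the \emph{empirical} operators $C_j$ in the product) cannot form a martingale difference sequence because the product depends on $z_j$, $j>k$. This overlooks the structure the paper exploits: if you traverse the indices backwards, from $i=t$ down to $i=1$, each term is measurable with respect to $\sigma(z_i,\ldots,z_t)$, the product $\prod_{j=i+1}^{t}(I-\eta_j(C_j+\lambda_jI))$ is measurable with respect to $\sigma(z_{i+1},\ldots,z_t)$, and $\be_{z_i}[K_i]=0$, so the sequence is a martingale difference sequence for the reversed filtration and Proposition \ref{prop10} applies directly, with $\l\|\prod_{j=i+1}^{t}(I-\eta_j(C_j+\lambda_jI))\r\|\leq\exp\{-\sum_{j=i+1}^{t}\eta_j\lambda_j\}$ still valid because each empirical factor contracts by $1-\eta_j\lambda_j$. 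Because you miss this, you replace $C_j$ by $C$ and must control a discrepancy term, and this is where your argument breaks: telescoping the difference of products and bounding $\|C-C_j\|\leq2\kappa^2$ deterministically gives, per index $k$, a contribution of order $\eta_k\l(\sum_{j>k}\eta_j\r)\exp\{-\sum_{l>k}\eta_l\lambda_l\}$, and summing over $k$ (with $\theta_1+\theta_2=1$, $\bar\eta\bar\lambda\geq\theta_1$) yields a bound of order $(t+t_0)^{2-2\theta_1}$, which is far larger than the target $(t+t_0)^{\frac12-\theta_1}$; your claimed discrepancy bound $(t+t_0)^{\frac12-\theta_1}\log(t+t_0)\log\frac2\delta$ is only asserted ("I expect"), and no mechanism is given that would actually achieve it — a sharper treatment would itself require concentration for sums involving the empirical products, which is exactly the problem you set out to solve.

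A second, related defect: the proposition requires the bound to hold \emph{simultaneously} for all $1\leq t\leq T$ with probability $1-\delta$. Your martingale step fixes the terminal index $t$ (the differences depend on $t$ through the product), so the $\sup$ in Proposition \ref{prop10} over partial sums does not give uniformity in $t$; one needs a union bound over $t$, which is what the paper does by applying its single-$t$ bound with confidence levels $\delta_t=\delta(t+t_0)^{-2}t_0$. That union bound is also the true source of the $\log(t+t_0)$ factor in the statement — not a borderline $v=1$ case of Proposition \ref{prop1} or the discrepancy term, as you suggest. In short, the correct route is: backward-filtration martingale with the empirical products kept intact, Pinelis bound for each fixed $t$ using $\|K_t\|_{\mathrm{HS}}\leq\kappa M_\rho+3\kappa^2\|H^\dagger\|_{\mathrm{HS}}$ and the exponential contraction (this is where $\bar\eta\bar\lambda\geq\theta_1$ enters), then a union bound over $t$ producing the logarithm.
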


    \begin{proof}
        Denote $(y_t-H^\dagger\phi(x_t))\otimes\phi(x_t)+L_t(C-C_t)$ by $K_t$. Then, by applying induction to \eqref{temp66}, $R_{t+1}$ can be expressed as
        \begin{equation} \label{temp67}
            R_{t+1}=\sum_{i=1}^t\eta_iK_i\prod_{j=i+1}^t\l(I-\eta_j(C_j+\lambda_j I)\r).
        \end{equation}
        Since $\be_{z_i}[K_i]=0$, for each fixed $t$, consider the reversed sequence
\[
\left(\eta_iK_i\prod_{j=i+1}^t\l(I-\eta_j(C_j+\lambda_j I)\r)\right)_{i=t,t-1,\ldots,1}.
\]
Define the filtration $(\mathcal F_{t,\ell})_{\ell=0}^t$ by
\[
\mathcal F_{t,0}:=\{\emptyset,\emptyset^c\},\qquad
\mathcal F_{t,\ell}:=\sigma(z_j:\, j=t-\ell+1,\ldots,t),\quad \ell=1,\ldots,t.
\]
Then the above reversed sequence forms a martingale difference sequence with respect to $(\mathcal F_{t,\ell})_{\ell=0}^t$. Indeed, for each $1\leq i\leq t$, the product $\prod_{j=i+1}^t\l(I-\eta_j(C_j+\lambda_j I)\r)$ is $\sigma(z_{i+1},\ldots,z_t)$-measurable, while $K_i$ depends only on $z_i$. Hence, by the independence of the samples and the fact that $\be_{z_i}[K_i]=0$,
\[
\be\!\left[\eta_iK_i\prod_{j=i+1}^t\l(I-\eta_j(C_j+\lambda_j I)\r)\,\middle|\,\sigma(z_{i+1},\ldots,z_t)\right]=0.
\]
Moreover, the conditional quadratic variation required in Proposition \ref{prop10} is exactly controlled by
\[
\sum_{i=1}^{t}\be_{z_i}\l[\l\|\eta_iK_i\prod_{j=i+1}^t\l(I-\eta_j(C_j+\lambda_j I)\r)\r\|_\HS^2\r],
\]
which will be estimated below. We apply Proposition \ref{prop10} to bound $R_{t+1}$ for each fixed $t$.


        Using Lemma \ref{lemma6} and Assumption \ref{a5}, we have
        \begin{equation} \label{temp68}
            \|K_t\|_\HS\leq\kappa M_\rho+3\kappa^2\|H^\dagger\|_\HS.
        \end{equation}
        By Lemma \ref{lemma2},
        \begin{equation} \label{temp69}
            \begin{aligned}
                \l\|\prod_{j=i+1}^t\l(I-\eta_j(C_j+\lambda_j I)\r)\r\| &\leq\exp\l\{-\sum_{j=i+1}^t\eta_j\lambda_j\r\} \\
                &\leq \l(\frac{t+t_0+1}{i+t_0+1}\r)^{-\bar\eta\bar\lambda}
                \leq 2^{\bar\eta\bar\lambda}\l(\frac{t+t_0}{i+t_0}\r)^{-\bar\eta\bar\lambda}.
            \end{aligned}
        \end{equation}
        Combining \eqref{temp67}, \eqref{temp68} and \eqref{temp69}, and using the assumption that $\bar\eta\bar\lambda\geq\theta_1$, we obtain
        \begin{equation*} 
            \begin{aligned}
                \l\|\eta_iK_i\prod_{j=i+1}^t\l(I-\eta_j(C_j+\lambda_j I)\r)\r\|_\HS
                \leq\l(\kappa M_\rho+3\kappa^2\|H^\dagger\|_\HS\r)2^{\bar\eta\bar\lambda}\bar\eta(t+t_0)^{-\theta_1}, \quad 1\leq i\leq t.
            \end{aligned}
        \end{equation*} Moreover, 
        \begin{equation*}
            \begin{aligned}
                &\sum_{i=1}^{t}\be_{z_{i}}\l[\l\|\eta_iK_i\prod_{j=i+1}^t\l(I-\eta_j(C_j+\lambda_j I)\r)\r\|_\HS^2\r]
                \\ \leq&\l(\kappa M_\rho+3\kappa^2\|H^\dagger\|_\HS\r)^22^{2\bar\eta\bar\lambda}\bar\eta^2\sum_{i=1}^t(i+t_0)^{-2\theta_1}\l(\frac{t+t_0}{i+t_0}\r)^{-2\bar\eta\bar\lambda}
                \\ \leq&\l(\kappa M_\rho+3\kappa^2\|H^\dagger\|_\HS\r)^22^{4\bar\eta\bar\lambda}\frac{\bar\eta^2}{2\bar\eta\bar\lambda-2\theta_1+1}(t+t_0+1)^{1-2\theta_1}.
            \end{aligned}
        \end{equation*}
        Then, by Proposition \ref{prop10}, with probability at least $1-\delta_{t+1}$,
        \begin{equation*}
            \|R_{t+1}\|_{\HS}\leq d_1(t+t_0+1)^{\frac12-\theta_1}\log\frac{2}{\delta_{t+1}},
        \end{equation*}
        for some constant $d_1$ that is independent of $t$, $\delta_{t+1}$ and $T$. 
        
        Now, for any $\delta\in(0,2/e)$,  choose $\delta_t=\delta(t+t_0)^{-2}t_0$ for any $1\leq t\leq T$. Then
        $\sum_{t=1}^T\delta_t\leq\delta$ and
        \begin{equation*}
            \begin{aligned}
                \|R_t\|_\HS\leq d_2(t+t_0)^{\frac12-\theta_1}\log(t+t_0)\log\frac2\delta, \quad 1\leq t\leq T,
            \end{aligned}
        \end{equation*} where $d_2$ is a constant independent of $t$, $T$, and $\delta$.

        The proof is complete.
    \end{proof}

    Recall that $\chi_t=\eta_{t}\B_{t}C^\alpha\prod_{j=t+1}^{T}(I-\eta_j(C+\lambda_j I))$ in Proposition \ref{proposition error2}. Define 
    \begin{equation} \label{chi_t}
        \widetilde{\chi}_t:=\chi_t\mathbbm{1}_{A_t},
    \end{equation}
    where 
    \begin{equation} \label{At}
        A_t:=\l\{\|R_t\|_\HS\leq d_2(t+t_0)^{\frac12-\theta_1}\log(t+t_0)\log\frac2\delta\r\}.
    \end{equation}
    Then, $A_t$ is independent of $z_t$, and $\widetilde{\chi}_t$ depends on $z^t=\{z_1,z_2,\cdots,z_t\}$. Moreover, for any $t \geq 1$, we have $\be_{z_t}\l[\widetilde{\chi}_t\r]=\mathbbm{1}_{A_t}\be_{z_t}\l[\chi_t\r]=0$. By Proposition \ref{corollary},
    \[
    \mathbb{P}\l(\widetilde{\chi}_t=\chi_t \text{ for any } 1\leq t\leq T\r)\geq 1-\delta.
    \]

    In the next proposition,  we provide bounds for $\sup_{1\leq t\leq T}\l\|\widetilde{\chi}_t\r\|^2_\HS$ and $\sum_{t=1}^T\be_{z_t}\l\|\widetilde{\chi}_t\r\|_\HS^2$ in preparation for applying Proposition \ref{prop10}.

    \begin{proposition} \label{prop11}
        Under Assumptions \ref{a3} and \ref{a5}, suppose that $\theta_1+\theta_2=1$, $\alpha\in[0,\frac12]$, $(t_0+1)^{\theta_1}\geq\bar\eta(\kappa^2+\bar\lambda)$,  $\bar\eta\bar\lambda\geq\theta_1$ and $\bar\eta\bar\lambda\geq2\theta_1-\frac12$. 
        Then,
        \begin{itemize}
            \item[(1)] 
            the Hilbert-Schmidt norm of $\widetilde{\chi}_t$ is uniformly bounded as follows:
            \begin{equation*}
                \sup_{1\leq t\leq T}\l\|\widetilde{\chi}_t\r\|^2_\HS\leq
                M_1^2,
            \end{equation*}   
            where 
            \[M_1^2:=2d_3^2(T+t_0)^{-2\theta_1}+2d_3^2(T+t_0)^{1-4\theta_1}\log^2(T+t_0)\log^2\frac2\delta;\]
            \item[(2)] the total squared Hilbert-Schmidt norm in expectation is bounded by
            \[\sum_{t=1}^T\be_{z_t}\l\|\widetilde{\chi}_t\r\|_\HS^2\leq \tau_1^2,\]
            where $\tau_1$ is defined as
            \begin{equation*}
\tau_1^2:=d_5
\begin{cases}
(T+t_0)^{-\theta_1}
+(T+t_0)^{1-3\theta_1}\log^2(T+t_0)\log^2\frac{2}{\delta},
\\
\qquad\quad \text{if } 1+2\alpha-s>1,\\[0.6ex]
(T+t_0)^{-\theta_1}\log(T+t_0)
+(T+t_0)^{1-3\theta_1}\log^3(T+t_0)\log^2\frac{2}{\delta},
\\\qquad\quad \text{if } 1+2\alpha-s=1,\\[0.6ex]
(T+t_0)^{s-2\alpha-\theta_1(1+s-2\alpha)}
+(T+t_0)^{1+s-2\alpha-\theta_1(3+s-2\alpha)}
\log^2(T+t_0)\log^2\frac{2}{\delta},
\\
\qquad\quad \text{if } 0\le 1+2\alpha-s<1.
\end{cases}
\end{equation*}
        \end{itemize}
Here, $d_3=d_3(\bar\eta,\bar\lambda,\theta_1,t_0,M_\rho,\|S^\dagger\|_{\mathrm{HS}},\alpha)$ and $d_{5}=d_5(\bar\eta,\bar\lambda,\theta_1,t_0,M_\rho,\|S^\dagger\|_{\mathrm{HS}},\alpha,s)$ are constants independent of $T$ and $\delta$.
    \end{proposition}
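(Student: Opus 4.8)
## Proof Proposal

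The plan is to bound $\|\widetilde\chi_t\|_{\mathrm{HS}}$ and $\sum_t \be_{z_t}\|\widetilde\chi_t\|_{\mathrm{HS}}^2$ by tracking two ingredients: the operator-norm decay of the tail product $C^\alpha\prod_{j=t+1}^T(I-\eta_j(C+\lambda_j I))$ via Lemma~\ref{lemma1}, and the size of $\|\B_t\|_{\mathrm{HS}}$ (or its second moment) on the event $A_t$, where the decomposition $H_t-H^\dagger = L_t+R_t$ keeps things under control through Lemma~\ref{lemma6} and the definition of $A_t$.

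For part (1), I would start from \eqref{temp63}, which gives
$\|\chi_t\|_{\mathrm{HS}}\le 2\eta_t\kappa(M_\rho+\kappa\|H_t\|_{L^\infty_{\mathrm{HS}}})\|C^\alpha\prod_{j=t+1}^T(I-\eta_j(C+\lambda_j I))\|$. On the event $A_t$ we replace $\|H_t\|_{\mathrm{HS}}\le \|H^\dagger\|_{\mathrm{HS}}+\|L_t\|_{\mathrm{HS}}+\|R_t\|_{\mathrm{HS}}$ using Lemma~\ref{lemma6} ($\|L_t\|_{\mathrm{HS}}\le\|H^\dagger\|_{\mathrm{HS}}$) and the defining bound of $A_t$ ($\|R_t\|_{\mathrm{HS}}\le d_2(t+t_0)^{1/2-\theta_1}\log(t+t_0)\log\frac2\delta$), so that $\|H_t\|_{\mathrm{HS}}$ on $A_t$ is at most $O(1) + O((t+t_0)^{1/2-\theta_1}\log(t+t_0)\log\frac2\delta)$. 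For the tail product I apply Lemma~\ref{lemma1}(1) with $\beta=\alpha$ together with Lemma~\ref{lemma2}(1)–(2): $\sum_{j=t+1}^T\eta_j \gtrsim (T+t_0)^{1-\theta_1}-(t+t_0)^{1-\theta_1}$ and $\sum_{j=t+1}^T\eta_j\lambda_j \ge \bar\eta\bar\lambda\log\frac{T+t_0+1}{t+t_0+1}$ (using $\theta_1+\theta_2=1$), giving a factor $\big(\tfrac{T+t_0}{t+t_0}\big)^{-\bar\eta\bar\lambda}$ times something bounded (for $\alpha\le 1/2$ the $(1+(\sum\eta_j)^\alpha)^{-1}$ factor is harmless). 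Multiplying by $\eta_t = \bar\eta(t+t_0)^{-\theta_1}$ and using $\bar\eta\bar\lambda\ge\theta_1$ to absorb the ``$t$'' growth of the $R_t$ contribution into the decaying product (this is exactly where the hypothesis $\bar\eta\bar\lambda\ge\theta_1$, and for the squared-$R_t$ term $\bar\eta\bar\lambda\ge 2\theta_1-\tfrac12$, enter), one obtains a bound of the claimed form $M_1^2 = 2d_3^2(T+t_0)^{-2\theta_1} + 2d_3^2(T+t_0)^{1-4\theta_1}\log^2(T+t_0)\log^2\frac2\delta$ after splitting $(a+b)^2\le 2a^2+2b^2$. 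I would take the supremum over $1\le t\le T$ by checking that the worst case is at (or near) $t=T$; the $(T+t_0)^{-2\theta_1}$ and $(T+t_0)^{1-4\theta_1}$ exponents come precisely from setting $t\asymp T$.

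For part (2), since $A_t$ is $z_t$-independent, $\be_{z_t}\|\widetilde\chi_t\|_{\mathrm{HS}}^2 = \mathbbm{1}_{A_t}\be_{z_t}\|\chi_t\|_{\mathrm{HS}}^2$, and I would compute $\be_{z_t}\|\chi_t\|_{\mathrm{HS}}^2$ exactly as in the proof of Proposition~\ref{Proposition error 1} (the $J_3$ computation): it equals $\eta_t^2\,\be_{z_t}[\|y_t-H_t\phi(x_t)\|_\Y^2 \|C^\alpha\prod_{j=t+1}^T(I-\eta_j(C+\lambda_j I))\phi(x_t)\|_{\H_\K}^2]$, which on $A_t$ is at most $\eta_t^2\cdot\big(2(M_\rho^2+\kappa^2\|H_t\|^2)\big)\cdot\mathrm{Tr}\big(C^{1+2\alpha}\prod_{j=t+1}^T(I-\eta_j(C+\lambda_j I))^2\big)$ using $\|y_t-H_t\phi(x_t)\|_\Y\le M_\rho+\kappa\|H_t\|_{\mathrm{HS}}$ pointwise (Assumption~\ref{a5}). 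For the trace I use Assumption~\ref{a3} to peel off $\mathrm{Tr}(C^s)$ and bound the remaining $\|C^{1+2\alpha-s}\prod(I-\eta_j(\cdot))^2\|$ by Lemma~\ref{lemma1}(3) with $\beta = 1+2\alpha-s$, producing a factor $\exp\{-2\sum_{j=t+1}^T\eta_j\lambda_j\}(1+(\sum_{j=t+1}^T\eta_j)^{1+2\alpha-s})^{-1}$. Then I sum over $t$: the ``constant'' part of $\|H_t\|^2$ (the $O(1)$ piece) leads, after Proposition~\ref{prop1} with $\theta=2\theta_1$ and $v=1+2\alpha-s$, to the first summand in each case of $\tau_1^2$; the $R_t$-growth part contributes an extra $(t+t_0)^{1-2\theta_1}\log^2(t+t_0)\log^2\frac2\delta$ factor, i.e. Proposition~\ref{prop1} with $\theta = 2\theta_1 - (1-2\theta_1) = 4\theta_1-1$ and an extra $\log^2$, giving the second summand. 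The trichotomy $v>1,\ v=1,\ v<1$ in Proposition~\ref{prop1} is exactly the trichotomy $1+2\alpha-s>1,\ =1,\ <1$ in the statement, so the exponents match after translating $-\theta+\theta_1$, $-\theta+\theta_1$ (with a log), and $-\theta+1-v(1-\theta_1)$ for the two values of $\theta$.

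The main obstacle is the bookkeeping around the $R_t$-dependent terms: one must verify that the polynomial growth $(t+t_0)^{1/2-\theta_1}$ of $\|R_t\|$ on $A_t$, when squared and combined with the decaying tail-product factor $\big(\tfrac{T+t_0}{t+t_0}\big)^{-2\bar\eta\bar\lambda}$ (from $\exp\{-2\sum\eta_j\lambda_j\}$), still yields a summable series whose sum carries the stated exponent — and this is precisely why the two lower bounds $\bar\eta\bar\lambda\ge\theta_1$ and $\bar\eta\bar\lambda\ge 2\theta_1-\tfrac12$ are imposed (they guarantee the relevant $\theta$ in Proposition~\ref{prop1} satisfies $\bar\eta\bar\lambda>\theta-1$, the hypothesis of that proposition, and keep the resulting exponents in the claimed form). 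I would handle this by applying Remark~\ref{remark1} (the version of Proposition~\ref{prop1} valid for all $T\ge1$ with $t_0$-dependent constant) rather than the $T\ge t_0+1$ version, so no separate small-$T$ argument is needed, and then collect all constants into $d_3$ and $d_5$.
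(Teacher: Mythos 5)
Your proposal is correct and follows essentially the same route as the paper: in part (1) the same use of \eqref{temp63} with $H_t=L_t+R_t+H^\dagger$, Lemma \ref{lemma6}, the event $A_t$, and Lemma \ref{lemma1} together with Lemma \ref{lemma2} for the tail product (with the two conditions on $\bar\eta\bar\lambda$ ensuring the maximum over $t$ sits at $t=T$), and in part (2) the same variance-to-second-moment step, the trace bound via Assumption \ref{a3} and Lemma \ref{lemma1}(3), and Proposition \ref{prop1}/Remark \ref{remark1} with $v=1+2\alpha-s$ applied for the two values of $\theta$ coming from the $O(1)$ and the $R_t$-growth pieces. The only point to patch is the boundary case $1+2\alpha-s=0$ (i.e.\ $\alpha=0$, $s=1$), where Lemma \ref{lemma1}(3) and Proposition \ref{prop1} do not literally apply (they require $\beta>0$, $v>0$); there one instead bounds $\exp\left\{-2\sum_{j=t+1}^{T}\eta_j\lambda_j\right\}\leq\left(\frac{T+t_0+1}{t+t_0+1}\right)^{-2\bar\eta\bar\lambda}$ and sums directly, recovering the same exponents as your third case, exactly as the paper does in its separate Case 4.
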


    \begin{proof}
        \begin{itemize} 
            \item[(1)] 
            From Proposition \ref{proposition error2} and \eqref{LtRt}, using Lemma \ref{lemma6} 
            yields that
            \begin{equation} \label{temp46}
                \begin{aligned}
                    \l\|\widetilde{\chi}_t\r\|_\HS\leq&2\eta_t\kappa\l(M_\rho+\kappa\l\|L_t+R_t+H^\dagger\r\|_{L^\infty_{\mathrm{HS}}}\r)\left\|C^\alpha\prod_{j=t+1}^{T}(I-\eta_j(C+\lambda_j I))\right\|\mathbbm{1}_{A_t} \\
                    \leq& 2\eta_t\kappa\l(M_\rho+2\kappa\|H^\dagger\|_\HS+\kappa d_2(t+t_0)^{\frac12-\theta_1}\log(t+t_0)\log\frac2\delta\r) \\
                    &\times\left\|C^\alpha\prod_{j=t+1}^{T}(I-\eta_j(C+\lambda_j I))\right\|.
                \end{aligned}
            \end{equation}
            If $\alpha>0$, by Lemma \ref{lemma1} (1) and Lemma \ref{lemma2}, we obtain
            \begin{equation} \label{temp47}
                \begin{aligned}
                    \left\|C^\alpha\prod_{j=t+1}^{T}(I-\eta_j(C+\lambda_j I))\right\|
                    &\leq\exp\left\{-\sum_{j=t+1}^{T}\eta_{j}\lambda_j\right\}\frac{2(\kappa^{2\alpha}+(\alpha/e)^\alpha)}{1+\left(\sum_{j=t+1}^{T}\eta_{j}\right)^\alpha} \\
                    &\leq2(\kappa^{2\alpha}+(\alpha/e)^\alpha)\l(\frac{T+t_0+1}{t+t_0+1}\r)^{-\bar\eta\bar\lambda}.                    
                \end{aligned}
            \end{equation}
            If $\alpha=0$, then
            \begin{equation} \label{temp48}
                \begin{aligned}
                    \left\|\prod_{j=t+1}^{T}(I-\eta_j(C+\lambda_j I))\right\|
                    \leq\exp\left\{-\sum_{j=t+1}^{T}\eta_{j}\lambda_j\right\}
                    \leq\l(\frac{T+t_0+1}{t+t_0+1}\r)^{-\bar\eta\bar\lambda}.
                \end{aligned}
            \end{equation}
            Substituting \eqref{temp47} or \eqref{temp48} into \eqref{temp46} yields that
            \begin{equation*}
                \l\|\widetilde{\chi}_t\r\|_\HS\leq d_3
                (T+t_0)^{-\bar\eta\bar\lambda}(t+t_0)^{\bar\eta\bar\lambda-\theta_1}\l(1+(t+t_0)^{\frac12-\theta_1}\log(t+t_0)\log\frac2\delta\r),
            \end{equation*}
            where $d_3$ is a constant independent of $\delta$, $T$, and $t$. If $\bar\eta\bar\lambda\geq\theta_1$ and $\bar\eta\bar\lambda\geq2\theta_1-\frac12$, the right-hand side of the above inequality achieves its maximum within $1\leq t\leq T$ at $t=T$. Therefore, we obtain
            \begin{equation*}
                \sup_{1\leq t\leq T}\l\|\widetilde{\chi}_t\r\|_\HS\leq d_3(T+t_0)^{-\theta_1}+d_3(T+t_0)^{\frac12-2\theta_1}\log(T+t_0)\log\frac2\delta.
            \end{equation*}
            Thus,
            \[
            \sup_{1\leq t\leq T}\l\|\widetilde{\chi}_t\r\|^2_\HS\leq 2d_3^2(T+t_0)^{-2\theta_1}+2d_3^2(T+t_0)^{1-4\theta_1}\log^2(T+t_0)\log^2\frac2\delta.
            \]
        \item[(2)] 
        By \eqref{chi_t}, we see that
        \begin{equation*}
            \begin{aligned}
                \sum_{t=1}^T\be_{z_t}\l\|\widetilde{\chi}_t\r\|_\HS^2&=
                \sum_{t=1}^T\eta_{t}^2\be_{z_t}\l[\l\|\B_{t}C^\alpha\prod_{j=t+1}^{T}(I-\eta_j(C+\lambda_j I))\r\|_\HS^2\mathbbm{1}_{A_t}\r] \\
                &\leq\sum_{t=1}^T\eta_{t}^2\be_{z_t}\l[\l\|\l(y_t-H_t\phi(x_t)\r)\otimes\phi(x_t)C^\alpha\prod_{j=t+1}^{T}(I-\eta_j(C+\lambda_j I))\r\|_\HS^2\mathbbm{1}_{A_t}\r] \\
                &=\sum_{t=1}^T\eta_{t}^2\be_{z_t}\l[\|y_t-H_t\phi(x_t)\|_\Y^2\l\|C^\alpha\prod_{j=t+1}^{T}(I-\eta_j(C+\lambda_j I))\phi(x_t)\r\|_{\H_\K}^2\mathbbm{1}_{A_t}\r],
            \end{aligned}
        \end{equation*}
        where the last inequality uses the definition of the Hilbert-Schmidt norm. 
        By \eqref{LtRt}, Lemma \ref{lemma6}, and \eqref{At}, it follows that
        \begin{equation*}
            \|y_t-H_t\phi(x_t)\|_\Y^2\mathbbm{1}_{A_t}
            \leq 2M_\rho^2+2\kappa^2\l(8\|H^\dagger\|_\HS^2+2d_2^2(t+t_0)^{1-2\theta_1}\log^2(t+t_0)\log^2\frac2\delta\r).
        \end{equation*}
        Then, 
        \begin{equation} \label{temp71}
            \begin{aligned}
\sum_{t=1}^T\be_{z_t}\l\|\widetilde{\chi}_t\r\|_\HS^2\leq&\sum_{t=1}^T\eta_{t}^2\l(2M_\rho^2+2\kappa^2\l(8\|H^\dagger\|_\HS^2+2d_2^2(t+t_0)^{1-2\theta_1}\log^2(t+t_0)\log^2\frac2\delta\r)\r)\\ 
&\times\be_{z_t}\l\|C^\alpha\prod_{j=t+1}^{T}(I-\eta_j(C+\lambda_j I))\phi(x_t)\r\|_{\H_\K}^2.
            \end{aligned}
        \end{equation}
        
        By the definition of the trace of operators and using Assumption \ref{a3} and (3) in Lemma \ref{lemma1}, it follows that
        if $2\alpha+1-s>0$, 
        \begin{equation} \label{temp51}
            \begin{aligned}
                &\quad\be_{z_t}\l\|C^\alpha\prod_{j=t+1}^{T}(I-\eta_j(C+\lambda_j I))\phi(x_t)\r\|_{\H_\K}^2\\
                &\quad =\mathrm{Tr}\l(C^{2\alpha+1}\prod_{j=t+1}^{T}\l(I-\eta_j(C+\lambda_j I)\r)^2\r) \\
                &\quad \leq \mathrm{Tr}\l(C^s\r)\l\|C^{2\alpha+1-s}\prod_{j=t+1}^{T}\l(I-\eta_j(C+\lambda_j I)\r)^2\r\| \\
                &\quad \leq\mathrm{Tr}\l(C^s\r)\frac{2(\kappa^{2(2\alpha+1-s))}+((2\alpha+1-s)/(2e))^{2\alpha+1-s})}{1+\left(\sum_{j=t+1}^{T}\eta_{j}\right)^{2\alpha+1-s}} \\
                &\quad \quad \times\exp\left\{-2\sum_{j=t+1}^{T}\eta_{j}\lambda_j\right\},
            \end{aligned}
        \end{equation}
        else if $2\alpha+1-s=0$,
        \begin{equation} \label{temp64}
            \be_{z_t}\l\|C^\alpha\prod_{j=t+1}^{T}(I-\eta_j(C+\lambda_j I))\phi(x_t)\r\|_{\H_\K}^2\leq\mathrm{Tr}\l(C^s\r)\exp\left\{-2\sum_{j=t+1}^{T}\eta_{j}\lambda_j\right\}.
        \end{equation}

    Substituting \eqref{temp51} (or \eqref{temp64}) into \eqref{temp71}, we deduce that there exists a constant $d_4$ independent of $\delta$, $t$, and $T$, such that
        \begin{equation} \label{temp52}
            \begin{aligned}
                \sum_{t=1}^T\be_{z_t}\l\|\widetilde{\chi}_t\r\|_\HS^2\leq&d_4\sum_{t=1}^T(t+t_0)^{-2\theta_1}\l(1+(t+t_0)^{1-2\theta_1}\log^2(T+t_0)\log^2\frac2\delta\r) \\
                &\times\frac{\exp\left\{-2\sum_{j=t+1}^{T}\eta_{j}\lambda_j\right\}}{1+\left(\sum_{j=t+1}^{T}\eta_{j}\right)^{2\alpha+1-s}}.
            \end{aligned}
        \end{equation}
        Now, we apply Proposition \ref{prop1} and Remark \ref{remark1} with $v=2\alpha+1-s$ to derive the bound.  Let $d_{5}$ denote a constant independent of $T$ and $\delta$.\\
        \textbf{Case 1:}
        If $2\alpha+1-s>1$, then
        \begin{equation*}
            \sum_{t=1}^T\be_{z_t}\l\|\widetilde{\chi}_t\r\|_\HS^2\leq
            d_{5}(T+t_0)^{-\theta_1}+d_{5}(T+t_0)^{1-3\theta_1}\log^2(T+t_0)\log^2\frac2\delta.
        \end{equation*}
        \textbf{Case 2:}
        If $2\alpha+1-s=1$, then
        \begin{equation*}
            \sum_{t=1}^T\be_{z_t}\l\|\widetilde{\chi}_t\r\|_\HS^2\leq
            d_{5}(T+t_0)^{-\theta_1}\log(T+t_0)+d_{5}(T+t_0)^{1-3\theta_1}\log^3(T+t_0)\log^2\frac2\delta.
        \end{equation*}
        \textbf{Case 3:}
        If $0<2\alpha+1-s<1$, then
        \begin{equation*}
            \sum_{t=1}^T\be_{z_t}\l\|\widetilde{\chi}_t\r\|_\HS^2\leq
            d_{5}(T+t_0)^{s-2\alpha-\theta_1(1+s-2\alpha)}+d_{5}(T+t_0)^{1+s-2\alpha-\theta_1(3+s-2\alpha)}\log^2(T+t_0)\log^2\frac2\delta.
        \end{equation*}
        \textbf{Case 4:}
        If $2\alpha+1-s=0$, since
        \begin{equation*}
            \begin{aligned}
                \exp\left\{-2\sum_{j=t+1}^{T}\eta_{j}\lambda_j\right\}&=\exp\{-2\bar\eta\bar\lambda\sum_{j=t+1}^T(j+t_0)^{-1}\} \\
                &\leq\l(\frac{T+t_0+1}{t+t_0+1}\r)^{-2\bar\eta\bar\lambda},
            \end{aligned}
        \end{equation*}
        substituting this into \eqref{temp52} gives
        \begin{equation*}
            \begin{aligned}
                \sum_{t=1}^T\be_{z_t}\l\|\widetilde{\chi}_t\r\|_\HS^2&\leq d_{5}(T+t_0)^{-2\bar\eta\bar\lambda}\l((T+t_0)^{2\bar\eta\bar\lambda-2\theta_1+1}+(T+t_0)^{2\bar\eta\bar\lambda+2-4\theta_1}\log^2(T+t_0)\log^2\frac2\delta\r) \\
                &\leq d_{5}\l((T+t_0)^{1-2\theta_1}+(T+t_0)^{2-4\theta_1}\log^2(T+t_0)\log^2\frac2\delta\r),
            \end{aligned}
        \end{equation*}
       which is consistent with the bound in Case 3.     
        \end{itemize}       
        The proof is complete. 
    \end{proof}

    Since $\widetilde{\chi}_t$ is $\sigma\l(z_1,z_2,\cdots, z_t\r)$ measurable and $\be_{z_t}\l[\widetilde{\chi}_t\r]=0$, $\l(\widetilde{\chi}_t\r)_{1\leq t\leq T}$ is a martingale difference sequence. Based on Proposition \ref{prop10}, we derive the high-probability error bounds.

\begin{proof}[Proof of Theorem \ref{thm5}]
        By Proposition \ref{proposition error2} with $\alpha=\frac12$, 
        \begin{equation} \label{temp53} 
		\left\|\left(H_{T+1}-H^{\dagger}\right)C^\frac12\right\|_{\mathrm{HS}}^2
            \leq
            \T_1+\T_2+\T_3+6\left\|\sum_{t=1}^T\chi_t\right\|^2_{\mathrm{HS}}.
        \end{equation}
        Using Proposition \ref{corollary}, there holds
        \begin{equation} \label{temp54}
            \mathbb{P}\l(\widetilde{\chi}_t=\chi_t \text{ for any } 1\leq t\leq T\r)\geq 1-\delta.
        \end{equation}
        Choose $\theta_1=\frac{\min\{2r+1,2\}}{1+\min\{2r+1,2\}}$ and $\theta_2=\frac{1}{1+\min\{2r+1,2\}}$.
        Applying Proposition \ref{prop2}, Proposition \ref{prop3} and Proposition \ref{prop4} with $\alpha=\frac12$ and $T\geq  t_0+1$,
        we have 
        \begin{equation} \label{temp55}
            \T_1\leq c_1\lambda_T^{\min\{2r+1,2\}}=c_1\bar\lambda^{\min\{2r+1,2\}}(T+t_0)^{-\frac{\min\{2r+1,2\}}{1+\min\{2r+1,2\}}},
        \end{equation}
        \begin{equation} \label{temp56}
            \begin{aligned}
                \T_2&\leq c_2\bar{\eta}^{-(2r+1)}(t_0+1)^{2\bar{\eta}\bar{\lambda}}(T+t_0)^{-(2r+1)(1-\theta_1)-2\bar{\eta}\bar{\lambda}} \\
                &\leq c_2\bar{\eta}^{-(2r+1)}(t_0+1)^{2\bar{\eta}\bar{\lambda}}(T+t_0)^{-\frac{\min\{2r+1,2\}}{1+\min\{2r+1,2\}}},
            \end{aligned}
        \end{equation}
        and 
        \begin{equation} \label{temp57}
            \T_3\leq c_3(T+t_0)^{-2\theta_2\min\{r,1\}+\theta_1-1}
            \leq c_3(T+t_0)^{-\frac{\min\{2r+1,2\}}{1+\min\{2r+1,2\}}}.
        \end{equation}
        Using Proposition \ref{prop10} and Proposition \ref{prop11}, we deduce that with probability at least $1-\delta$,
        \begin{equation*}
        \l\|\sum_{i=1}^T\widetilde{\chi}_t\r\|_\HS^2\leq8\l(M_1^2+\tau_1^2\r)\log^2\frac2\delta
        \end{equation*}
        with
        \begin{equation*}
            \begin{aligned}
                M_1^2&=2d_3^2(T+t_0)^{-2\theta_1}+2d_3^2(T+t_0)^{1-4\theta_1}\log^2(T+t_0)\log^2\frac2\delta \\
                &\leq 2d_3^2(T+t_0)^{-\theta_1}+2d_3^2(T+t_0)^{1-3\theta_1}\log^2(T+t_0)\log^2\frac2\delta,
            \end{aligned}
        \end{equation*}
        and 
        \begin{equation*}
            \begin{aligned}
                \tau_1^2&=d_{5}
                \begin{cases}
                    (T+t_0)^{-\theta_1}+(T+t_0)^{1-3\theta_1}\log^2(T+t_0)\log^2\frac2\delta, &\text{ when }s<1, \\
                    (T+t_0)^{-\theta_1}\log(T+t_0)+(T+t_0)^{1-3\theta_1}\log^3(T+t_0)\log^2\frac2\delta, &\text{ when } s=1. \\ 
                \end{cases} \\
            \end{aligned}
        \end{equation*}
        Therefore, 
        \begin{equation} \label{temp58}
            \begin{aligned}
                \l\|\sum_{i=1}^T\widetilde{\chi}_t\r\|_\HS^2\leq&8(2d_3^2+d_{5})\l((T+t_0)^{-\theta_1}+(T+t_0)^{1-3\theta_1}\log^2(T+t_0)\log^2\frac2\delta\r) \\
                &\times\log^2\frac2\delta
                \begin{cases}
                    1, &\text{ when }s<1, \\
                    \log(T+t_0), &\text{ when }s=1.
                \end{cases}
            \end{aligned}
        \end{equation}
        If $T\geq t_0+1$, combining \eqref{temp53}, \eqref{temp54}, \eqref{temp55}, \eqref{temp56}, \eqref{temp57}, and \eqref{temp58}, we obtain that there exists some constant $c_{2,1}$ independent of $T$ and $\delta$, such that
        \begin{equation*}
            \begin{aligned}
                &\left\|\left(H_{T+1}-H^{\dagger}\right)C^\frac12\right\|_{\mathrm{HS}}^2
            \\&\leq c_{2,1}
            \begin{cases}
                (T+t_0)^{-\theta_1}\log^2\frac2\delta+(T+t_0)^{1-3\theta_1}\log^2(T+t_0)\log^4\frac2\delta, & s<1, \\
                (T+t_0)^{-\theta_1}\log(T+t_0)\log^2\frac2\delta+(T+t_0)^{1-3\theta_1}\log^3(T+t_0)\log^4\frac2\delta, & s=1. \\
            \end{cases}
            \end{aligned}
        \end{equation*}
        holds with probability at least $1-2\delta$. 
        
        Since it is easy to verify that $\left\|\left(H_{T+1}-H^{\dagger}\right)C^\frac12\right\|_{\mathrm{HS}}^2$
         for $1\leq T<t_0+1$ can be bounded uniformly by a constant, 
         we can choose $c_{2,1}$ to be sufficiently large such that the bound also holds true for $1\leq T< t_0+1$.
        
        The proof is complete.
    \end{proof} 

    \begin{proof}[Proof of Corollary \ref{coro1}]
    For any $t\geq1$ and any $\delta\in(0,2/e)$, apply Theorem \ref{thm5} with $\delta_t=(t+t_0)^{-2}t_0\delta$, so that  $\sum_{t\geq1}\delta_t\leq\delta$. When $s<1$, we have
        \begin{equation*}
            \begin{aligned}
                \E(h_{t+1})-\E(h^\dagger)\lesssim(t+t_0)^{-\theta_1}\log^4\frac{2}{\delta_t}\lesssim(t+t_0)^{-\theta_1}\log^4(t+t_0)\log^4\frac{2}{\delta}.
            \end{aligned}
        \end{equation*}
        When $s=1$, similarly, we have
        \[
        \E(h_{t+1})-\E(h^\dagger)\lesssim(t+t_0)^{-\theta_1}\log^5(t+t_0)\log^4\frac{2}{\delta}.
        \]
        
        The proof is complete.
    \end{proof}

    \begin{proof}[Proof of Theorem \ref{thm8}]
        By Proposition \ref{proposition error2} with $\alpha=0$, 
        \begin{equation} \label{temp88}
		\left\|H_{T+1}-H^{\dagger}\right\|_{\mathrm{HS}}^2
            \leq
            \T_1+\T_2+\T_3+6\left\|\sum_{t=1}^T\chi_t\right\|^2_{\mathrm{HS}}.
        \end{equation}
        According to Proposition \ref{corollary}, we have
        \begin{equation} \label{temp89}
            \mathbb{P}\l(\widetilde{\chi}_t=\chi_t \text{ for any } 1\leq t\leq T\r)\geq 1-\delta.
        \end{equation}
        Applying Proposition \ref{prop2}, Proposition \ref{prop3}, and Proposition \ref{prop7} with $\alpha=0$ and $T\geq  t_0+1$,
        we obtain
        \begin{equation} \label{temp90}
            \T_1\leq c_1\lambda_T^{\min\{2r,2\}}=c_1\bar\lambda^{\min\{2r,2\}}(T+t_0)^{-\min\{2r,2\}\theta_2},
        \end{equation}
        \begin{equation} \label{temp91}
            \begin{aligned}
                \T_2&\leq c_2\bar{\eta}^{-2r}(t_0+1)^{2\bar{\eta}\bar{\lambda}}(T+t_0)^{-2r(1-\theta_1)-2\bar{\eta}\bar{\lambda}}, 
            \end{aligned}
        \end{equation}
        and 
        \begin{equation} \label{temp92}
            \T_3\leq c_3(T+t_0)^{-2\theta_2\min\{r,1\}}.
        \end{equation}
        Using Proposition \ref{prop10} and Proposition \ref{prop11}, we deduce that with probability at least $1-\delta$,
        \begin{equation*}
        \l\|\sum_{i=1}^T\widetilde{\chi}_t\r\|_\HS^2\leq8\l(M_1^2+\tau_1^2\r)\log^2\frac2\delta.
        \end{equation*}
        Additionally, we have
        \begin{equation*}
            \begin{aligned}
                M_1^2&=2d_3^2(T+t_0)^{-2\theta_1}+2d_3^2(T+t_0)^{1-4\theta_1}\log^2(T+t_0)\log^2\frac2\delta,
            \end{aligned}
        \end{equation*}
        and 
        \begin{equation*}
            \begin{aligned}
                \tau_1^2=d_{5}(T+t_0)^{s-\theta_1(1+s)}+d_5(T+t_0)^{1+s-\theta_1(3+s)}\log^2(T+t_0)\log^2\frac2\delta.
            \end{aligned}
        \end{equation*}
        Thus,
        \begin{equation} \label{temp93}
            \begin{aligned}
                \l\|\sum_{i=1}^T\widetilde{\chi}_t\r\|_\HS^2\leq& 8\l(2d_3^2+d_5\r)\left((T+t_0)^{s-\theta_1(1+s)}+(T+t_0)^{1+s-\theta_1(3+s)} \right.
                \\&\left.\times\log^2(T+t_0)\log^2\frac2\delta\right)\log^2\frac2\delta.
            \end{aligned}
        \end{equation}
        Combining \eqref{temp88}, \eqref{temp89}, \eqref{temp90}, \eqref{temp91}, \eqref{temp92}, and \eqref{temp93}, let $c_{2,2}$ denote a constant independent of $T$ and $\delta$,
        \begin{itemize}
            \item[(1)] 
            if $2r+s<1$, choose 
            $\theta_1=\frac{1+2\min\{r,1\}+s}{3+2\min\{r,1\}+s}=\frac{1+2r+s}{3+2r+s}$ and 
            \[
            \theta_2=\frac{2}{3+2\min\{r,1\}+s}=\frac{2}{3+2r+s},
            \]
            then,
            \begin{equation*}
                \begin{aligned}
                    \left\|H_{T+1}-H^{\dagger}\right\|_{\mathrm{HS}}^2
            &\leq c_{2,2}(T+t_0)^{-\frac{4\min\{r,1\}}{3+2\min\{r,1\}+s}}\log^2(T+t_0)\log^4\frac2\delta
            \\&= c_{2,2}(T+t_0)^{-\frac{4r}{3+2r+s}}\log^2(T+t_0)\log^4\frac2\delta
                \end{aligned}    
            \end{equation*}
            holds with probability at least $1-2\delta$;
            \item[(2)] 
            if $2\min\{r,1\}+s\geq1$, i.e., $2r+s\geq1$, choose $\theta_1=\frac{2\min\{r,1\}+s}{1+2\min\{r,1\}+s}$  and $\theta_2=\frac{1}{1+2\min\{r,1\}+s}$,  then
            \begin{equation*}
                \begin{aligned}
                    \left\|H_{T+1}-H^{\dagger}\right\|_{\mathrm{HS}}^2
            &\leq c_{2,2}\l((T+t_0)^{-\frac{2\min\{r,1\}}{1+2\min\{r,1\}+s}}+(T+t_0)^{-\frac{4\min\{r,1\}+s-1}{1+2\min\{r,1\}+s}}\log^2(T+t_0)\log^2\frac2\delta\r)\log^2\frac2\delta \\
            &\lesssim (T+t_0)^{-\frac{2\min\{r,1\}}{1+2\min\{r,1\}+s}}\log^4\frac2\delta
                \end{aligned}    
            \end{equation*}
            holds with probability at least $1-2\delta$.
        \end{itemize}
        
        Since $\left\|H_{T+1}-H^{\dagger}\right\|_{\mathrm{HS}}^2$
         for $1\leq T<t_0+1$ can be uniformly bounded by a constant, we can choose $c_{2,2}$ to be sufficiently large so that the bounds also hold in this case.
         
        The proof is then complete.
    \end{proof} 

    \begin{proof}[Proof of Corollary \ref{coro2}]
        For any $\delta\in(0,2/e)$, using Theorem \ref{thm8} with $\delta_t=(t+t_0)^{-2}t_0\delta$, we derive the desired bounds.
    \end{proof}

    Next, we focus on the finite-horizon setting and derive the corresponding high-probability error bounds.
    \begin{proposition} \label{corollary1}
        Under Assumption \ref{a5}, let $T\geq2$, $t_0=\theta_1=\theta_2=0$, $\eta_t=\bar\eta=\eta_1T^{-\theta_3}$ and $\lambda_t=\bar\lambda=\lambda_1T^{-\theta_4}$. Suppose that $\eta_1(\kappa^2+\lambda_1)\leq1$. Then, for any $\delta\in(0,2/e)$, with probability at least $1-\delta$, there holds
        \begin{equation*}
            \l\|R_{t}\r\|_\HS\leq \frac{2d_6}{\log 2}T^{\frac12-\theta_3}\log T\log\frac{2}{\delta},\quad 1\leq t\leq T,
        \end{equation*}
        where $d_6=d_6(\eta_1,\theta_3,M_\rho,\|S^\dagger\|_{\mathrm{HS}})$ is a constant independent of $t$, $T$, and $\delta$.
    \end{proposition}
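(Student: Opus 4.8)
The plan is to mirror the argument used for Proposition~\ref{corollary}, the only structural change being that the step size and regularization parameter are now constant and of order $T^{-\theta_3}$ and $T^{-\theta_4}$, so the contraction factor is essentially trivial and we simply bound the number of summands by $T$. Iterating the recursion \eqref{temp66} exactly as in the proof of Proposition~\ref{corollary}, with $K_i := (y_i - H^\dagger\phi(x_i))\otimes\phi(x_i) + L_i(C - C_i)$, gives for every $t \geq 2$
\[
R_t = \sum_{i=1}^{t-1}\eta_i K_i\prod_{j=i+1}^{t-1}\l(I - \eta_j(C_j + \lambda_j I)\r),
\]
while $R_1 = \mathbf{0}$ trivially obeys the claimed bound. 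For each fixed $t$, since $\be_{z_i}[K_i] = 0$ and $L_i$ is deterministic, the summands above---read from $i = t-1$ down to $i = 1$---form a martingale difference sequence with respect to the increasing $\sigma$-algebras $\sigma(z_{t-1})$, $\sigma(z_{t-2},z_{t-1})$, $\ldots$, $\sigma(z_2,\ldots,z_{t-1})$, exactly as in Proposition~\ref{corollary}. I will apply Proposition~\ref{prop10} to this sequence for each $t$ separately and then take a union bound.

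For the per-step estimates, Lemma~\ref{lemma6} and Assumption~\ref{a5} yield, exactly as in \eqref{temp68}, the uniform bound $\|K_i\|_\HS \leq \kappa M_\rho + 3\kappa^2\|H^\dagger\|_\HS =: B$. Moreover, $C_j = \phi(x_j)\otimes\phi(x_j)$ is a positive operator with $\|C_j\| \leq \kappa^2$ by Assumption~\ref{a1}, and $\eta_1(\kappa^2+\lambda_1) \leq 1$ forces $I - \eta_j(C_j + \lambda_j I) \succeq 0$, hence $\|I - \eta_j(C_j+\lambda_j I)\| \leq 1 - \bar\eta\bar\lambda \leq 1$ and $\|\prod_{j=i+1}^{t-1}(I - \eta_j(C_j+\lambda_j I))\| \leq (1-\bar\eta\bar\lambda)^{t-1-i} \leq 1$. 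Thus every martingale increment has Hilbert--Schmidt norm at most $M_\xi := B\bar\eta = B\eta_1 T^{-\theta_3}$, and the predictable quadratic variation satisfies
\[
\sum_{i=1}^{t-1}\be_{z_i}\l[\l\|\eta_i K_i\prod_{j=i+1}^{t-1}(I-\eta_j(C_j+\lambda_j I))\r\|_\HS^2\r] \leq B^2\bar\eta^2\sum_{i=1}^{t-1}(1-\bar\eta\bar\lambda)^{2(t-1-i)} \leq B^2\bar\eta^2(t-1) \leq B^2\eta_1^2\,T^{1-2\theta_3} =: \tau^2.
\]
In contrast to the online setting, regularization gives no variance reduction here, since $1-\bar\eta\bar\lambda \approx 1$; we merely used that there are at most $T$ summands. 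Feeding $M_\xi$ and $\tau$ into Proposition~\ref{prop10} gives, for each $t \in \{2,\ldots,T\}$, with probability at least $1 - \delta_t$,
\[
\|R_t\|_\HS \leq 2\l(\frac{M_\xi}{3} + \tau\r)\log\frac{2}{\delta_t} \leq \frac{8}{3}B\eta_1\,T^{\frac12-\theta_3}\log\frac{2}{\delta_t},
\]
using $T^{-\theta_3} \leq T^{\frac12-\theta_3}$. Choosing $\delta_t = \delta/T$ and a union bound over the at most $T-1$ such events leaves, with probability at least $1-\delta$, all of these bounds valid simultaneously for $2 \leq t \leq T$ (and $t=1$ is trivial). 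Finally, because $T \geq 2$ and $\delta < 1$ we have $\log(2T/\delta) = \log(2/\delta) + \log T \leq \frac{2}{\log 2}\log T\,\log(2/\delta)$---bounding each summand separately via $\log T \geq \log 2$ and $\log(2/\delta) \geq \log 2$---which yields the asserted estimate with, say, $d_6 = \frac{8}{3}(\kappa M_\rho + 3\kappa^2\|H^\dagger\|_\HS)\eta_1$.

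I do not expect a genuinely difficult step: the argument is a direct transcription of the proof of Proposition~\ref{corollary} under simpler, constant-parameter bookkeeping. The two points that need a little care are: (i) $C_j$ is now a random operator, so the contraction bound $\|I - \eta_j(C_j+\lambda_j I)\| \leq 1$ must be justified from the almost-sure bound $\|\phi(x_j)\|^2 \leq \kappa^2$ in Assumption~\ref{a1} together with $\eta_1(\kappa^2+\lambda_1)\leq 1$, rather than from deterministic spectral properties of $C$; and (ii) the conversion of $\log(2T/\delta)$ into the product form $\log T\,\log(2/\delta)$, which is precisely where the standing hypothesis $T \geq 2$ is used.
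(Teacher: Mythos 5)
Your proposal is correct and follows essentially the same route as the paper: iterate \eqref{temp66} to express $R_t$ as a sum, use the bound \eqref{temp68} together with the contraction $\|I-\eta_j(C_j+\lambda_j I)\|\leq 1$ (from $\eta_1(\kappa^2+\lambda_1)\leq1$) to control the increments and the quadratic variation by $B\eta_1 T^{-\theta_3}$ and $B^2\eta_1^2T^{1-2\theta_3}$, apply Proposition \ref{prop10} for each $t$, and take a union bound with $\delta_t=\delta/T$, converting $\log(2T/\delta)$ into the product form exactly as the paper does to produce the $\tfrac{2}{\log 2}\log T\log\tfrac2\delta$ factor. No gaps; the only differences are cosmetic (an index shift in the representation of $R_t$ and an explicit value for $d_6$).
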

    
    \begin{proof}
        The proof follows the same strategy as Proposition \ref{corollary}. Define  $K_t$ as in the proof of Proposition \ref{corollary}, and represent $R_{t+1}$ in the same summation form.
        By \eqref{temp68}, there holds
        \begin{equation*}
            \begin{aligned}
                \l\|\eta_iK_i\prod_{j=i+1}^t\l(I-\eta_j(C_j+\lambda_j I)\r)\r\|_\HS
                \leq\eta_1\l(\kappa M_\rho+3\kappa^2\|H^\dagger\|_\HS\r)T^{-\theta_3},
            \end{aligned}
        \end{equation*}
        and
        \begin{equation*}
            \sum_{i=1}^t\be_{z_i}\l[\l\|\eta_iK_i\prod_{j=i+1}^t\l(I-\eta_j(C_j+\lambda_j I)\r)\r\|_\HS^2\r]
            \leq\eta_1^2\l(\kappa M_\rho+3\kappa^2\|H^\dagger\|_\HS\r)^2T^{1-2\theta_3}.
        \end{equation*}
        Then, by Proposition \ref{prop10}, for some constant $d_6$ independent of $t$, $T$, and $\delta_{t+1}$, with probability at least $1-\delta_{t+1}$, it holds that
        \begin{equation*}
            \l\|R_{t+1}\r\|_\HS\leq d_6T^{\frac12-\theta_3}\log\frac{2}{\delta_{t+1}}.
        \end{equation*}
        For $\delta\in(0,2/e)$, choose $\delta_t=\frac{\delta}{T}$ for any $1\leq t\leq T$. Then we obtain
        \begin{equation*}
            \l\|R_{t}\r\|_\HS\leq \frac{2d_6}{\log 2}T^{\frac12-\theta_3}\log T\log\frac{2}{\delta}, \quad 1\leq t\leq T.
        \end{equation*}

        The proof is complete.
    \end{proof}

    We now define 
    \begin{equation}\label{Xtbar}
        \overline{\chi}_t:=\chi_t\mathbbm{1}_{\overline{A}_t},
    \end{equation}
    where 
    \begin{equation} \label{Atbar}
        \overline{A}_t:=\l\{\|R_t\|_\HS\leq \frac{2d_6}{\log 2}T^{\frac12-\theta_3}\log T\log\frac{2}{\delta}\r\}.
    \end{equation}
    Note that $\overline{A}_t$ is independent of $z_t$, and for any $t \in \bn_T$, we have $\be_{z_t} \l[\overline{\chi}_t \r] = 0$. Moreover, by Proposition \ref{corollary1},      
    \[
    \mathbb{P}\l(\overline{\chi}_t=\chi_t \text{ for any } 1\leq t\leq T\r)\geq 1-\delta.
    \]

    \begin{proposition} \label{prop13}
        Suppose Assumptions \ref{a3} and \ref{a5} hold. Let $\alpha\in[0,\frac12]$, $t_0=\theta_1=\theta_2=0$, $\eta_t\equiv\bar\eta=\eta_1 T^{-\theta_3}$ with $0<\theta_3<1$, and $\lambda_t\equiv\bar\lambda=\lambda_1 T^{-\theta_4}$ with $\theta_4>0$.
        Assume $T\geq2$ and $\eta_1(\kappa^2+\lambda_1)\leq1$. Then,
        \begin{itemize}
            \item[(1)] 
            The Hilbert-Schmidt norm of $\overline{\chi}_t$ is uniformly bounded as follows:
            \begin{equation*}
                \sup_{1\leq t\leq T}\l\|\overline{\chi}_t\r\|^2_\HS\leq
                M_2^2,
            \end{equation*}   
            where 
            \[M_2^2:=d_7T^{-2\theta_3}\l(1+T^{1-2\theta_3}\log^2 T\log^2\frac{2}{\delta}\r).\]            
            \item[(2)] The total squared Hilbert-Schmidt norm in expectation is bounded by
            \[\sum_{t=1}^T\be_{z_t}\l\|\overline{\chi}_t\r\|_\HS^2\leq \tau_2^2,\]
            where $\tau_2$ is defined as
            \begin{equation*}
            \begin{aligned}
                &\ \tau_2^2:=
                \\&d_8\delta_3\l(1+T^{1-2\theta_3}\log^2 T\log^2\frac{2}{\delta}\r)
                    \begin{cases}
                    T^{-(1-2\alpha+s)\theta_3+(s-2\alpha)\min\{1,\theta_3+\theta_4\}}, & \text{ when } 2\alpha<s\leq1+2\alpha, \\
                    T^{-\theta_3}\log T, & \text{ when } 2\alpha=s, \\
                    T^{-\theta_3}, & \text{ when } 2\alpha>s.
                    \end{cases}
            \end{aligned}
            \end{equation*}
        \end{itemize}
        Here \(d_7\) depends only on \(\eta_1\), \(\theta_3\), \(M_\rho\), \(\|S^\dagger\|_{\mathrm{HS}}\), and \(\alpha\), while \(d_8\) depends only on \(\eta_1\), \(\lambda_1\), \(\theta_3\), \(\theta_4\), \(M_\rho\), \(\|S^\dagger\|_{\mathrm{HS}}\), \(\alpha\), and \(s\); both constants are independent of \(T\) and \(\delta\).
    \end{proposition}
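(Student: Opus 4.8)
The plan is to mirror the two-part argument used for Proposition \ref{prop11}, substituting the finite-horizon ingredients for the decaying-parameter ones: the role of Proposition \ref{corollary} is played by Proposition \ref{corollary1}, and Proposition \ref{prop1} is replaced by Proposition \ref{prop12}. Throughout I would use that $t_0=\theta_1=\theta_2=0$, $\eta_t\equiv\bar\eta=\eta_1T^{-\theta_3}$ and $\bar\lambda=\lambda_1T^{-\theta_4}$, so that $\bar\eta\bar\lambda=\eta_1\lambda_1T^{-\theta_3-\theta_4}$ and $\sum_{j=t+1}^T\eta_j\lambda_j=(T-t)\bar\eta\bar\lambda$; and that on the event $\overline A_t$ one has $\|R_t\|_\HS\le\frac{2d_6}{\log2}T^{\frac12-\theta_3}\log T\log\frac2\delta$ by Proposition \ref{corollary1}.

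For part (1), I would start from the bound \eqref{temp63} on $\|\chi_t\|_\HS$ together with $H_t=L_t+R_t+H^\dagger$, bounding $\|L_t\|_\HS\le\|H^\dagger\|_\HS$ by Lemma \ref{lemma6} and $\|R_t\|_\HS$ via the definition of $\overline A_t$, so that $\mathbbm{1}_{\overline A_t}\|H_t\|_{L^\infty_\HS}\lesssim 1+T^{\frac12-\theta_3}\log T\log\frac2\delta$. The factor $\left\|C^\alpha\prod_{j=t+1}^T(I-\eta_j(C+\lambda_jI))\right\|$ is bounded by a constant independent of $t$ and $T$ — by Lemma \ref{lemma1}(1) when $\alpha>0$, trivially when $\alpha=0$ — since $\exp\{-\sum_{j=t+1}^T\eta_j\lambda_j\}\le1$. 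Because $\eta_t=\bar\eta=\eta_1T^{-\theta_3}$ is constant in $t$, multiplying the three factors gives $\sup_{1\le t\le T}\|\overline\chi_t\|_\HS\lesssim T^{-\theta_3}(1+T^{\frac12-\theta_3}\log T\log\frac2\delta)$ uniformly; squaring and using $(a+b)^2\le 2a^2+2b^2$ yields $M_2^2$.

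For part (2), I would expand $\sum_{t=1}^T\be_{z_t}\|\overline\chi_t\|_\HS^2$ as in \eqref{temp71}: writing $\B_t=(y_t-H_t\phi(x_t))\otimes\phi(x_t)-\be_{z_t}[(y_t-H_t\phi(x_t))\otimes\phi(x_t)]$ and noting that subtracting a conditional mean does not increase the second moment, the sum is at most $\bar\eta^2\sum_{t=1}^T\be_{z_t}[\|y_t-H_t\phi(x_t)\|_\Y^2\,\|C^\alpha\prod_{j=t+1}^T(I-\eta_j(C+\lambda_jI))\phi(x_t)\|_{\H_\K}^2\mathbbm{1}_{\overline A_t}]$. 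On $\overline A_t$, again via $H_t=L_t+R_t+H^\dagger$, Lemma \ref{lemma6} and the definition of $\overline A_t$, the first factor is $\lesssim 1+T^{1-2\theta_3}\log^2T\log^2\frac2\delta$. Taking $\be_{z_t}$ of the second factor turns it into $\mathrm{Tr}(C^{2\alpha+1}\prod_{j=t+1}^T(I-\eta_j(C+\lambda_jI))^2)$, which by Assumption \ref{a3} and Lemma \ref{lemma1}(3) with $\beta=2\alpha+1-s$ (the degenerate case $2\alpha+1-s=0$, i.e.\ $\alpha=0$, $s=1$, handled directly by $\|\prod\cdots\|\le\exp\{-2(T-t)\bar\eta\bar\lambda\}$) is bounded by a constant times $\exp\{-2(T-t)\bar\eta\bar\lambda\}/(1+((T-t)\bar\eta)^{2\alpha+1-s})$. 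After the substitution $k=T-t$ this leaves $\bar\eta^2\sum_{k=0}^{T-1}\exp\{-2\lambda_1\eta_1kT^{-\theta_3-\theta_4}\}/(1+(k\bar\eta)^{2\alpha+1-s})$, which is precisely the quantity bounded by Proposition \ref{prop12} with $v=2\alpha+1-s\in[0,2)$; inserting $\bar\eta=\eta_1T^{-\theta_3}$ and simplifying the exponents ($-2\theta_3+v\theta_3+(1-v)\min\{1,\theta_3+\theta_4\}=-(1-2\alpha+s)\theta_3+(s-2\alpha)\min\{1,\theta_3+\theta_4\}$ when $v<1$, and $T^{-\theta_3}\log T$, $T^{-\theta_3}$ when $v=1$, $v>1$) reproduces the stated $\tau_2^2$ with $d_8\delta_3$.

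The argument is essentially a transcription of the proof of Proposition \ref{prop11} into the finite-horizon parametrization, so there is no deep obstacle; the points needing care are (i) propagating the $R_t$-growth factor $1+T^{1-2\theta_3}\log^2T\log^2\frac2\delta$ cleanly through both the uniform bound and the second-moment bound, and (ii) the exponent bookkeeping when passing the geometric--polynomial sum through Proposition \ref{prop12}, in particular checking that the boundary case $2\alpha+1-s=0$ merges consistently with the case $0<2\alpha+1-s<1$, exactly as in Case 4 of the proof of Proposition \ref{prop11}.
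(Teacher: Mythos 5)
Your proposal matches the paper's own proof essentially step for step: part (1) via \eqref{temp63}, $H_t=L_t+R_t+H^\dagger$, Lemma \ref{lemma6}, the event $\overline{A}_t$ from Proposition \ref{corollary1}, and the trivial bound $\bigl\|C^\alpha\prod_{j>t}(I-\eta_j(C+\lambda_jI))\bigr\|\leq\kappa^{2\alpha}$; part (2) via the second-moment expansion of $\B_t$, the boundedness of $\|y_t-H_t\phi(x_t)\|_\Y^2\mathbbm{1}_{\overline{A}_t}$, Assumption \ref{a3} with Lemma \ref{lemma1}(3) (with the $2\alpha+1-s=0$ case handled separately), and Proposition \ref{prop12} with $v=1+2\alpha-s$, including the same exponent simplification. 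The argument and bookkeeping are correct, so no changes are needed.
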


    \begin{proof}
        \begin{itemize}
            \item[(1)] 
            By Proposition \ref{proposition error2}, \eqref{LtRt}, and Lemma \ref{lemma6}, for any $1\leq t\leq T$, we have
            \begin{equation} 
                \begin{aligned}
                    \l\|\overline{\chi}_t\r\|_\HS\leq&2\eta_t\kappa\l(M_\rho+\kappa\l\|L_t+R_t+H^\dagger\r\|_{L^\infty_{\mathrm{HS}}}\r)\left\|C^\alpha\prod_{j=t+1}^{T}(I-\eta_j(C+\lambda_j I))\right\|\mathbbm{1}_{\overline{A}_t} \\
                    \leq& 2\eta_1\kappa^{1+2\alpha}\l(M_\rho+2\kappa\|H^\dagger\|_\HS+\kappa \frac{2d_6}{\log 2}T^{\frac12-\theta_3}\log T\log\frac{2}{\delta}\r) 
                    T^{-\theta_3},
                \end{aligned}
            \end{equation}
            where the last inequality uses $\left\|C^\alpha\prod_{j=t+1}^{T}(I-\eta_j(C+\lambda_j I))\right\|\leq\kappa^{2\alpha}$. Hence, there exists a constant $d_7$ independent of $t$, $T$, and $\delta$, such that
            \[
            \sup_{1\leq t\leq T}\l\|\overline{\chi}_t\r\|_\HS^2\leq d_7T^{-2\theta_3}\l(1+T^{1-2\theta_3}\log^2 T\log^2\frac{2}{\delta}\r).
            \]

            \item[(2)] The argument is parallel to that for 
$\widetilde{\chi}_t$. We therefore only indicate the changes arising from the constant step size and regularization parameter.
            By \eqref{Xtbar},
        \begin{equation*}
            \begin{aligned}
                \sum_{t=1}^T\be_{z_t}\l\|\overline{\chi}_t\r\|_\HS^2&=
                \sum_{t=1}^T\eta_{t}^2\be_{z_t}\l[\l\|\B_{t}C^\alpha\prod_{j=t+1}^{T}(I-\eta_j(C+\lambda_j I))\r\|_\HS^2\mathbbm{1}_{\overline{A}_t}\r] \\
                &\leq\sum_{t=1}^T\eta_{t}^2\be_{z_t}\l[\|y_t-H_t\phi(x_t)\|_\Y^2\l\|C^\alpha\prod_{j=t+1}^{T}(I-\eta_j(C+\lambda_j I))\phi(x_t)\r\|_{\H_\K}^2\mathbbm{1}_{\overline{A}_t}\r].
            \end{aligned}
        \end{equation*}
        By Assumption \ref{a5}, \eqref{LtRt}, Lemma \ref{lemma6}, and \eqref{Atbar}, we obtain
        \begin{equation*}
            \|y_t-H_t\phi(x_t)\|_\Y^2\mathbbm{1}_{\overline{A}_t}
            \leq 2M_\rho^2+2\kappa^2\l(8\|H^\dagger\|_\HS^2+2\l(\frac{2d_6}{\log 2}\r)^2T^{1-2\theta_3}\log^2 T\log^2\frac{2}{\delta}\r).
        \end{equation*}
        Substituting into the earlier bound yields
        \begin{equation} \label{temp72}
            \begin{aligned}
\sum_{t=1}^T\be_{z_t}\l\|\overline{\chi}_t\r\|_\HS^2\leq&\sum_{t=1}^T\eta_{t}^2\l(2M_\rho^2+2\kappa^2\l(8\|H^\dagger\|_\HS^2+2\l(\frac{2d_6}{\log 2}\r)^2T^{1-2\theta_3}\log^2 T\log^2\frac{2}{\delta}\r)\r)
                \\ &\times\be_{z_t}\l\|C^\alpha\prod_{j=t+1}^{T}(I-\eta_j(C+\lambda_j I))\phi(x_t)\r\|_{\H_\K}^2.
            \end{aligned}
        \end{equation}

        Then, if $2\alpha + 1 - s > 0$,
        \begin{equation} \label{temp73}
            \begin{aligned}
                &\be_{z_t}\l\|C^\alpha\prod_{j=t+1}^{T}(I-\eta_j(C+\lambda_j I))\phi(x_t)\r\|_{\H_\K}^2
                \\=&\mathrm{Tr}\l(C^{2\alpha+1}\l(I-\bar\eta(C+\bar\lambda I)\r)^{2(T-t)}\r) \\
                \leq& \mathrm{Tr}\l(C^s\r)\l\|C^{2\alpha+1-s}\l(I-\bar\eta(C+\bar\lambda I)\r)^{2(T-t)}\r\| \\
                \leq&\mathrm{Tr}\l(C^s\r)\frac{2(\kappa^{2(2\alpha+1-s))}+((2\alpha+1-s)/(2e))^{2\alpha+1-s})}{1+\left((T-t)\bar\eta\right)^{2\alpha+1-s}} \\
                &\times\exp\left\{-2(T-t)\bar\eta\bar\lambda\right\},
            \end{aligned}
        \end{equation}
        and if $2\alpha+1-s=0$, then
        \begin{equation} \label{temp74}
            \be_{z_t}\l\|C^\alpha\prod_{j=t+1}^{T}(I-\eta_j(C+\lambda_j I))\phi(x_t)\r\|_{\H_\K}^2\leq\mathrm{Tr}\l(C^s\r)\exp\left\{-2(T-t)\bar\eta\bar\lambda\right\}.
        \end{equation}
            Substituting \eqref{temp73} or \eqref{temp74} into \eqref{temp72} yields
        \begin{equation*}
            \begin{aligned}
                \sum_{t=1}^T\be_{z_t}\l\|\overline{\chi}_t\r\|_\HS^2\leq&d_8T^{-2\theta_3}\l(1+T^{1-2\theta_3}\log^2 T\log^2\frac{2}{\delta}\r)\sum_{t=0}^{T-1}\frac{\exp\left\{-2\eta_1\lambda_1tT^{-\theta_4-\theta_3}\right\}}{1+\left(t\bar\eta\right)^{2\alpha+1-s}},
            \end{aligned}
        \end{equation*}
        where $d_8$ is a constant independent of $t$, $T$, and $\delta$. Using Proposition \ref{prop12} with $v = 1 + 2\alpha - s$, we obtain
        \begin{equation*}
            \begin{aligned}
                \sum_{t=1}^T\be_{z_t}\l\|\overline{\chi}_t\r\|_\HS^2\leq&
                d_8\delta_3T^{-2\theta_3}\l(1+T^{1-2\theta_3}\log^2 T\log^2\frac{2}{\delta}\r)
                \\ &\times
                    \begin{cases}
                    T^{(1+2\alpha-s)\theta_3+(s-2\alpha)\min\{1,\theta_3+\theta_4\}}, & \text{ when } 2\alpha<s\leq1+2\alpha, \\
                    T^{\theta_3}\log T, & \text{ when } 2\alpha=s, \\
                    T^{\theta_3}, & \text{ when } 2\alpha>s,
                    \end{cases}
                \\ =&d_8\delta_3\l(1+T^{1-2\theta_3}\log^2 T\log^2\frac{2}{\delta}\r)
                \\ &\times
                    \begin{cases}
                    T^{-(1-2\alpha+s)\theta_3+(s-2\alpha)\min\{1,\theta_3+\theta_4\}}, & \text{ when } 2\alpha<s\leq1+2\alpha, \\
                    T^{-\theta_3}\log T, & \text{ when } 2\alpha=s, \\
                    T^{-\theta_3}, & \text{ when } 2\alpha>s.
                    \end{cases}
            \end{aligned}
        \end{equation*}
        \end{itemize}
        The proof is complete.
    \end{proof}

    Next, we prove the high-probability bounds for prediction and estimation errors in the finite-horizon setting.

    \begin{proof}[Proof of Theorem \ref{thm6}]
        By Proposition \ref{proposition error2} with $\alpha=\frac12$, we have
        \begin{equation} \label{temp77} 
		\left\|\left(H_{T+1}-H^{\dagger}\right)C^\frac12\right\|_{\mathrm{HS}}^2
            \leq
            \T_1+\T_2+\T_3+6\left\|\sum_{t=1}^T\chi_t\right\|^2_{\mathrm{HS}},
        \end{equation}
        where $\T_3=0$.
        Using Proposition \ref{corollary1}, it holds that
        \begin{equation} \label{temp78}
            \mathbb{P}\l(\overline{\chi}_t=\chi_t \text{ for any } 1\leq t\leq T\r)\geq 1-\delta.
        \end{equation}
        Let $\theta_3=\frac{2r+1}{2r+2}$ and choose $\theta_4\geq\frac{2r+1}{(2r+2)\min\{2r+1,2\}}$.
        Applying Proposition \ref{prop2} and Proposition \ref{prop3} with $\alpha=\frac12$, $t_0=\theta_1=\theta_2=0$, $\bar\eta=\eta_1T^{-\theta_3}$ and $\bar\lambda=\lambda_1T^{-\theta_4}$,
        we obtain
        \begin{equation} \label{temp79}
            \T_1\leq c_1\lambda_T^{\min\{2r+1,2\}}=c_1\l(\lambda_1T^{-\theta_4}\r)^{\min\{2r+1,2\}}
            \leq c_1\lambda_1^{\min\{2r+1,2\}}T^{-\frac{2r+1}{2r+2}},
        \end{equation}
        and
        \begin{equation} \label{temp80}
            \begin{aligned}
                \T_2&\leq c_2\bar{\eta}^{-(2r+1)}T^{-(2r+1)}\exp\{-\tau\bar\eta\bar\lambda T\} 
                \\ &\leq c_2\eta_1^{-(2r+1)}T^{-\frac{2r+1}{2r+2}}.
            \end{aligned}
        \end{equation}
        Using Proposition \ref{prop10} and Proposition \ref{prop13}, we conclude that with probability at least $1-\delta$,
        \begin{equation*}
\l\|\sum_{i=1}^T\overline{\chi}_t\r\|_\HS^2\leq8\l(M_2^2+\tau_2^2\r)\log^2\frac2\delta.
        \end{equation*}
        Additionally, for $\alpha=\frac12$, we have
        \begin{equation*}
            \begin{aligned}
                M_2^2=&d_7T^{-2\theta_3}\l(1+T^{1-2\theta_3}\log^2 T\log^2\frac{2}{\delta}\r) \\
                \leq&d_7\l(T^{-\theta_3}+T^{1-3\theta_3}\log^2 T\log^2\frac{2}{\delta}\r),
            \end{aligned}
        \end{equation*}
        and 
        \begin{equation*}
            \begin{aligned}
                \tau_2^2=d_8\delta_3\l(T^{-\theta_3}+T^{1-3\theta_3}\log^2 T\log^2\frac{2}{\delta}\r)
                    \begin{cases}
                    \log T, & \text{ when } s=1, \\
                    1, & \text{ when } s<1.
                    \end{cases}
            \end{aligned}
        \end{equation*}
        Therefore, 
        \begin{equation} \label{temp81}
            \begin{aligned}
                \l\|\sum_{i=1}^T\overline{\chi}_t\r\|_\HS^2\leq&8(d_7+d_8\delta_3)\l(T^{-\theta_3}+T^{1-3\theta_3}\log^2 T\log^2\frac{2}{\delta}\r)\log^2\frac{2}{\delta}
                    \begin{cases}
                    \log T, & \text{ when } s=1, \\
                    1, & \text{ when } s<1.
                    \end{cases}
            \end{aligned}
        \end{equation}
        Combining \eqref{temp77}, \eqref{temp78}, \eqref{temp79}, \eqref{temp80}, and \eqref{temp81}, we conclude that there exists a constant $c_{2,3}$ independent of $T$, such that
        \begin{equation*}
            \begin{aligned}
                \left\|\left(H_{T+1}-H^{\dagger}\right)C^\frac12\right\|_{\mathrm{HS}}^2
            \leq c_{2,3}
            \begin{cases}
                T^{-\theta_3}\log^2\frac2\delta+T^{1-3\theta_3}\log^2T\log^4\frac2\delta, & \text{ when } s<1, \\
                T^{-\theta_3}\log T\log^2\frac2\delta+T^{1-3\theta_3}\log^3T\log^4\frac2\delta, & \text{ when } s=1, \\
            \end{cases}
            \end{aligned}
        \end{equation*}
        holds with probability at least $1-2\delta$. 
        
        The proof is then complete.
    \end{proof}

    \begin{proof}[Proof of Theorem \ref{thm7}]
        By Proposition \ref{proposition error2} with $\alpha=0$, we have
        \begin{equation} \label{temp82} 
		\left\|H_{T+1}-H^{\dagger}\right\|_{\mathrm{HS}}^2
            \leq
            \T_1+\T_2+\T_3+6\left\|\sum_{t=1}^T\chi_t\right\|^2_{\mathrm{HS}},
        \end{equation}
        where $\T_3=0$.
        According to Proposition \ref{corollary1}, it holds that 
        \begin{equation} \label{temp83}
            \mathbb{P}\l(\overline{\chi}_t=\chi_t \text{ for any } 1\leq t\leq T\r)\geq 1-\delta.
        \end{equation}
        Applying Proposition \ref{prop2} and Proposition \ref{prop3} with $\alpha=0$, $t_0=\theta_1=\theta_2=0$, $\bar\eta=\eta_1T^{-\theta_3}$ and $\bar\lambda=\lambda_1T^{-\theta_4}$,
        we obtain 
        \begin{equation} \label{temp84}
            \T_1\leq c_1\lambda_T^{\min\{2r,2\}}=c_1\l(\lambda_1T^{-\theta_4}\r)^{\min\{2r,2\}},
        \end{equation}
        and
        \begin{equation} \label{temp85}
            \begin{aligned}
                \T_2&\leq c_2\bar{\eta}^{-2r}T^{-2r}\exp\{-\tau\bar\eta\bar\lambda T\} 
                \\ &\leq c_2\eta_1^{-2r}T^{-2r(1-\theta_3)}.
            \end{aligned}
        \end{equation}
        Using Proposition \ref{prop10} and Proposition \ref{prop13}, we conclude that, with probability at least $1-\delta$,
        \begin{equation} \label{temp86}
        \l\|\sum_{i=1}^T\overline{\chi}_t\r\|_\HS^2\leq8\l(M_2^2+\tau_2^2\r)\log^2\frac2\delta.
        \end{equation}
        Additionally, setting $\alpha=0$, we have
        \begin{equation*} 
            \begin{aligned}
                M_2^2=&d_7T^{-2\theta_3}\l(1+T^{1-2\theta_3}\log^2 T\log^2\frac{2}{\delta}\r),                 
            \end{aligned}
        \end{equation*}
        and 
        \begin{equation*} 
            \begin{aligned}
                \tau_2^2&=d_8\delta_3\l(1+T^{1-2\theta_3}\log^2 T\log^2\frac{2}{\delta}\r)T^{-(1+s)\theta_3+s\min\{1,\theta_3+\theta_4\}} \\
                &=d_8\delta_3\l(T^{-(1+s)\theta_3+s}+T^{-(3+s)\theta_3+1+s}\log^2 T\log^2\frac{2}{\delta}\r)
            \end{aligned}
        \end{equation*}
        when $\theta_3+\theta_4\geq1$. Therefore, if $\theta_3+\theta_4\geq1$, then
        \begin{equation} \label{temp87}
            \l\|\sum_{i=1}^T\overline{\chi}_t\r\|_\HS^2\leq 8(d_7+d_8\delta_3)\l(T^{-(1+s)\theta_3+s}+T^{-(3+s)\theta_3+1+s}\log^2 T\log^2\frac{2}{\delta}\r)\log^2\frac{2}{\delta}.
        \end{equation}

        Let $c_{2,4}$ be a constant independent of $T$ and $\delta$. Combining \eqref{temp82}, \eqref{temp83}, \eqref{temp84}, \eqref{temp85}, \eqref{temp86}, and \eqref{temp87}, we obtain the following estimates:
        
        \begin{itemize}
            \item[(1)] 
            If $2r+s\geq1$, choose $\theta_3=\frac{2r+s}{1+2r+s}$ and $\theta_4\geq\frac{r}{(1+2r+s)\min\{r,1\}}$, then 
            \begin{equation*}
                \begin{aligned}
                    \left\|H_{T+1}-H^{\dagger}\right\|_{\mathrm{HS}}^2
                    &\leq c_{2,4}\l(T^{-\frac{2r}{1+2r+s}}+T^{-\frac{4r+s-1}{1+2r+s}}\log^2 T\log^2\frac{2}{\delta}\r)\log^2\frac{2}{\delta} \\
                    &\lesssim T^{-\frac{2r}{1+2r+s}}\log^4\frac{2}{\delta}
                \end{aligned}
            \end{equation*}
            holds with probability at least $1-2\delta$. 

            \item[(2)] 
            If $2r+s<1$, choose $\theta_3=\frac{1+2r+s}{3+2r+s}$ and $\theta_4\geq\frac{2r}{(3+2r+s)\min\{r,1\}}=\frac{2r}{(3+2r+s)r}$, then
            \begin{equation*}
                \begin{aligned}
                    \left\|H_{T+1}-H^{\dagger}\right\|_{\mathrm{HS}}^2
                    &\leq c_{2,4}\l(T^{-\frac{1+2r-s}{3+2r+s}}+T^{-\frac{4r}{3+2r+s}}\log^2 T\log^2\frac{2}{\delta}\r)\log^2\frac{2}{\delta} \\
                    &\lesssim T^{-\frac{4r}{3+2r+s}}\log^2 T\log^4\frac{2}{\delta}
                \end{aligned}
            \end{equation*}
            holds with probability at least $1-2\delta$. 
        \end{itemize}
        The proof is then complete.
    \end{proof}

    
 

\begin{appendix}

    \section{Proof of Proposition \ref{transform}} \label{tran}

    \begin{proof}
        Since $W$ is self-adjoint and positive, $W^{1/2}$ is also self-adjoint and positive. Let $\H_0=\mathrm{span}\{K(x,\cdot)y=\K(x,\cdot)W y:x\in\mathcal{X},y\in\Y\}$ and $\B_0=\mathrm{span}\{W^{1/2}y\otimes\phi(x):x\in\mathcal{X},y\in\Y\}$. Then, it is clear that $\H_K=\overline{\H_0}$ and $\B_{\mathrm{HS}}(\H_{\K},\overline{W^{1/2}\Y})=\overline{\B_0}$.
		
		We define the mapping $W_0:\H_0\rightarrow\B_0$ by $\  \sum_{i=1}^{n}\alpha_iK(x_i,\cdot)y_i\mapsto\sum_{i=1}^{n}\alpha_iW^{1/2}y_i\otimes\phi(x_i)$ for any $n\in\bn$, $x_1,\cdots x_n\in\mathcal{X}$ and $y_1,\cdots y_n\in\Y$. We see that $W_0$ is well-defined and linear. Moreover, for any $\sum_{i=1}^{n}\alpha_iK(x_i,\cdot)y_i=\sum_{i=1}^{n}\alpha_i\K(x_i,\cdot)W y_i\in\H_0$, we have
        \begin{equation*}
            \begin{aligned}
                \l\|W_0\l(\sum_{i=1}^{n}\alpha_iK(x_i,\cdot)y_i\r)\r\|_\HS^2&=\sum_{i,j=1}^n\alpha_i\alpha_j\langle W^{1/2}y_i\otimes\phi(x_i), W^{1/2}y_j\otimes\phi(x_j)\rangle_\HS
                \\
                &=\sum_{i,j=1}^n\alpha_i\alpha_j\K(x_i,x_j)\langle W y_i,y_j\rangle_\Y
                \\ 
                &=\sum_{i,j=1}^n\alpha_i\alpha_j\langle K(x_i,x_j)y_i,y_j\rangle_\Y\\
                &=\l\|\sum_{i=1}^n\alpha_iK(x_i,\cdot)y_i\r\|_{\H}^2.
            \end{aligned}
        \end{equation*}
		By extending $W_0$ to $W:\H\rightarrow\B_{\mathrm{HS}}(\H_\K,\overline{W^{1/2}\Y})$, we conclude that $\H$ is isometric to 
        \[
        \B_{\mathrm{HS}}(\H_{\K},\overline{W^{1/2}\Y})\subset\B_{\mathrm{HS}}(\H_{\K},\Y).
        \]
		
		Next, we show that $h(x)=(W h)(\phi(x))$ for all $h\in\H$. For any $y\in\Y$,
		\begin{align*}
			\langle y,h(x)\rangle_{\Y}&=\langle K(x,\cdot)y,h\rangle_{\H}=\left\langle W(K(x,\cdot)y),Wh\right\rangle_{\mathrm{HS}}=\langle W^{1/2}y\otimes\phi(x),Wh\rangle_{\mathrm{HS}} \\
			&=\mathrm{Tr}\left(\l(W^{1/2}y\otimes\phi(x)\r)^*(Wh)\right)=\l\langle y, W^{1/2}(Wh)\phi(x)\r\rangle_{\Y},
		\end{align*}
		where the property $\langle y,h(x)\rangle_{\Y}=\langle K(x,\cdot)y,h\rangle_{\H}$ is used in the first equality. Thus, we  conclude that $h(x)= W^{1/2}(Wh)\phi(x)$. The uniqueness of this representation is obvious. This completes the proof.
	\end{proof}

    \section{Proof of Proposition \ref{prop14}} \label{Proof of prop14}
    
\begin{lemma}\label{Auxiliary lemma}
    Suppose that $\overline\phi=\be\l[\phi(x)\r]$. The moment condition \eqref{a4'} holds if 
    \begin{equation} \label{temp99}
        \be\l[\l\langle\phi(x)-\overline\phi,f\r\rangle_{\H_\K}^4\r]\leq c\l(\be\l[\l\langle\phi(x)-\overline\phi,f\r\rangle_{\H_\K}^2\r]\r)^2 
    \end{equation} holds for some constant $c>0$.
\end{lemma}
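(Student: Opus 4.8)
The plan is to reduce the moment condition \eqref{a4'} on $\phi(x)$ to the centered moment condition \eqref{temp99} on $\phi(x)-\overline\phi$, and then verify \eqref{temp99} directly from the principal component expansion \eqref{temp94} under the independence and uniform fourth-moment hypotheses. The first reduction is exactly Lemma \ref{Auxiliary lemma}, which I would prove first: writing $a:=\langle\overline\phi,f\rangle_{\H_\K}$ and $Z:=\langle\phi(x)-\overline\phi,f\rangle_{\H_\K}$ (so $\be[Z]=0$), we have $\langle\phi(x),f\rangle_{\H_\K}=a+Z$, and the goal is to bound $\be[(a+Z)^4]$ by a constant times $(\be[(a+Z)^2])^2=(a^2+\be[Z^2])^2$. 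Expanding $\be[(a+Z)^4]=a^4+4a^2\be[Z^2]+4a\be[Z^3]+\be[Z^4]$ (the $a^3\be[Z]$ and $a\be[Z^3]$-adjacent linear-in-$Z$ term vanishes) and using $|\be[Z^3]|\le(\be[Z^2])^{1/2}(\be[Z^4])^{1/2}$ together with the hypothesis $\be[Z^4]\le c(\be[Z^2])^2$, every term is controlled by $a^4$, $a^2\be[Z^2]$, and $(\be[Z^2])^2$, hence by a constant multiple of $(a^2+\be[Z^2])^2$; Young's inequality handles the cross term $a\,(\be[Z^2])^{3/2}$. This gives \eqref{a4'} with a constant depending only on $c$.

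Next I would establish \eqref{temp99} from \eqref{temp94}. Fix $f\in\H_\K$ and set $c_k:=\sqrt{\lambda_k}\langle\phi_k,f\rangle_{\H_\K}$, so that $\langle\phi(x)-\overline\phi,f\rangle_{\H_\K}=\sum_{k\ge1}c_k\xi_k$ by the expansion. Then $\be[(\sum_k c_k\xi_k)^2]=\sum_k c_k^2$ since the $\xi_k$ are uncorrelated with unit variance, and I must bound $\be[(\sum_k c_k\xi_k)^4]$ by a constant times $(\sum_k c_k^2)^2$. Expanding the fourth power and using independence and $\be[\xi_k]=0$, only the terms $\be[\xi_k^4]c_k^4$ and $\be[\xi_k^2\xi_l^2]c_k^2c_l^2=c_k^2c_l^2$ (for $k\ne l$) survive, with the mixed-index term appearing with combinatorial coefficient $3$. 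Thus $\be[(\sum_k c_k\xi_k)^4]=\sum_k\be[\xi_k^4]c_k^4+3\sum_{k\ne l}c_k^2c_l^2\le\max_k\be[\xi_k^4]\sum_k c_k^4+3(\sum_k c_k^2)^2\le(C+3)(\sum_k c_k^2)^2$, using $\sum_k c_k^4\le(\sum_k c_k^2)^2$ and the uniform bound $\be[\xi_k^4]\le C$. This yields \eqref{temp99} with constant $C+3$, and combining with Lemma \ref{Auxiliary lemma} gives \eqref{a4'}, which by \cite[Proposition 2.1]{shi2024learning} is equivalent to Assumption \ref{a4}.

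The one point requiring care—and the main (mild) obstacle—is justifying the term-by-term manipulations of the infinite series: interchanging expectation and summation in the second- and fourth-moment computations, and confirming that the expansion \eqref{temp94} converges in $L^2$ so that these inner products are well-defined random variables with finite moments. I would handle this by first arguing for finite truncations $\phi^{(N)}:=\overline\phi+\sum_{k=1}^N\sqrt{\lambda_k}\xi_k\phi_k$, where all identities are finite sums and elementary, obtaining the bound $\be[\langle\phi^{(N)}-\overline\phi,f\rangle^4]\le(C+3)(\sum_{k\le N}c_k^2)^2\le(C+3)(\sum_k c_k^2)^2$ uniformly in $N$, and then passing to the limit: the partial sums $\sum_{k\le N}c_k\xi_k$ converge in $L^2$ (as $\sum_k c_k^2=\|\Sigma^{1/2}f\|^2<\infty$) and are uniformly bounded in $L^4$, hence uniformly integrable after squaring, so the fourth moments converge to $\be[\langle\phi(x)-\overline\phi,f\rangle^4]$ and the bound persists. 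The remaining steps are the routine algebraic expansions described above.
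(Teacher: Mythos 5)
Your first paragraph proves the lemma by essentially the same route as the paper: center the variable, expand the fourth moment, and control the only nontrivial cross term $4a\,\be[Z^3]$ via a moment inequality ($|\be[Z^3]|\le(\be[Z^2])^{1/2}(\be[Z^4])^{1/2}$ in your version, $\be[Z^3]\le(\be[Z^4])^{3/4}$ in the paper) combined with the hypothesis \eqref{temp99} and Young's inequality; aside from the harmless coefficient slip (the expansion gives $6a^2\be[Z^2]$, not $4a^2\be[Z^2]$, which does not affect the bound), the argument is correct. Your remaining paragraphs go beyond the stated lemma and essentially reproduce the paper's proof of Proposition \ref{prop14}, which is consistent with the paper but not required for this statement.
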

\begin{proof}
    First, using that $\be\l[\phi(x)-\overline\phi\r]=0$, we see that
    \begin{equation} \label{temp95}
        \begin{aligned}
            \l(\be\l[\l\langle\phi(x),f\r\rangle_{\H_\K}^2\r]\r)^2
            &=\l(\be\l[\l\langle\phi(x)-\overline\phi+\overline\phi,f\r\rangle_{\H_\K}^2\r]\r)^2
            \\ &=\l(\be\l[\l\langle\phi(x)-\overline\phi,f\r\rangle_{\H_\K}^2\r]\r)^2+\l\langle \overline\phi, f \r\rangle_{\H_\K}^4+2\l\langle \overline\phi, f \r\rangle_{\H_\K}^2\be\l[\l\langle\phi(x)-\overline\phi,f\r\rangle^2_{\H_\K}\r].
        \end{aligned}
    \end{equation}
    Using $\be\l[\phi(x)-\overline\phi\r]=0$ again, it holds that
    \begin{equation} \label{temp96}
        \begin{aligned}
            \be\l[\l\langle\phi(x),f\r\rangle_{\H_\K}^4\r]
            =&\be\l[\l(\l\langle \phi(x)-\overline\phi,f\r\rangle_{\H_\K}^2+\l\langle \overline\phi,f \r\rangle_{\H_\K}^2+2\l\langle \overline\phi,f \r\rangle_{\H_\K}\l\langle \phi(x)-\overline\phi,f\r\rangle_{\H_\K}\r)^2\r]
            \\ =&\be\l[\l\langle \phi(x)-\overline\phi,f\r\rangle_{\H_\K}^4\r]+\l\langle \overline\phi,f \r\rangle_{\H_\K}^4+6\l\langle \overline\phi, f \r\rangle_{\H_\K}^2\be\l[\l\langle\phi(x)-\overline\phi,f\r\rangle^2_{\H_\K}\r]
            \\&+4\l\langle \overline\phi, f \r\rangle_{\H_\K}
            \be\l[\l\langle\phi(x)-\overline\phi\r\rangle^3_{\H_\K}\r].
        \end{aligned}
    \end{equation}
    By H{\"o}lder's inequality and \eqref{a4'}, we obtain that
    \begin{equation} \label{temp97}
        \begin{aligned}
            &4\l\langle \overline\phi, f \r\rangle_{\H_\K} \be\l[\l\langle\phi(x)-\overline\phi\r\rangle^3_{\H_\K}\r]
            \\ \leq&4\l\langle \overline\phi, f \r\rangle_{\H_\K}\l(\be\l[\l\langle\phi(x)-\overline\phi\r\rangle^4_{\H_\K}\r]\r)^{3/4}
            \\ \leq&4c^{3/4}\l\langle \overline\phi, f \r\rangle_{\H_\K}\l(\be\l[\l\langle\phi(x)-\overline\phi\r\rangle^2_{\H_\K}\r]\r)^{3/2}
            \\ \leq&2c^{3/4}\l\langle \overline\phi, f \r\rangle_{\H_\K}^2\be\l[\l\langle\phi(x)-\overline\phi\r\rangle^2_{\H_\K}\r]
            +2c^{3/4}\l(\be\l[\l\langle\phi(x)-\overline\phi\r\rangle^2_{\H_\K}\r]\r)^2.
        \end{aligned}
    \end{equation}
    Then, taking \eqref{temp97} back into \eqref{temp96} yields that
    \begin{equation} \label{temp98}
        \begin{aligned}
            \be\l[\l\langle\phi(x),f\r\rangle_{\H_\K}^4\r]
        \leq& \l(c+2c^{3/4}\r)\l(\be\l[\l\langle\phi(x)-\overline\phi\r\rangle^2_{\H_\K}\r]\r)^2+\l\langle \overline\phi,f \r\rangle_{\H_\K}^4
        \\&+\l(6+2c^{3/4}\r)\l\langle \overline\phi, f \r\rangle_{\H_\K}^2\be\l[\l\langle\phi(x)-\overline\phi,f\r\rangle^2_{\H_\K}\r].
        \end{aligned}
    \end{equation}
    Combining \eqref{temp98} with \eqref{temp95}, we conclude that
    \begin{equation*}
        \be\l[\l\langle\phi(x),f\r\rangle_{\H_\K}^4\r]
        \leq \max\l\{c+2c^{3/4},3+c^{3/4}\r\}\l(\be\l[\l\langle\phi(x),f\r\rangle_{\H_\K}^2\r]\r)^2,
    \end{equation*}
    which completes this proof.
\end{proof}

\begin{proof}[Proof of Proposition \ref{prop14}]
    Recall that $\Sigma:=\be\l[\l(\phi(x)-\overline\phi\r)\otimes\l(\phi(x)-\overline\phi\r)\r]$. Since $\Sigma$ is compact and self-adjoint, it admits the spectral decomposition:
    \[
    \Sigma=\sum_{k\geq1}\lambda_k\phi_k\otimes\phi_k.
    \]
    We claim that $\phi(x)-\overline\phi\in\overline{\mathrm{ran}\l(\Sigma\r)}$ almost surely. To prove this, for any $f\in ker\l(\Sigma\r)$, it holds that
    \[
    \be\l[\l\langle\phi(x)-\overline\phi,f\r\rangle_{\H_\K}^2\r]=\l\langle \Sigma f,f\r\rangle_{\H_\K}=0.
    \]
    This implies that $\phi(x) - \overline\phi \in \ker(\Sigma)^\perp = \overline{\mathrm{ran}(\Sigma)}$ almost surely. Therefore, we have
    \begin{equation*}
        \phi(x)-\overline\phi=\sum_{k\geq1}\sqrt{\lambda_k}\xi_k\phi_k,
    \end{equation*}
    where $\xi_k:=\frac{\l\langle\phi(x)-\overline\phi,\phi_k\r\rangle}{\sqrt{\lambda_k}}$, $\be[\xi_k]=0$ and $\be\l[\|\xi_k\|_{\H_\K}^2\r]=1$ for all $k\geq1$. 
    
    Now, Assume that $\{\xi_k\}_{k\geq1}$ consists of independent random variables. We will show that \eqref{a4'} holds if $\l\{\be\l[\xi_k^4\r]\r\}_{k\geq1}$ is uniformly bounded, i.e., 
    there exists a constant $C > 0$ such that $\be\left[\xi_k^4\right] \leq C$ for all $k \geq 1$. Since the $\xi_k$ are mean-zero and independent, it follows that
    \begin{equation*}
        \be\l[\l\langle\phi(x)-\overline\phi,f\r\rangle_{\H_\K}^4\r]
        =\sum_{k\geq1}\lambda_k^2\langle f, \phi_k\rangle_{\H_\K}^4\be\l[\xi_k^4\r]
        +6\sum_{i\not=j}\lambda_i\lambda_j\langle f, \phi_i\rangle_{\H_\K}^2\langle f, \phi_j\rangle_{\H_\K}^2,
    \end{equation*}
    and 
    \begin{equation*}
        \l(\be\l[\l\langle\phi(x)-\overline\phi,f\r\rangle_{\H_\K}^2\r]\r)^2=
        \sum_{k\geq1}\lambda_k^2\langle f, \phi_k\rangle_{\H_\K}^4
        +2\sum_{i\not=j}\lambda_i\lambda_j\langle f, \phi_i\rangle_{\H_\K}^2\langle f, \phi_j\rangle_{\H_\K}^2.
    \end{equation*}
    Using $\be\l[\xi_k^4\r]\leq C$, we obtain
    \begin{equation*}
        \be\l[\l\langle\phi(x)-\overline\phi,f\r\rangle_{\H_\K}^4\r]\leq \max\{C,\frac13\}\l(\be\l[\l\langle\phi(x)-\overline\phi,f\r\rangle_{\H_\K}^2\r]\r)^2. 
    \end{equation*}
    By Lemma \ref{Auxiliary lemma}, there exists a constant $c>0$, such that
    \[
        \be\l[\l\langle\phi(x),f\r\rangle_{\H_\K}^4\r]\leq c\l(\be\l[\l\langle\phi(x),f\r\rangle_{\H_\K}^2\r]\r)^2 .
    \]
    The proof is then complete.
\end{proof}

\section{An Auxiliary Equivalence Related to Assumption \ref{a4}}\label{Appendix lemma}
The following proposition provides an equivalent formulation related to Assumption \ref{a4}. Its proof has essentially been established in our previous work \citep{shi2024learning}. We include it here for completeness and to keep the present paper reasonably self-contained.

\begin{proposition}\label{lemma in Appendix}
The following three statements are equivalent.
\begin{enumerate}
    \item[(1)] There exists a constant $c>0$ such that for any $f\in\H_{\K}$,
    \begin{equation*}
        \mathbb{E}\left[\left\langle v,f\right\rangle_{\H_{\K}}^{4}\right]
        \leq
        c\left(\mathbb{E}\left[\left\langle v,f\right\rangle_{\H_{\K}}^{2}\right]\right)^{2}.
    \end{equation*}

    \item[(2)] For any separable Hilbert space $\Y$, there exists a constant $c>0$ such that for any compact linear operator $A:\H_{\K}\to\Y$, there holds
    \begin{equation*}
        \mathbb{E}\left[\left\|Av\right\|_{\Y}^{4}\right]
        \leq
        c\left(\mathbb{E}\left[\left\|Av\right\|_{\Y}^{2}\right]\right)^{2}.
    \end{equation*}

    \item[(3)] There exists a separable Hilbert space $\Y$ and a constant $c>0$ such that for any compact linear operator $A:\H_{\K}\to\Y$, there holds
    \begin{equation*}
        \mathbb{E}\left[\left\|Av\right\|_{\Y}^{4}\right]
        \leq
        c\left(\mathbb{E}\left[\left\|Av\right\|_{\Y}^{2}\right]\right)^{2}.
    \end{equation*}
\end{enumerate}
\end{proposition}

\begin{proof}
\textbf{(1) implies (2):}
Let $A:\H_{\K}\to\Y$ be any compact linear operator. Then $A^{*}A$ is a compact, self-adjoint, positive operator on $\H_{\K}$. Hence, by the spectral theorem,
\[
A^{*}A=\sum_{j\geq1}\lambda_{j}\,e_{j}\otimes e_{j},
\]
where $\{\lambda_j\}_{j\geq1}\subset[0,\infty)$ are the eigenvalues of $A^{*}A$, and $\{e_j\}_{j\geq1}$ is an orthonormal system in $\H_{\K}$. Therefore,
\[
\|Av\|_{\Y}^{2}
=
\langle v,A^{*}Av\rangle_{\H_{\K}}
=
\sum_{j\geq1}\lambda_{j}\langle v,e_{j}\rangle_{\H_{\K}}^{2},
\]
and thus
\begin{equation*}
\begin{aligned}
\mathbb{E}\left[\|Av\|_{\Y}^{4}\right]
&=
\mathbb{E}\left[\left(\sum_{j\geq1}\lambda_{j}\langle v,e_{j}\rangle_{\H_{\K}}^{2}\right)^{2}\right] \\
&=
\sum_{i,j\geq1}\lambda_{i}\lambda_{j}\,
\mathbb{E}\left[\langle v,e_{i}\rangle_{\H_{\K}}^{2}\langle v,e_{j}\rangle_{\H_{\K}}^{2}\right] \\
&\leq
\sum_{i,j\geq1}\lambda_{i}\lambda_{j}\,
\sqrt{\mathbb{E}\left[\langle v,e_{i}\rangle_{\H_{\K}}^{4}\right]}
\sqrt{\mathbb{E}\left[\langle v,e_{j}\rangle_{\H_{\K}}^{4}\right]},
\end{aligned}
\end{equation*}
where we used the Cauchy--Schwarz inequality. By (1), it follows that
\[
\mathbb{E}\left[\|Av\|_{\Y}^{4}\right]
\leq
c\left(\sum_{j\geq1}\lambda_{j}\mathbb{E}\left[\langle v,e_{j}\rangle_{\H_{\K}}^{2}\right]\right)^{2}.
\]
Since
\[
\sum_{j\geq1}\lambda_{j}\mathbb{E}\left[\langle v,e_{j}\rangle_{\H_{\K}}^{2}\right]
=
\mathbb{E}\left[\sum_{j\geq1}\lambda_{j}\langle v,e_{j}\rangle_{\H_{\K}}^{2}\right]
=
\mathbb{E}\left[\|Av\|_{\Y}^{2}\right],
\]
we conclude that
\[
\mathbb{E}\left[\|Av\|_{\Y}^{4}\right]
\leq
c\left(\mathbb{E}\left[\|Av\|_{\Y}^{2}\right]\right)^{2}.
\]

\medskip
\noindent\textbf{(2) implies (3):}
This is immediate.

\medskip
\noindent\textbf{(3) implies (1):}
Let $\Y$ and $c>0$ be as in (3). Choose any $g\in\Y$ with $\|g\|_{\Y}=1$. For a fixed $f\in\H_{\K}$, define the rank-one operator $A:\H_{\K}\to\Y$ by
\[
Ah:=\langle h,f\rangle_{\H_{\K}}\,g,\qquad \forall\, h\in\H_{\K}.
\]
Then $A$ is compact. Moreover,
\[
\|Av\|_{\Y}^{2}
=
|\langle v,f\rangle_{\H_{\K}}|^{2}\|g\|_{\Y}^{2}
=
|\langle v,f\rangle_{\H_{\K}}|^{2}.
\]
Applying (3) to this operator $A$, we obtain
\[
\mathbb{E}\left[\langle v,f\rangle_{\H_{\K}}^{4}\right]
=
\mathbb{E}\left[\|Av\|_{\Y}^{4}\right]
\leq
c\left(\mathbb{E}\left[\|Av\|_{\Y}^{2}\right]\right)^{2}
=
c\left(\mathbb{E}\left[\langle v,f\rangle_{\H_{\K}}^{2}\right]\right)^{2}.
\]
This proves (1).
\end{proof}
    
    \section{Proof of Proposition \ref{prop1}} \label{Appendix 1}
	In this subsection, our goal is to bound 
	\begin{align*}
		\sum_{t=1}^{T}\exp\left\{-\sum_{j=t+1}^{T}\eta_{j}\lambda_j\right\}\frac{(t+t_0)^{-\theta}}{1+\left(\sum_{j=t+1}^{T}\eta_{j}\right)^{v}}.
	\end{align*}

	\begin{lemma} \label{lemma4}
		Let $v>0$, $p\in\mathbb{R}$, $T\geq t_0+1$ and $t_0\geq1$. The step size $\eta_t$ is set as \eqref{setting}. Then, there holds 
		\begin{equation*}
			\sum_{t=1}^{T/2}\frac{(t+t_0)^{p}}{1+\left(\sum_{j=t+1}^{T}\eta_{j}\right)^{v}}
            \leq\delta^\prime
			\begin{cases}
				(T+t_0)^{-(1-\theta_1)v+p+1}, &\text{ when }p>-1, \\
				(T+t_0)^{-(1-\theta_1)v}\log(T+t_0), &\text{ when }p=-1, \\
				(T+t_0)^{-(1-\theta_1)v}, &\text{ when }p<-1,
			\end{cases}
		\end{equation*}
            where $\delta^\prime$ is a constant independent of $T$ and $t_0$.
	\end{lemma}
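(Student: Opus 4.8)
\textbf{Proof proposal for Lemma \ref{lemma4}.}

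The plan is to bound the sum by a comparison argument: since we restrict to $1\le t\le T/2$, the tail sum $\sum_{j=t+1}^{T}\eta_j$ is comparable to $\sum_{j=T/2}^{T}\eta_j$, which by Lemma \ref{lemma2}(1) is of order $(T+t_0)^{1-\theta_1}$ up to a constant depending only on $\theta_1$. Concretely, first I would show that for $1\le t\le T/2$ there is a constant $c^\prime=c^\prime(\theta_1)>0$ with
\[
\sum_{j=t+1}^{T}\eta_j\ \ge\ \sum_{j=T/2}^{T}\eta_j\ \ge\ c^\prime\,\bar\eta\,(T+t_0)^{1-\theta_1},
\]
the last inequality following from Lemma \ref{lemma2}(1) applied with $l=\lceil T/2\rceil$, $m=T$ together with $T\ge t_0+1$ (so that $T/2+t_0\le \tfrac34(T+t_0)$, giving a clean lower bound on the bracketed difference of powers). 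Consequently
\[
\frac{1}{1+\left(\sum_{j=t+1}^{T}\eta_j\right)^{v}}\ \le\ \frac{1}{1+\left(c^\prime\bar\eta\right)^{v}(T+t_0)^{(1-\theta_1)v}}\ \le\ C\,(T+t_0)^{-(1-\theta_1)v}
\]
for a constant $C$ depending on $\bar\eta,\theta_1,v$ (independent of $T$ and $t_0$, since $\bar\eta$ is fixed).

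Next I would pull this uniform bound out of the sum and estimate the remaining factor $\sum_{t=1}^{T/2}(t+t_0)^{p}$ by the integral comparison $\sum_{t=1}^{T/2}(t+t_0)^p\le \int_{0}^{T/2}(x+t_0)^p\,dx+ (1+t_0)^p$ (or the analogous monotone comparison), which yields
\[
\sum_{t=1}^{T/2}(t+t_0)^{p}\ \le\ \delta^{\prime\prime}
\begin{cases}
(T+t_0)^{p+1}, & p>-1,\\[2pt]
\log(T+t_0), & p=-1,\\[2pt]
1, & p<-1,
\end{cases}
\]
with $\delta^{\prime\prime}$ independent of $T$ and $t_0$ (here I use $t_0\ge1$ so $(t+t_0)$ stays bounded below by $1$, and for $p<-1$ the sum converges to a finite constant bounded by $\sum_{k\ge 1}k^p$). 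Multiplying the two bounds gives exactly the claimed three cases with $\delta^\prime$ a product of the constants above, clearly independent of $T$ and $t_0$.

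The only mildly delicate point — and the main obstacle — is verifying that the constant $c^\prime$ in the lower bound on $\sum_{j=T/2}^{T}\eta_j$ can be chosen independent of $T$ (and $t_0$): one must be careful that $\lceil T/2\rceil$ is a legitimate integer index with $1\le \lceil T/2\rceil\le T$, that Lemma \ref{lemma2}(1) applies, and that the difference $(T+t_0+1)^{1-\theta_1}-(\lceil T/2\rceil+t_0)^{1-\theta_1}$ is bounded below by a fixed multiple of $(T+t_0)^{1-\theta_1}$; this is where the hypothesis $T\ge t_0+1$ is used, since it forces $\lceil T/2\rceil+t_0\le \tfrac34(T+t_0)$ and hence $(\lceil T/2\rceil+t_0)^{1-\theta_1}\le (3/4)^{1-\theta_1}(T+t_0)^{1-\theta_1}$, so $c^\prime=\tfrac{1-(3/4)^{1-\theta_1}}{1-\theta_1}$ works. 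Everything else is routine integral estimation. (A symmetric remark, analogous to Remark \ref{remark1}, shows the bound persists for all $T\ge1$ at the cost of letting $\delta^\prime$ depend on $t_0$, but that is not needed here.)
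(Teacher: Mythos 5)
Your proposal is correct and follows essentially the same route as the paper's proof: a uniform lower bound $\sum_{j=t+1}^{T}\eta_j\gtrsim(T+t_0)^{1-\theta_1}$ for $t\leq T/2$ (obtained from Lemma \ref{lemma2}(1) together with the $3/4$-comparison enabled by $T\geq t_0+1$), pulled out of the sum, followed by an integral comparison for $\sum_{t\leq T/2}(t+t_0)^p$ split into the cases $p>-1$, $p=-1$, $p<-1$. The only cosmetic point is the off-by-one in the intermediate claim $\sum_{j=t+1}^T\eta_j\geq\sum_{j=T/2}^T\eta_j$ at $t=T/2$, which is harmless since you can apply the lower bound of Lemma \ref{lemma2}(1) directly with $l=t+1$, exactly as the paper does.
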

	\begin{proof}
            Since $\sum_{j=t+1}^{T}\eta_{j}\geq\bar\eta\left[(T+t_0+1)^{1-\theta_1}-(t+t_0+1)^{1-\theta_1}\right]$, it follows that 
		\begin{align*}
			\sum_{t=1}^{T/2}\frac{(t+t_0)^{p}}{1+\left(\sum_{j=t+1}^{T}\eta_{j}\right)^{v}}
			\leq\frac{1}{\min\{1,{\bar\eta}^v\}}\sum_{t=1}^{T/2}\frac{(t+t_0)^{p}}{1+\left[(T+t_0+1)^{1-\theta_1}-(t+t_0+1)^{1-\theta_1}\right]^v}.
		\end{align*}
            As $t+t_0+1\leq\frac34(T+t_0+1)$ when $t\leq T/2$ and $T\geq t_0+1$, we obtain
            \begin{equation*}
                \begin{aligned}
                    \sum_{t=1}^{T/2}\frac{(t+t_0)^{p}}{1+\left(\sum_{j=t+1}^{T}\eta_{j}\right)^{v}}
                    &\leq
                    \frac{\left(1-(3/4)^{1-\theta_1}\right)^{-v}}{\min\{1,{\bar\eta}^v\}}
                    \sum_{t=1}^{T/2}\frac{(t+t_0)^{p}}{(T+t_0)^{(1-\theta_1)v}}
			\\& \leq\frac{\left(1-(3/4)^{1-\theta_1}\right)^{-v}}{\min\{1,{\bar\eta}^v\}}(T+t_0)^{-(1-\theta_1)v}\int_{0}^{\frac{T}{2}+1}(x+t_0)^{p}dx
			\\& \leq\frac{\left(1-(3/4)^{1-\theta_1}\right)^{-v}}{\min\{1,{\bar\eta}^v\}}(T+t_0)^{-(1-\theta_1)v}
			\begin{cases}
				\frac{1}{p+1}(T+t_0)^{p+1}, &\text{ when }p>-1, \\
				\log(T+t_0), &\text{ when }p=-1, \\
				\frac{t_0^{p+1}}{-1-p}\leq\frac{1}{-1-p}, &\text{ when }p<-1,
			\end{cases}
			\\& \leq \delta^\prime
			\begin{cases}
				(T+t_0)^{-(1-\theta_1)v+p+1}, &\text{ when }p>-1, \\
				(T+t_0)^{-(1-\theta_1)v}\log(T+t_0), &\text{ when }p=-1, \\
				(T+t_0)^{-(1-\theta_1)v}, &\text{ when }p<-1,
			\end{cases}
                \end{aligned}
            \end{equation*}
		where 
		\begin{equation*}
			\delta^\prime=
			\frac{\left(1-(3/4)^{1-\theta_1}\right)^{-v}}{\min\{1,{\bar\eta}^v\}}
			\begin{cases}
				\frac{1}{p+1}, &\text{ when }p>-1, \\
				1, &\text{ when }p=-1, \\
				\frac{1}{-1-p}, &\text{ when }p<-1,
			\end{cases}
		\end{equation*}
		which is independent of $T$ and $t_0$. 
        
        The proof is then finished.
	\end{proof}
	
	\begin{lemma} \label{lemma5}
            Let $v>0$, $p\in\mathbb{R}$, $T\geq t_0+1$ and $t_0\geq1$. The step size $\eta_t$ is set as \eqref{setting}. Then, there holds 
		\begin{equation*}
			\sum_{t=T/2}^{T}\frac{(t+t_0)^{p}}{1+\left(\sum_{j=t+1}^{T}\eta_{j}\right)^{v}}
            \leq
            \delta^{\prime\prime}
			\begin{cases}
				(T+t_0)^{p+\theta_1}, &\text{ when }v>1, \\
				(T+t_0)^{p+\theta_1}\log(T+t_0), &\text{ when }v=1, \\
				(T+t_0)^{p+1-v(1-\theta_1)}, &\text{ when }v<1,
			\end{cases}
		\end{equation*}
		where $\delta^{\prime\prime}$ is a constant independent of $T$ and $t_0$.

	\end{lemma}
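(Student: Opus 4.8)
\textbf{Proof proposal for Lemma \ref{lemma5}.}

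The plan is to mirror the strategy used in Lemma \ref{lemma4}, but now exploiting the fact that we are summing over the \emph{upper} half of the index range, so $t+t_0$ is comparable to $T+t_0$ throughout. First I would observe that for $T/2\leq t\leq T$ with $T\geq t_0+1$ we have $t+t_0\asymp T+t_0$; more precisely $(T+t_0)/2\leq (T/2)+t_0\leq t+t_0\leq T+t_0$, so up to a constant factor depending only on $p$ we may replace $(t+t_0)^p$ by $(T+t_0)^p$ and pull it out of the sum. This reduces the problem to bounding $\sum_{t=T/2}^{T}\bigl(1+(\sum_{j=t+1}^{T}\eta_j)^v\bigr)^{-1}$.

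Next I would change variables to $k=T-t$, so the inner sum becomes $\sum_{j=t+1}^{T}\eta_j=\sum_{i=T-k+1}^{T}\eta_i$, which by Lemma \ref{lemma2}(1) (applied with $l=T-k+1$, $m=T$) is bounded below by a constant times $\bigl[(T+t_0+1)^{1-\theta_1}-(T-k+t_0+1)^{1-\theta_1}\bigr]$. Since $\eta_t=\bar\eta(t+t_0)^{-\theta_1}$ and all the indices here are of order $T+t_0$, each term $\eta_j\asymp \bar\eta(T+t_0)^{-\theta_1}$, so $\sum_{j=t+1}^{T}\eta_j\asymp \bar\eta\, k\,(T+t_0)^{-\theta_1}$ when $k\leq T/2$. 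Substituting this, the sum is comparable to
\[
\sum_{k=0}^{T/2}\frac{1}{1+\bigl(\bar\eta k (T+t_0)^{-\theta_1}\bigr)^{v}}.
\]
I would bound this by the corresponding integral $\int_{0}^{T/2}\bigl(1+(\bar\eta x(T+t_0)^{-\theta_1})^v\bigr)^{-1}dx$ and substitute $u=\bar\eta x(T+t_0)^{-\theta_1}$, giving a factor $(T+t_0)^{\theta_1}/\bar\eta$ times $\int_0^{c(T+t_0)^{1-\theta_1}}(1+u^v)^{-1}du$. The remaining integral is $O(1)$ when $v>1$, $O(\log(T+t_0))$ when $v=1$ (since $\int_1^{N}u^{-1}du=\log N$ and $N\asymp (T+t_0)^{1-\theta_1}$), and $O\bigl((T+t_0)^{(1-\theta_1)(1-v)}\bigr)$ when $v<1$. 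Multiplying through by the extracted $(T+t_0)^p$ and $(T+t_0)^{\theta_1}$ yields exactly the three cases claimed, with $\delta''$ depending only on $\bar\eta$, $\theta_1$, $v$, and $p$ — hence independent of $T$ and $t_0$.

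The main obstacle I anticipate is making the lower bound $\sum_{j=t+1}^{T}\eta_j\gtrsim \bar\eta k(T+t_0)^{-\theta_1}$ rigorous and uniform: one must be careful that for $k$ close to $T/2$ the indices $j$ still stay in the regime where $(j+t_0)^{-\theta_1}\asymp (T+t_0)^{-\theta_1}$, which holds precisely because we restricted to $t\geq T/2$ and $T\geq t_0+1$ (so $j+t_0\geq T/2$). A secondary care point is the boundary term $k=0$ (i.e.\ $t=T$), where the inner sum is empty and the summand equals $1$; this single term contributes $O((T+t_0)^p)$, which is absorbed into the stated bounds in all three cases since $\theta_1\geq 0$ and (for $v<1$) $1-v(1-\theta_1)\geq 0$. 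Once these comparisons are set up, the rest is the routine integral estimate carried out above.
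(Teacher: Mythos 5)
Your proposal is correct and follows essentially the same route as the paper's proof: both arguments lower-bound $\sum_{j=t+1}^{T}\eta_j$ (you via the termwise bound $\eta_j\ge\bar\eta(T+t_0)^{-\theta_1}$ for $j\le T$, the paper via the integral estimate and the substitution $\xi=(T+t_0+1)^{1-\theta_1}-(u+t_0)^{1-\theta_1}$), use $t\ge T/2$ to replace $(t+t_0)^{p}$ by a constant times $(T+t_0)^{p}$, and reduce everything to $\int\frac{du}{1+u^{v}}$ over a range of length of order $(T+t_0)^{1-\theta_1}$, which yields the three cases together with the extra factor $(T+t_0)^{\theta_1}$. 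Your linear-in-$k$ bound is a mild simplification of the paper's nonlinear change of variables (only the lower bound on the partial sum is actually needed), and your treatment of the $t=T$ boundary term and of the constant's independence of $T$ and $t_0$ matches what the lemma requires.
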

	\begin{proof}
            It is obvious that
		\begin{equation} \label{temp14}
             \begin{split}
			&\sum_{t=T/2}^{T}\frac{(t+t_0)^{p}}{1+\left(\sum_{j=t+1}^{T}\eta_{j}\right)^{v}}\\
			&\quad \leq\frac{1}{\min\{1,{\bar\eta}^v\}}\sum_{t=T/2}^{T-1}\frac{(t+t_0)^{p}}{1+\left[(T+t_0+1)^{1-\theta_1}-(t+t_0+1)^{1-\theta_1}\right]^v}	+(T+t_0)^p.
            \end{split}
		\end{equation}
		Next, we bound $\sum_{t=T/2}^{T-1}\frac{(t+t_0)^{p}}{1+\left[(T+t_0+1)^{1-\theta_1}-(t+t_0+1)^{1-\theta_1}\right]^v}$.
		If $p>0$, it holds that
		\begin{equation*}
			\frac{(t+t_0)^{p}}{1+\left[(T+t_0+1)^{1-\theta_1}-(t+t_0+1)^{1-\theta_1}\right]^v}
			\leq\int_{t+1}^{t+2}\frac{(u+t_0)^{p}}{1+\left[(T+t_0+1)^{1-\theta_1}-(u+t_0)^{1-\theta_1}\right]^v}du.
		\end{equation*}
		If $p\leq0$, there holds $(t+t_0)^p\leq3^{-p}(t+t_0+2)^p$. Thus we have
		\begin{equation*}
			\frac{(t+t_0)^{p}}{1+\left[(T+t_0+1)^{1-\theta_1}-(t+t_0+1)^{1-\theta_1}\right]^v}
			\leq3^{-p}\int_{t+1}^{t+2}\frac{(u+t_0)^{p}}{1+\left[(T+t_0+1)^{1-\theta_1}-(u+t_0)^{1-\theta_1}\right]^v}du.
		\end{equation*}
		Therefore,
		\begin{equation*}
			\begin{aligned}
				&\sum_{t=T/2}^{T-1}\frac{(t+t_0)^{p}}{1+\left[(T+t_0+1)^{1-\theta_1}-(t+t_0+1)^{1-\theta_1}\right]^v}
				\\ &\leq\max\{3^{-p},1\}\int_{T/2+1}^{T+1}\frac{(u+t_0)^{p}}{1+\left[(T+t_0+1)^{1-\theta_1}-(u+t_0)^{1-\theta_1}\right]^v}du.
			\end{aligned}
		\end{equation*}
		Let $\xi=(T+t_0+1)^{1-\theta_1}-(u+t_0)^{1-\theta_1}$, then $d\xi=-(1-\theta_1)(u+t_0)^{-\theta_1}du$, then
		\begin{equation} \label{temp11}
			\begin{aligned}
				\sum_{t=T/2}^{T-1}&\frac{(t+t_0)^{p}}{1+\left[(T+t_0+1)^{1-\theta_1}-(t+t_0+1)^{1-\theta_1}\right]^v}
				\\ &\leq \frac{\max\{3^{-p},1\}}{1-\theta_1}\int_{0}^{(T+t_0+1)^{1-\theta_1}-(T/2+t_0+1)^{1-\theta_1}}\frac{(u+t_0)^{p+\theta_1}}{1+\xi^v}d\xi
				\\ &\leq
				\frac{\max\{3^{-p},1\}}{1-\theta_1}\int_{0}^{(T+t_0+1)^{1-\theta_1}(1-2^{\theta_1-1})}\frac{(u+t_0)^{p+\theta_1}}{1+\xi^v}d\xi.
			\end{aligned}
		\end{equation}
		Since $u+t_0\in\left[T/2+t_0+1,T+t_0+1\right]$, whenever $p+\theta_1>0$ or not, we have
		\begin{equation}\label{temp12}
			(u+t_0)^{p+\theta_1}\leq2^{\lvert p+\theta_1\rvert}(T+t_0)^{p+\theta_1}.
		\end{equation}
		Hence, substituting \eqref{temp12} to \eqref{temp11} yields that
		\begin{equation} \label{temp13}
			\begin{aligned}
				\sum_{t=T/2}^{T-1}&\frac{(t+t_0)^{p}}{1+\left[(T+t_0+1)^{1-\theta_1}-(t+t_0+1)^{1-\theta_1}\right]^v}
				\\ &\leq
				\frac{\max\{3^{-p},1\}}{1-\theta_1}2^{\lvert p+\theta_1\rvert}(T+t_0)^{p+\theta_1}
				\left(1+\int_{1}^{(T+t_0+1)^{1-\theta_1}(1-2^{\theta_1-1})}\frac{1}{\xi^v}d\xi\right)
				\\ &\leq
				\frac{\max\{3^{-p},1\}}{1-\theta_1}2^{\lvert p+\theta_1\rvert}(T+t_0)^{p+\theta_1}
				\begin{cases}
					\frac{v}{v-1}, &\text{ when }v>1, \\
					\left(2-\theta_1\right)\log(T+t_0+1), &\text{ when }v=1, \\
					\frac{\left(1-2^{\theta_1-1}\right)^{1-v}}{1-v}(T+t_0+1)^{(1-\theta_1)(1-v)}, &\text{ when }v<1,
				\end{cases}
				\\ &=
				\widetilde{\delta^{\prime\prime}}
				\begin{cases}
					(T+t_0)^{p+\theta_1}, &\text{ when }v>1, \\
					(T+t_0)^{p+\theta_1}\log(T+t_0), &\text{ when } v=1, \\
					(T+t_0)^{p+1-v(1-\theta_1)}, &\text{ when } v<1,
				\end{cases}
			\end{aligned}
		\end{equation}
		with some constant $\widetilde{\delta^{\prime\prime}}$ independent of $T$ and $t_0$. Combining \eqref{temp13} with \eqref{temp14} yields that there exists a constant $\delta^{\prime\prime}$ independent of $T$ and $t_0$, such that 
		\begin{equation}
			\sum_{t=T/2}^{T}\frac{(t+t_0)^{p}}{1+\left(\sum_{j=t+1}^{T}\eta_{j}\right)^{v}}
			\leq\delta^{\prime\prime}
			\begin{cases}
				(T+t_0)^{p+\theta_1}, &\text{ when } v>1, \\
				(T+t_0)^{p+\theta_1}\log(T+t_0), &\text{ when } v=1, \\
				(T+t_0)^{p+1-v(1-\theta_1)}, &\text{ when } v<1,
			\end{cases}
		\end{equation}
        which completes the proof.
	\end{proof}

	\begin{proposition}
		Let $v>0$, $\theta\in\mathbb{R}$, $t_0\geq1$ and $T\geq t_0+1$. The step size $\eta_t$ is set as \eqref{setting}. Suppose that $\bar{\eta}\bar{\lambda}>\theta-1$ and  $\theta_1+\theta_2=1$. Then, there holds
		\begin{align*}
			\sum_{t=1}^{T}\exp\left\{-\sum_{j=t+1}^{T}\eta_{j}\lambda_j\right\}\frac{(t+t_0)^{-\theta}}{1+\left(\sum_{j=t+1}^{T}\eta_{j}\right)^{v}}
			\leq\delta_1
			\begin{cases}
				(T+t_0)^{-\theta+\theta_1}, &\text{ when } v>1, \\
				(T+t_0)^{-\theta+\theta_1}\log(T+t_0), &\text{ when } v=1, \\
				(T+t_0)^{-\theta+1-v(1-\theta_1)}, &\text{ when } v<1,
			\end{cases}
		\end{align*}
		where $\delta_1$ is a constant independent of $T$ and $t_0$.
	\end{proposition}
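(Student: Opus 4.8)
The plan is to reduce the estimate to Lemmas~\ref{lemma4} and~\ref{lemma5} by using the hypothesis $\theta_1+\theta_2=1$ to convert the exponential weight into a pure power of $t+t_0$. Since $\theta_1+\theta_2=1$ we have $\eta_j\lambda_j=\bar\eta\bar\lambda(j+t_0)^{-1}$, so that for every $1\le t\le T$,
\[
\sum_{j=t+1}^{T}\eta_j\lambda_j=\bar\eta\bar\lambda\sum_{j=t+1}^{T}(j+t_0)^{-1}\ge\bar\eta\bar\lambda\int_{t+1}^{T+1}\frac{dx}{x+t_0}=\bar\eta\bar\lambda\log\frac{T+t_0+1}{t+t_0+1},
\]
by comparing the sum of the decreasing function $x\mapsto(x+t_0)^{-1}$ with its integral. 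Hence $\exp\{-\sum_{j=t+1}^{T}\eta_j\lambda_j\}\le\big((t+t_0+1)/(T+t_0+1)\big)^{\bar\eta\bar\lambda}$. Using that $t,t_0\ge1$ gives $t+t_0+1\le2(t+t_0)$, and that $\bar\eta\bar\lambda>0$ gives $(T+t_0+1)^{-\bar\eta\bar\lambda}\le(T+t_0)^{-\bar\eta\bar\lambda}$, I would deduce
\[
\sum_{t=1}^{T}\exp\Big\{-\sum_{j=t+1}^{T}\eta_j\lambda_j\Big\}\frac{(t+t_0)^{-\theta}}{1+\left(\sum_{j=t+1}^{T}\eta_j\right)^{v}}\le 2^{\bar\eta\bar\lambda}(T+t_0)^{-\bar\eta\bar\lambda}\sum_{t=1}^{T}\frac{(t+t_0)^{\,\bar\eta\bar\lambda-\theta}}{1+\left(\sum_{j=t+1}^{T}\eta_j\right)^{v}}.
\]

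Next I would set $p:=\bar\eta\bar\lambda-\theta$ and split $\sum_{t=1}^{T}=\sum_{t=1}^{T/2}+\sum_{t=T/2}^{T}$. The assumption $\bar\eta\bar\lambda>\theta-1$ is exactly $p>-1$, so Lemma~\ref{lemma4} applies in its first case and bounds the low-range sum by $\delta'(T+t_0)^{-(1-\theta_1)v+p+1}$, while Lemma~\ref{lemma5} bounds the high-range sum by $\delta''$ times $(T+t_0)^{p+\theta_1}$, $(T+t_0)^{p+\theta_1}\log(T+t_0)$, or $(T+t_0)^{p+1-v(1-\theta_1)}$ according as $v>1$, $v=1$, or $v<1$. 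Multiplying through by $2^{\bar\eta\bar\lambda}(T+t_0)^{-\bar\eta\bar\lambda}$ cancels $\bar\eta\bar\lambda$ from every exponent: the low-range contribution becomes $(T+t_0)^{-\theta+1-v(1-\theta_1)}$ in all cases, and the high-range contribution becomes $(T+t_0)^{-\theta+\theta_1}$, $(T+t_0)^{-\theta+\theta_1}\log(T+t_0)$, or $(T+t_0)^{-\theta+1-v(1-\theta_1)}$ respectively.

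The final step is to add the two contributions case by case. For $v<1$ they agree; for $v=1$ the low-range term is $(T+t_0)^{-\theta+\theta_1}$, dominated by $(T+t_0)^{-\theta+\theta_1}\log(T+t_0)$; for $v>1$ one checks that $1-v(1-\theta_1)<\theta_1$ (equivalently $1<v$), so the low-range term is absorbed into $(T+t_0)^{-\theta+\theta_1}$ for $T+t_0\ge1$. In every case the sum is at most a constant multiple of the stated expression, the constant $\delta_1$ being built from $\delta'$, $\delta''$ and $2^{\bar\eta\bar\lambda}$ and hence depending only on $\bar\eta$ and $\bar\lambda$, not on $T$ or $t_0$. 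I do not expect a genuine analytic obstacle here; the only points requiring care are the exponent bookkeeping in this combining step, the initial observation that $\theta_1+\theta_2=1$ is precisely what turns the exponential factor into a manageable polynomial weight, and that $\bar\eta\bar\lambda>\theta-1$ is exactly the condition placing $p$ in the first regime of Lemma~\ref{lemma4}.
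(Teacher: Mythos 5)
Your proposal is correct and follows essentially the same route as the paper's proof: converting the exponential factor into the power $\bigl((t+t_0+1)/(T+t_0+1)\bigr)^{\bar\eta\bar\lambda}$ (which the paper gets from Lemma \ref{lemma2}(2) and you re-derive by the same integral comparison), pulling out the prefactor $2^{\bar\eta\bar\lambda}(T+t_0)^{-\bar\eta\bar\lambda}$, and then invoking Lemmas \ref{lemma4} and \ref{lemma5} with $p=\bar\eta\bar\lambda-\theta>-1$. Your explicit case-by-case absorption of the low-range term is slightly more detailed than the paper's combining step but is the same argument.
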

	\begin{proof}
            From Lemma \ref{lemma2} (2), we have
		\begin{align*}
			\sum_{t=1}^{T}&\exp\left\{-\sum_{j=t+1}^{T}\eta_{j}\lambda_j\right\}\frac{(t+t_0)^{-\theta}}{1+\left(\sum_{j=t+1}^{T}\eta_{j}\right)^{v}}
			\leq\sum_{t=1}^{T}\left(\frac{t+t_0+1}{T+t_0+1}\right)^{\bar{\eta}\bar{\lambda}}\frac{(t+t_0)^{-\theta}}{1+\left(\sum_{j=t+1}^{T}\eta_{j}\right)^{v}}
			\\ &\leq2^{\bar{\eta}\bar{\lambda}}(T+t_0)^{-\bar{\eta}\bar{\lambda}}\sum_{t=1}^{T}\frac{(t+t_0)^{\bar{\eta}\bar{\lambda}-\theta}}{1+\left(\sum_{j=t+1}^{T}\eta_{j}\right)^{v}}.
		\end{align*}
            Using Lemma \ref{lemma4} and Lemma \ref{lemma5}, we obtain
            \begin{equation*}
                \begin{aligned}
                    &\sum_{t=1}^{T}\exp\left\{-\sum_{j=t+1}^{T}\eta_{j}\lambda_j\right\}\frac{(t+t_0)^{-\theta}}{1+\left(\sum_{j=t+1}^{T}\eta_{j}\right)^{v}}
                    \\
                    \leq&2^{\bar{\eta}\bar{\lambda}}(T+t_0)^{-\bar{\eta}\bar{\lambda}}
			\left(\delta^{\prime}(T+t_0)^{-(1-\theta_1)v+p+1}+\delta^{\prime\prime}
			\begin{cases}
				(T+t_0)^{p+\theta_1}, &\text{ when } v>1, \\
				(T+t_0)^{p+\theta_1}\log(T+t_0), &\text{ when } v=1, \\
				(T+t_0)^{p+1-v(1-\theta_1)}, &\text{ when } v<1,
			\end{cases}
			\right)
                \end{aligned}
            \end{equation*}
		where $p=\bar{\eta}\bar{\lambda}-\theta>-1$. Therefore,
		\begin{equation}
			\begin{aligned}
					&\sum_{t=1}^{T}\exp\left\{-\sum_{j=t+1}^{T}\eta_{j}\lambda_j\right\}\frac{(t+t_0)^{-\theta}}{1+\left(\sum_{j=t+1}^{T}\eta_{j}\right)^{v}}
					\\ \leq&
					2^{\bar{\eta}\bar{\lambda}}(\delta^{\prime}+\delta^{\prime\prime})
					\begin{cases}
						(T+t_0)^{-\theta+\theta_1}, &\text{ when } v>1, \\
						(T+t_0)^{-\theta+\theta_1}\log(T+t_0), &\text{ when } v=1, \\
						(T+t_0)^{-\theta+1-v(1-\theta_1)}, &\text{ when } v<1.
					\end{cases}
			\end{aligned}
		\end{equation}
            We finish the proof by setting $\delta_1=2^{\bar{\eta}\bar{\lambda}}(\delta^{\prime}+\delta^{\prime\prime})$, which is independent of $T$ and $t_0$.	
	\end{proof}

\section{Extension to General Kernel Setting}
\label{Section 3.1}
We now discuss an extension of the regularized SGD analysis beyond the kernel class in Assumption~\ref{a1}. This choice of kernel has been employed in functional regression with structured output learning, as noted in Section \ref{introduction}. We now turn to extending the class of operator-valued kernels, further showing the generality and applicability of our analysis to a broader range of nonlinear operator learning problems.

We now consider an alternative setting for vector-valued RKHS and briefly list the conditions below.
\begin{enumerate}
    \item[(1)] Let $\H$ be separable, which is true if the spaces $\X$ and $\Y$ are separable and $K$ is a Mercer kernel \citep[Corollary 5.2]{carmeli2006vector}. A kernel $K$ is Mercer if and only if the RKHS induced by $K$ is a subspace of the space of continuous operators from $\X$ to $\Y$, which in turn holds if and only if $K$ is locally bounded and $K(x,\cdot)$ is strongly continuous for any $x\in\X$ \citep[Proposition 5.1]{carmeli2006vector}.

    \item[(2)] We assume that
the operator-valued kernel $K:\X\times\X\rightarrow\B(\Y)$ satisfies that $K(x,x)$ is compact for any $x\in\X$.

    \item[(3)] We further assume that $K$ is strongly measurable. Under assumptions (1) and (2), this is equivalent to requiring that each element in $\H$ is a measurable function \citep[Proposition 3.3]{carmeli2006vector}.
    
    \item[(4)] By Corollary 4.6 and Proposition 4.8 in \citep{carmeli2006vector}, if $K(x,x)$ is an operator of trace class (which implies compactness obviously) for almost all $x\in\X$ and 
\begin{equation}
   \be\l[\mathrm{Tr}\l(K(x,x)\r)\r] <\infty, 
\end{equation}
then the inclusion $\iota:\H\rightarrow L^2(\X,\Y)$ is well-defined and Hilbert-Schmidt. As a result, the operator $L_K:=\iota^*\iota$ is trace-class, which plays a role similar to that of $C$. And thus, we assume Assumption \ref{a3} holds for $L_K$ with $s\in (0,1]$, i.e., $\mathrm{Tr}(L_K^s) < +\infty$.
\end{enumerate}

The conditions listed above are required when conducting error analysis for kernels beyond the special case considered in Assumption \ref{a1}.  We remark that when the kernel is chosen as Eq. \eqref{kernel}, i.e., $K(x,x')=\K(x,x')W$ with $W$ being a self-adjoint and positive operator,
the above conditions are satisfied if $W$ is trace-class and the scalar-valued kernel $\K$ is a Mercer kernel such that $\sup_{x \in \X} \K(x,x) < \infty$.

We now outline the framework for generalizing the conclusions of this paper to the general scenario discussed above, while leaving the detailed proof to future work.  It is straightforward to observe that for any $h \in \H$, the operator $L_K$ satisfies  
\[L_Kh=\be\l[K(x,\cdot)h(x)\r].\]
Moreover, for any $h\in\H$, since the noise $\epsilon$ is centered and independent of $x$, there holds
\begin{align*}
    \E(h)-\E(h^\dagger)&=\be\l\|h(x)-h^\dagger(x)\r\|_{\Y}^2
    \\ &=\be\l[\l\langle K(x,\cdot)(h(x)-h^\dagger(x)),h-h^\dagger\r\rangle_{\H}\r]
    \\ &=\l\|L_K^{1/2}\l(h-h^\dagger\r)\r\|_{\H}^2.
\end{align*}
Our goal is to bound the prediction error $\l\|L_K^{1/2}\l(h-h^\dagger\r)\r\|_{\H}^2$ and estimation error $\l\|h-h^\dagger\r\|_{\H}^2$ for the regularized SGD estimator $h=h_{T+1}$. Similar to the approach in the proofs of our main results, we can derive analogs of equations \eqref{temp3}, \eqref{induction}, Proposition \ref{Proposition error 1}, and Proposition \ref{proposition error2}. Next, under assumptions similar to those in Section \ref{results}, we can carry out the error analysis, which we leave to future work.

We point out that the framework discussed in this subsection does not cover the case considered in Assumption \ref{a1}, as the kernel $\K(x,x)I$ is not a compact operator and thus fails to satisfy the assumptions required in the current setting. Hence, the framework developed here and the one based on Assumption \ref{a1} are mutually exclusive. Nevertheless, combining the kernel choices from Assumption \ref{a1} and those introduced in this subsection can significantly broaden the applicability of our analysis developed in this paper.

\section{Connection with PCA Encoder-decoder Framework}
\label{Section 3.3}

In this appendix, we briefly discuss how the regularized SGD framework may be combined with a PCA-based encoder-decoder strategy. The purpose of this discussion is only to illustrate a possible implementation route in discretized settings; it is not part of the proved results of the paper. A full analysis would require additional estimates for empirical PCA and projection errors, which are beyond the scope of the present work.

Let $\l(\mathcal{X}, \langle\cdot\rangle_{\mathcal{X}}, \l\|\cdot\r\|_{\mathcal{X}}\r)$ and $\l(\mathcal{Y}, \langle\cdot\rangle_{\mathcal{Y}}, \l\|\cdot\r\|_{\mathcal{Y}}\r)$ be two real separable Hilbert spaces. Suppose that 
\begin{equation} \label{h}
    h^\dagger: \mathcal{X}\to\mathcal{Y}
\end{equation}
is a potentially nonlinear operator. Given i.i.d. samples $\l\{x_t, y_t\r\}_{t=1}^T\sim\rho$, where $y_t = h^\dagger(x_t) + \epsilon_t$, and $\epsilon_t$ denotes centered i.i.d. noise independent of $x_t$, our goal is to solve the prediction problem, i.e., to minimize the prediction error $\E(h) = \be_\rho\l[\|h(x)-y\|_{\Y}^2\r]$ for some estimator $h$. To this end, we apply the PCA technique to project the input and output samples onto finite-dimensional Euclidean spaces. We then approximate the mapping from the finite-dimensional input space to the finite-dimensional output space using kernel methods. Below, we briefly review the PCA technique.

The primary function of PCA is to extract the principal features of the data. High-dimensional data often suffers from the curse of dimensionality, and dimensionality reduction—achieved by identifying and retaining the most significant information—serves as an effective remedy. This constitutes the central role of PCA. In our setting, we employ PCA to reduce the samples from an infinite-dimensional space to a finite-dimensional one. The PCA algorithm applied to a random input $x$ in $\X$ seeks to minimize the reconstruction error $\be\l[\l\|(I-P)x\r\|_{\X}^2\r]$ over $\Pi_{d_\X}$, the set of all orthogonal projections $P$ with rank $d_\X$.  Given the covariance operator of $x$ defined as $\Sigma_x:=\be[x\otimes x]$, there exist eigenvalue-eigenvector pairs $\{\lambda_i^{d_\X}, \phi_i^{d_\X}\}_{i\geq1}$ satisfying $\langle\Sigma_x \phi_i^{d_\X}, \phi_j^{d_\X}\rangle_\X=\lambda_i^{d_\X}\delta_{ij}$ and $\lambda_1^{d_\X}\geq\lambda_2^{d_\X}\geq\cdots\geq0$, where $\delta_{ij}=1$ if $i=j$, otherwise $0$. It can be shown that the optimal PCA projection is given by
\begin{equation*}
    P_{d_\X}^{\X}=\mathop{\arg\min}_{P\in\Pi_{d_\X}}\be\l[\l\|(I-P)x\r\|_{\X}^2\r]=\mathcal{D}_{d_\X}^{\X}\circ\mathcal{E}_{d_\X}^{\X},
\end{equation*}
where the encoder $\mathcal{E}_{d_\X}^{\X}:\X\to\mathbb{R}^{d_\X}$ is defined as
\begin{equation*}
    \mathcal{E}_{d_\X}^{\X}(x):=\l(\l\langle x,\phi_i^\X\r\rangle_{\X}\r)_{i=1}^{d_\X},
\end{equation*}
and the decoder $\mathcal{D}_{d_\X}^{\X}:\mathbb{R}^{d_\X}\to\X$ is defined as 
\begin{equation*}
    \mathcal{D}_{d_\X}^{\X}(\eta):=\sum_{i=1}^{d_\X}\eta_i\phi_i^\X=\l(\mathcal{E}_{d_\X}^{\X}\r)^*(\eta).
\end{equation*}
It then follows that $\be\l[\l\|(I-P_{d_\X}^{\X})x\r\|_{\X}^2\r]=\sum_{i>d_\X}\lambda_i^{d_\X}$, see \citep[Theorem 3.8]{lanthaler2022error}.

In practice, it is usually difficult to obtain $\Sigma_x$ directly,  so we typically use the empirical covariance operator $\Sigma_x^{T}=\frac{1}{T}\sum_{i=1}^T x_i\otimes x_i$ as a substitute, thus deriving the empirical PCA. Following the same procedure, we naturally obtain the empirical encoder $\widehat{\mathcal{E}}_{d_\X}^{\X}$ and empirical decoder $\widehat{\mathcal{D}}_{d_\X}^{\X}$. Similarly, by replacing  $\Sigma_x^{T}$, the input random variable $x$, and the rank $d_\X$ with $\Sigma_y^T:=\frac{1}{T}\sum_{i=1}^T y_i\otimes y_i$, the output random variable $y$ and $d_\Y$, respectively, we  apply empirical PCA to $y$ in $\Y$ with rank $d_\Y$, and obtain the empirical encoder $\widehat{\mathcal{E}}_{d_\Y}^{\Y}$ and empirical decoder $\widehat{\mathcal{D}}_{d_\Y}^{\Y}$.

\begin{figure}
\centering

\tikzset{every picture/.style={line width=0.75pt}} 

\begin{tikzpicture}[x=0.75pt,y=0.75pt,yscale=-1,xscale=1]

\draw [color={rgb, 255:red, 208; green, 2; blue, 27 }  ,draw opacity=1 ]   (140,85) -- (253,85) ;
\draw [shift={(255,85)}, rotate = 180] [color={rgb, 255:red, 208; green, 2; blue, 27 }  ,draw opacity=1 ][line width=0.75]    (10.93,-3.29) .. controls (6.95,-1.4) and (3.31,-0.3) .. (0,0) .. controls (3.31,0.3) and (6.95,1.4) .. (10.93,3.29)   ;
\draw    (132.5,99) -- (133.47,161) ;
\draw [shift={(133.5,163)}, rotate = 269.1] [color={rgb, 255:red, 0; green, 0; blue, 0 }  ][line width=0.75]    (10.93,-3.29) .. controls (6.95,-1.4) and (3.31,-0.3) .. (0,0) .. controls (3.31,0.3) and (6.95,1.4) .. (10.93,3.29)   ;
\draw    (122.5,163) -- (121.53,101) ;
\draw [shift={(121.5,99)}, rotate = 89.1] [color={rgb, 255:red, 0; green, 0; blue, 0 }  ][line width=0.75]    (10.93,-3.29) .. controls (6.95,-1.4) and (3.31,-0.3) .. (0,0) .. controls (3.31,0.3) and (6.95,1.4) .. (10.93,3.29)   ;
\draw    (140,180) -- (253,180) ;
\draw [shift={(255,180)}, rotate = 180] [color={rgb, 255:red, 0; green, 0; blue, 0 }  ][line width=0.75]    (10.93,-3.29) .. controls (6.95,-1.4) and (3.31,-0.3) .. (0,0) .. controls (3.31,0.3) and (6.95,1.4) .. (10.93,3.29)   ;
\draw    (276.5,99) -- (277.47,161) ;
\draw [shift={(277.5,163)}, rotate = 269.1] [color={rgb, 255:red, 0; green, 0; blue, 0 }  ][line width=0.75]    (10.93,-3.29) .. controls (6.95,-1.4) and (3.31,-0.3) .. (0,0) .. controls (3.31,0.3) and (6.95,1.4) .. (10.93,3.29)   ;
\draw    (266.5,163) -- (266.5,101) ;
\draw [shift={(266.5,99)}, rotate = 90] [color={rgb, 255:red, 0; green, 0; blue, 0 }  ][line width=0.75]    (10.93,-3.29) .. controls (6.95,-1.4) and (3.31,-0.3) .. (0,0) .. controls (3.31,0.3) and (6.95,1.4) .. (10.93,3.29)   ;

\draw (120,76.4) node [anchor=north west][inner sep=0.75pt]    {$\mathcal{X}$};
\draw (264,77.4) node [anchor=north west][inner sep=0.75pt]    {$\mathcal{Y}$};
\draw (114,166.4) node [anchor=north west][inner sep=0.75pt]    {$\mathbb{R}^{d_{\mathcal{X}}}$};
\draw (261,169.4) node [anchor=north west][inner sep=0.75pt]    {$\mathbb{R}^{d_\mathcal{Y}}$};
\draw (190,63.4) node [anchor=north west][inner sep=0.75pt]    {$\textcolor[rgb]{0.82,0.01,0.11}{h^{\dagger }}$};
\draw (137,119.4) node [anchor=north west][inner sep=0.75pt]  [font=\normalsize]  {$\widehat{\mathcal{E}}_{d_{\mathcal{X}}}^{\mathcal{X}}$};
\draw (92,119.4) node [anchor=north west][inner sep=0.75pt]  [font=\normalsize]  {$\widehat{\mathcal{D}}_{d_{\mathcal{X}}}^{\mathcal{X}}$};
\draw (281,119.4) node [anchor=north west][inner sep=0.75pt]  [font=\normalsize]  {$\widehat{\mathcal{E}}_{d_{\mathcal{Y}}}^{\mathcal{Y}}$};
\draw (237,119.4) node [anchor=north west][inner sep=0.75pt]  [font=\normalsize]  {$\widehat{\mathcal{D}}_{d_{\mathcal{Y}}}^{\mathcal{Y}}$};
\draw (190,156.4) node [anchor=north west][inner sep=0.75pt]  [color={rgb, 255:red, 208; green, 2; blue, 27 }  ,opacity=1 ]  {$f^{\dagger }$};

\end{tikzpicture}
\caption{Commutative diagram of PCA encoder-decoder framework}

\label{fig2}

\end{figure}

We now formulate the estimator as $h:=\widehat{\mathcal{D}}_{d_\Y}^{\Y}\circ f\circ\widehat{\mathcal{E}}_{d_\X}^{\X}$. It is then natural to choose $f^\dagger=\widehat{\mathcal{E}}_{d_\Y}^{\Y}\circ h^\dagger\circ\widehat{\mathcal{D}}_{d_\X}^{\X}$. This formulation naturally gives rise to a commutative diagram,  as illustrated in Figure \ref{fig2}. The works \citep{bhattacharya2021model, lanthaler2023operator} represent $f$ using neural networks. In contrast, we represent $f$ in an RKHS induced by a matrix-valued kernel $k:\mathbb{R}^{d_\X}\times\mathbb{R}^{d_\X}\to\B(\mathbb{R}^{d_\Y})$. Specifically, we consider kernels of the form $k(u, v)=\phi(\|u-v\|_{\mathbb{R}^{d_\X}})$, where $\phi:[0, \infty)\to\mathbb{R}$ is a radial function, such that $k$ is positive definite for $d_\X>0$.  This property can be equivalently characterized by requiring $\phi$ to be completely monotone.  Notable examples satisfying this condition include the inverse multiquadrics $\phi(x) = \l(c^2 + x^2\r)^{-\beta}$ and the Gaussian kernel $\phi(x) = e^{-\alpha |x|^2}$ for any $c > 0$, $\beta > 0$, and $\alpha > 0$; see \citep{wendland2004scattered}. Let $\H_k$ denote the RKHS induced by the matrix-valued kernel $k$.  With a slight abuse of notation, we define the prediction error as
\[
\E(f)=\E(h):=\be\l[\l\|h(x)-y\r\|_{\Y}^2\r]=\be\l[\l\|\widehat{\mathcal{D}}_{d_\Y}^{\Y}\circ f\circ\widehat{\mathcal{E}}_{d_\X}^{\X}(x)-y\r\|_{\Y}^2\r], \quad \forall f \in \H_K,
\]
where $h = \widehat{\mathcal{D}}_{d_\Y}^{\Y}\circ f\circ\widehat{\mathcal{E}}_{d_\X}^{\X}$. Using the identity  $(\widehat{\mathcal{D}}_{d_\X}^{\X})^*=\widehat{\mathcal{E}}_{d_\X}^{\X}$,
we compute the Fr$\acute{e}$chet derivative of $\E(f)$ in $\H_k$, and obtain
\[
\nabla\E(f)= 2 \be\l[\phi\l(\l\|\widehat{\mathcal{E}}_{d_\X}^{\X}(x)-\cdot\r\|\r)\widehat{\mathcal{E}}_{d_\Y}^{\Y}\l(\widehat{\mathcal{D}}_{d_\Y}^{\Y}\circ f\circ\widehat{\mathcal{E}}_{d_\X}^{\X}x-y\r)\r].
\]
Based on samples, we derive the regularized SGD iteration with $f_1=0$, and
\[
f_{t+1}=f_{t}- \eta\l(
\phi\l(\l\|\widehat{\mathcal{E}}_{d_\X}^{\X}(x_t)-\cdot\r\|\r)\l(f_t\l(\widehat{\mathcal{E}}_{d_\X}^{\X}x_t\r)-\widehat{\mathcal{E}}_{d_\Y}^{\Y}y_t\r)+\lambda f_t\r),
\]
where $\eta$ is the step size. This can be interpreted as an SGD scheme based on the samples $\{\widehat{\mathcal{E}}_{d_\X}^{\X}x_t, \widehat{\mathcal{E}}_{d_\Y}^{\Y}y_t\}_{t=1}^T$ in $\H_k$. Accordingly, we define $h_1=0$ and 
\begin{equation} \label{sgd-pca}
    h_{t+1}=h_t-\eta\l(\phi\l(\l\|\widehat{\mathcal{E}}_{d_\X}^{\X}(x_t-\cdot)\r\|\r)\widehat{P}^{\Y}_{d_\Y}(h_t(x_t)-y_t)+\lambda h_t\r),
\end{equation}
where $\widehat{P}^{\Y}_{d_\Y}:=\widehat{\mathcal{D}}_{d_\Y}^{\Y}\circ\widehat{\mathcal{E}}_{d_\Y}^{\Y}$ is the empirical projection operator.

Under suitable assumptions, we can assert that, $\phi\l(\l\|\widehat{\mathcal{E}}_{d_\X}^{\X}(x-\cdot)\r\|\r)\widehat{P}^{\Y}_{d_\Y}$  converges to $\phi(\|x-\cdot\|_\X)I_\Y$ as $d_\X$ and $d_\Y$ tend to $\infty$. Hence, the SGD iteration \eqref{sgd-pca} can be approached by the following scheme with 
$\widetilde{h}_1=0$, and  
\[
\widetilde{h}_{t+1}=\widetilde{h}_t-\eta\l(\phi(\|x_t-\cdot\|_\X)(\widetilde{h}_t(x_t)-y_t)+\lambda \widetilde{h}_t\r),
\]
which is the setting analyzed in this paper. 

To summarize, using the PCA encoder-decoder as a concrete example, we see that our analysis can seamlessly align with the encoder-decoder framework. Rigorous proofs will be provided in our future work.

\section{Detailed Setup of the Numerical Experiments}\label{Detailed setup}

We describe the matrix realization used in the numerical experiments. The input variable is represented through the first $m$ basis functions
\[
e_k(u)=\sqrt{2}\cos(k\pi u),\qquad u\in[0,1],\quad 1\le k\le m,
\]
and generated by
\[
x=\sum_{k=1}^m \sqrt{\mu_k}\,\xi_k e_k,
\qquad
\mu_k=k^{-\alpha},
\]
where $\xi_k\sim\mathcal N(0,1)$ are independent. We consider the scalar kernel
\[
\mathcal K(x,x')
=
\sum_{k\ge1}\nu_k\langle x,e_k\rangle_{L^2}\langle x',e_k\rangle_{L^2},
\qquad
\nu_k=k^{-\beta},
\]
and the operator-valued kernel $K(x,x')=\mathcal K(x,x')I_{\Y}$ with $\Y=L^2[0,1]$.

Let $\{\psi_k\}_{k\ge1}$ denote the canonical orthonormal basis of $\H_{\mathcal K}$,
\[
\psi_k=\sqrt{\nu_k}\langle \cdot,e_k\rangle_{L^2},
\]
so that
\[
\phi(x)=\mathcal K(x,\cdot)=\sum_{k=1}^m \sqrt{\mu_k\nu_k}\,\xi_k\psi_k.
\]
The covariance operator $C:\H_{\mathcal K}\to\H_{\mathcal K}$ is diagonal in this basis:
\[
C=\sum_{k=1}^m \mu_k\nu_k\,\psi_k\otimes\psi_k.
\]

We choose the target operator as
\[
H^\dagger=S^\dagger C^r,
\qquad
S^\dagger\psi_k=e_k,\quad 1\le k\le m.
\]
Then
\[
h^\dagger(x)=H^\dagger\phi(x),
\qquad
\epsilon_t=\frac{\sigma}{\sqrt m}\sum_{k=1}^m \zeta_k^t e_k,
\]
where $\zeta_k^t\sim\mathcal N(0,1)$ are independent, and hence
\[
x_t=\sum_{k=1}^m \sqrt{\mu_k}\,\xi_k^t e_k,
\qquad
y_t=\sum_{k=1}^m\left((\mu_k\nu_k)^{r+1/2}\xi_k^t+\frac{\sigma}{\sqrt m}\zeta_k^t\right)e_k.
\]

Writing the estimator in matrix form as
\[
H_t=\sum_{k_1,k_2=1}^m H^{(t)}_{k_1,k_2}\,e_{k_1}\otimes\psi_{k_2},
\qquad
H^{(t)}=\bigl(H^{(t)}_{k_1,k_2}\bigr)_{1\le k_1,k_2\le m},
\]
and introducing
\[
A=\mathrm{diag}\bigl(\sqrt{\mu_1\nu_1},\dots,\sqrt{\mu_m\nu_m}\bigr),
\qquad
B=\mathrm{diag}\bigl((\mu_1\nu_1)^{r+1/2},\dots,(\mu_m\nu_m)^{r+1/2}\bigr),
\]
together with
\[
z^t=A\xi^t,
\qquad
y^t=B\xi^t+\frac{\sigma}{\sqrt m}\zeta^t,
\]
where
\[
\xi^t=(\xi_1^t,\dots,\xi_m^t)^\top,
\qquad
\zeta^t=(\zeta_1^t,\dots,\zeta_m^t)^\top,
\]
the regularized SGD iteration becomes
\[
H^{(1)}=0,
\qquad
H^{(t+1)}
=
(1-\eta_t\lambda_t)H^{(t)}
-
\eta_t\bigl(H^{(t)}z^t-y^t\bigr)(z^t)^\top.
\]

Finally, letting
\[
H_0=\mathrm{diag}(\mu_1^r\nu_1^r,\dots,\mu_m^r\nu_m^r),
\]
the prediction and estimation errors are computed by
\[
\E(h_{t+1})-\E(h^\dagger)=\|(H^{(t+1)}-H_0)A\|_{\mathrm F}^2,
\qquad
\|h_{t+1}-h^\dagger\|_{\H}^2=\|H^{(t+1)}-H_0\|_{\mathrm F}^2,
\]
where $\|\cdot\|_{\mathrm{F}}$ denotes the Frobenius norm of a matrix.

This completes the numerical setup used for the experiments in the main text, for which we take
\[
r=1.0,\qquad m=200,\qquad \alpha=0.3,\qquad \beta=0.9,\qquad \sigma=0.1.
\]

\end{appendix}

\end{document}